\documentclass{article}
\usepackage{iclr2027_conference,times}

\usepackage[T1]{fontenc}
\usepackage{url}

\usepackage{amsmath,amssymb,amsthm,mathtools}
\usepackage{mathrsfs}

\usepackage{graphicx}
\usepackage{subcaption}
\usepackage{booktabs,adjustbox,array}
\usepackage{float}

\usepackage{enumitem}
\usepackage{algorithm,algorithmic}
\usepackage{placeins}

\usepackage{xcolor}
\usepackage{microtype}

\usepackage[hidelinks]{hyperref}

\theoremstyle{plain}
\newtheorem{theorem}{Theorem}
\newtheorem{lemma}[theorem]{Lemma}
\newtheorem{proposition}[theorem]{Proposition}
\newtheorem{corollary}[theorem]{Corollary}

\theoremstyle{definition}
\newtheorem{assumption}{Assumption}

\theoremstyle{remark}

\newcommand{\dif}{\,\mathrm{d}}

\newcommand{\ind}{\mathbf{1}}
\newcommand{\Frob}[1]{\lVert #1\rVert_{\mathrm F}}

\renewcommand{\leq}{\leqslant}
\renewcommand{\le}{\leqslant}
\renewcommand{\geq}{\geqslant}
\renewcommand{\ge}{\geqslant}

\newcommand{\supp}{\operatorname{supp}}
\newcommand{\softmax}{\operatorname{softmax}}
\newcommand{\R}{\mathbb R}
\newcommand{\E}{\mathbb E}
\newcommand{\norm}[1]{\left\lVert #1\right\rVert}

\newcommand{\Ntask}{N_{\mathrm{task}}}
\newcommand{\Nobs}{N_{\mathrm{obs}}}

\newcommand{\A}{\mathcal{A}}
\newcommand{\Iset}{\mathcal{I}}

\newcommand{\GL}{\operatorname{GL}}
\newcommand{\M}{\mathcal{M}}
\newcommand{\U}{\mathcal U_\tau}
\newcommand{\Ub}{\overline{\mathcal U}_\tau}
\newcommand{\Tset}{\mathcal K}
\newcommand{\Ssixteen}{\mathcal S_{16}}
\newcommand{\TF}{\operatorname{TF}}
\newcommand{\out}{\mathrm{out}}

\newcommand{\Obs}{\mathcal B_{\tau,n}}
\newcommand{\Coef}{\mathcal V}

\title{%
From Policy Evaluation to Off-Policy Control:\\[0.2em]
Provable Emergence of In-Context Q-Learning
}

\author{%
Yingjun Mo\textsuperscript{1}
\qquad
Yajie Zhao\textsuperscript{2}
\qquad
Fang Kong\textsuperscript{1}\thanks{%
Corresponding author: \texttt{kongf@sustech.edu.cn}.%
}
\\[0.8em]
{\small
\textsuperscript{1} Southern University of Science and Technology (SUSTech)
}
\\[0.3em]
{\small
\textsuperscript{2} University of Electronic Science and Technology of China (UESTC)
}
}

\date{}

\hypersetup{
    pdftitle={From Policy Evaluation to Off-Policy Control: Provable Emergence of In-Context Q-Learning},
    pdfauthor={Yingjun Mo, Yajie Zhao, Fang Kong}
}

\iclrfinalcopy

\makeatletter
\renewcommand{\@maketitle}{%
    \vbox{%
        \hsize=\textwidth
        \centering
        {\Large\bfseries \@title\par}
        \vskip 1.2em
        {\normalsize\normalfont \@author\par}
        \vskip 1.5em
    }%
}
\makeatother

\renewcommand{\headrulewidth}{0pt}

\begin{document}

\maketitle


\begin{abstract}
Transformers can implement reinforcement-learning algorithms in context, yet how off-policy control emerges within Transformers remains poorly understood. We study Q-learning, a foundational model-free control algorithm that combines off-policy bootstrapping with greedy policy improvement. We prove that a two-layer attention-only Transformer can implement a finite-temperature form of Q-learning, where the temperature $\tau>0$ controls the approximation to greedy action selection, recovering window Q-learning as $\tau\to0$. We train the structured Transformer on randomly generated MDP tasks using output-only supervision. Experiments show that the theoretically predicted parameter structure emerges during training, including the internal greedy action-selection mechanism, and that the trained Transformer achieves performance comparable to Q-learning on unseen tasks. Our theoretical analysis further shows that the equivalent realizations form a solution manifold and that, under suitable task-diversity conditions, the training iterates converge locally to this manifold at an exponential rate. For the random-task model analyzed here, the local worst-case number of gradient steps needed to reduce the distance to the manifold by a factor $\epsilon$ is $\Theta\left(\tau^{-5}\log(1/\varepsilon)\right)$, quantifying the increasing optimization cost as the learned update approaches window Q-learning.
\end{abstract}

\section{Introduction}
In-context reinforcement learning (ICRL) offers an alternative to the conventional paradigm of reinforcement learning. Rather than adapting to a new task through parameter updates, a pretrained agent can adjust its behavior directly from the interaction history provided in its context. This phenomenon has motivated an algorithmic interpretation of ICRL: a Transformer may not merely map a trajectory to a prediction, but may execute a reinforcement-learning algorithm within its forward pass. Recent theoretical work supports this view.
Transformers have been shown to implement temporal-difference (TD) methods for policy evaluation, and more recent results extend this perspective to policy-improvement methods such as SARSA and actor-critic \citep {wang2024transformers,wang2025emergence,xie2026convergencecot,xie2026beyond,liang2026transformers}.

A central regime remains largely unexplored: off-policy control. Existing policy-improvement constructions focus primarily on on-policy learning, where the behavior policy and the policy being improved are tightly coupled \citep{liang2026transformers}. This coupling is restrictive in real applications where an agent must learn from previously collected, heterogeneous, or suboptimal experience rather than from trajectories generated by its current policy. Off-policy control asks whether useful policy improvement can still be carried out when these two roles are decoupled. 
Q-learning provides a canonical setting for studying this problem. Given a transition $(s,a,r,s')$, its target is $r+\gamma\max_{a'}Q(s',a')$,
which does not depend on the successor action selected by the behavior policy. Thus, subject to sufficient state-action coverage, Q-learning can use experience generated by a suboptimal behavior policy while improving toward a greedy target policy. This makes Q-learning a natural testbed for a more demanding form of in-context adaptation: whether a Transformer can use off-policy experience to carry out control, rather than merely evaluate or improve the policy that generated its context.


This transition is substantially more than a change of update rule. At a conceptual level, it marks the passage from Bellman evaluation to Bellman optimality. An in-context implementation of Q-learning must compare competing successor actions, resolve the resulting action selection, and then use that decision to construct the bootstrap target. We show that there exists a two-layer Transformer, combining masked softmax attention with linear attention, that jointly implements next-state action selection and the corresponding window semi-gradient Q-learning update in finite-temperature form, where the softmax temperature $\tau>0$ controls the degree of approximation to greedy action selection, and standard window (hard-max) Q-learning is recovered in the limit $\tau\to0$. In this sense, the architecture of transformer does not merely reproduce the numerical output of Q-learning; its internal computation mirrors the decomposition of the algorithm itself.

What kind of pretraining can give rise to such a Transformer? We develop a gradient-based training algorithm in which a soft window-Q update serves as the teacher for each sampled context. Within the prescribed tokenization and attention mask, training supervises only the output update and imposes no numerical targets on the internal parameter blocks to implement either window Q-learning or action selection. Nevertheless, the constructed parameter structure emerges under training, recovering the underlying computation, including the internal action-selection mechanism. On unseen tasks, the trained Transformer achieves performance comparable to window Q-learning with the same step size.

To understand why a Q-learning implementation can emerge under this training procedure, we provide a theoretical analysis of its convergence. The nonlinear dependence of softmax attention on the action-score parameters makes the analysis substantially more intricate than in linear-attention settings. Meanwhile, the Q-learning realization is inherently non-identifiable, with different parameter configurations implementing the same update. This makes the relevant optimization target a manifold of functionally equivalent Q-learning realizations rather than a single parameter vector.
We characterize this realization manifold and show that, under verifiable conditions on the training distribution, the discrete-time training dynamics converge locally to this manifold at an exponential rate. The analysis further reveals a quantitative difficulty in approaching greedy Q-learning: for the random-task model considered here,
the local worst-case number of gradient steps needed to reduce the distance to the manifold by a factor $\epsilon$ is $\Theta\left(\tau^{-5}\log(1/\varepsilon)\right)$, quantifying the increasing optimization cost as the learned update approaches standard Q-learning.

Our results make three concrete advances. First, we give an explicit attention-only construction that jointly realizes successor-action comparison and the window Q-learning update. Second, we design a gradient-based training procedure and characterize a smooth manifold of functionally equivalent Q-learning realizations, showing local exponential convergence of the training dynamics to this manifold. Third, we quantify how the softmax temperature governs the tradeoff between approximating greedy action selection and optimization difficulty. Together, these results extend the mechanistic analysis of ICRL from policy evaluation and on-policy control to off-policy Q-learning.
\section{Related work}
\label{sec:related}
Transformer-based agents have been studied extensively for sequential decision
making. Decision Transformer and Trajectory Transformer formulate reinforcement
learning as sequence modeling~\citep{chen2021decision,janner2021offline}, while
Algorithm Distillation and related ICRL methods show that pretrained sequence
models can adapt to unseen tasks through interaction histories without test-time
parameter updates~\citep{laskin2023incontext,lee2023supervised,grigsby2024amago}.
On the theoretical side, \citet{lin2024transformers} establish provable ICRL
under supervised pretraining and show that Transformers can approximate
classical online decision-making algorithms. 

Recent work has taken a mechanistic view of ICRL, asking which RL algorithms can be implemented by Transformer forward computation and whether the corresponding algorithmic parameters can emerge through pretraining. \citet{wang2024transformers,wang2025emergence} show that linear-attention Transformers can implement TD updates for policy evaluation, while \citet{xie2026beyond} extend this result to standard softmax attention via a kernel-weighted TD construction. Beyond policy evaluation, \citet{liang2026transformers} show that Transformers can realize semi-gradient SARSA and actor-critic updates for policy improvement. However, analogous results for off-policy control, including classical Q-learning, where trajectory data are coupled with subsequent action selection, remain largely unexplored. We study the emergence of Q-learning updates within Transformers, including the Transformer construction, pretraining algorithm, and its convergence analysis. Unlike \citet{liang2026transformers}, we focus on the effect of the action-selection mechanism on convergence, and our analysis accounts for discrete-time training iterates and provides verifiable convergence conditions.

Off-policy learning is particularly relevant to ICRL when context trajectories are offline, heterogeneous, or generated by suboptimal behavior policies. In such settings, the quality of the learned policy need not be tied to that of the behavior generating the trajectories, making off-policy value-based learning particularly relevant.
\citet{dong2025suboptimal} study ICRL from suboptimal historical trajectories
and propose an advantage-weighted policy-learning framework to improve upon the
behavior policies in the data. Scalable In-Context Q-Learning
\citep{liu2026sicql} combines context-conditioned value estimation, dynamic
programming, and policy extraction to enable iterative improvement from
suboptimal offline experience. In-Context Compositional Q-Learning
\citep{xu2026icql} instead formulates Q-function estimation as contextual inference and uses retrieved transitions to infer local Q-functions for offline
RL. Our work differs fundamentally from these studies by taking a mechanistic view of how Transformers realize Q-learning in ICRL, characterizing both the forward-pass implementation of the Q-learning update and the emergence of the corresponding parameter structure through training.
\section{Background}
\subsection{Transformers and Self‑Attention} All vectors are treated as column vectors. Let $I_n$ denote the $n\times n$ identity matrix, $0_{m\times n}$ the $m\times n$ zero matrix, and $0_m$ the $m$-dimensional zero vector. 

Let $\mathbf H=[h_1,\ldots,h_n]\in\mathbb R^{D\times n}$ denote a prompt representation, where each column $h_i\in\mathbb R^D$ corresponds to one token. The standard single-head softmax self-attention \citep{vaswani2017attention} processes the prompt via $\operatorname{SoftAttn}_{\mathbf P,\mathbf V}(\mathbf H)
\triangleq
(\mathbf V\mathbf H)
\operatorname{softmax}\left(\mathbf H^\top\mathbf P\mathbf H\right)$, where $\mathbf V\in\mathbb R^{D\times D}$ is the value projection matrix,
$\mathbf P:=\mathbf W_K^\top\mathbf W_Q\in\mathbb R^{D\times D}$ collects the key and query projection parameters, the conventional scaling factor in scaled dot-product attention is absorbed into $\mathbf P$, and the softmax is applied independently to each query column, normalizing the attention scores over all key positions. 

As a widely studied alternative to softmax self-attention, linear self-attention replaces the softmax-normalized attention weights with unnormalized bilinear attention scores (\cite{ahn2023transformers}, \cite{gatmiry2024can}, \cite{zhang2024trained}). Its formulation is given by
$\operatorname{LinAttn}_{\mathbf P,\mathbf V}(\mathbf{H})\triangleq \frac{1}{n}(\mathbf{V}\mathbf{H})(\mathbf{H}^\top \mathbf{P} \mathbf{H})$, where the factor $\frac{1}{n}$ normalizes the aggregation with respect to the prompt length.

To model a multi-layer Transformer structure, we stack $L$ attention layers with residual connections.
Let $\mathbf{H}^{(0)} = \mathbf{H} \in \mathbb{R}^{D\times n}$ denote the initial input prompt representation.
For each layer $\ell \in \{1,2,\dots,L\}$, the layer-wise update rule reads
\begin{equation}\label{eq:Transformer}
\mathbf{H}^{(\ell)} := \mathbf{H}^{(\ell-1)} + \operatorname{Attn}_{\mathbf{P}^{(\ell)},\mathbf{V}^{(\ell)}}\big(\mathbf{H}^{(\ell-1)}\big) \in \mathbb{R}^{D\times n},
\end{equation}
where $\operatorname{Attn}$ denotes either $\operatorname{SoftAttn}$ or $\operatorname{LinAttn}$, and $\{\mathbf{P}^{(\ell)},\mathbf{V}^{(\ell)}\}$ are the layer-specific projection parameters at layer $\ell$.
After stacking $L$ attention layers, the final Transformer output is defined as
\begin{align}\label{eq:TFout}
    \mathrm{TF}_L\big(\mathbf{H};\{\mathbf{P}^{(\ell)},\mathbf{V}^{(\ell)}\}_{\ell=1}^L\big):= f_{\mathrm{read}}\big(\mathbf{H}^{(L)}\big),
\end{align}
where $f_{\mathrm{read}} \colon \mathbb{R}^{D\times n} \to \mathbb{R}^{D_{\mathrm{out}}}$ denotes a fixed readout map. Equation \eqref{eq:TFout} establishes the notation convention we adopt for defining the output of an L-layer Transformer. Specifically, the attention mechanism produces a matrix, while we require a vector output for policy improvement. Following prior works (\cite{xie2026beyond}, \cite{liang2026transformers},\cite{wang2024transformers}), we use the bottom-right \(d\)-dimensional vector of the output matrix as this vector.

\subsection{Reinforcement Learning and Q-learning}
We consider an infinite-horizon discounted Markov decision process (MDP) $\mathcal E=(\mathcal{S},\mathcal{A},P,R,\gamma)$,
where $\mathcal{S}$ is the state space, $\mathcal{A}$ is the action space,
$P(\cdot\mid s,a)\in\mathcal{P}(\mathcal{S})$ is the transition kernel,
$R:\mathcal{S}\times\mathcal{A}\times\mathcal{S}\to\mathbb{R}$ is the reward function, and
$\gamma\in[0,1)$ is the discount factor.
Here, $\mathcal{P}(\mathcal{X})$ denotes the set of probability distributions over $\mathcal{X}$.
At time $t$, an agent in state $S_t$ selects an action
$A_t\sim\pi(\cdot\mid S_t)$ according to a policy
$\pi:\mathcal{S}\to\mathcal{P}(\mathcal{A})$,
receives a reward $R_{t+1}=R(S_t,A_t,S_{t+1})$, and transitions to a successor state
$S_{t+1}\sim P(\cdot\mid S_t,A_t)$.
The discounted return from time $t$ is defined as $G_t:=\sum_{k=0}^{\infty}\gamma^k R_{t+k+1}$. The state-value and action-value functions associated with a policy $\pi$ are respectively defined as $v_\pi(s):=\mathbb{E}_\pi\!\left[G_t\mid S_t=s\right]$, $q_\pi(s,a):=\mathbb{E}_\pi\!\left[G_t\mid S_t=s,A_t=a\right]$.
The objective of RL is to find a policy that maximizes the expected discounted return.

Q-learning \citep{watkins1989learning, watkins1992q} is a classical off-policy TD control algorithm for estimating the optimal action-value function.
In the tabular setting, an estimate $Q_t(s,a)$ is maintained explicitly for each state-action pair.
Given a transition $(S_t,A_t,R_{t+1},S_{t+1})$, Q-learning updates the value of the visited state-action pair according to
\begin{equation}
Q_{t+1}(S_t,A_t)
=
(1-\alpha)Q_t(S_t,A_t)
+
\alpha
\left[
R_{t+1}
+
\gamma\max_{a'\in\mathcal{A}}Q_t(S_{t+1},a')
\right],
\end{equation}
where $\alpha\in(0,1]$ is the learning rate.
All other state-action values remain unchanged.
To handle large state-action spaces, we consider linear function approximation \citep{tsitsiklis1997analysis, melo2008analysis}. 
Given a feature map $\phi:\mathcal{S}\times\mathcal{A}\to\mathbb{R}^d$ and a parameter vector $\mathbf{w}_t\in\mathbb{R}^d$ at time $t$, the action-value function is approximated by
$q_{\mathbf{w}_t}(s,a):=\mathbf{w}_t^\top\phi(s,a)$.
Given a transition $(S_t,A_t,R_{t+1},S_{t+1})$, the semi-gradient Q-learning update is
\begin{equation}
\mathbf{w}_{t+1}
=
\mathbf{w}_t
+
\alpha
\left(
R_{t+1}
+
\gamma\max_{a'\in\mathcal{A}}
\mathbf{w}_t^\top\phi(S_{t+1},a')
-
\mathbf{w}_t^\top\phi(S_t,A_t)
\right)
\phi(S_t,A_t).
\end{equation}
The tabular setting is recovered as a special case when $\mathcal{S}$ and $\mathcal{A}$ are finite, $d=|\mathcal{S}||\mathcal{A}|$, and each state-action pair $(s,a)$ is assigned a distinct one-hot feature vector ${\phi}(s,a)=\mathbf{e}_{(s,a)}$. 
In this case, each coordinate of $\mathbf{w}_t$ corresponds to a single state-action value $Q_t(s,a)$, and the above semi-gradient update reduces exactly to the standard tabular Q-learning update.

\section{Transformer can implement  In-context soft Q‑learning}
\label{sec:construction}
We consider a Boltzmann-smoothed Q-learning update that replaces the hard maximum over successor actions with a Boltzmann-weighted average. Given the current parameter  $\mathbf{w}$, trajectories $(s_0,a_0,r_1,\dots,s_{n-1},a_{n-1},r_n,s_n)$ can be obtained by interacting with environment under its behavioral policy, where behavioral policy is a behavior policy $\pi_b$, which may differ from greedy target policy.
Let $\mathbf{z}$ denote a context consisting of an $n$-transition window, parameter $\mathbf{w}$, and the features of all candidate successor actions. 
For $i=0,\ldots,n-1$, define $\phi_i=\phi(s_i,a_i)$, $\phi^+_{i,a}=\phi(s_{i+1},a)$, $a\in\mathcal{A}$ and
\begin{equation}\label{eq:piw}
\pi^\tau_{i,a}=\frac{\exp(\mathbf{w}^\top\phi^+_{i,a}/\tau)}{\sum_{b\in \mathcal{A}}\exp(\mathbf{w}^\top\phi^+_{i,b}/\tau)},\qquad
\bar\phi_i^{+,\tau}=\sum_{a\in \mathcal{A}}\pi^\tau_{i,a}\phi^+_{i,a}.
\end{equation}
The corresponding batched semi-gradient update is
$\mathbf{w}^{(\tau)}_{\text{soft}}=\mathbf{w}+\frac{\alpha}{n}\sum_{i=0}^{n-1}\phi_i(r_{i+1}+\gamma \mathbf{w}^\top\bar\phi_i^{+,\tau}-\mathbf{w}^\top\phi_i)$.
Here, $\mathbf{w}^{\top}\bar{{\phi}}_{i}^{+,\tau}$
is the Boltzmann-weighted average of the successor action values, which replaces the hard maximum
$\max_{a\in\mathcal{A}}\mathbf{w}^{\top}{\phi}_{i,a}^{+}$
in Q-learning. This soft backup recovers the Q-learning backup in the zero-temperature limit $\tau\downarrow0$; see Lemma~\ref{lem:Zero}.

We now give an explicit Transformer construction and show that its forward pass implements the soft Q-learning update. 
Each token is partitioned into six blocks,
$[\Phi;G;R;\Psi;U;W]$ of dimensions $[d,d,1,d,d,d+1]$, respectively. The prompt consists of three types of tokens: transition tokens, which encode the observed transitions; candidate-action tokens, which encode the features of candidate actions at the successor states; and a query token, which carries the current parameter vector. Letting $\widetilde{\mathbf{w}}=[1;\mathbf{w}]$, for each transition $i=0,\ldots,n-1$ and candidate action $a\in \mathcal{A}$, we define
\begin{equation}
\begin{split}
 t_i=[\phi_i;0;r_{i+1};0;\mathbf{w};0],\quad
 c_{i,a}=[0;0;0;\phi^+_{i,a};0;0],\quad
 p=[0;0;0;0;0;\tilde{\mathbf{w}}]
\end{split}\label{eq:prompt}
\end{equation}
Thus, the token dimension is $D=5d+2$, and the prompt contains
$N=n(m+1)+1$ tokens.
For each transition $i$, we group the candidate-action tokens as $\mathbf C_i=[c_{i,a^1},c_{i,a^2},\ldots,c_{i,a^m}]$, where $m:=|\mathcal{A}|<\infty$, so that $\mathbf C_i$ contains exactly the candidate-action tokens associated with all actions available at the successor state $s_{i+1}$.
The resulting input prompt is
\begin{align}
\mathbf H^{(0)}
=
\big[
t_0,\mathbf C_0,\,
t_1,\mathbf C_1,\,
\dots,\,
t_{n-1},\mathbf C_{n-1},\,
p
\big].
\label{eq:Hzero}
\end{align}

Our Transformer consists of two successive attention layers. 
The first layer uses single-head softmax attention to compute a 
Boltzmann-weighted aggregation over candidate successor actions, thereby 
implementing a smooth approximation to the maximization operator in Q-learning. 
The second layer then uses linear attention to realize the corresponding update. Specifically, the first layer is defined as
\begin{align}\label{eq:promH0}
    \mathbf{H}^{(1)} = \mathbf{H}^{(0)} + \bigl(\mathbf{V}_1 \mathbf{H}^{(0)}\bigr) \mathrm{softmax}\Bigl((  \mathbf{H}^{(0)})^\top \mathbf{P}_1 \mathbf{H}^{(0)} + \mathbf{M}\Bigr).
\end{align}
We use $\star$ to denote an arbitrary matrix block of the corresponding 
dimension. These blocks need not vanish and may take arbitrary values, 
since they do not affect the entries relevant to the construction under 
the prescribed attention mask and token structure. We choose
\begin{align}\label{eq:layer1}
\mathbf{P}_1=
\begin{bmatrix}
\star & \star & \star & \star & \star & \star\\
\star & \star & \star & \star & \star & \star\\
\star & \star & \star & \star & \star & \star\\
0_{d\times d} &
\star &
0_{d\times 1} &
\star &
\mathbf{A}_\theta &
\star\\
\star & \star & \star & \star & \star & \star\\
\star & \star & \star & \star & \star & \star
\end{bmatrix},\quad
\mathbf{V}_1 =
\begin{bmatrix}
\star&\star&\star&\mathbf{D}_\Phi&\star&0\\
\star&\star&\star&\mathbf{L}&\star&0\\
\star&\star&\star&\mathbf{D}_R&\star&0\\
\star&\star&\star&0&\star&0\\
\star&\star&\star&\mathbf{D}_U&\star&0\\
\star&\star&\star&0&\star&\mathbf{E}_\rho
\end{bmatrix}
\end{align}
where $\mathbf{A}_\theta=\tau^{-1}I_d$, $\mathbf{D}_\Phi$, $\mathbf{D}_U\in \mathbb{R}^{d\times d}$, $\mathbf{L}\in\mathrm{GL}(d)$, $\mathbf{D}_R\in \mathbb{R}^{1\times d}$, and $\mathbf{E}_\rho=\mathbf{Q}-I_{d+1}$. Here $\mathbf{Q}= 
 \begin{bmatrix}
  \rho&b^\top\\0_{d\times1}&I_d
 \end{bmatrix}$, $b\in \mathbb{R}^d$, $\rho\neq0$, and $\mathrm{GL}(d)$ denotes the set of invertible $d\times d$ matrices.

We define the attention mask $\mathbf M\in\{0,-\infty\}^{N\times N}$, 
where rows index keys and columns index queries. 
The mask is chosen such that each transition token $t_i$ attends only to its 
corresponding candidate-action tokens $\{c_{i,a}\}_{a\in\mathcal A}$, while 
each $c_{i,a}$ and the query token $p$ attend only to $p$. All other pairs are masked out, ensuring that each $t_i$ aggregates only its corresponding successor action candidates.
 
The second layer employs single-head linear attention:
\begin{align}\label{eq:promH1}
    \mathbf{H}^{(2)} = \mathbf{H}^{(1)} + \frac1n(\mathbf{V}_2\mathbf{H}^{(1)})\;\big((\mathbf{H}^{(1)})^\top \mathbf{P}_2\mathbf{H}^{(1)}\big).
\end{align}
We choose
\begin{align}\label{eq:layer2}
\mathbf{P}_2=
\begin{bmatrix}
\star&\star&\star&\star&\star&\mathbf{B}_-\\
\star&\star&\star&\star&\star&\mathbf{B}_+\\
\star&\star&\star&\star&\star&\mathbf{B}_r\\
\star&\star&\star&\star&\star&\star\\
\star&\star&\star&\star&\star&\mathbf{B}_U\\
\star&\star&\star&\star&\star&\star
\end{bmatrix}, \mathbf{V}_2 =
\begin{bmatrix}
\star_{(4d+2)\times d} &\star_{(4d+2)\times (d)}& \star_{(4d+2)\times (3d+2)}  \\
\mathbf{C} &\mathbf{C}_G& 0_{d\times (3d+2)}  
\end{bmatrix}.
\end{align}
where $\mathbf{B}_-=-c\mathbf{J}\mathbf{Q}^{-1}$, $\mathbf{B}_+ =c\mathbf{L}^{-\top}[-\mathbf{D}_R^{\top},\gamma I_d+\mathbf{D}_\Phi^\top-\mathbf{D}_U^\top\Omega]\mathbf{Q}^{-1} $, $\mathbf{B}_r = ce_0^\top \mathbf{Q}^{-1}$, $\mathbf{B}_U=c\Omega \mathbf{J}\mathbf{Q}^{-1}$, $\mathbf{C} = ({\alpha}/c) I_d$, and $\mathbf{C}_G= (-{\alpha}/c) \mathbf{D}_\Phi \mathbf{L}^{-1}$. Here, $\Omega^\top=-\Omega$ is arbitrary and may, in particular, be nonzero, $\mathbf{J}=[0,I_d]$, $e_0^\top=[1,0_{1\times d}]$, and $c\neq0$.

\begin{theorem}[Forward pass as soft Q-learning]
\label{Th:Qlearning1}
Consider two-layer Transformer defined above, with masked softmax attention \eqref{eq:promH0} parameterized by $\mathbf{P}_1,\mathbf{V}_1$ in the first layer, and linear attention \eqref{eq:promH1} parameterized by $\mathbf{P}_2,\mathbf{V}_2$ in the second layer.
For the input prompt $\mathbf{H}^{(0)}$ in \eqref{eq:Hzero}, the following hold.

\textnormal{(i)} The Transformer exactly implements the soft Q-learning update:
\[
\mathbf{w}_{\mathrm{TF}}^{(\tau)}:=\operatorname{TF}^{(\tau)}(\mathbf{H}^{(0)};\mathbf{P}_1,\mathbf{V}_1,\mathbf{P}_2,\mathbf{V}_2)
=
\mathbf{w}
+
\frac{\alpha}{n}\sum_{i=0}^{n-1}\phi_i
\left[
r_{i+1}
+\gamma\mathbf{w}^{\top}\bar{\phi}_i^{+,\tau}
-\mathbf{w}^{\top}\phi_i
\right],
\]
where
$\bar{\phi}_i^{+,\tau}
=
\sum_{a\in\mathcal A}\pi_{i,a}^{\tau}\phi_{i,a}^{+}$,
$\pi_{i,a}^{\tau}
=
\exp(\mathbf{w}^{\top}\phi_{i,a}^{+}/\tau)/\sum_{b\in\mathcal A}\exp(\mathbf{w}^{\top}\phi_{i,b}^{+}/\tau)$.

\textnormal{(ii)} For each $i$, define
$q_{i,a}:=\mathbf w^\top \phi_{i,a}^{+}$,
$q_i^\star:=\max_{a\in\mathcal A} q_{i,a}$, and
$\mathcal A_i^\star:=\arg\max_{a\in\mathcal A} q_{i,a}$.
Let $m:=|\mathcal A|<\infty$ and $K_i:=|\mathcal A_i^\star|$. Suppose that $\|\phi(s,a)\|_2\le B_\phi$ and
$|q_{\mathbf w}(s,a)|\le B_q$ for all state-action pairs. 
For every $i$ such that $\mathcal A_i^\star\neq\mathcal A$, define the action gap $\Delta_i := q_i^\star-\max_{a\notin\mathcal A_i^\star}q_{i,a}
>0$. Then $\lim_{\tau\downarrow0}\mathbf w_{\mathrm{TF}}^{(\tau)}=\mathbf w_Q$ and
\[
\left\|
\mathbf w_Q-\mathbf w_{\mathrm{TF}}^{(\tau)}
\right\|_2
\le
2\alpha\gamma B_q B_\phi 
\sum_{i:\,\mathcal A_i^\star\neq\mathcal A}
\frac{m-K_i}{nK_i}
e^{-\Delta_i/\tau}, 
\]
where $\mathbf{w}_Q
=
\mathbf{w}
+
\frac{\alpha}{n}\sum_{i=0}^{n-1}\phi_i
\left[
r_{i+1}
+\gamma\max_{a\in\mathcal A}
\mathbf{w}^{\top}\phi_{i,a}^{+}
-\mathbf{w}^{\top}\phi_i
\right]$ is window Q-learning update.
\end{theorem}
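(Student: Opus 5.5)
For part (i) we compute the forward pass block by block, first layer 1 and then layer 2. The readout is the bottom-right $d$-dimensional vector, i.e. the last $d$ entries of the $W$ block of the query column $p$.

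\textbf{Layer 1.} Under the mask, each transition token $t_i$ attends only to $\{c_{i,a}\}_{a\in\mathcal A}$.
\begin{itemize}[leftmargin=*]
\item The score of key $c_{i,a}$ against query $t_i$ is $c_{i,a}^\top\mathbf P_1 t_i$. Only the $\Psi$-row of $\mathbf P_1$ meets the nonzero block of $c_{i,a}$. Since its entries in the $\Phi$ and $R$ columns vanish, the score reduces to $(\phi^+_{i,a})^\top\mathbf A_\theta\mathbf w=\mathbf w^\top\phi^+_{i,a}/\tau$. The softmax weights are therefore exactly $\pi^\tau_{i,a}$.
\item The value $\mathbf V_1 c_{i,a}$ picks out the $\Psi$-column of $\mathbf V_1$, giving $[\mathbf D_\Phi\phi^+_{i,a};\mathbf L\phi^+_{i,a};\mathbf D_R\phi^+_{i,a};0;\mathbf D_U\phi^+_{i,a};0]$.
\item Hence $t_i$ becomes $[\phi_i+\mathbf D_\Phi\bar\phi_i;\ \mathbf L\bar\phi_i;\ r_{i+1}+\mathbf D_R\bar\phi_i;\ 0;\ \mathbf w+\mathbf D_U\bar\phi_i;\ 0]$, where $\bar\phi_i:=\bar\phi_i^{+,\tau}$.
\item The token $p$ attends only to itself with weight $1$. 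Its value is $\mathbf E_\rho\tilde{\mathbf w}$, so $p$ becomes $[0;\dots;0;\mathbf Q\tilde{\mathbf w}]$.
\item Each $c_{i,a}$ attends only to $p$. Because the $W$-block entries of the first five rows of $\mathbf V_1$ vanish, only its last block changes. This does not matter for the readout as long as the layer-2 scores against $p$ for these tokens cancel or their $\Phi,G$ blocks are zero. Their $\Phi$ and $G$ blocks are indeed zero, so their values in layer 2 vanish.
\end{itemize}

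\textbf{Layer 2.} The output at $p$ is $\frac1n\sum_j (\mathbf V_2 h_j)(h_j^\top\mathbf P_2 h_p)$, and only its last $d$ rows matter.
\begin{itemize}[leftmargin=*]
\item Only the $\Phi$ and $G$ blocks enter the value, so candidate tokens and $p$ contribute nothing. For transition token $i$, the value reduces to $\frac{\alpha}{c}\big(\phi_i+\mathbf D_\Phi\bar\phi_i-\mathbf D_\Phi\mathbf L^{-1}\mathbf L\bar\phi_i\big)=\frac{\alpha}{c}\phi_i$.
\item The score $h_i^\top\mathbf P_2\mathbf Q\tilde{\mathbf w}$ uses only the last column block of $\mathbf P_2$, so $\mathbf Q^{-1}\mathbf Q$ cancels. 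Term by term:
\begin{itemize}
\item the $\Phi$ block gives $-c(\phi_i+\mathbf D_\Phi\bar\phi_i)^\top\mathbf w$;
\item the $G$ block gives $c\bar\phi_i^\top[-\mathbf D_R^\top,\ \gamma I+\mathbf D_\Phi^\top-\mathbf D_U^\top\Omega]\tilde{\mathbf w}$;
\item the $R$ block gives $c(r_{i+1}+\mathbf D_R\bar\phi_i)$;
\item the $U$ block gives $c(\mathbf w+\mathbf D_U\bar\phi_i)^\top\Omega\mathbf w$.
\end{itemize}
\item Summing these, the $\mathbf D_R$ and $\mathbf D_\Phi$ terms cancel. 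The $\mathbf D_U$ terms also cancel, since $\bar\phi_i^\top\mathbf D_U^\top\Omega\mathbf w$ appears with opposite signs. Finally, $\mathbf w^\top\Omega\mathbf w=0$ by skew-symmetry.
\item What remains is $c\,(r_{i+1}+\gamma\mathbf w^\top\bar\phi_i-\mathbf w^\top\phi_i)$.
\end{itemize}
Multiplying score and value, the factors of $c$ cancel, and adding the residual $\mathbf w$ gives (i). The main obstacle is this bookkeeping: one must verify that every $\star$ block either multiplies a zero block or lands outside the readout coordinates. In particular, the residual $p$ block after layer 1 is $\mathbf Q\tilde{\mathbf w}$, whose last $d$ coordinates are still $\mathbf w$ because $\mathbf Q$ has rows $[\rho,b^\top;0,I_d]$. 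So the readout residual is exactly $\mathbf w$.

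\textbf{Part (ii).} Subtracting (i) from $\mathbf w_Q$ gives
\[
\mathbf w_Q-\mathbf w^{(\tau)}_{\mathrm{TF}}=\frac{\alpha\gamma}{n}\sum_i\phi_i\,\big(q_i^\star-\textstyle\sum_a\pi^\tau_{i,a}q_{i,a}\big).
\]
\begin{itemize}[leftmargin=*]
\item If $\mathcal A_i^\star=\mathcal A$, all $q_{i,a}$ are equal and the term vanishes.
\item Otherwise the difference is $\sum_{a\notin\mathcal A_i^\star}\pi^\tau_{i,a}(q_i^\star-q_{i,a})$, which is at most $2B_q\sum_{a\notin\mathcal A_i^\star}\pi^\tau_{i,a}$.
\item For $a\notin\mathcal A^\star_i$, bounding the denominator below by the $K_i$ maximizing terms gives $\pi^\tau_{i,a}\le e^{-(q_i^\star-q_{i,a})/\tau}/K_i\le e^{-\Delta_i/\tau}/K_i$.
\item Using $\|\phi_i\|\le B_\phi$ and the triangle inequality yields the stated bound.
\end{itemize}
Each term tends to $0$ as $\tau\downarrow0$, which gives the limit, in agreement with Lemma~\ref{lem:Zero}.
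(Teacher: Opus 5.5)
Your proposal is correct and follows the paper's proof essentially step for step. Both arguments use the same layer-1 computation (the paper's Lemma~\ref{Lem:Q1}), the value projection $\frac{\alpha}{c}\phi_i$, the pairwise cancellation of the $\mathbf D_\Phi,\mathbf D_R,\mathbf D_U$ terms together with $\mathbf w^\top\Omega\mathbf w=0$ in the score, the readout identity $\mathbf J\mathbf Q=\mathbf J$, and for (ii) the backup-gap identity combined with the softmax tail bound of Lemma~\ref{lem:Zero}, which you rederive per action rather than for the summed off-argmax mass. The one step left implicit is that the $\star$ entries of the $\Psi$ row of $\mathbf P_1$ (resp.\ the last column block of $\mathbf P_2$) multiply the zero $G,\Psi,W$ blocks of $t_i$ (resp.\ the zero $\Psi,W$ blocks of $t_i^{(1)}$); you already flag this as routine bookkeeping.
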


The proof of Theorem~\ref{Th:Qlearning1} is provided in Appendix~\ref{app:two_proof}, together with the corresponding result for the tabular setting. 
Moreover, if each maximizer is unique and
$\Delta_{\min}:=\min_i\Delta_i>0$, then
$\|
\mathbf w_Q-\mathbf w_{\mathrm{TF}}^{(\tau)}
\|_2
\le
2\alpha\gamma B_q B_\phi(m-1)
e^{-\Delta_{\min}/\tau}$, $m>1$,
showing that the Transformer update converges exponentially fast to the standard window Q-learning update as $\tau\downarrow0$. 
The theorem thus establishes that a two-layer Transformer can exactly implement soft Q-learning in its forward pass and recover standard Q-learning in the zero-temperature limit. 
We call this in-context Q-learning.

\section{IC-Q-Learning algorithm and training dynamics analysis}\label{IC-Q-Learning}
\subsection{IC-Q-Learning Algorithm and Training Dynamics}
\label{sec:training-dynamics}

We now design an ICRL training procedure for the proposed soft Q-learning Transformer, inspired by prior ICRL training frameworks \citep{laskin2023incontext,liang2026transformers}, as summarized in Algorithm \ref{alg:fresh}.
The training objective follows the teacher-mimicking principle:
for each sampled context, the Transformer is trained to predict the parameter
obtained after one soft Q-learning update. In contrast to directly supervising the internal attention matrices, the loss is imposed only on the final parameter prediction. Hence, the Transformer is not explicitly provided with the analytical parameter construction derived in the preceding section.

\begin{algorithm}[t]
\caption{IC-Q-Learning Transformer Training with Independent-Context Minibatches}
\label{alg:fresh}
\begin{algorithmic}[1]
\REQUIRE Transformer parameters
$\boldsymbol{\theta}^{(0)}
=
(\mathbf{P}_1^{(0)},\mathbf{V}_1^{(0)},\mathbf{P}_2^{(0)},\mathbf{V}_2^{(0)})$,
trajectory window length $n$,
number of optimizer updates $U$, minibatch size $B$, 
$\epsilon$-greedy parameter $\epsilon$,
Transformer learning rate $\eta$,
Q-learning step size $\alpha$,
and temperature $\tau$.
\FOR{$k=0,\ldots,U-1$}
  \FOR{$b=1,\ldots,B$}
    \STATE Independently sample a new MDP $\M_b^{(k)}$, feature map $\phi_b^{(k)}$, $\mathbf{w}_b^{(k)}$, and initial state $s_b$.
    \STATE Estimate
    $q_{\mathbf{w}_b}^{(k,b)}(s,a)=(\mathbf{w}_b)^\top\phi_b^{(k)}(s,a)$, $\forall(s,a)\in\mathcal S\times\mathcal A$.
\STATE Sample an $n$-step trajectory $(s_{b,0},a_{b,0},r_{b,1},s_{b,1},a_{b,1},\ldots,r_{b,n},s_{b,n})$ from $\M_b^{(k)}$ according to the $\epsilon$-greedy behavior policy derived from $q_{\mathbf{w}}^{(k,b)}$.
\STATE Formulate $\mathbf{H}_b$ as in \eqref{eq:Hzero} using the sampled trajectory, the candidate features $\{\phi_b^{(k)}(s_{b,i+1},a):a\in\mathcal A\}$, and the current parameter $\mathbf{w}_b$.
\STATE Compute $\widetilde{\mathbf{H}}_b=\mathbf{H}_b+(\mathbf{V}_1\mathbf{H}_b)\operatorname{softmax}\left((\mathbf{H}_b)^\top\mathbf{P}_1\mathbf{H}_b+\mathbf{M}\right)$.
\STATE Compute $\mathbf{w}_{\mathrm{TF},b} =\left[ \widetilde{\mathbf{H}}_b+ \frac{1}{n}(\mathbf{V}_2\widetilde{\mathbf{H}}_b)\left((\widetilde{\mathbf{H}}_b)^\top\mathbf{P}_2\widetilde{\mathbf{H}}_b\right)\right]_{-d:,-1}$.
\STATE Compute the teacher update $\mathbf{w}^{(\tau)}_{\text{soft},b}$ according to soft Q-learning.
  \ENDFOR
  \STATE Compute $\widehat{\mathcal L}_B^{(k)}=\frac{1}{2B}\sum_{b=1}^{B}\|\mathbf{w}_{\mathrm{TF},b} - \mathbf{w}^{(\tau)}_{\text{soft},b} \|_2^2$.
  \STATE Update
        $\boldsymbol{\theta}^{(k+1)} \leftarrow \boldsymbol{\theta}^{(k)} - \eta \nabla_{\theta} \widehat{\mathcal L}_B^{(k)}$ 
\ENDFOR
\RETURN trained Transformer parameters $\boldsymbol{\theta}$.
\end{algorithmic}
\end{algorithm}

Fix integers $d\ge1$, $m\ge2$, $n\ge 1$ and constants $\alpha>0$, $0<\gamma<1$, $\tau>0$. Let $\mathbf{z}:=\bigl(\mathbf{w},\{\phi_i,r_{i+1},\phi^+_{i,j}:0\le i<n,\ j\in[m]\}\bigr)\sim\mathcal D_n$. Let
$\mathbf{w}^{(\tau)}_{\text{soft}} (\mathbf{z})$ denote the corresponding soft Q-learning target.
We define the population teacher-mimicking objective as
\begin{equation}
    \mathcal{L}(\boldsymbol{\theta}) :=
    \mathbb{E}_{\mathbf{z}}
    \left[ \frac{1}{2} \| \operatorname{TF}_{\theta} (\mathbf{H}(\mathbf{z})) - \mathbf{w}^{(\tau)}_{\text{soft}}(\mathbf{z}) \|_2^2
    \right].
    \label{eq:population-q-loss}
\end{equation}
where the expectation is taken jointly over the
training-MDP distribution, random feature maps, behavior trajectories, and
independently sampled input Q-function parameters.

We optimize the population objective in~\eqref{eq:population-q-loss} using
independently sampled training contexts. At each optimization step, we draw
a minibatch $\mathbf{z}_1,\ldots,\mathbf{z}_B \overset{\mathrm{i.i.d.}}{\sim}\mathcal D_n$,
where each context is generated from an independently sampled MDP and feature
map. The empirical loss is
$\widehat{\mathcal{L}}_B(\boldsymbol{\theta}) := \frac{1}{2B}\sum_{b=1}^{B}\|\operatorname{TF}_{\theta}(\mathbf H(\mathbf z_b))-\mathbf w^{(\tau)}_{\mathrm{soft}}(\mathbf z_b)\|_2^2$. 
Here, $B$ controls the amount of averaging across independently sampled
training contexts within each parameter update, and hence the stochasticity of
the resulting gradient estimate. Further details are provided in
Proposition \ref{prop:minibatch-gradient}.

Write $\Tset=(\Phi,R,U)$ and
$\Iset=(\Phi,G,R,U)$. Let $W_{\mathrm{out}}$ denote the last $d$
coordinates of $W$. Let $\boldsymbol{\theta}_{\mathrm{out}}$ collect all entries of
$\boldsymbol{\theta}=(\mathbf{P}_1,\mathbf{V}_1,\mathbf{P}_2,\mathbf{V}_2)$ outside the following $16$ blocks: $
 \mathbf{P}_{1,\Psi\Tset}$, 
 $\mathbf{V}_{1,\Iset\Psi}$, $\mathbf{V}_{1,WW}$, 
 $\mathbf{P}_{2,\Iset W}$, $\mathbf{V}_{2,W_{\mathrm{out}}\Iset}$.
Under the full-matrix population gradient flow
$\dot{\boldsymbol{\theta}}=-\nabla\mathcal L(\boldsymbol{\theta})$, if
$\boldsymbol{\theta}_{\mathrm{out}}(0)=0$, then
$\boldsymbol{\theta}_{\mathrm{out}}(t)=0$ for all $t\ge0$ (Proposition~\ref{prop:invariant-groups}).
Consequently, the training dynamics are entirely determined by
the retained blocks. We refer to these parameter blocks as valid parameter blocks. However, the resulting parameterization remains highly redundant, with additional degrees of freedom associated with $\mathbf{D}_\Phi$, $\mathbf{D}_U$, $\rho$, $b$, $c$, $\mathbf{L}$, and $\Omega$. To isolate the essential parameter blocks, we define ${\boldsymbol{\theta}}_{\mathrm{sup}}:=(\mathbf{A}_{\boldsymbol{\theta}},\mathbf{L}, \mathbf{B}_-, \mathbf{B}_+, \mathbf{B}_r,\mathbf{C})
\in\Theta_{\mathrm{sup}}$. For any $\mathbf{L} \in\mathrm{GL}(d)$ and $c\neq0$, let
\begin{equation}\label{eq:manifold}
\bar{\boldsymbol{\theta}}_\tau(\mathbf{L},c)
=
\left(
\tau^{-1}I_d,\,
\mathbf{L},\,
[0,-cI_d],\,
[0,c\gamma\mathbf{L}^{-\top}],\,
[c,0],\,
\frac{\alpha}{c}I_d
\right).
\end{equation}
These six blocks form an inclusion-minimal sufficient support within the specified architecture: they realize the target update in
Theorem~\ref{Th:Qlearning1} on all admissible prompts, whereas no proper subset does so when every other block is fixed to zero  (Proposition~\ref{prop:six-block-minimal-support}). We call these six parameter blocks the support parameter blocks.  

\subsection{Empirical Analysis}\label{ex:main}
Each MDP has $n_S = 9$ states and $n_A =m= 4$ actions, with discount $\gamma = 0.5$.  For each MDP, the initial-state distribution $P_0$ and transition kernel $P(\cdot \mid s, a)$ are drawn independently from $\text{Dirichlet}(1, \dots, 1)$ (uniform over the probability simplex on $n_S$ states), and the rewards $r(a, s') \sim \text{Unif}(-1, 1)$ are sampled i.i.d.\ per $(a, s')$.  
For each MDP, the state-action feature map $\phi\colon \mathcal{S}\times\mathcal{A}\to \mathbb{R}^d$ ($d = n_S n_A = 36$) and the initial weight $\mathbf{w}\in \mathbb{R}^d$ have entries drawn i.i.d.\ from $\text{Unif}(-1,1)$. The behavior policy is $\epsilon$-greedy ($\epsilon=0.1$) with $Q(s,a) = \phi(s,a)^\top \mathbf{w}$, and the soft window Q-learning target $\mathbf{w}^{(\tau)}_\text{soft}$ is computed from the trajectory window with step size $\alpha=0.2$. 

The Transformer uses embedding dimension $D = 5d + 2 = 182$. The prompt supplies $\mathbf{w}$, current features $\phi_i$, rewards $r_{i+1}$, and
all next-state candidate features $\phi^+_{i,a}$.
Tabular tests replace features by $\phi(s,a)=e_{(s-1)m+a}$ and are explicitly
separated from dense-feature tests. Training is performed over 46.40 million tasks per model and five paired seeds. Adam uses cosine learning-rate decay
from the first update; the default temperature, batch size and context length are
$\tau=0.2$, $B=32$ and $n=20$.  By Proposition~\ref{prop:invariant-groups}, the sixteen valid parameter blocks are initialized using Xavier normal initialization with gain \(0.1\). All remaining entries of the two-layer attention construction are initialized to zero, as they remain fixed during training. We conduct the following experiments, with full details provided in Appendix~\ref{ex:Valid}.

Experiments show that parameter blocks other than the sixteen valid parameter blocks stay at zero as training progresse, confirming Proposition \ref{prop:invariant-groups}. We evaluate the learned update using three complementary metrics: Soft error measures how accurately the Transformer realizes the soft Q update, Hard error measures its discrepancy from standard Q-learning, and Greedy agreement measures how often it selects the same greedy action as Q-learning. The learned Transformer closely approximates the soft Q-learning update and achieves nearly 100\% greedy agreement.  

Figure \ref{fig:matrices} presents the final learned matrices $\mathbf{P}_1$, $\mathbf{V}_1$, $\mathbf{P}_2$, and $\mathbf{V}_2$ obtained after training. Remarkably, the learned matrices recover the parameter structure predicted in Theorem \ref{Th:Qlearning1}. In particular, the diagonal entries of $\mathbf{A}_\theta$ in $\mathbf{P}_1$ approach $1/\tau$, indicating that the Transformer has learned the internal action-selection computation required by the soft Q-learning update. This provides direct evidence of parameter emergence. To quantitatively assess parameter recovery, we examine several construction relations.
The learned parameters closely approximate
 analytical relations, and the task-exposure curves show improved
recovery as training progresses.  See Appendix \ref{ex:Valid} for details of the heatmaps.

\begin{figure}[H]
\centering
\begin{minipage}{0.49\textwidth}
\includegraphics[width=\linewidth,keepaspectratio]{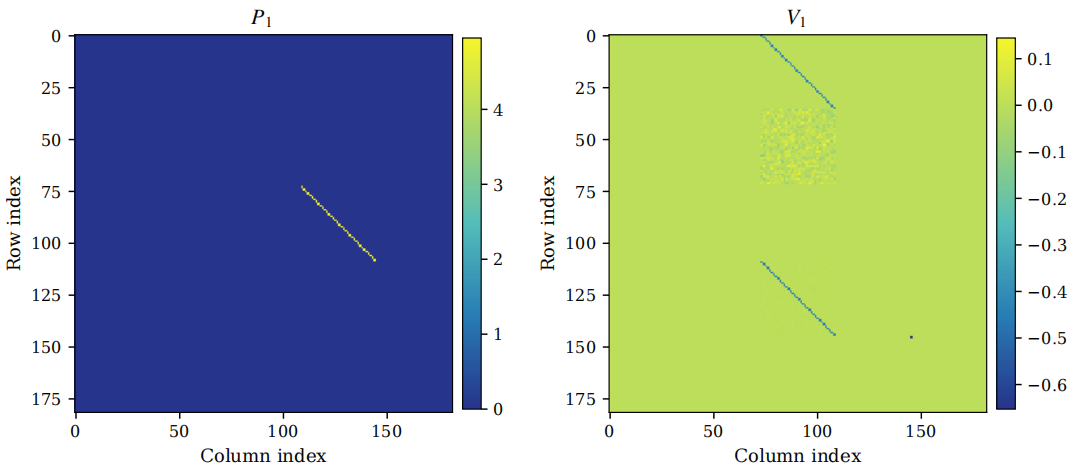}
\end{minipage}
\hskip 0pt 
\begin{minipage}{0.49\textwidth}
\includegraphics[width=\linewidth,keepaspectratio]{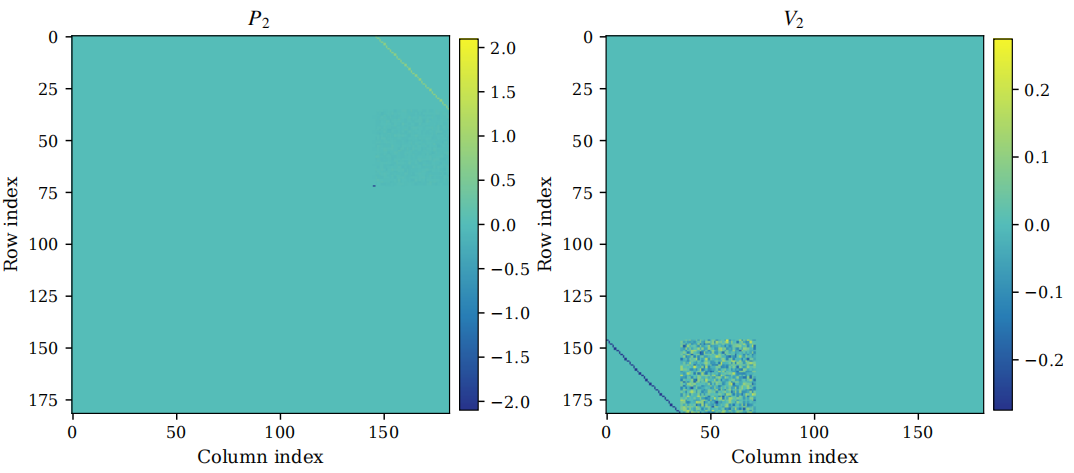}
\end{minipage}
\caption{Final learned $\mathbf{P}_1$, $\mathbf{V}_1$, $\mathbf{P}_2$ and $\mathbf{V}_2$ matrices after training at $\tau=0.2$, $B=32$, and $n=20$. the non-zero structure matches the theoretical constructions $(\mathbf{P}_1, \mathbf{V}_1, \mathbf{P}_2, \mathbf{V}_2)$ in \eqref{eq:layer1} and \eqref{eq:layer2}.}\label{fig:matrices}
\end{figure}

The left panel of Figure \ref{fig:cos-control1} deploys frozen dense-feature-trained Transformers on unseen tabular MDPs and measures exact policy returns under a common interaction budget. 
Comparisons include hard and soft window Q-learning, a model-informed
Oracle, and uniform Random.
The Transformer achieves returns close to those of the soft window teacher and a higher sample-mean return than the hard window-Q baseline using the same fixed step size.

\begin{figure}[H]
\centering
\includegraphics[width=.43\linewidth]{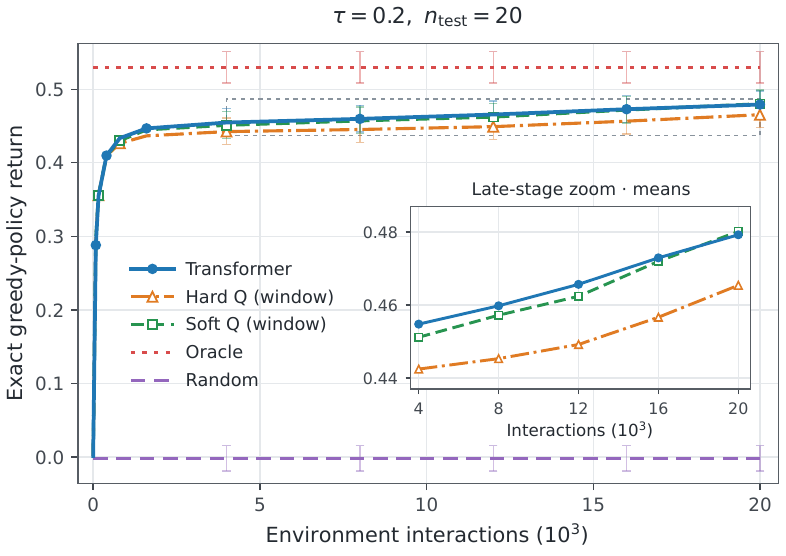}\hfill
\includegraphics[width=.46\linewidth]{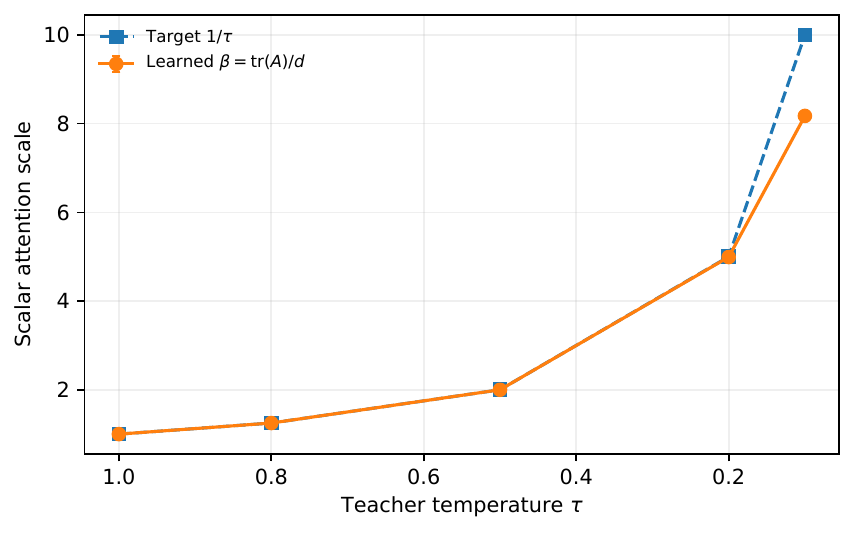}
\caption{Left: Performance of  Transformers on unseen tabular MDPs (sixteen valid parameter blocks). Right: The learned scalar projection $\beta_{\mathrm{att}}$ and its target $1/\tau$ (six support parameter blocks).}\label{fig:cos-control1}
\end{figure}

\paragraph{The effect of the action-selection operator.} To separately study how temperature affects the emergence of parameters, we analyze the six-block parameterization, which retains the complete soft Q-learning update mechanism and is inclusion-minimal (Proposition~\ref{prop:six-block-minimal-support}). All six blocks remain jointly trainable under output-only supervision. Training is performed over 20.48 million tasks per model. At a given temperature $\tau$, the trained Transformer is shown to approximate the soft Q-learning update and produce emergent parameters, verifying Proposition \ref{prop:six-block-minimal-support}. Decreasing $\tau$ brings the soft Q update closer to standard Q-learning, but increases the approximation error between the Transformer and the soft update. With a fixed training budget, the recovery error of the predicted parameter structure increases gradually as \(\tau\) decreases. The most pronounced discrepancy appears in the scalar attention scale $\beta_{\mathrm{att}} := \mathrm{tr}(\mathbf{A}_\theta)/d$. As shown in the right panel of Figure \ref{fig:cos-control1}, the learned scale falls below its theoretical target $1/\tau$ at low temperatures. This implies that smaller $\tau$ requires greater task exposure for accurate parameter recovery. We further examine the effects of batch size and training-context length. See Appendix~\ref{app:ApQ} for details.

\subsection{Theoretical Analysis}\label{sec:convergence}
In this section, we provide a theoretical account of the above empirical findings by explaining how parameter emergence arises from the pretraining algorithm. Our primary focus is to investigate the effect of the action-selection mechanism on convergence. To this end, we restrict optimization to the support parameter blocks, reducing parameter redundancy and isolating the Transformer’s Q-learning mechanism. Define the optimal parameter manifold as $\mathcal M_\tau
:=
\{\bar{\boldsymbol{\theta}}_\tau(\mathbf{L} ,c):
\mathbf{L} \in\mathrm{GL}(d),\ c\neq0\}$, we study whether deterministic gradient descent ${\boldsymbol{\theta}}_{k+1}={\boldsymbol{\theta}}_k-\eta\nabla \mathcal{L}({\boldsymbol{\theta}}_k)$ converges to $\mathcal M_\tau$ and characterize how its convergence rate and iteration complexity depend on $\tau$.

Let $\Coef$ be a vector space with $\dim\Coef= 4d^2+3d$ and
$\zeta:=(E_C,e_q,e_-,E_-,e_+,E_+,E_A)\in\Coef$, where the four $E$-symbols are $d\times d$ matrices and the three lowercase $e$-symbols are vectors in $\R^d$. Let the teacher-weighted covariance be $\Sigma_i^\tau=\sum_{a=1}^m\pi_{i,a}^\tau
(\phi^+_{i,a}-\bar\phi_i^{+,\tau})(\phi^+_{i,a}-\bar\phi_i^{+,\tau})^\top
\in\R^{d\times d}$ and
\begin{align}
\Obs(\mathbf{z})[\zeta]=&\frac1n\sum_{i=0}^{n-1}\Bigl\{
E_C\phi_i\delta_i^\tau+\phi_i\bigl[
\phi_i^\top(e_-+E_-\mathbf{w})+(\bar\phi_i^{+,\tau})^\top(e_++E_+\mathbf{w})\nonumber\\
&\qquad+r_{i+1}e_q^\top \mathbf{w}+\mathbf{w}^\top\Sigma_i^\tau E_A\mathbf{w}\bigr]\Bigr\}.
\label{eq:observable}
\end{align}
where $\delta_i^\tau(\mathbf{z}) =r_{i+1}-\mathbf{w}^\top\phi_i+\gamma \mathbf{w}^\top\bar\phi_i^{+,\tau}(\mathbf{z})$. In these coordinates, the adjoint $\Obs(\mathbf{z})^*:\R^d\to\Coef$ is represented by the transpose of $\Obs(\mathbf{z})$. We define the $\dim\Coef\times\dim\Coef$ data Gram matrix as $\mathsf H_n(\tau)=\E[\Obs(\mathbf{z})^*\Obs(\mathbf{z})]$, and denote its smallest eigenvalue by $\chi=\lambda_{\min}(\mathsf H_n(\tau))$. 

We now state the richness assumptions on the training-MDP distribution

\begin{assumption}\label{assume}
Let $\Pi$ denote the class of behavior policies used to generate training trajectories. There exist positive constants $B_w,B_\phi,B_r$ such that, for every $\pi \in \Pi$, $\mathcal D_n$ is independent of ${\boldsymbol{\theta}}$ and 

(i) Boundedness. $\|\mathbf{w}\|_2\le B_w$, $\|\phi_i\|_2,\|\phi^+_{i,a}\|_2\le B_\phi$, $|r_{i+1}|\le B_r$, $\mathcal D_n\text{-a.s.}$

(ii) Joint observable richness. For the fixed $(n,\tau)$, $\mathsf H_n(\tau)\succ0$, meaning
$\langle\zeta,\mathsf H_n(\tau)\zeta\rangle>0$ for every $\zeta\ne0$. Equivalently,
$\Obs(\mathbf{z})[\zeta]=0$, $\mathcal D_n\text{-a.s.}\Longrightarrow \zeta=0$.
\end{assumption}
Assumption~\ref{assume} holds for a broad class of task populations with a nonzero absolutely continuous component and sufficient trajectory support, including the distribution used in our experiments. Explicit sufficient conditions and their verification are given in Appendix~\ref{app:verification}.

Fix a nonempty relatively compact open set $\boldsymbol{\Omega}\Subset\GL(d)\times (\mathbb R\setminus\{0\})$.
Its closure $\overline{\boldsymbol{\Omega}}$ is compact and remains inside
$\GL(d)\times(\mathbb R\setminus\{0\})$. Define its images
$\U=\{\bar{\boldsymbol{\theta}}_\tau(\mathbf{L},c):(\mathbf{L},c)\in\boldsymbol{\Omega}\}$, 
$\Ub=\{\bar{\boldsymbol{\theta}}_\tau(\mathbf{L},c):(\mathbf{L},c)\in\overline{\boldsymbol{\Omega}}\}$.
The set $\Ub$ is compact at each fixed $\tau$.

At a point $\bar{\boldsymbol{\theta}}\in\M_\tau$, the tangent space $T_{\bar{\boldsymbol{\theta}}}\M_\tau$ consists of derivatives at zero of smooth curves lying in $\M_\tau$ and passing through $\bar{\boldsymbol{\theta}}$. The normal space is its orthogonal complement, 
$N_{\bar{\boldsymbol{\theta}}}\M_\tau=(T_{\bar{\boldsymbol{\theta}}}\M_\tau)^\perp$, define the tubular neighborhood of $\U$ as
\begin{equation}
\mathcal T:=\mathcal T_r(\U)=\{\bar{\boldsymbol{\theta}}+\Delta:\bar{\boldsymbol{\theta}}\in\U,
\ \Delta\in N_{\bar{\boldsymbol{\theta}}}\M_\tau,\ \|\Delta\|<r\}.
\label{eq:tube}
\end{equation}
A perturbation $\mathbf{U}\in\Theta_{\mathrm{sup}}$ has blocks denoted by $\Delta A,\ldots,\Delta C$ in the same order as \eqref{eq:manifold}.
Let $\kappa$ denote the minimum local curvature normal to $\mathcal{M}_\tau$, defined by
\begin{equation}\label{eq:kappa}
    \kappa
:=\inf_{\substack{\bar{\boldsymbol{\theta}}\in {\mathcal U}_\tau,\mathbf{U}\perp T_{\bar{\boldsymbol{\theta}}}\mathcal M_\tau,\|\mathbf{U} \|=1}} \left\langle \mathbf{U},\nabla^2\mathcal{L}(\bar{\boldsymbol{\theta}})\mathbf{U}\right\rangle> 0.
\end{equation}
Let $T_\eta(\boldsymbol{\theta}) = \boldsymbol{\theta} - \eta \nabla \mathcal{L}(\boldsymbol{\theta})$ and $d(\boldsymbol{\theta}):=\operatorname{dist}(\boldsymbol{\theta},\mathcal M_\tau)$, define
$$
K_{\text{GD}}(\varepsilon;\boldsymbol{\theta}_0) := \min\big\{ K \in \mathbb{N}_0 : d\big(T_\eta^K(\boldsymbol{\theta}_0)\big) \le \varepsilon d(\boldsymbol{\theta}_0) \big\}.
$$

\begin{theorem}\label{thm:main}
Under Assumption~\ref{assume}, there exist  a neighborhood $\mathcal{T}$, a constant $M_J>0$ such that, for any $\varepsilon\in (0, 1/2]$, $0<\eta\le\frac{1}{2M_J^2}$, and ${\boldsymbol{\theta}}_0\in \mathcal{A}_\tau :=\{{\boldsymbol{\theta}}\in\mathcal T:d({\boldsymbol{\theta}})>0,\frac{4M_J}{\sqrt{\kappa}} d(\boldsymbol{\theta})<\operatorname{dist}(\boldsymbol{\theta},\mathcal{T}^c)\}$, 

(i) The gradient descent
${\boldsymbol{\theta}}_{k+1}={\boldsymbol{\theta}}_k-\eta\nabla \mathcal{L}({\boldsymbol{\theta}}_k)$,
converges to some ${\boldsymbol{\theta}}_\infty\in\U$, and $\|{\boldsymbol{\theta}}_k-{\boldsymbol{\theta}}_\infty\|\le \frac{4M_J}{\sqrt{\kappa}}d(\boldsymbol{\theta}_0)(1-\frac{1}{4}\eta\kappa)^{k/2}$, where $1-\frac{1}{4}\eta\kappa\in(0,1)$.

(ii) $K_{\text{GD}}(\varepsilon;\boldsymbol{\theta}_0) = O (\frac{1}{\eta\kappa}\log\frac{1}{\varepsilon} )$ for every $\boldsymbol{\theta}_0\in\mathcal{A}_\tau$,
and $\sup_{\boldsymbol{\theta}_0\in\mathcal{A}_\tau} K_{\text{GD}}(\varepsilon;\boldsymbol{\theta}_0)
= \Theta_{\boldsymbol{\Omega}} (\frac{1}{\eta\kappa}\log\frac{1}{\varepsilon} )$.
\end{theorem}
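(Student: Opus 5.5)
The plan is the standard local analysis of gradient descent near a manifold of minimizers, with a Polyak--Łojasiewicz-type inequality taking the place of strong convexity. The first step is to check that $\mathcal M_\tau$ is a smooth embedded submanifold of $\Theta_{\mathrm{sup}}$ on which $\mathcal L$ attains its global minimum. The map $(\mathbf L,c)\mapsto\bar{\boldsymbol\theta}_\tau(\mathbf L,c)$ is an injective immersion: $\mathbf L$ and $c$ can be read off directly from the $\mathbf L$- and $\mathbf B_r$-blocks, so the differential has full rank $d^2+1$. Theorem~\ref{Th:Qlearning1} then gives $\operatorname{TF}_{\bar{\boldsymbol\theta}}=\mathbf w^{(\tau)}_{\mathrm{soft}}$ on every prompt, so $\mathcal L\equiv0$ on $\mathcal M_\tau$. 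We next write the residual as $f_{\boldsymbol\theta}(\mathbf z)=\operatorname{TF}_{\boldsymbol\theta}(\mathbf H(\mathbf z))-\mathbf w^{(\tau)}_{\mathrm{soft}}(\mathbf z)$. At a point of $\mathcal M_\tau$ this gives $\nabla^2\mathcal L(\bar{\boldsymbol\theta})=\E[J_{\bar{\boldsymbol\theta}}(\mathbf z)^\top J_{\bar{\boldsymbol\theta}}(\mathbf z)]$, where $J$ denotes the Jacobian of $f$ in $\boldsymbol\theta$.

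The core computation is to linearize $\operatorname{TF}$ at $\bar{\boldsymbol\theta}$ in a perturbation $\mathbf U=(\Delta A,\Delta L,\Delta B_-,\Delta B_+,\Delta B_r,\Delta C)$ and show that $J\mathbf U=\Obs(\mathbf z)[\zeta(\mathbf U)]$ for an explicit linear map $\zeta:\Theta_{\mathrm{sup}}\to\Coef$. The components of this correspondence are as follows.
\begin{itemize}
\item The softmax derivative with respect to $\Delta A$ contributes $\mathbf w^\top\Sigma_i^\tau E_A\mathbf w$; this is where the teacher-weighted covariance arises.
\item $\Delta C$ contributes $E_C\phi_i\delta_i^\tau$.
\item The $\Phi$-row block of $\Delta B_-$, the $G$-row block of $\Delta B_+$ (after multiplication by $\mathbf L$), and $\Delta B_r$ contribute $e_\pm,E_\pm$ and $e_q$.
\item $\Delta L$ enters only through the product $\mathbf B_+^\top\mathbf L\bar\phi$.
\end{itemize}
We then show that $\ker\zeta=T_{\bar{\boldsymbol\theta}}\mathcal M_\tau$, using a dimension count $\dim\Theta_{\mathrm{sup}}-\dim\Coef=d^2+1$ together with the fact that tangent directions lie in the kernel. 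Assumption~\ref{assume}(ii) then yields $\langle\mathbf U,\nabla^2\mathcal L\,\mathbf U\rangle=\langle\zeta,\mathsf H_n(\tau)\zeta\rangle\ge\chi\,\sigma_{\min}(\zeta|_{N})^2\|\mathbf U\|^2$ for normal $\mathbf U$. Compactness of $\Ub$ gives a uniform lower bound, hence $\kappa>0$. Assumption~\ref{assume}(i) combined with compactness gives uniform bounds: $\|J\|\le M_J$, together with Lipschitz bounds on $J$ and on $\nabla^2\mathcal L$ on a tube $\mathcal T_r(\U)$.

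On a small tube we then use the nearest-point projection. Writing $\boldsymbol\theta=\bar{\boldsymbol\theta}+\Delta$ with $\Delta$ normal and $\|\Delta\|=d(\boldsymbol\theta)$, Taylor expansion gives
\[
\mathcal L(\boldsymbol\theta)\ge\tfrac{\kappa}{4}d^2,\qquad
\mathcal L(\boldsymbol\theta)\le M_J^2d^2,\qquad
\|\nabla\mathcal L(\boldsymbol\theta)\|^2\ge\tfrac{\kappa}{2}\mathcal L(\boldsymbol\theta),
\]
where shrinking $r$ absorbs the third-order remainders. The descent lemma with $\eta\le1/(2M_J^2)$, using the bound $M_J^2$ for the Hessian norm near $\mathcal M_\tau$, gives $\mathcal L(\boldsymbol\theta_{k+1})\le(1-\tfrac14\eta\kappa)\mathcal L(\boldsymbol\theta_k)$ as long as the iterates remain in $\mathcal T$. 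To confine them there, we bound the step length $\|\boldsymbol\theta_{k+1}-\boldsymbol\theta_k\|=\eta\|\nabla\mathcal L\|\le\eta\sqrt2M_J\sqrt{\mathcal L_k}$ and sum the resulting geometric series. This is the standard Karimi/Attouch-type trajectory-length argument, and it bounds the total length by $\frac{4M_J}{\sqrt\kappa}d(\boldsymbol\theta_0)$. The condition defining $\mathcal A_\tau$ is exactly what keeps the iterates in $\mathcal T$. The iterates are therefore Cauchy, and their limit $\boldsymbol\theta_\infty$ has $\mathcal L=0$, so it lies in $\mathcal M_\tau$, and in $\U$ after possibly shrinking. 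The tail sum gives the rate in (i). For the upper bound in (ii), we combine $d_k\le\frac{2}{\sqrt\kappa}\sqrt{\mathcal L_k}$ with $\mathcal L_0\le M_J^2d_0^2$ to obtain $K=O(\frac{1}{\eta\kappa}\log\frac1\varepsilon)$, using $\log(1-x)^{-1}\ge x$.

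The matching lower bound on the supremum comes from an initialization along a minimizing normal direction $\mathbf U^\star$ of the Hessian at a point where the infimum defining $\kappa$ is nearly attained, which is possible by compactness of $\Ub$. Near $\mathcal M_\tau$, GD acts on the normal component approximately as $I-\eta\nabla^2\mathcal L$, so this component contracts by at least $(1-\eta\kappa)(1-o(1))$ per step. Hence at least $\frac{c}{\eta\kappa}\log\frac1\varepsilon$ steps are required, using $\varepsilon\le1/2$ and $\eta\kappa\le1/2$. I expect the main obstacle to be the linearization identity together with the exact kernel identification, $\ker\zeta=T\mathcal M_\tau$. This requires carefully tracking how $\Delta L$ interacts with the $G$-block and the $\mathbf L^{-\top}$ in $\mathbf B_+$, and confirming that no normal direction is annihilated. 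A secondary delicate point is making the lower bound rigorous, since the nonlinear remainder has to be controlled uniformly over $K$ steps.
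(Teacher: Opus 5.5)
Your geometric part (the immersion, $\nabla^2\mathcal L(\bar{\boldsymbol\theta})=\E[J^\top J]$, the factorization of $J$ through $\Obs(\mathbf z)$, and uniform $\kappa>0$ by compactness) and your lower-bound construction are essentially the paper's. Two small repairs are needed there.

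First, the dimension count together with $T_{\bar{\boldsymbol\theta}}\M_\tau\subseteq\ker\zeta$ only gives $\dim\ker\zeta\ge d^2+1$. You also need $\zeta$ to be onto $\Coef$. An explicit right inverse supplies this: take $\Delta A=(\gamma c)^{-1}E_A$, $\Delta C=\alpha c^{-2}E_C$, $\Delta b_+=\mathbf L^{-\top}e_+$, $\Delta K_+=\mathbf L^{-\top}E_+$, $\Delta\mathbf L=0$, $\Delta\beta=0$, and read off the remaining blocks directly. Alternatively, solve $\zeta(\mathbf U)=0$ directly, as the paper does.

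Second, the lower bound needs no remainder control uniform in $K$. Fix $K$ first and let the normal perturbation size $t\downarrow0$. Since $\mathcal A_\tau$ contains $\bar{\boldsymbol\theta}+t\mathbf U$ for all small $t>0$, the limit $d(T_\eta^K\boldsymbol\theta_t)/d(\boldsymbol\theta_t)\to(1-\eta\ell)^K$ already forces $(1-\eta\ell)^K\le\varepsilon$ for any $K$ that works on all of $\mathcal A_\tau$.

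The genuine difference is the dynamics. You run a PL argument on the loss plus a path-length bound, whereas the paper never passes through $\mathcal L$. It uses $\Pi(\boldsymbol\theta_k)$ as a comparison point, so that $d_{k+1}^2\le d_k^2-2\eta\langle\nabla\mathcal L,\Delta\rangle+\eta^2\|\nabla\mathcal L\|^2$. It then shows $\langle\nabla\mathcal L,\Delta\rangle\ge\tfrac{15}{32}\kappa d_k^2$ and $\|\nabla\mathcal L\|^2\le\tfrac{49}{30}M_J^2\langle\nabla\mathcal L,\Delta\rangle$. This gives both $d_{k+1}^2\le(1-\tfrac14\eta\kappa)d_k^2$ and $\eta\|\nabla\mathcal L(\boldsymbol\theta_k)\|\le\tfrac{4M_J}{\sqrt\kappa}(d_k-d_{k+1})$, which telescopes in $d$. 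Your version has two problems as written.

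\emph{Confinement.} Summing $\eta\|\nabla\mathcal L_k\|\le\eta\sqrt2M_J\sqrt{\mathcal L_k}$ as a geometric series only certifies a path length of order $M_J^2d_0/\kappa$, roughly $8\sqrt2M_J^2d_0/\kappa$. That is a factor of order $M_J/\sqrt\kappa$ too large for the radius that $\mathcal A_\tau$ guarantees. You need the genuine Kurdyka--{\L}ojasiewicz telescoping $\sqrt{\mathcal L_k}-\sqrt{\mathcal L_{k+1}}\ge\tfrac{3\eta}{8}\sqrt{\kappa/2}\,\|\nabla\mathcal L_k\|$. With your constants this gives length at most $\tfrac{8\sqrt2}{3}\tfrac{M_J}{\sqrt\kappa}d_0<\tfrac{4M_J}{\sqrt\kappa}d_0$, and the rate in (i).

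\emph{Iteration count.} Converting from loss back to distance costs the condition number. You only get $d_k\le\tfrac{2M_J}{\sqrt\kappa}(1-\tfrac14\eta\kappa)^{k/2}d_0$, hence $K_{\mathrm{GD}}\le\tfrac{8}{\eta\kappa}\bigl(\log\tfrac1\varepsilon+\log\tfrac{2M_J}{\sqrt\kappa}\bigr)+1$, rather than the paper's $\lceil\tfrac{8}{\eta\kappa}\log\tfrac1\varepsilon\rceil$. With $\varepsilon\le1/2$ this is still $O(\tfrac1{\eta\kappa}\log\tfrac1\varepsilon)$ for fixed data. However, the constant depends on $M_J^2/\kappa$, which grows like $\tau^{-5}$ in Corollary~\ref{thm:sharp-temperature1}. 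So you do not obtain the $\Theta_{\boldsymbol\Omega}$ bound with a $\tau$-uniform constant that the corollary relies on; there you would pick up an extra $\log(1/\tau)$. The paper's direct contraction of $d$ is exactly what removes this prefactor.

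Finally, $\mathcal L(\boldsymbol\theta_\infty)=0$ alone does not place $\boldsymbol\theta_\infty$ on $\M_\tau$, since only $\M_\tau\subseteq\arg\min\mathcal L$ is known. Use $d_k\to0$ (or your bound $\mathcal L\ge\tfrac\kappa4 d^2$ on $\mathcal T$) together with closedness. Then $\mathcal T\cap\M_\tau=\U$ follows from uniqueness of the nearest-point projection in the tube, so no further shrinking is needed.
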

In particular, under our experimental setting, we obtain the following corollary.

\begin{corollary}
\label{thm:sharp-temperature1}
Let feature table $\Phi:\mathcal S\times\mathcal A\to\mathbb R^d$, reward table $R:\mathcal A\times\mathcal S \to \mathbb{R}$, and an input weight $\mathbf{w}$, $\mathcal N\subset\mathbb N$ be a nonempty finite set of context
lengths. Suppose $n_S,d,m\ge2$, $P_0$ and all transition rows are mutually independent $\operatorname{Dirichlet}(1,\ldots,1)$, independent of $(\mathbf{w},\Phi,R)$, whose entries are mutually independent $\operatorname{Unif}(-1,1)$. The behavior policy is $\epsilon_b$-greedy with uniform exploration, $0<\varepsilon_b\le1$. There exists 
\(\tau_*>0\) such that, for 
 \(0<\tau\le\tau_*\), $n\in \mathcal{N}$ and $\eta$ satisfying Theorem~\ref{thm:main},
$K_{\text{GD}}(\varepsilon;\boldsymbol{\theta}_0) = O (\frac{1}{\eta\tau^5}\log\frac{1}{\varepsilon} )$ for every $\boldsymbol{\theta}_0\in\mathcal{A}_\tau$,
and $\sup_{\boldsymbol{\theta}_0\in\mathcal{A}_\tau} K_{\text{GD}}(\varepsilon;\boldsymbol{\theta}_0,\tau,n)
=\Theta_{\boldsymbol{\Omega},\mathcal{N}} (\frac{1}{\eta\tau^5}\log\frac{1}{\varepsilon} )$.
\end{corollary}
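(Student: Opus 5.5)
The corollary follows from Theorem~\ref{thm:main}(ii) once two things are in place. First, the experimental task distribution must satisfy Assumption~\ref{assume} for every $n\in\mathcal N$ and every small $\tau$. Second, the normal curvature must obey $\kappa=\kappa(\tau,n)\asymp_{\boldsymbol{\Omega},\mathcal N}\tau^{5}$ as $\tau\downarrow0$. Substituting $\kappa\asymp\tau^5$ into $O(\frac{1}{\eta\kappa}\log\frac1\varepsilon)$ and into the two-sided $\Theta_{\boldsymbol{\Omega}}$ bound then gives both claims, with constants uniform over the finite set $\mathcal N$ (take a max/min over $n$).

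\textbf{Step 1: Assumption~\ref{assume}.} Boundedness is immediate, since $\mathbf w$, $\Phi$ and $R$ have entries in $(-1,1)$ and $\|\cdot\|_2\le\sqrt d$. For richness I would use the appendix criterion. Suppose $\Obs(\mathbf z)[\zeta]=0$ almost surely. The law of $(\mathbf w,\Phi,R)$ has a positive density on an open set, and $\epsilon_b$-greedy with uniform exploration gives every finite trajectory positive probability. Hence the identity holds as a real-analytic (polynomial-exponential) identity in $(\mathbf w,\Phi,R)$ on an open set, for a fixed trajectory pattern. Varying $r$ separately isolates $E_C$ and $e_q$. Varying $\phi_i$ isolates $e_-$ and $E_-$. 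Varying the successor features, whose softmax weights are nonpolynomial in $\Phi$, separates the $\bar\phi^{+,\tau}$ terms ($e_+,E_+$) from the covariance term $\mathbf w^\top\Sigma_i^\tau E_A\mathbf w$. This forces $\zeta=0$, so $\chi(\tau,n)>0$.

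\textbf{Step 2: Hessian on the normal space.} Because $\mathcal L$ is a squared residual that vanishes on $\mathcal M_\tau$, we have $\nabla^2\mathcal L(\bar{\boldsymbol\theta})=\E[J^\top J]$, where $J=D_{\boldsymbol\theta}\TF(\bar{\boldsymbol\theta})$. Computing $J\mathbf U$ blockwise gives $J\mathbf U=\Obs(\mathbf z)[\Lambda_{\bar{\boldsymbol\theta}}\mathbf U]$ for a linear map $\Lambda_{\bar{\boldsymbol\theta}}$ depending on $(\mathbf L,c)$. The perturbation $\Delta A$ enters through the derivative of the softmax. Writing $\mathbf A_\theta=\tau^{-1}I+\Delta A$, it produces $\sum_a\pi_{i,a}(\phi^+_{i,a}-\bar\phi_i^{+})(\phi_{i,a}^{+})^\top\Delta A\,\mathbf w=\Sigma_i^\tau\Delta A\mathbf w$, which is why $E_A$ appears in \eqref{eq:observable}. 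The kernel of $\Lambda$ is exactly $T\M_\tau$, so $\kappa\ge\chi\,\sigma_{\min}(\Lambda|_{N})^2$ and $\kappa\le\lambda_{\max}(\mathsf H_n)\|\Lambda\|^2$ restricted to the relevant direction. On the compact set $\Ub$, the singular values of $\Lambda$ are bounded above and below independently of $\tau$, except possibly in the $A$-direction. I would check this: the tangent directions are $(\mathbf L,c)$ variations, and $\mathbf A_\theta$ is fixed on $\M_\tau$, so the $\Delta A$ normal direction carries no extra $\tau$ factor from $\Lambda$.

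\textbf{Step 3: $\tau$-asymptotics of $\mathsf H_n(\tau)$. This is the main obstacle.} All coordinates except $E_A$ contribute $\Theta(1)$ eigenvalues uniformly as $\tau\to0$, since $\bar\phi^{+,\tau}\to\phi^+_{i,a^\star}$ and richness persists in the limit. The $E_A$ block is the weak one. $\Sigma_i^\tau$ is exponentially small except on the event that the top two successor $Q$-values lie within $O(\tau)$ of each other. On that event $\Sigma_i^\tau=O(1)$, but its rank-one direction $\phi^+_{a}-\phi^+_{b}$ must be tested against $\mathbf w$. I would condition on the gap $g=q_{(1)}-q_{(2)}$, whose density is positive and bounded near $0$ under the Unif/Dirichlet model, and write $\Sigma^\tau\approx\sigma(g/\tau)(1-\sigma(g/\tau))vv^\top$. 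The crude estimate $\E[\|\Sigma^\tau\|^2]\asymp\tau$ needs refinement: the curvature is governed by the component of $E_A$ orthogonal to what the other blocks can absorb. I would Schur-complement out $(E_C,\dots,E_+)$, which is $O(1)$-coupled to the $E_A$ observable through correlated $O(\tau)$ leading terms. I expect cancellations in Taylor-expanding $\pi^\tau$ in $g/\tau$ to give an effective size $\tau^{5}$: $\tau$ from the measure of the near-tie event, with the remaining powers coming from the residual after projection, i.e.\ matching the first two moments in $g$. Concretely, I would compute $\lim_{\tau\to0}\tau^{-5}\lambda_{\min}$ of the Schur complement and show it is a finite positive constant, giving $\kappa\asymp\tau^5$ for $\tau\le\tau_*$.
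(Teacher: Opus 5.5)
Your Steps~1 and~2 match the paper's route. Richness comes from real analyticity on a positive-probability calibration path, and the Jacobian factors as $J_{\bar{\boldsymbol\theta}}=\frac{\alpha}{c}\,\mathcal B_{\tau,n}\circ Q_{L,c}$ with $Q_{L,c}$ independent of $\tau$, so $\kappa\asymp\chi$ with $\tau$-free constants. One detail in Step~1: later current features are entries of the same table $\Phi$ as the successor features, so you cannot vary $\phi_i$ and the successor features independently. The paper handles this by taking $a_z\neq a_x$ and zeroing the other successor features at $s_1$.

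The genuine gap is Step~3, which is the whole content of the corollary. You prove neither $\kappa\gtrsim\tau^5$ nor $\kappa\lesssim\tau^5$, and the mechanism you guess is not the right one. The weak direction does not come from $(E_C,\dots,E_+)$ absorbing part of $E_A$, and no Schur complement or moment matching is needed. The paper keeps those blocks at order one and decouples them by orthogonality:
\begin{itemize}
\item the reward $R_0$ is mean-zero and independent;
\item parity in $Y$ separates the odd and even terms;
\item the sign flip $(W,\Psi)\mapsto(-W,-\Psi)$ handles $e_+$;
\item for $E_+$, a separated region where $\Sigma^\tau$ is exponentially small.
\end{itemize}
The degeneracy is internal to the $E_A$ term. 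By the pairwise identity, $\mathbf w^\top\Sigma^\tau E_A\mathbf w=\sum_{a<b}\pi_a\pi_b\,g_{ab}\,(v_{ab}^\top E_A\mathbf w)$, where $v_{ab}=\phi^+_{a}-\phi^+_{b}$ and $g_{ab}=\mathbf w^\top v_{ab}$. On a near-tie event, which has probability $\Theta(\tau)$, one has $g=O(\tau)$. For trace-free $E_A$ the response is therefore $O(\tau)$, giving mean square $\tau^2\cdot\tau=\tau^3$. For $E_A=aI_d$, instead, $v^\top E_A\mathbf w=ag$ supplies a second gap factor, giving mean square $\tau^4\cdot\tau=\tau^5$.

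To close the argument you need both bounds explicitly.

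For the upper bound, take $\mathbf U_{\rm sc}=(I_d/\sqrt d,0,\ldots,0)$. It is normal, since every tangent vector has zero $A$-block. Its response is $\frac{\alpha\gamma}{n\sqrt d}\sum_i\phi_i\,\mathbf w^\top\Sigma_i^\tau\mathbf w$, and $\mathbf w^\top\Sigma_i^\tau\mathbf w=\sum_{a<b}\pi_a\pi_b g_{ab}^2\le\sum_{a<b}g_{ab}^2e^{-|g_{ab}|/\tau}$. A bounded density for $g_{ab}$ (this is where $d\ge2$ enters) then yields $\E(\mathbf w^\top\Sigma^\tau\mathbf w)^2\lesssim\int_0^\infty x^4e^{-2x/\tau}\dif x\propto\tau^5$.

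For the lower bound, you must rule out cancellation among $aI_d$, the trace-free part $D_0$ and $E_+$ that would push the response below $\tau^3\Frob{D_0}^2+\tau^5da^2$. The paper does this in four steps:
\begin{itemize}
\item it rescales $g=\tau x$ on $\tau\le|g|\le2\tau$;
\item it shows the limiting form $\int k(x)(xf+a_ex^2)^2\dif x$ is positive definite, since evaluating at $\pm x$ separates $f$ from $a_e=\tau a$;
\item it uses a finite frame of test pairs $(u,v)$ to control every entry of $D_0$;
\item it bounds the leakage of $E_+$ into the tie region by $O(\tau)\Frob{E_+}^2$.
\end{itemize}
Uniformity over $n\in\mathcal N$ also requires a quantitative path probability $p_n>0$ (Lemma~\ref{lem:calibration-reduction}), not just positivity. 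As written, ``$\lim_{\tau\to0}\tau^{-5}\lambda_{\min}$ is finite and positive'' is the claim to be proved, not a proof of it.
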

The proofs of Theorem \ref{thm:main} and Corollary \ref{thm:sharp-temperature1} are provided in Appendices \ref{app:Theoretical} and \ref{app:verification}. Corollary 
\ref{thm:sharp-temperature1} shows that as $\tau$ decreases, the local curvature diminishes, slowing convergence and requiring more iterations to achieve the same optimization accuracy, consistent with our empirical results.

\section{Discussion and limitations}
\label{main:lastpage}
This work takes a step toward white-boxing in-context reinforcement learning for off-policy control. We show that an attention-only Transformer can implement a finite-temperature Q-learning update in its forward pass and recover standard window Q-learning as the temperature approaches zero. We further show that the corresponding parameter structure can emerge through teacher-mimicking training, and establish local exponential convergence to a manifold of functionally equivalent soft Q-learning realizations, with particular attention to how action selection in soft Q-updates affects parameter emergence. These results provide a mechanistic account of how Q-learning can be represented and learned within Transformer computation.

Our analysis has several limitations. First, our theoretical analysis is restricted to the support parameter blocks because the valid parameter blocks exhibit substantial redundancy; extending the convergence analysis to all valid blocks remains future work. Second, we do not analyze minibatch SGD, which would be needed to characterize the dependence of convergence on minibatch size. 
\clearpage

\section*{AI use statement}
Generative AI tools were used to assist with code review, experimental implementation and analysis, verification of mathematical derivations, and linguistic refinement of the manuscript. In Appendix \ref{app:verification}, AI tools contributed key proof steps that were adopted by the authors. The core ideas of the other proofs were developed by the authors, with AI assistance in checking mathematical arguments, carrying out partial algebraic calculations, and identifying and correcting errors. The authors checked the proofs step by step and executed and tested the AI-assisted programs. All numerical results reported in the paper were obtained from executable simulation or model-evaluation code and its saved outputs. All AI-assisted content was reviewed and verified by the authors, who retain full responsibility for the accuracy, validity, and final presentation of this work.

\section*{Ethics Statement}
This work investigates the representation and learning of Q-learning updates in structured Transformers through mathematical analysis and experiments on synthetically generated Markov decision processes. The study does not involve human participants, personal or sensitive data, or real-world deployment. We do not identify specific ethical concerns arising from the reported experiments. The results apply under the stated architectural and mathematical assumptions and do not constitute a safety guarantee for real-world reinforcement-learning deployment.

\section*{Reproducibility Statement}
The Transformer architecture, prompt construction, attention mask, and teacher-mimicking objective are specified in Sections \ref{sec:construction} and \ref{IC-Q-Learning} and Algorithm \ref{alg:fresh}. The assumptions and proofs supporting the theoretical results are provided in Appendices \ref{app:two_proof}, \ref{app:algorithm}, \ref{app:Theoretical}, and \ref{app:verification}. Section \ref{ex:main} and Appendix \ref{sec:Experimental} describe the synthetic task generation, training configurations, evaluation protocols, and definitions of the reported metrics. Appendix \ref{sec:Experimental} also reports results across five paired random seeds and describes the uncertainty estimates used in the experiments. An anonymized implementation is provided in the supplementary material.
\bibliography{iclr2027_conference}
\bibliographystyle{iclr2027_conference}

\clearpage
\appendix
\numberwithin{equation}{section}
\numberwithin{theorem}{section}
\section{Two-layer Transformer for soft Q-learning}
\label{app:two_proof}

Softmax weights at finite temperature $\tau$ are not exactly equivalent to $\max$. Recall the definition of $\pi^\tau_{i,a}$ in \eqref{eq:piw}. Lemma \ref{lem:Zero} states that softmax operator can approximate $\max$ arbitrarily well, with the true hard-max limit recovered only as the softmax temperature $\tau \to 0$.
\begin{lemma}
\label{lem:Zero}
For each $i$, define
$q_{i,a}:=\mathbf w^\top \phi_{i,a}^{+}$,
$q_i^\star:=\max_{a\in\mathcal A} q_{i,a}$, and
$\mathcal A_i^\star:=\arg\max_{a\in\mathcal A} q_{i,a}$.
Let $m:=|\mathcal A|<\infty$ and $K_i:=|\mathcal A_i^\star|$.
Then, as $\tau\downarrow0$, 
$\pi_{i,a}^{\tau}\to \frac{1}{K_i}\mathbf{1}\{a\in\mathcal A_i^\star\}$.
If $\mathcal A_i^\star\neq\mathcal A$, define the action gap $\Delta_i := q_i^\star-\max_{a\notin\mathcal A_i^\star}q_{i,a}
>0$, then $1-\sum_{a\in\mathcal A_i^\star}\pi_{i,a}^{\tau} \le [(m-K_i)/K_i] e^{-\Delta_i/\tau}$.
\end{lemma}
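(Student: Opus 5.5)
The plan is to prove both claims by factoring out the largest exponential. Fix $i$ and write $Z_i(\tau):=\sum_{b\in\mathcal A}\exp(q_{i,b}/\tau)$. Multiplying numerator and denominator of $\pi^\tau_{i,a}$ by $e^{-q_i^\star/\tau}$ gives
\[
\pi^\tau_{i,a}=\frac{\exp\bigl((q_{i,a}-q_i^\star)/\tau\bigr)}{K_i+\sum_{b\notin\mathcal A_i^\star}\exp\bigl((q_{i,b}-q_i^\star)/\tau\bigr)}.
\]
In the denominator, every maximizer contributes exactly $1$. For $a\in\mathcal A_i^\star$ the numerator is also $1$. For $a\notin\mathcal A_i^\star$ the exponent $(q_{i,a}-q_i^\star)/\tau$ is strictly negative and tends to $-\infty$ as $\tau\downarrow0$.

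For the limit, I note that $\mathcal A$ is finite. Hence each term $\exp((q_{i,b}-q_i^\star)/\tau)$ with $b\notin\mathcal A_i^\star$ tends to $0$. The denominator therefore converges to $K_i$, and the numerator converges to $\mathbf 1\{a\in\mathcal A_i^\star\}$. This yields $\pi^\tau_{i,a}\to \mathbf 1\{a\in\mathcal A_i^\star\}/K_i$. If $\mathcal A_i^\star=\mathcal A$, all $\pi^\tau_{i,a}$ already equal $1/m$ for every $\tau$, which is consistent with the claim.

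For the quantitative bound, assume $\mathcal A_i^\star\neq\mathcal A$. I will write
\[
1-\sum_{a\in\mathcal A_i^\star}\pi^\tau_{i,a}=\sum_{a\notin\mathcal A_i^\star}\pi^\tau_{i,a}=\frac{S}{K_i+S},
\]
where $S:=\sum_{b\notin\mathcal A_i^\star}\exp((q_{i,b}-q_i^\star)/\tau)\ge0$. Two facts finish the argument.

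First, the denominator satisfies $K_i+S\ge K_i$, so $S/(K_i+S)\le S/K_i$.

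Second, every $b\notin\mathcal A_i^\star$ satisfies $q_{i,b}-q_i^\star\le-\Delta_i$ by the definition of the gap. Since there are $m-K_i$ such actions, $S\le (m-K_i)e^{-\Delta_i/\tau}$.

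Combining the two facts gives $1-\sum_{a\in\mathcal A_i^\star}\pi^\tau_{i,a}\le [(m-K_i)/K_i]\,e^{-\Delta_i/\tau}$, as claimed.

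There is no real obstacle here. The only point needing care is to handle ties by normalizing with $q_i^\star$, so that all $K_i$ maximizers are treated symmetrically, and to note that $\Delta_i>0$ is well defined because $\mathcal A$ is finite and $\mathcal A\setminus\mathcal A_i^\star$ is nonempty.
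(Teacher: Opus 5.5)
Your proposal is correct and follows essentially the same route as the paper: normalize by $e^{q_i^\star/\tau}$ so that the denominator becomes $K_i$ plus the suboptimal terms, pass to the limit, and bound the suboptimal mass by $S/(K_i+S)\le S/K_i\le (m-K_i)e^{-\Delta_i/\tau}/K_i$. Your explicit treatment of the case $\mathcal A_i^\star=\mathcal A$ is a small addition that the paper leaves implicit.
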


\begin{proof}[Proof of Lemma~\ref{lem:Zero}]
Fix $i$. Dividing the numerator and denominator of
$\pi_{i,a}^{\tau}$ by $\exp(q_i^\star/\tau)$ gives
\[
\pi_{i,a}^{\tau}
=
\frac{\exp(-(q_i^\star-q_{i,a})/\tau)}
{\sum_{b\in\mathcal A}
\exp(-(q_i^\star-q_{i,b})/\tau)}.
\]
For every $a\in\mathcal A_i^\star$, we have
$q_i^\star-q_{i,a}=0$, whereas for
$a\notin\mathcal A_i^\star$,
$q_i^\star-q_{i,a}>0$. Hence, as $\tau \to 0$,
$\sum_{b\in\mathcal A}
\exp(-(q_i^\star-q_{i,b})/\tau)
\rightarrow K_i$,
which yields $\pi_{i,a}^{\tau}\to 1/K_i$, for $a\in\mathcal A_i^\star$, $\pi_{i,a}^{\tau}\to 0$, for $a\notin\mathcal A_i^\star$.
Therefore, $\sum_{a\in\mathcal A_i^\star}\pi_{i,a}^{\tau}\to1$.
Now suppose $\mathcal A_i^\star\neq\mathcal A$ and define $\Delta_i
=
q_i^\star-\max_{a\notin\mathcal A_i^\star}q_{i,a}>0$. Then
\[
1-\sum_{a\in\mathcal A_i^\star}\pi_{i,a}^{\tau}
=
\frac{
\sum_{a\notin\mathcal A_i^\star}
e^{-(q_i^\star-q_{i,a})/\tau}
}{
K_i+
\sum_{a\notin\mathcal A_i^\star}
e^{-(q_i^\star-q_{i,a})/\tau}
}.
\]
Since $q_i^\star-q_{i,a}\ge\Delta_i$ for all
$a\notin\mathcal A_i^\star$,
$$1-\sum_{a\in\mathcal A_i^\star}\pi_{i,a}^{\tau}
\le
\frac{m-K_i}{K_i}e^{-\Delta_i/\tau}.$$

When $K_i=1$, this reduces to $1-\pi_{i,a_i^\star}^{\tau}
\le
(m-1)e^{-\Delta_i/\tau}$.
\end{proof}

We proceed to prove Theorem \ref{Th:Qlearning1}. Our Transformer architecture is two-layered, where the first layer adopts single-head softmax attention to produce Boltzmann-weighted aggregation across all candidate successor actions. To facilitate the proof, we first formally characterize the output behavior of this layer in Lemma \ref{Lem:Q1}, before establishing the full result of Theorem \ref{Th:Qlearning1}.

We first recall the first-layer attention mask. 
Let $T_i$, $C_{i,a}$, and $P_o$ denote the positions of $t_i$, $c_{i,a}$, and $p$, respectively. 
For each transition query $T_i$, only its associated candidate-action tokens are unmasked:
\[
\mathbf M_{C_{j,a},T_i}
=
\begin{cases}
0, & j=i,\\
-\infty, & j\neq i,
\end{cases}
\qquad
\mathbf M_{T_j,T_i}
=
\mathbf M_{P_o,T_i}
=
-\infty.
\]
For each candidate-action query and for the parameter query, only the parameter token $p$ is unmasked: $\mathbf M_{P_o,C_{i,a}}=0$, 
$\mathbf M_{P_o,P_o}=0$, 
and all remaining entries are set to $-\infty$. 
Hence, each transition token $t_i$ attends exactly to its own candidate-action set $\{c_{i,a}\}_{a\in\mathcal A}$.

\begin{lemma}[First layer computes the Boltzmann-weighted successor feature]
\label{Lem:Q1}
Consider the single-head masked softmax attention layer in
\eqref{eq:promH0} with mask $\mathbf{M}$ and parameters $\mathbf{P}_1,\mathbf{V}_1$ specified
in \eqref{eq:layer1}. Define
\[
 \zeta_Q=(\mathbf{Q}-I_{d+1})\widetilde{\mathbf{w}}
 =\begin{bmatrix}\rho-1+b^\top \mathbf{w}\\0_d\end{bmatrix},
 \qquad
 \widetilde{\mathbf{w}}_Q={\mathbf{Q}}\widetilde{\mathbf{w}}
 =\begin{bmatrix}\rho+b^\top \mathbf{w}\\\mathbf{w}\end{bmatrix}.
\]
For the input prompt $\mathbf{H}^{(0)}$ in \eqref{eq:Hzero}, the output is
\[
 \mathbf{H}^{(1)}=
 \bigl[t_0^{(1)},\mathbf{C}_0^{(1)},t_1^{(1)},\mathbf{C}_1^{(1)},\ldots,
       t_{n-1}^{(1)},\mathbf{C}_{n-1}^{(1)},p^{(1)}\bigr],
\]
where $\mathbf{C}_i^{(1)}=[c_{i,a}^{(1)}]_{a\in\A}$ and
\begin{equation}
 t_i^{(1)}=
 \begin{bmatrix}
  \phi_i+\mathbf{D}_\Phi\bar\phi_i^{+,\tau}\\
  \mathbf{L}\bar\phi_i^{+,\tau}\\
r_{i+1}+\mathbf{D}_R\bar\phi_i^{+,\tau}\\
  0_d\\
\mathbf{w}+\mathbf{D}_U\bar\phi_i^{+,\tau}\\
  0_{d+1}
 \end{bmatrix},
 \quad
 c_{i,a}^{(1)}=
 \begin{bmatrix}
  0_d\\0_d\\0\\\phi^+_{i,a}\\0_d\\\zeta_Q
 \end{bmatrix},
 \quad
 p^{(1)}=
 \begin{bmatrix}
  0_d\\0_d\\0\\0_d\\0_d\\\widetilde{\mathbf{w}}_Q
 \end{bmatrix}.
 \label{eq:first-layer-tokens}
\end{equation}
\end{lemma}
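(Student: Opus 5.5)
The proof is a direct column-by-column evaluation of \eqref{eq:promH0}. For each query column $j$, the residual update is $\sum_{k}(\mathbf V_1 h_k)\,\sigma_{kj}$. Here $\sigma_{\cdot j}$ is the softmax of $h_k^\top\mathbf P_1 h_j+\mathbf M_{kj}$ over the keys $k$. Because of the mask, only a few keys survive for each query, so the computation splits into three cases: transition queries $t_i$, candidate queries $c_{i,a}$, and the parameter query $p$. In every case I will first record which blocks of the key tokens are nonzero. For $c_{j,a}$ only the $\Psi$ block is nonzero, and for $p$ only the $W$ block is nonzero. Consequently only the $\Psi$-row of $\mathbf P_1$ and the $\Psi$- and $W$-columns of $\mathbf V_1$ matter for keys. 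This is why all the $\star$ entries are irrelevant.

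\emph{Candidate and parameter queries.} For a query $c_{i,a}$ or $p$, the only unmasked key is $p$. The softmax column is then the indicator of $p$, whatever the scores are, and in particular whatever the $\star$ blocks of $\mathbf P_1$ are. The update is therefore $\mathbf V_1 p$. Since $p$ has only the $W$ block $\widetilde{\mathbf w}$, $\mathbf V_1 p$ equals the $W$-column of $\mathbf V_1$ applied to $\widetilde{\mathbf w}$. That column is zero except for $\mathbf E_\rho$ in the $W$ row, so $\mathbf V_1p=[0;0;0;0;0;(\mathbf Q-I)\widetilde{\mathbf w}]=[0;\ldots;0;\zeta_Q]$. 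Adding the residual gives $c^{(1)}_{i,a}$ directly. For $p^{(1)}$ we get $\widetilde{\mathbf w}+(\mathbf Q-I)\widetilde{\mathbf w}=\mathbf Q\widetilde{\mathbf w}=\widetilde{\mathbf w}_Q$. The explicit forms of $\zeta_Q$ and $\widetilde{\mathbf w}_Q$ follow from multiplying out $\mathbf Q$.

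\emph{Transition queries.} For $t_i$ the unmasked keys are exactly $\{c_{i,a}\}_{a\in\A}$. The score is $c_{i,a}^\top\mathbf P_1 t_i$. Since $c_{i,a}$ is supported on $\Psi$, this equals $(\phi^+_{i,a})^\top$ times the $\Psi$-row of $\mathbf P_1$ applied to $t_i=[\phi_i;0;r_{i+1};0;\mathbf w;0]$. That gives
\[
(\phi^+_{i,a})^\top\bigl(0\cdot\phi_i+\star\cdot 0+0\cdot r_{i+1}+\star\cdot0+\mathbf A_\theta\mathbf w+\star\cdot 0\bigr)=\tau^{-1}\mathbf w^\top\phi^+_{i,a}.
\]
Here the zero blocks $\mathbf P_{1,\Psi\Phi}$ and $\mathbf P_{1,\Psi R}$ are exactly what kill the $\phi_i$ and $r_{i+1}$ contributions, and the $\star$ blocks multiply zero entries of $t_i$. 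The softmax over $a$ therefore returns $\pi^\tau_{i,a}$. The value vectors are $\mathbf V_1c_{i,a}$, which is the $\Psi$-column of $\mathbf V_1$ applied to $\phi^+_{i,a}$, namely $[\mathbf D_\Phi;\mathbf L;\mathbf D_R;0;\mathbf D_U;0]\phi^+_{i,a}$. Averaging with weights $\pi^\tau_{i,a}$ replaces $\phi^+_{i,a}$ by $\bar\phi_i^{+,\tau}$. Adding $t_i$ yields the stated $t_i^{(1)}$.

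The main obstacle is bookkeeping rather than analysis. One has to check carefully that every $\star$ block is multiplied by a zero block of the relevant token, so that it cannot influence either the scores or the values. The second half of this check is that the $W$-column of $\mathbf V_1$ is zero outside the $W$ row; this keeps $\mathbf V_1p$ from polluting the other blocks of the $c$ and $p$ tokens. I would organize the proof around a short table of token supports to make this transparent.
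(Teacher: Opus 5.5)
Your proposal is correct and matches the paper's proof essentially step for step. The mask reduces each transition column to a softmax over its own candidates with scores $\tau^{-1}\mathbf{w}^\top\phi^+_{i,a}$, where the $\star$ blocks are neutralized by the zero blocks $\mathbf{P}_{1,\Psi\Phi}$, $\mathbf{P}_{1,\Psi R}$ and by the zero entries of $t_i$. Every candidate and parameter column has the single unmasked key $p$ and therefore receives exactly $\mathbf{V}_1p=[0;\ldots;0;\zeta_Q]$; your support bookkeeping is the same check the paper carries out.
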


\begin{proof}
Write $\mathbf{H}^{(0)}=[h_1,\ldots,h_N]$. Let $\bar{\mathbf{A}}:=\softmax \left((\mathbf{H}^{(0)})^\top \mathbf{P}_1\mathbf{H}^{(0)}+\mathbf{M}\right)$,
where the softmax is applied columnwise over the key positions. Thus,
if $h_k$ and $h_q$ denote the tokens at key position $k$ and query
position $q$, respectively, then
\begin{equation}
 \bar{\mathbf{A}}_{k,q}
 =\frac{\exp(h_k^\top \mathbf{P}_1h_q+\mathbf{M}_{k,q})}
 {\sum_{\ell=1}^N\exp(h_\ell^\top \mathbf{P}_1h_q+\mathbf{M}_{\ell,q})}.
 \label{eq:attention-entry}
\end{equation}
Since the $k$th column of $\mathbf{V}_1\mathbf{H}^{(0)}$ is $\mathbf{V}_1h_k$, the $q$th column
of the attention output satisfies 
 $\bigl[(\mathbf{V}_1\mathbf{H}^{(0)})\bar{\mathbf{A}}\bigr]_{:,q}
 =\sum_{k=1}^N\bar{\mathbf{A}}_{k,q}\mathbf{V}_1h_k$. 
Therefore, using the residual connection, the output corresponding to
query token $h_q$ is
\begin{equation}
 h_q^{(1)}=h_q+\sum_{k=1}^N\bar{\mathbf{A}}_{k,q}\mathbf{V}_1h_k.
 \label{eq:first-layer-token-residual}
\end{equation}

We first consider a transition token $t_i$. By construction of the mask,
the query $t_i$ can attend only to its associated candidate-action tokens
$\{c_{i,a}\}_{a\in\A}$. Hence, the only finite attention scores in the
column corresponding to $t_i$ are $c_{i,a}^\top \mathbf{P}_1t_i$, $a\in\A$.

Recall that $t_i=[\phi_i;0_d;r_{i+1};0_d;\mathbf{w};0_{d+1}]$,
 $c_{j,a}=[0_d;0_d;0;\phi^+_{j,a};0_d;0_{d+1}]$. 
Since $c_{j,a}$ is nonzero only in its $\Psi$ block, the inner product
$c_{j,a}^\top \mathbf{P}_1t_i$ depends only on the $\Psi$ block of $\mathbf{P}_1t_i$.
From the fourth block row of $\mathbf{P}_1$,
 $(\mathbf{P}_1t_i)_\Psi
 =0_{d\times d}\phi_i+\star0_d+0_{d\times1}r_{i+1}
   +\star0_d+\mathbf{A}_\theta \mathbf{w}+\star0_{d+1}$. 
   
All terms except the fifth vanish, so $(\mathbf{P}_1t_i)_\Psi=\mathbf{A}_\theta \mathbf{w}$.
Since $\mathbf{A}_\theta=\tau^{-1}I_d$, we obtain
$(\mathbf{P}_1t_i)_\Psi=\tau^{-1}\mathbf{w}$. Therefore, for any candidate token $c_{j,a}$, 
 $c_{j,a}^\top \mathbf{P}_1t_i
 =(\phi^+_{j,a})^\top(\mathbf{P}_1t_i)_\Psi
 =\frac1\tau \mathbf{w}^\top\phi^+_{j,a}$.
The arbitrary $\star$ blocks in $\mathbf{P}_1$ do not affect the relevant
attention scores. Although $\mathbf{P}_1t_i$ may have nonzero components in other
blocks, the candidate token $c_{j,a}$ has zeros in all of those
coordinates and therefore removes them in the inner product.

The mask now restricts the query $t_i$ to the candidates with the same
transition index $i$. Hence, $M_{C_{i,a},T_i}=0$, and $M_{k,T_i}=-\infty$
for every other key position $k$. It follows from
\eqref{eq:attention-entry} that
\begin{equation}
 \bar{\mathbf{A}}_{C_{i,a},T_i}
 =\frac{\exp(c_{i,a}^\top \mathbf{P}_1t_i)}
        {\sum_{b\in\A}\exp(c_{i,b}^\top \mathbf{P}_1t_i)}
 =\frac{\exp(\mathbf{w}^\top\phi^+_{i,a}/\tau)}
        {\sum_{b\in\A}\exp(\mathbf{w}^\top\phi^+_{i,b}/\tau)}
 =\pi_{i,a}^\tau.
 \label{eq:transition-attention-weights}
\end{equation}
All other attention weights in the $T_i$th column are zero.

We next compute the corresponding value vectors. Since $c_{i,a}$ is
nonzero only in its $\Psi$ block, only the fourth block column of ${\mathbf{V}}_1$
contributes to ${\mathbf{V}}_1c_{i,a}$. From the prescribed form of ${\mathbf{V}}_1$, this
block column is
\[
 ({\mathbf{V}}_1)_{:,\Psi}
 =\begin{bmatrix}
   {\mathbf{D}}_\Phi\\{\mathbf{L}}\\{\mathbf{D}}_R\\0_{d\times d}\\{\mathbf{D}}_U\\0_{(d+1)\times d}
  \end{bmatrix},\quad {\mathbf{V}}_1c_{i,a}
 =\begin{bmatrix}
   {\mathbf{D}}_\Phi\phi^+_{i,a}\\
   {\mathbf{L}}\phi^+_{i,a}\\
   {\mathbf{D}}_R\phi^+_{i,a}\\
   0_d\\
   {\mathbf{D}}_U\phi^+_{i,a}\\
   0_{d+1}
  \end{bmatrix}
\]
Hence, the attention contribution to the transition query $t_i$ is
\begin{align}
\bigl[({\mathbf{V}}_1\mathbf{H}^{(0)})\bar{\mathbf{A}}\bigr]_{:,T_i}
&=\sum_{a\in\A}\bar{\mathbf{A}}_{C_{i,a},T_i}{\mathbf{V}}_1c_{i,a}
   =\sum_{a\in\A}\pi_{i,a}^\tau {\mathbf{V}}_1c_{i,a}\nonumber\\
 &=\begin{bmatrix}
{\mathbf{D}}_\Phi\sum_{a\in\A}\pi_{i,a}^\tau\phi^+_{i,a}\\
   {\mathbf{L}}\sum_{a\in\A}\pi_{i,a}^\tau\phi^+_{i,a}\\
   {\mathbf{D}}_R\sum_{a\in\A}\pi_{i,a}^\tau\phi^+_{i,a}\\
   0_d\\
   {\mathbf{D}}_U\sum_{a\in\A}\pi_{i,a}^\tau\phi^+_{i,a}\\
   0_{d+1}
  \end{bmatrix} =\begin{bmatrix}
   {\mathbf{D}}_\Phi\bar\phi_i^{+,\tau}\\
   {\mathbf{L}}\bar\phi_i^{+,\tau}\\
   {\mathbf{D}}_R\bar\phi_i^{+,\tau}\\ 0_d\\ {\mathbf{D}}_U\bar\phi_i^{+,\tau}\\
   0_{d+1}
  \end{bmatrix},
 \label{eq:first-layer-transition-increment}
\end{align}
where we used $\bar{\phi}_i^{+,\tau}
=\sum_{a\in\A}\pi_{i,a}^\tau\phi^+_{i,a}$ in the last equality.
 
Finally, applying the residual connection gives
\begin{align}
t_i^{(1)}=t_i+\bigl[({\mathbf{V}}_1\mathbf{H}^{(0)})\bar{\mathbf{A}}\bigr]_{:,T_i} =\begin{bmatrix}\phi_i\\0_d\\r_{i+1}\\0_d\\\mathbf{w}\\0_{d+1}\end{bmatrix}
  +\begin{bmatrix} {\mathbf{D}}_\Phi\bar\phi_i^{+,\tau}\\{\mathbf{L}}\bar\phi_i^{+,\tau}\\
   {\mathbf{D}}_R\bar\phi_i^{+,\tau}\\0_d\\{\mathbf{D}}_U\bar\phi_i^{+,\tau}\\0_{d+1}
   \end{bmatrix} =\begin{bmatrix}
   \phi_i+{\mathbf{D}}_\Phi\bar\phi_i^{+,\tau}\\
   {\mathbf{L}}\bar\phi_i^{+,\tau}\\
   r_{i+1}+{\mathbf{D}}_R\bar\phi_i^{+,\tau}\\
   0_d\\\mathbf{w}+{\mathbf{D}}_U\bar\phi_i^{+,\tau}\\0_{d+1}
   \end{bmatrix}.
 \label{eq:first-layer-transition-residual}
\end{align}

It remains to compute the candidate-action tokens and the parameter
token. By the mask construction, each candidate-action query $c_{i,a}$
and the parameter query $p$ can attend only to $p$. There is exactly one
unmasked key in each of these columns, so
\[
 \bar{\mathbf{A}}_{P_o,C_{i,a}}
 =\frac{\exp(p^\top \mathbf{P}_1c_{i,a})}{\exp(p^\top \mathbf{P}_1c_{i,a})}=1,
 \qquad
 \bar{\mathbf{A}}_{P_o,P_o}
 =\frac{\exp(p^\top \mathbf{P}_1p)}{\exp(p^\top \mathbf{P}_1p)}=1.
\]
All other attention weights in these columns are zero.

Recall that $p=[0_d;0_d;0;0_d;0_d;\widetilde{\mathbf{w}}]$. The last block column
of ${\mathbf{V}}_1$, corresponding to the $W$ block, is zero except for its last
block ${\mathbf{Q}}-I_{d+1}$. Hence
\[
 {\mathbf{V}}_1p
 =\begin{bmatrix}
    0_d\\0_d\\0\\0_d\\0_d\\({\mathbf{Q}}-I_{d+1})\widetilde{\mathbf{w}}
   \end{bmatrix}
 =\begin{bmatrix}0_d\\0_d\\0\\0_d\\0_d\\\zeta_Q\end{bmatrix},
\]
where, explicitly, 
 $({\mathbf{Q}}-I_{d+1})\widetilde{\mathbf{w}}
 =\begin{bmatrix}\rho-1&b^\top\\0_{d\times1}&0_{d\times d}\end{bmatrix}
   \begin{bmatrix}1\\{\mathbf{w}}\end{bmatrix}
 =\begin{bmatrix}\rho-1+b^\top {\mathbf{w}}\\0_d\end{bmatrix}
 =\zeta_Q$. 
 
Therefore, the attention outputs for all candidate-action queries and
for the parameter query are 
 $\bigl[({\mathbf{V}}_1\mathbf{H}^{(0)})\bar{\mathbf{A}}\bigr]_{:,C_{i,a}}={\mathbf{V}}_1p$, 
 $\bigl[({\mathbf{V}}_1\mathbf{H}^{(0)})\bar{\mathbf{A}}\bigr]_{:,P_o}={\mathbf{V}}_1p$. 
Using the residual connection once again,
\begin{align*}
 c_{i,a}^{(1)}
  =c_{i,a}+{\mathbf{V}}_1p =\begin{bmatrix}0_d\\0_d\\0\\\phi^+_{i,a}\\0_d\\0_{d+1}\end{bmatrix} +\begin{bmatrix}0_d\\0_d\\0\\0_d\\0_d\\\zeta_Q\end{bmatrix} =\begin{bmatrix}0_d\\0_d\\0\\\phi^+_{i,a}\\0_d\\\zeta_Q\end{bmatrix}, \quad p^{(1)}
  =p+{\mathbf{V}}_1p =\begin{bmatrix}0_d\\0_d\\0\\0_d\\0_d\\\widetilde{\mathbf{w}}\end{bmatrix} +\begin{bmatrix}0_d\\0_d\\0\\0_d\\0_d\\\zeta_Q\end{bmatrix} =\begin{bmatrix}0_d\\0_d\\0\\0_d\\0_d\\\widetilde {\mathbf{w}}_Q\end{bmatrix},
\end{align*}
where $\widetilde{\mathbf{w}}+\zeta_Q
 =\bigl(I_{d+1}+{\mathbf{Q}}-I_{d+1}\bigr)\widetilde{\mathbf{w}}
 ={\mathbf{Q}}\widetilde{\mathbf{w}}
 =\widetilde{\mathbf{w}}_Q$. 

Thus, $\mathbf{H}^{(1)}= \bigl[t_0^{(1)},\mathbf{C}_0^{(1)},t_1^{(1)},\mathbf{C}_1^{(1)},\ldots, t_{n-1}^{(1)},\mathbf{C}_{n-1}^{(1)},p^{(1)}\bigr]$,
with all three types of tokens as stated. In particular, the
candidate-action tokens and the parameter token remain unchanged for
every ${\mathbf{w}}$ precisely when $\rho=1$ and $b=0$, that is, when ${\mathbf{Q}}=I_{d+1}$.
\end{proof}

With the above lemma established, we proceed to prove
Theorem~\ref{Th:Qlearning1}.
\begin{proof}[Proof of Theorem~\ref{Th:Qlearning1}]
\textup{(i)} By Lemma~\ref{Lem:Q1}, after the first
attention layer, each transition token takes the form
\[
 t_i^{(1)}=
 \begin{bmatrix}
  \phi_i+{\mathbf{D}}_\Phi\bar\phi_i^{+,\tau}\\
  {\mathbf{L}}\bar\phi_i^{+,\tau}\\
  r_{i+1}+{\mathbf{D}}_R\bar\phi_i^{+,\tau}\\
  0_d\\{\mathbf{w}}+{\mathbf{D}}_U\bar\phi_i^{+,\tau}\\0_{d+1}
 \end{bmatrix},
 \qquad
 \bar\phi_i^{+,\tau}=\sum_{a\in\A}\pi_{i,a}^\tau\phi^+_{i,a}.
\]
The candidate-action tokens and the parameter token have the forms
 $c_{i,a}^{(1)}=[0_d;0_d;0;\phi^+_{i,a};0_d;\zeta_Q]$,
 $p^{(1)}=[0_d;0_d;0;0_d;0_d;\widetilde {\mathbf{w}}_Q]$.
In particular, the $\Phi$ and $G$ blocks of these latter tokens are
still zero, and the only possibly nonzero block of $p^{(1)}$ is its
$W$ block.

We now consider the second linear-attention layer. Write
$\mathbf{H}^{(1)}=[h_1^{(1)},\ldots,h_N^{(1)}]$. From 
$\mathbf{H}^{(2)}=\mathbf{H}^{(1)}+\frac1n({\mathbf{V}}_2\mathbf{H}^{(1)})
\bigl((\mathbf{H}^{(1)})^\top \mathbf{P}_2\mathbf{H}^{(1)}\bigr)$, the update to the parameter token is
\begin{equation}
 \Delta p:=p^{(2)}-p^{(1)}
 =\frac1n\sum_{j=1}^N
   {\mathbf{V}}_2h_j^{(1)}\bigl((h_j^{(1)})^\top \mathbf{P}_2p^{(1)}\bigr).
 \label{eq:query-increment}
\end{equation}
Indeed, this is simply the column of the matrix product corresponding
to the query token $p^{(1)}$: the $j$th entry of the corresponding score
column is $(h_j^{(1)})^\top \mathbf{P}_2p^{(1)}$, and the $j$th value column is
${\mathbf{V}}_2h_j^{(1)}$.

We only need the last $d$ coordinates of $\Delta p$, since the readout
extracts these coordinates of the parameter token. Recall 
 $\Pi_{\out}=[0_{d\times(4d+2)},I_d]$, 
 ${\mathbf{C}}=\frac\alpha c I_d$, ${\mathbf{C}}_G=-{\mathbf{C}}{\mathbf{D}}_\Phi {\mathbf{L}}^{-1}$. 
By the block structure of ${\mathbf{V}}_2$, for any token $h$,
\begin{equation}
 \Pi_{\out}{\mathbf{V}}_2h={\mathbf{C}}h_\Phi+{\mathbf{C}}_Gh_G,
 \label{eq:projected-value-general}
\end{equation}
where $h_\Phi$ and $h_G$ denote its $\Phi$ and $G$ blocks. The arbitrary
entries in the first $4d+2$ rows of ${\mathbf{V}}_2$ do not appear in this expression.
Therefore,
\begin{align}
 \Pi_{\out}{\mathbf{V}}_2t_i^{(1)}
 &={\mathbf{C}}\bigl(\phi_i+{\mathbf{D}}_\Phi\bar\phi_i^{+,\tau}\bigr)
       +{\mathbf{C}}_G {\mathbf{L}}\bar\phi_i^{+,\tau}\nonumber\\
 &={\mathbf{C}}\phi_i+{\mathbf{C}}{\mathbf{D}}_\Phi\bar\phi_i^{+,\tau}
       -{\mathbf{C}}{\mathbf{D}}_\Phi {\mathbf{L}}^{-1}{\mathbf{L}}\bar\phi_i^{+,\tau}\nonumber\\
 &={\mathbf{C}}\phi_i+{\mathbf{C}}{\mathbf{D}}_\Phi\bar\phi_i^{+,\tau}
       -{\mathbf{C}}{\mathbf{D}}_\Phi\bar\phi_i^{+,\tau}={\mathbf{C}}\phi_i=\frac\alpha c\phi_i.
 \label{eq:projected-transition-value}
\end{align}
Here the second equality uses ${\mathbf{C}}_G=-{\mathbf{C}}{\mathbf{D}}_\Phi {\mathbf{L}}^{-1}$, and the third
uses ${\mathbf{L}}^{-1}{\mathbf{L}}=I_d$.

On the other hand, both $c_{i,a}^{(1)}$ and $p^{(1)}$ have zero $\Phi$
and $G$ blocks. Applying \eqref{eq:projected-value-general} gives
\begin{equation}
 \Pi_{\out}{\mathbf{V}}_2c_{i,a}^{(1)}={\mathbf{C}}0_d+{\mathbf{C}}_G0_d=0_d,
 \qquad
 \Pi_{\out}{\mathbf{V}}_2p^{(1)}={\mathbf{C}}0_d+{\mathbf{C}}_G0_d=0_d.
 \label{eq:zero-projected-values}
\end{equation}
Thus, only the transition tokens contribute to the last $d$ coordinates
of the parameter update. The additional $W$ block $\zeta_Q$ in each
candidate token does not affect this conclusion, since the last $d$ rows
of the $W$ block column of ${\mathbf{V}}_2$ are zero. No assumption that the
candidate or parameter scores vanish is needed: their projected value
vectors already vanish.

It remains to compute the scalar score $(t_i^{(1)})^\top \mathbf{P}_2p^{(1)}$.
Since 
 $p^{(1)}=[0_d;0_d;0;0_d;0_d;\widetilde {\mathbf{w}}_Q]$, 
 $\widetilde {\mathbf{w}}_Q={\mathbf{Q}}\widetilde {\mathbf{w}}={\mathbf{Q}}\begin{bmatrix}1\\{\mathbf{w}}\end{bmatrix}$, 
only the last block column of $\mathbf{P}_2$ contributes to $\mathbf{P}_2p^{(1)}$.
Because ${\mathbf{Q}}$ is invertible, 
 ${\mathbf{Q}}^{-1}\widetilde {\mathbf{w}}_Q={\mathbf{Q}}^{-1}{\mathbf{Q}}\widetilde {\mathbf{w}}
 =\widetilde {\mathbf{w}}=\begin{bmatrix}1\\{\mathbf{w}}\end{bmatrix}$. 
From the definition of the specified blocks, we obtain
\begin{align}
 {\mathbf{B}}_-\widetilde {\mathbf{w}}_Q
 &=-c\mathbf{J}{\mathbf{Q}}^{-1}{\mathbf{Q}}\widetilde {\mathbf{w}}
  =-c[0_{d\times1},I_d]\begin{bmatrix}1\\{\mathbf{w}}\end{bmatrix}
  =-c{\mathbf{w}},\label{eq:Bminus-query}\\
 {\mathbf{B}}_+\widetilde {\mathbf{w}}_Q
 &=c{\mathbf{L}}^{-\top}\bigl[-{\mathbf{D}}_R^\top,
       \gamma I_d+{\mathbf{D}}_\Phi^\top-{\mathbf{D}}_U^\top\Omega\bigr]
       {\mathbf{Q}}^{-1}{\mathbf{Q}}\widetilde {\mathbf{w}}\nonumber\\
 &=c{\mathbf{L}}^{-\top}\bigl[-{\mathbf{D}}_R^\top,
       \gamma I_d+{\mathbf{D}}_\Phi^\top-{\mathbf{D}}_U^\top\Omega\bigr]
       \begin{bmatrix}1\\{\mathbf{w}}\end{bmatrix}\nonumber\\
 &=c{\mathbf{L}}^{-\top}\left[-{\mathbf{D}}_R^\top+
       (\gamma I_d+{\mathbf{D}}_\Phi^\top-{\mathbf{D}}_U^\top\Omega){\mathbf{w}}\right],
       \label{eq:Bplus-query}\\
 {\mathbf{B}}_r\widetilde {\mathbf{w}}_Q
 &=ce_0^\top {\mathbf{Q}}^{-1}{\mathbf{Q}}\widetilde {\mathbf{w}}
  =c[1,0_{1\times d}]\begin{bmatrix}1\\{\mathbf{w}}\end{bmatrix}
  =c,\label{eq:Br-query}\\
 {\mathbf{B}}_U\widetilde {\mathbf{w}}_Q
 &=c\Omega J{\mathbf{Q}}^{-1}{\mathbf{Q}}\widetilde {\mathbf{w}}
  =c\Omega[0_{d\times1},I_d]\begin{bmatrix}1\\{\mathbf{w}}\end{bmatrix}
  =c\Omega {\mathbf{w}}.
  \label{eq:BU-query}
\end{align}
The fourth and sixth block rows of $\mathbf{P}_2p^{(1)}$ may contain arbitrary
terms arising from the corresponding $\star$ blocks. However, the
$\Psi$ and $W$ blocks of $t_i^{(1)}$ are both zero, so these terms do
not contribute to the bilinear score. 
Consequently,
\begin{align}
 (t_i^{(1)})^\top \mathbf{P}_2p^{(1)}
 =&\bigl(\phi_i+{\mathbf{D}}_\Phi\bar\phi_i^{+,\tau}\bigr)^\top
       {\mathbf{B}}_-\widetilde {\mathbf{w}}_Q+\bigl({\mathbf{L}}\bar\phi_i^{+,\tau}\bigr)^\top
       {\mathbf{B}}_+\widetilde {\mathbf{w}}_Q+\bigl(r_{i+1}+{\mathbf{D}}_R\bar\phi_i^{+,\tau}\bigr)
       {\mathbf{B}}_r\widetilde {\mathbf{w}}_Q\nonumber\\
&+\bigl({\mathbf{w}}+{\mathbf{D}}_U\bar\phi_i^{+,\tau}\bigr)^\top {\mathbf{B}}_U\widetilde {\mathbf{w}}_Q\nonumber\\
 =&-c\bigl(\phi_i+{\mathbf{D}}_\Phi\bar\phi_i^{+,\tau}\bigr)^\top {\mathbf{w}}+c\bigl({\mathbf{L}}\bar\phi_i^{+,\tau}\bigr)^\top {\mathbf{L}}^{-\top}\left[-{\mathbf{D}}_R^\top+
(\gamma I_d+{\mathbf{D}}_\Phi^\top-{\mathbf{D}}_U^\top\Omega){\mathbf{w}}\right]\nonumber\\
&+c\bigl(r_{i+1}+{\mathbf{D}}_R\bar\phi_i^{+,\tau}\bigr)
       +c\bigl({\mathbf{w}}+{\mathbf{D}}_U\bar\phi_i^{+,\tau}\bigr)^\top\Omega {\mathbf{w}}.
 \label{eq:transition-score-unexpanded}
\end{align}
We simplify these terms one by one. First, 
 $-c\bigl(\phi_i+{\mathbf{D}}_\Phi\bar\phi_i^{+,\tau}\bigr)^\top {\mathbf{w}}
 =-c\phi_i^\top {\mathbf{w}}-c(\bar\phi_i^{+,\tau})^\top {\mathbf{D}}_\Phi^\top {\mathbf{w}}$. 
Next, since ${\mathbf{L}}^\top {\mathbf{L}}^{-\top}=I_d$,
\begin{align*}
 &c\bigl({\mathbf{L}}\bar\phi_i^{+,\tau}\bigr)^\top {\mathbf{L}}^{-\top}
       \left[-{\mathbf{D}}_R^\top+
       (\gamma I_d+{\mathbf{D}}_\Phi^\top-{\mathbf{D}}_U^\top\Omega){\mathbf{w}}\right]\\
 &\quad=c(\bar\phi_i^{+,\tau})^\top {\mathbf{L}}^\top {\mathbf{L}}^{-\top}
       \left[-{\mathbf{D}}_R^\top+
       (\gamma I_d+{\mathbf{D}}_\Phi^\top-{\mathbf{D}}_U^\top\Omega){\mathbf{w}}\right]\\
 &\quad=-c(\bar\phi_i^{+,\tau})^\top {\mathbf{D}}_R^\top
       +c\gamma(\bar\phi_i^{+,\tau})^\top {\mathbf{w}}
       +c(\bar\phi_i^{+,\tau})^\top {\mathbf{D}}_\Phi^\top {\mathbf{w}}
       -c(\bar\phi_i^{+,\tau})^\top {\mathbf{D}}_U^\top\Omega {\mathbf{w}}.
\end{align*}
The remaining two terms satisfy
\begin{align*}
 c\bigl(r_{i+1}+{\mathbf{D}}_R\bar\phi_i^{+,\tau}\bigr)
 &=cr_{i+1}+c{\mathbf{D}}_R\bar\phi_i^{+,\tau},\\
 c\bigl({\mathbf{w}}+{\mathbf{D}}_U\bar\phi_i^{+,\tau}\bigr)^\top\Omega {\mathbf{w}}
 &=c{\mathbf{w}}^\top\Omega {\mathbf{w}}
   +c(\bar\phi_i^{+,\tau})^\top {\mathbf{D}}_U^\top\Omega {\mathbf{w}}.
\end{align*}
The terms involving ${\mathbf{D}}_\Phi$, ${\mathbf{D}}_R$, and ${\mathbf{D}}_U$ therefore cancel in
pairs:
\begin{align*}
 &-c(\bar\phi_i^{+,\tau})^\top {\mathbf{D}}_\Phi^\top {\mathbf{w}}
   +c(\bar\phi_i^{+,\tau})^\top {\mathbf{D}}_\Phi^\top {\mathbf{w}}=0,\\
 &-c(\bar\phi_i^{+,\tau})^\top {\mathbf{D}}_R^\top
   +c{\mathbf{D}}_R\bar\phi_i^{+,\tau}=0,\\
 &-c(\bar\phi_i^{+,\tau})^\top {\mathbf{D}}_U^\top\Omega {\mathbf{w}}
   +c(\bar\phi_i^{+,\tau})^\top {\mathbf{D}}_U^\top\Omega {\mathbf{w}}=0.
\end{align*}
For the second cancellation, we used the fact that
${\mathbf{D}}_R\bar\phi_i^{+,\tau}$ is a scalar, so it equals its transpose
$(\bar\phi_i^{+,\tau})^\top {\mathbf{D}}_R^\top$.
Moreover, since $\Omega^\top=-\Omega$,
\[
 {\mathbf{w}}^\top\Omega {\mathbf{w}}=({\mathbf{w}}^\top\Omega {\mathbf{w}})^\top
 ={\mathbf{w}}^\top\Omega^\top {\mathbf{w}}=-{\mathbf{w}}^\top\Omega {\mathbf{w}},
\]
and therefore ${\mathbf{w}}^\top\Omega {\mathbf{w}}=0$. Substituting these identities into
\eqref{eq:transition-score-unexpanded} yields
\begin{align}
 (t_i^{(1)})^\top \mathbf{P}_2p^{(1)}
 &=-c\phi_i^\top {\mathbf{w}}+c\gamma(\bar\phi_i^{+,\tau})^\top {\mathbf{w}}+cr_{i+1}
       \nonumber\\
 &=c\bigl(r_{i+1}+\gamma {\mathbf{w}}^\top\bar\phi_i^{+,\tau}
                    -{\mathbf{w}}^\top\phi_i\bigr).
 \label{eq:transition-score}
\end{align}
Define $\delta_i^{(\tau)}
 :=r_{i+1}+\gamma {\mathbf{w}}^\top\bar\phi_i^{+,\tau}-{\mathbf{w}}^\top\phi_i$. 
Then $(t_i^{(1)})^\top \mathbf{P}_2p^{(1)}=c\delta_i^{(\tau)}$.

Therefore, the last $d$ coordinates of the second-layer update are
\begin{align}
 \Pi_{\out}\Delta p
 &=\frac1n\sum_{j=1}^N
     \Pi_{\out}{\mathbf{V}}_2h_j^{(1)}
        \bigl((h_j^{(1)})^\top \mathbf{P}_2p^{(1)}\bigr) =\frac1n\sum_{i=0}^{n-1}
     \Pi_{\out}{\mathbf{V}}_2t_i^{(1)}
        \bigl((t_i^{(1)})^\top \mathbf{P}_2p^{(1)}\bigr)\nonumber\\
 &=\frac1n\sum_{i=0}^{n-1}
       \left(\frac\alpha c\phi_i\right)c\delta_i^{(\tau)}
  =\frac\alpha n\sum_{i=0}^{n-1}\phi_i\delta_i^{(\tau)}.
 \label{eq:projected-second-layer-update}
\end{align}
The second equality follows from \eqref{eq:zero-projected-values}, and
the third uses \eqref{eq:projected-transition-value} and
\eqref{eq:transition-score}.

Since the $W$ block of the first-layer parameter token is
$\widetilde {\mathbf{w}}_Q=[\rho+b^\top {\mathbf{w}};{\mathbf{w}}]$, its last $d$ coordinates are
\[
 \Pi_{\out}p^{(1)}
 =\mathbf{J}\widetilde {\mathbf{w}}_Q=\mathbf{J}{\mathbf{Q}}\widetilde {\mathbf{w}}
 =\mathbf{J}\widetilde {\mathbf{w}}
 =[0_{d\times1},I_d]\begin{bmatrix}1\\{\mathbf{w}}\end{bmatrix}
 ={\mathbf{w}}.
\]
Here $\mathbf{J}{\mathbf{Q}}=\mathbf{J}$ follows directly from the prescribed form of ${\mathbf{Q}}$.
The residual connection thus gives
\begin{align*}
 \TF_\tau(\mathbf{H}^{(0)})
 &=\Pi_{\out}p^{(2)}
  =\Pi_{\out}p^{(1)}+\Pi_{\out}\Delta p ={\mathbf{w}}+\frac\alpha n\sum_{i=0}^{n-1}\phi_i\delta_i^{(\tau)}\\
 &={\mathbf{w}}+\frac\alpha n\sum_{i=0}^{n-1}
\phi_i\left[r_{i+1}+\gamma {\mathbf{w}}^\top\bar\phi_i^{+,\tau}-{\mathbf{w}}^\top\phi_i\right].
\end{align*}
The readout extracts precisely these last $d$ coordinates of $p^{(2)}$.
Hence we have proved part~\textup{(i)}. Notice that this argument does
not prescribe the first coordinate of the $W$ block of $p^{(2)}$;
that coordinate is outside the readout.
 
\noindent\textup{(ii)} Recall that
$q_{i,a}:={\mathbf{w}}^\top\phi^+_{i,a}$, $q_i^\star:=\max_{a\in\A}q_{i,a}$, $\A_i^\star:=\arg\max_{a\in\A}q_{i,a}$,  $K_i:=|\A_i^\star|$. 
The soft backup value is $\bar q_i^{(\tau)}
 :={\mathbf{w}}^\top\bar\phi_i^{+,\tau}
 =\sum_{a\in\A}\pi_{i,a}^\tau q_{i,a}$. Since $\sum_{a\in\A}\pi_{i,a}^\tau=1$ and $q_{i,a}=q_i^\star$ for
every $a\in\A_i^\star$,
\begin{align}
 q_i^\star-\bar q_i^{(\tau)}
 &=\sum_{a\in\A}\pi_{i,a}^\tau q_i^\star
    -\sum_{a\in\A}\pi_{i,a}^\tau q_{i,a} =\sum_{a\in\A}\pi_{i,a}^\tau(q_i^\star-q_{i,a}) =\sum_{a\notin\A_i^\star}
       \pi_{i,a}^\tau(q_i^\star-q_{i,a}).
 \label{eq:backup-gap-identity}
\end{align}
In particular, $q_i^\star-\bar q_i^{(\tau)}\ge0$. Since
$|q_{i,a}|\le B_q$ and $|q_i^\star|\le B_q$, we have 
 $0\le q_i^\star-q_{i,a}
 \le |q_i^\star|+|q_{i,a}|\le2B_q$.
Therefore,  $0\le q_i^\star-\bar q_i^{(\tau)}\le2B_q\sum_{a\notin\A_i^\star}\pi_{i,a}^\tau$. 
If $\A_i^\star\ne\A$, Lemma~\ref{lem:Zero} gives
\begin{equation}
 \sum_{a\notin\A_i^\star}\pi_{i,a}^\tau
 =1-\sum_{a\in\A_i^\star}\pi_{i,a}^\tau
 \le\frac{m-K_i}{K_i}e^{-\Delta_i/\tau}.
 \label{eq:suboptimal-mass}
\end{equation}
Hence
\begin{equation}
 0\le q_i^\star-\bar q_i^{(\tau)}
 \le2B_q\frac{m-K_i}{K_i}e^{-\Delta_i/\tau}.
 \label{eq:backup-gap}
\end{equation}
If $\A_i^\star=\A$, then all actions have the same value and
$q_i^\star-\bar q_i^{(\tau)}=0$ for every $\tau>0$.

The difference between the hard Q-learning update and the Transformer
update is therefore
\begin{align*}
 {\mathbf{w}}_Q-{\mathbf{w}}_{\mathrm{TF}}^{(\tau)}
 &=\frac\alpha n\sum_{i=0}^{n-1}\phi_i
   \left[(r_{i+1}+\gamma q_i^\star-{\mathbf{w}}^\top\phi_i) -(r_{i+1}+\gamma\bar q_i^{(\tau)}-{\mathbf{w}}^\top\phi_i)\right] =\frac{\alpha\gamma}{n}\sum_{i=0}^{n-1}\phi_i\bigl(q_i^\star-\bar q_i^{(\tau)}\bigr).
\end{align*}
By the triangle inequality and $\|\phi_i\|_2\le B_\phi$,
\begin{align}
 \bigl\|{\mathbf{w}}_Q-{\mathbf{w}}_{\mathrm{TF}}^{(\tau)}\bigr\|_2
 &\le\frac{\alpha\gamma}{n}\sum_{i=0}^{n-1}
      \|\phi_i\|_2\bigl|q_i^\star-\bar q_i^{(\tau)}\bigr| \le\frac{\alpha\gamma B_\phi}{n}
      \sum_{i:\,\A_i^\star\ne\A}
      \bigl(q_i^\star-\bar q_i^{(\tau)}\bigr)\nonumber\\
 &\le\frac{2\alpha\gamma B_qB_\phi}{n}
      \sum_{i:\,\A_i^\star\ne\A}
      \frac{m-K_i}{K_i}e^{-\Delta_i/\tau}.
 \label{eq:hard-bound-proof}
\end{align}
The second inequality uses the nonnegativity of the backup gap and the
fact that this gap is zero whenever $\A_i^\star=\A$; the last inequality
uses \eqref{eq:backup-gap}.

Since the number of transitions is finite and each $\Delta_i>0$ in
the displayed sum, the right-hand side converges to zero as
$\tau\downarrow0$. Thus $\lim_{\tau\downarrow0}{\mathbf{w}}_{\mathrm{TF}}^{(\tau)}={\mathbf{w}}_Q$. 
In particular, if $m\ge2$ and every maximizer is unique, then $K_i=1$
and $1-\pi_{i,a_i^\star}^{\tau}\le(m-1)e^{-\Delta_i/\tau}$. 
Since there are finitely many transitions,
$\Delta_{\min}:=\min_{0\le i\le n-1}\Delta_i>0$. Consequently,
$e^{-\Delta_i/\tau}\le e^{-\Delta_{\min}/\tau}$, and hence
\begin{align*}
 \bigl\|{\mathbf{w}}_Q-{\mathbf{w}}_{\mathrm{TF}}^{(\tau)}\bigr\|_2
 &\le\frac{2\alpha\gamma B_qB_\phi(m-1)}{n}
          \sum_{i=0}^{n-1}e^{-\Delta_i/\tau} \le\frac{2\alpha\gamma B_qB_\phi(m-1)}{n}
          n e^{-\Delta_{\min}/\tau}\\
 &=2\alpha\gamma B_qB_\phi(m-1)e^{-\Delta_{\min}/\tau}.
\end{align*}
If $m=1$, all backup gaps are zero and
${\mathbf{w}}_{\mathrm{TF}}^{(\tau)}={\mathbf{w}}_Q$ for every $\tau>0$.
This proves the claimed exponential approximation bound.
\end{proof}

We next consider whether emergence of tabular Q‑learning algorithms can be achieved within Transformers. Let $e_{(s,a)}$ denote the one‑hot basis vector for state‑action pair $(s,a)$.

\begin{corollary}[Transformer implements tabular Q-learning]\label{Cor:Q1}
Under the conditions of the preceding theorem, let $\mathcal{S}$ and $\mathcal{A}$ be finite, and consider $\phi(s,a) = e_{(s,a)} \in \mathbb{R}^{|\mathcal{S}||\mathcal{A}|}$, $\mathbf{w}_{(s,a)} = Q(s,a)$.
The two‑layer Transformer constructed in the preceding theorem implements the batch tabular Q‑learning update up to softmax approximation error. 
\end{corollary}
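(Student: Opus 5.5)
The plan is to specialize Theorem~\ref{Th:Qlearning1} to one-hot features, so that both parts of that theorem translate directly into statements about tabular $Q$-values. Set $d=|\mathcal S||\mathcal A|$ and $\phi(s,a)=e_{(s,a)}$, and identify $\mathbf w_{(s,a)}=Q(s,a)$. The prompt \eqref{eq:Hzero} and the parameters \eqref{eq:layer1}--\eqref{eq:layer2} are then well defined with $D=5d+2$. Part~(i) of Theorem~\ref{Th:Qlearning1} applies verbatim, because its proof never used any structure of the features. The Transformer therefore outputs
\[
\mathbf w_{\mathrm{TF}}^{(\tau)}=\mathbf w+\frac{\alpha}{n}\sum_{i=0}^{n-1}e_{(s_i,a_i)}\bigl[r_{i+1}+\gamma\mathbf w^\top\bar\phi_i^{+,\tau}-Q(s_i,a_i)\bigr].
\]

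Next I would rewrite this coordinatewise. Since $\mathbf w^\top e_{(s_{i+1},a)}=Q(s_{i+1},a)$, the softmax weights become $\pi^\tau_{i,a}\propto\exp(Q(s_{i+1},a)/\tau)$. The soft backup is then
\[
\mathbf w^\top\bar\phi_i^{+,\tau}=\sum_a\pi^\tau_{i,a}Q(s_{i+1},a).
\]
Reading off the $(s,a)$ coordinate gives
\[
Q^{\mathrm{TF}}(s,a)=Q(s,a)+\frac{\alpha}{n}\sum_{i:(s_i,a_i)=(s,a)}\bigl[r_{i+1}+\gamma\textstyle\sum_{a'}\pi^\tau_{i,a'}Q(s_{i+1},a')-Q(s,a)\bigr].
\]
This is the batch (window) tabular soft Q-learning update. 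Entries of pairs not visited in the window are unchanged, exactly as in tabular Q-learning.

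For the approximation claim, I would invoke part~(ii) of Theorem~\ref{Th:Qlearning1} with $B_\phi=1$, since one-hot vectors have unit norm, and $B_q=\max_{s,a}|Q(s,a)|$. This yields
\[
\|\mathbf w_Q-\mathbf w_{\mathrm{TF}}^{(\tau)}\|_2\le 2\alpha\gamma B_q\sum_{i:\mathcal A_i^\star\ne\mathcal A}\frac{m-K_i}{nK_i}e^{-\Delta_i/\tau}.
\]
The right-hand side vanishes as $\tau\downarrow0$, where $\mathbf w_Q$ is the batch tabular update with $\max_{a'}Q(s_{i+1},a')$. The coordinate form then matches the standard tabular rule $Q(s,a)\leftarrow Q(s,a)+\alpha[r+\gamma\max_{a'}Q(s',a')-Q(s,a)]$, averaged over the window with weight $1/n$. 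If each pair appears at most once and $n=1$, it matches the classical rule exactly, which is the sense of ``up to softmax approximation error.''

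The only point needing care is the interpretation of ``batch tabular Q-learning'': the step size is effectively $\alpha/n$ per visit, with visits summed. I would state this explicitly rather than claim equality with sequential tabular Q-learning. There is no genuine obstacle; the main step is the bookkeeping that turns one-hot inner products into table lookups.
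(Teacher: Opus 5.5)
Your proposal is correct and follows essentially the paper's route: specialize part~(i) of Theorem~\ref{Th:Qlearning1} to one-hot features, then read off the $(s,a)$ coordinate, so that only pairs visited in the window are updated. Your additional step of applying part~(ii) with $B_\phi=1$ and $B_q=\max_{s,a}|Q(s,a)|$ makes the phrase ``up to softmax approximation error'' quantitative, which the paper leaves implicit; your caveat that this is a synchronous window update with effective step $\alpha/n$ per visit, rather than sequential tabular Q-learning, is also appropriate.
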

\begin{proof}
According to Theorem \ref{Th:Qlearning1}, the batch‑update formula using $n$ samples is
\[
\text{TF}_\tau(\mathbf{H}^{(0)};\mathbf{P}_1,\mathbf{V}_1,\mathbf{P}_2,\mathbf{V}_2)
= \mathbf{w} + \frac{\alpha}{n}\sum_{i=0}^{n-1} e_{(s_i,a_i)}
\Big[ r_{i+1} + \gamma\sum_{b\in\mathcal{A}} \pi_{i,b}^{\tau} \mathbf{w}_{(s_{i+1},b)}-\mathbf{w}_{(s_i,a_i)} \Big],
\]
where $\pi_{i,a}^{\tau} \coloneqq
\exp\bigl(Q(s_{i+1},a)/\tau\bigr)/\sum_{b\in\mathcal{A}} \exp\bigl(Q(s_{i+1},b)/\tau\bigr)$.
Only the weight components corresponding to visited $(s_i,a_i)$ samples are updated; all remaining coordinates are multiplied by zero and thus remain unchanged. 

The equivalent update rule for the Q-values reads
$$
[\text{TF}_\tau(\mathbf{H}^{(0)};\mathbf{P}_1,\mathbf{V}_1,\mathbf{P}_2,\mathbf{V}_2)]_{(s,a)}
=Q(s,a)+\frac{\alpha}{n}\sum_{i:(s_i,a_i)=(s,a)}\big[r_{i+1}+\gamma\sum_{b\in\mathcal{A}} \pi_{i,b}^{\tau} Q(s_{i+1},b)-Q(s_i,a_i)\big].
$$
Notably, only state‑action pairs $(s,a)$ appearing within the mini‑batch receive updates; all other entries stay fixed.
\end{proof}

\section{Training Algorithm}
\label{app:algorithm}

\subsection{Independent-context gradient variance}\label{app:Algorithm}
\label{app:variance}
Let $\ell({\boldsymbol{\theta}};\mathbf{z})
:=
\frac{1}{2} \| \operatorname{TF}_{\theta} (\mathbf{H}(\mathbf{z})) - \mathbf{w}^{(\tau)}_{\text{soft}}(\mathbf{z}) \|_2^2$, and  $\mathcal L({\boldsymbol{\theta}})
:=
\mathbb E_{\mathbf{z}\sim\mathcal D}[\ell({\boldsymbol{\theta}};\mathbf{z})]$.
Fix ${\boldsymbol{\theta}}$, and let
$\mathbf{z}_1,\ldots,\mathbf{z}_B\overset{\mathrm{i.i.d.}}{\sim}\mathcal D$
be independent of ${\boldsymbol{\theta}}$. Define
$g({\boldsymbol{\theta}};\mathbf{z})=\nabla_{\boldsymbol{\theta}} \ell({\boldsymbol{\theta}};\mathbf{z})$, $\hat g_B({\boldsymbol{\theta}})
=\frac1B\sum_{b=1}^B g({\boldsymbol{\theta}};\mathbf{z}_b)$. 

The following proposition shows that independent-context minibatching preserves the population-gradient
direction in expectation while reducing context-specific gradient
fluctuations.
\begin{proposition}[Independent-context minibatch gradient]
\label{prop:minibatch-gradient}

Assume $\mathbb E\|g({\boldsymbol{\theta}};\mathbf{z})\|_2^2<\infty$ and that differentiation and
expectation can be interchanged. Then
\begin{align}
\mathbb E[\hat g_B({\boldsymbol{\theta}})\mid{\boldsymbol{\theta}}]
&=
\nabla\mathcal L({\boldsymbol{\theta}}),
\label{eq:minibatch-unbiased}\\
\operatorname{Cov}(\hat g_B({\boldsymbol{\theta}})\mid{\boldsymbol{\theta}})
&=
\frac1B
\operatorname{Cov}(g({\boldsymbol{\theta}};\mathbf{z})\mid{\boldsymbol{\theta}}),
\label{eq:minibatch-cov}\\
\mathbb E\!\left[
\left\|
\hat g_B({\boldsymbol{\theta}})-\nabla\mathcal L({\boldsymbol{\theta}})
\right\|_2^2
\middle|{\boldsymbol{\theta}}
\right]
&=
\frac1B
\mathbb E\!\left[
\left\|
g({\boldsymbol{\theta}};\mathbf{z})-\nabla\mathcal L({\boldsymbol{\theta}})
\right\|_2^2
\middle|{\boldsymbol{\theta}}
\right].
\label{eq:minibatch-variance}
\end{align}
Equivalently, for every deterministic direction $v$, $\operatorname{Var}(v^\top\hat g_B({\boldsymbol{\theta}})\mid{\boldsymbol{\theta}})=\frac1B\operatorname{Var}(v^\top g({\boldsymbol{\theta}};\mathbf{z})\mid{\boldsymbol{\theta}})$. 
\end{proposition}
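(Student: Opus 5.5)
The plan is the standard i.i.d.\ averaging argument, carried out conditionally on ${\boldsymbol{\theta}}$. The minibatch $\mathbf{z}_1,\ldots,\mathbf{z}_B$ is drawn independently of ${\boldsymbol{\theta}}$. So, conditional on ${\boldsymbol{\theta}}$, the vectors $g({\boldsymbol{\theta}};\mathbf{z}_b)$, $b=1,\ldots,B$, are i.i.d.\ copies of $g({\boldsymbol{\theta}};\mathbf{z})$ with $\mathbf{z}\sim\mathcal D$. Every conditional expectation below can therefore be computed by fixing ${\boldsymbol{\theta}}$ and integrating over $\mathbf{z}$ alone. The hypothesis $\mathbb E\|g\|_2^2<\infty$ guarantees that all first and second moments used below are finite.

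For \eqref{eq:minibatch-unbiased}, use linearity to get $\mathbb E[\hat g_B\mid{\boldsymbol{\theta}}]=\frac1B\sum_b\mathbb E[g({\boldsymbol{\theta}};\mathbf{z}_b)\mid{\boldsymbol{\theta}}]=\mathbb E_{\mathbf{z}}[\nabla_{\boldsymbol{\theta}}\ell({\boldsymbol{\theta}};\mathbf{z})]$. The assumed interchange of differentiation and expectation then identifies this with $\nabla\mathcal L({\boldsymbol{\theta}})$. This interchange is the only non-algebraic input. It is exactly the stated hypothesis, so I do not expect any real obstacle there; all remaining steps are algebra.

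For \eqref{eq:minibatch-cov}, set $\xi_b:=g({\boldsymbol{\theta}};\mathbf{z}_b)-\nabla\mathcal L({\boldsymbol{\theta}})$. These are conditionally i.i.d.\ and centered by the first step, and $\hat g_B-\nabla\mathcal L=\frac1B\sum_b\xi_b$. Expanding gives
\[
\operatorname{Cov}(\hat g_B\mid{\boldsymbol{\theta}})=\frac{1}{B^2}\sum_{b,b'}\mathbb E[\xi_b\xi_{b'}^\top\mid{\boldsymbol{\theta}}].
\]
For $b\neq b'$, conditional independence and zero mean kill the cross terms: $\mathbb E[\xi_b\xi_{b'}^\top\mid{\boldsymbol{\theta}}]=\mathbb E[\xi_b\mid{\boldsymbol{\theta}}]\,\mathbb E[\xi_{b'}\mid{\boldsymbol{\theta}}]^\top=0$. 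The $B$ diagonal terms each equal $\operatorname{Cov}(g({\boldsymbol{\theta}};\mathbf{z})\mid{\boldsymbol{\theta}})$, which yields the factor $1/B$.

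For \eqref{eq:minibatch-variance}, take the trace of \eqref{eq:minibatch-cov}. This works because $\mathbb E[\|\hat g_B-\nabla\mathcal L\|_2^2\mid{\boldsymbol{\theta}}]=\operatorname{tr}\operatorname{Cov}(\hat g_B\mid{\boldsymbol{\theta}})$, and similarly for a single $g$. The directional statement follows by sandwiching \eqref{eq:minibatch-cov} as $v^\top(\cdot)v$ for a deterministic $v$. Conversely, the directional identities for all $v$ recover \eqref{eq:minibatch-cov} by polarization, since a symmetric matrix is determined by its quadratic form; this justifies the word ``equivalently''.
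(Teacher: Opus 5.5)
Your proposal is correct and follows the paper's proof essentially step for step. The matching steps are: conditional i.i.d.\ structure plus the interchange hypothesis for unbiasedness, the centered variables $\xi_b$ with vanishing cross terms for the covariance identity, the trace for the mean-squared error, and the quadratic form $v^\top(\cdot)v$ for the directional statement. Your polarization remark, which justifies the converse direction of ``equivalently'', is a small addition that the paper omits.
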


\begin{proof}
(i) Let $\mu({\boldsymbol{\theta}})
:=
\mathbb E_{\mathbf{z}\sim\mathcal D}[g({\boldsymbol{\theta}};\mathbf{z})\mid{\boldsymbol{\theta}}]$.
By the interchange of differentiation and expectation,
\[
\mu({\boldsymbol{\theta}})
=
\mathbb E_{\mathbf{z}\sim\mathcal D}
[\nabla_{\boldsymbol{\theta}}\ell({\boldsymbol{\theta}};\mathbf{z})]
=
\nabla_{\boldsymbol{\theta}}
\mathbb E_{\mathbf{z}\sim\mathcal D}[\ell({\boldsymbol{\theta}};\mathbf{z})]
=
\nabla\mathcal L({\boldsymbol{\theta}}).
\]
Therefore, 
$\mathbb E[\hat g_B({\boldsymbol{\theta}})\mid{\boldsymbol{\theta}}]
=
\frac1B\sum_{b=1}^B
\mathbb E[g({\boldsymbol{\theta}};\mathbf{z}_b)\mid{\boldsymbol{\theta}}]
=
\mu({\boldsymbol{\theta}})
=
\nabla\mathcal L({\boldsymbol{\theta}})$, 
which proves~\eqref{eq:minibatch-unbiased}.

(ii) Now define the centered gradients $\xi_b
:=g({\boldsymbol{\theta}};\mathbf{z}_b)-\mu({\boldsymbol{\theta}})$.
Conditioned on ${\boldsymbol{\theta}}$, the random vectors
$\xi_1,\ldots,\xi_B$ are independent and identically distributed, with $\mathbb E[\xi_b\mid{\boldsymbol{\theta}}]=0$.
Moreover, $\hat g_B({\boldsymbol{\theta}})-\nabla\mathcal L({\boldsymbol{\theta}})
=
\frac1B\sum_{b=1}^B\xi_b$. 
Hence
\begin{align*}
\operatorname{Cov}(\hat g_B\mid{\boldsymbol{\theta}})
 &=
\mathbb E\!\left[
\left(
\frac1B\sum_{b=1}^B\xi_b
\right)
\left(
\frac1B\sum_{b'=1}^B\xi_{b'}
\right)^\top
\middle|{\boldsymbol{\theta}}
\right] \\
&=
\frac1{B^2}
\sum_{b=1}^B
\mathbb E[\xi_b\xi_b^\top\mid{\boldsymbol{\theta}}]
+
\frac1{B^2}
\sum_{b\neq b'}
\mathbb E[\xi_b\xi_{b'}^\top\mid{\boldsymbol{\theta}}].
\end{align*}
For $b\neq b'$, conditional independence and centering give 
$\mathbb E[\xi_b\xi_{b'}^\top\mid{\boldsymbol{\theta}}]=\mathbb E[\xi_b\mid{\boldsymbol{\theta}}]
\mathbb E[\xi_{b'}^\top\mid{\boldsymbol{\theta}}]=0$. 
Thus $\operatorname{Cov}(\hat g_B\mid{\boldsymbol{\theta}})
=
\frac{B}{B^2}
\operatorname{Cov}(g({\boldsymbol{\theta}};\mathbf{z})\mid{\boldsymbol{\theta}})
=
\frac1B
\operatorname{Cov}(g({\boldsymbol{\theta}};\mathbf{z})\mid{\boldsymbol{\theta}})$, 
which proves~\eqref{eq:minibatch-cov}.

(iii) Taking the trace of the covariance identity yields 
$$\mathbb E \left[
 \|
\hat g_B-\nabla\mathcal L({\boldsymbol{\theta}})
 \|_2^2
\middle|{\boldsymbol{\theta}}
\right]
=
\frac1B
\mathbb E \left[
\left\|
g({\boldsymbol{\theta}};\mathbf{z})-\nabla\mathcal L({\boldsymbol{\theta}})
\right\|_2^2
\middle|{\boldsymbol{\theta}}
\right],$$ 
which is~\eqref{eq:minibatch-variance}.
Finally, for any deterministic $v$,
\[
\operatorname{Var}(v^\top\hat g_B\mid{\boldsymbol{\theta}})
=
v^\top
\operatorname{Cov}(\hat g_B\mid{\boldsymbol{\theta}})
v
=
\frac1B
v^\top
\operatorname{Cov}(g({\boldsymbol{\theta}};\mathbf{z})\mid{\boldsymbol{\theta}})
v,
\]
giving the directional variance identity.
\end{proof}

\subsection{Invariant parameter groups}\label{app:effective}

\begin{proposition}[Invariant parameter groups]
\label{prop:invariant-groups}
Assume that the context distribution $\mathcal D_n$ is independent of ${\boldsymbol{\theta}}$ and satisfies the boundedness condition in Assumption \ref{assume} (i). 

Let ${\boldsymbol{\theta}}_{\mathrm{out}}$ collect all entries of
${\boldsymbol{\theta}}=(\mathbf{P}_1,\mathbf{V}_1,\mathbf{P}_2,\mathbf{V}_2)$ outside the following $16$ blocks: $$
 \mathbf{P}_{1,\Psi\Tset},
 \mathbf{V}_{1,\Iset\Psi}, \mathbf{V}_{1,WW},
 \mathbf{P}_{2,\Iset W},\mathbf{V}_{2,W_{\mathrm{out}}\Iset}.$$ 
Under the update ${\boldsymbol{\theta}}^{k+1}={\boldsymbol{\theta}}^k-\eta_k\nabla\widehat{\mathcal L}_B
({\boldsymbol{\theta}}^k)$, if
${\boldsymbol{\theta}}_{\mathrm{out}}^0=0$, then
${\boldsymbol{\theta}}_{\mathrm{out}}^k=0$ for all $k>0$.
\end{proposition}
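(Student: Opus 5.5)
The plan is to prove the stronger pointwise statement: for every context $\mathbf z$ and every $\boldsymbol\theta$ with $\boldsymbol\theta_{\mathrm{out}}=0$ (retained blocks arbitrary), $\partial\ell(\boldsymbol\theta;\mathbf z)/\partial\boldsymbol\theta_{\mathrm{out}}=0$. Given this, $\nabla_{\boldsymbol\theta_{\mathrm{out}}}\widehat{\mathcal L}_B(\boldsymbol\theta^k)=0$ for any minibatch. Induction on $k$ then gives $\boldsymbol\theta^{k+1}_{\mathrm{out}}=\boldsymbol\theta^k_{\mathrm{out}}-\eta_k\cdot 0=0$. Boundedness in Assumption~\ref{assume}(i) is used only so that the loss is finite and differentiable. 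Since $\ell=\tfrac12\|\TF_\theta-\mathbf w^{(\tau)}_{\mathrm{soft}}\|^2$ and the target does not depend on $\boldsymbol\theta$, it suffices to show that the readout $\TF_\theta$ has zero first-order variation in each out-entry at such $\boldsymbol\theta$.

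\textbf{Step 1: base forward pass.} Redo Lemma~\ref{Lem:Q1} with general retained blocks and $\boldsymbol\theta_{\mathrm{out}}=0$. Only the $\Phi,R,U$ blocks of a transition token $t_i$ are nonzero, so the score of $t_i$ on $c_{i,a}$ is $(\phi^+_{i,a})^\top\mathbf P_{1,\Psi\Tset}[\phi_i;r_{i+1};\mathbf w]$. The value $\mathbf V_1c_{i,a}$ is supported on the $\Iset$ rows. The queries $c_{i,a}$ and $p$ see only the key $p$, with weight $1$. The resulting first-layer tokens have the following support:
\begin{itemize}
\item $t_i^{(1)}$ is supported on $\Iset$, with zero $\Psi$ and $W$ blocks;
\item $c_{i,a}^{(1)}$ is supported on $\Psi\cup W$;
\item $p^{(1)}$ is supported on $W$.
\end{itemize}
In the second layer, the readout equals $\mathbf J\,p^{(1)}_W+\frac1n\sum_j \mathbf V_{2,W_{\mathrm{out}}\Iset}h^{(1)}_{j,\Iset}\,\bigl(h^{(1)\top}_{j,\Iset}\mathbf P_{2,\Iset W}p^{(1)}_W\bigr)$. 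Consequently only transition tokens contribute, because both factors vanish for $c$ and $p$ tokens.

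\textbf{Step 2: first-layer out-entries.} A perturbation $d\mathbf P_1$ outside $\mathbf P_{1,\Psi\Tset}$ leaves the scores of the $t_i$ queries unchanged. Rows other than $\Psi$ meet the zero blocks of the key $c$, and columns $G,\Psi,W$ meet the zero blocks of $t_i$. Columns of a single unmasked key always have softmax weight $1$, so their scores do not matter either.

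For a perturbation $d\mathbf V_1$ outside the retained blocks, the $\Phi,G,R,U$ columns multiply zero entries of the keys $c$ and $p$. The $(\Psi,\Psi)$ and $(W,\Psi)$ entries change only the $\Psi$ and $W$ blocks of $t_i^{(1)}$. The non-$W$ rows of the $W$ column change only the non-$W$ blocks of $c^{(1)}$ and $p^{(1)}$.

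\textbf{Step 3: propagate through the second layer.} Check that each first-layer perturbation is first-order invisible in the Step 1 formula for the readout:
\begin{itemize}
\item Changes to $t_{i,\Psi}$ and $t_{i,W}$ do not appear in the formula.
\item A change to the $\Iset$ block of $c$ or $p$ enters a product of two factors that both vanish at the base point, so its derivative is zero by the product rule.
\item A change to $p_{\Psi}$ or $p_{\Iset}$ enters $\mathbf P_2p^{(1)}$ only through the non-$W$ columns of $\mathbf P_2$, which are zero.
\item The readout term $\mathbf J p^{(1)}_W$ depends only on $\mathbf V_{1,WW}$.
\end{itemize}

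For second-layer out-entries, consider first $d\mathbf P_2$. It contributes $\frac1n\sum_i \mathbf V_{2,W_{\mathrm{out}}\Iset}t^{(1)}_{i,\Iset}\,t_i^{(1)\top}d\mathbf P_2\,p^{(1)}$. This vanishes unless $d\mathbf P_2$ has column block $W$, since $p^{(1)}$ lives in $W$, and row block in $\Iset$, since $t_i^{(1)}$ lives in $\Iset$; such entries are retained. Next, $d\mathbf V_2$ in rows outside $W_{\mathrm{out}}$ is not read out. In rows $W_{\mathrm{out}}$ with columns $\Psi$ or $W$, it multiplies zero blocks of $t_i^{(1)}$, while for the $c$ and $p$ tokens the score factor is already zero.

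\textbf{Main obstacle.} The main difficulty is the bookkeeping in Step 3. First-order vanishing there relies on two facts: the bilinear score and the value are both zero on non-transition tokens, and the softmax weights are exactly constant for single-key columns. I would organize the check as a table listing, for each block position, which token types have it nonzero at the base point, and verify each out-entry against that table.
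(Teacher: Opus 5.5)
Your proposal is correct and follows essentially the paper's argument. Both prove that $\nabla_{\boldsymbol{\theta}_{\mathrm{out}}}\ell(\boldsymbol{\theta};\mathbf z)=0$ pointwise on $\{\boldsymbol{\theta}_{\mathrm{out}}=0\}$, and both rest on the same two facts: the post-layer-1 supports ($\Iset$ for transition tokens, $\Psi\cup W$ for candidate tokens, $W$ for the parameter token), and the simultaneous vanishing of value and score on non-transition tokens. Both then finish by induction on $k$.

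The difference is only in bookkeeping. You check first-order invisibility entry by entry in forward mode. The paper instead computes the full gradients in reverse mode via adjoints ($\lambda_i$ supported in $\Iset$, $\lambda_q$ in $W$, zero candidate adjoint) and reads off their supports. The paper also adds a gradient-flow uniqueness argument, which is not needed for the discrete-update statement.
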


\begin{proof}
Let $\Ssixteen=\{{\boldsymbol{\theta}}:{\boldsymbol{\theta}}_{\mathrm{out}}=0\}$. 
We will prove the pointwise identity
\begin{equation}\label{eq:p3-pointwise-goal}
 \nabla_{{\boldsymbol{\theta}}_{\mathrm{out}}}\ell({\boldsymbol{\theta}};\mathbf{z})=0,
 \qquad\text{for every context }\mathbf{z}\text{ and every }{\boldsymbol{\theta}}\in\Ssixteen,
\end{equation}
where $\ell({\boldsymbol{\theta}};\mathbf{z})=\frac12\|f_{\boldsymbol{\theta}}(\mathbf{z})-y_\tau(\mathbf{z})\|_2^2$, $f_{\boldsymbol{\theta}}(\mathbf{z})=\operatorname{TF}_{\theta} (\mathbf{H}(\mathbf{z}))$, and $y_\tau(\mathbf{z})={\mathbf{w}}_{\mathrm{soft}}^{(\tau)}(\mathbf{z})$.

Here the context and teacher target are fixed when differentiating with
respect to ${\boldsymbol{\theta}}$. Throughout the proof, derivatives are taken in the
full space of the four matrices. The condition ${\boldsymbol{\theta}}\in\Ssixteen$ is
imposed only when evaluating these derivatives. 

Fix $\mathbf{z}$ and ${\boldsymbol{\theta}}\in\Ssixteen$. The input tokens $t_i$, $c_{i,a}$, and $p$ are supported in $\Tset$, $\Psi$, and
$W$, respectively. 
The first-layer mask allows $t_i$ to attend only to
$\{c_{i,a}:a\in\mathcal A\}$. Write
\begin{equation}\label{eq:p3-attention}
 u_{i,a}=c_{i,a}^{\top}\mathbf{P}_1t_i,\qquad
 \pi_{i,a}=\frac{e^{u_{i,a}}}{\sum_{b\in\mathcal A}e^{u_{i,b}}},
 \qquad
 \bar c_i=\sum_{a\in\mathcal A}\pi_{i,a}c_{i,a}.
\end{equation}
The probabilities $\pi_{i,a}$ are the student's current attention
weights; no teacher parameter value is imposed on $\mathbf{P}_1$.
Since each $c_{i,a}$ is supported in $\Psi$, so is $\bar c_i$.
Every candidate query and the parameter query have exactly one allowed
key, namely $p$. Their attention weights are identically one, for
arbitrary full matrices $\mathbf{P}_1$ and $\mathbf{V}_1$.
Consequently, the first-layer tokens satisfy
\begin{equation}\label{eq:p3-first-layer}
 h_i=t_i+\mathbf{V}_1\bar c_i,\qquad
 \widehat c_{i,a}=c_{i,a}+\mathbf{V}_1p,\qquad
 q=p+\mathbf{V}_1p.
\end{equation}

On $\Ssixteen$, the only possibly nonzero blocks of $\mathbf{V}_1$ are
$\mathbf{V}_{1,\Iset\Psi}$ and $\mathbf{V}_{1,WW}$. Therefore $\supp(\mathbf{V}_1\bar c_i)\subseteq\Iset$, 
 $\supp(\mathbf{V}_1p)\subseteq W$. 
Because $\Tset\subseteq\Iset$ and the coordinate sets
$\Iset,\Psi,W$ are pairwise disjoint, \eqref{eq:p3-first-layer} gives
\begin{equation}\label{eq:p3-supports}
 \supp(h_i)\subseteq\Iset,\qquad
 \supp(\widehat c_{i,a})\subseteq\Psi\cup W,\qquad
 \supp(q)\subseteq W.
\end{equation}
Neither $q=p$ nor $\widehat c_{i,a}=c_{i,a}$ is required. In particular,
$\mathbf{V}_{1,WW}$ can be any $(d+1)\times(d+1)$ matrix.

Let $\Pi=[0_{d\times(4d+2)},I_d]$, $\widetilde{\mathbf{V}}_2=\Pi {\mathbf{V}}_2$.
Write $h_1,\ldots,h_N$ for all first-layer tokens, in their original
order, with $h_N=q$. For arbitrary full matrices, the second-layer
readout is
\begin{equation}\label{eq:p3-readout}
 f_{\boldsymbol{\theta}}(\mathbf{z})=\Pi q+\frac1n\sum_{j=1}^N(\widetilde {\mathbf{V}}_2h_j)(h_j^\top \mathbf{P}_2q).
\end{equation}
To compute its full differential, define
$v_j=\widetilde {\mathbf{V}}_2h_j$ and $\sigma_j=h_j^\top \mathbf{P}_2q$.
The product rule gives
\begin{align}
 \dif (v_j\sigma_j) =\sigma_j\bigl(\Pi\,\dif {\mathbf{V}}_2\,h_j+\widetilde {\mathbf{V}}_2\,\dif h_j\bigr) +v_j\bigl((\dif h_j)^\top \mathbf{P}_2q+h_j^\top(\dif \mathbf{P}_2)q+h_j^\top \mathbf{P}_2\,\dif q\bigr).
 \label{eq:p3-full-product-rule}
\end{align}
We now evaluate this differential at ${\boldsymbol{\theta}}\in\Ssixteen$.
Since ${\mathbf{V}}_2$ has nonzero entries only in
$(W_{\mathrm{out}},\Iset)$, $\widetilde {\mathbf{V}}_2$ has nonzero columns only in $\Iset$. Similarly, $\mathbf{P}_2$ has nonzero entries only in $(\Iset,W)$, and
$q$ is supported in $W$, so $\supp(\mathbf{P}_2q)\subseteq\Iset$.
For every candidate token and for the parameter token considered as a
key/value token, \eqref{eq:p3-supports} therefore implies
\begin{equation}\label{eq:p3-double-zero}
 v_j=\widetilde {\mathbf{V}}_2h_j=0_d,
 \qquad
 \sigma_j=h_j^\top \mathbf{P}_2q=0.
\end{equation}
Substituting \eqref{eq:p3-double-zero} into
\eqref{eq:p3-full-product-rule}, every term vanishes:
$$
 \dif (v_j\sigma_j)=0
 \quad\text{for every full-matrix perturbation }\dif {\boldsymbol{\theta}}.
$$
For a transition token, write $
 a_i=\widetilde {\mathbf{V}}_2h_i$, $s_i=h_i^\top \mathbf{P}_2q$, and $e=f_{\boldsymbol{\theta}}(\mathbf{z})-y_\tau(\mathbf{z})$.
The full differential evaluated on $\Ssixteen$ is
\begin{align}
 \dif f_{\boldsymbol{\theta}}(\mathbf{z})
 &=\Pi\,\dif q+\frac1n\sum_{i=0}^{n-1} s_i\bigl(\Pi\,\dif {\mathbf{V}}_2\,h_i+\widetilde {\mathbf{V}}_2\,\dif h_i\bigr)
       \nonumber\\
 &\quad+\frac1n\sum_{i=0}^{n-1}a_i
       \bigl((\dif h_i)^\top \mathbf{P}_2q+h_i^\top(\dif \mathbf{P}_2)q+h_i^\top \mathbf{P}_2\,\dif q\bigr).
 \label{eq:p3-full-differential}
\end{align}
The sample-loss differential is $\dif\ell=e^\top \dif f_{\boldsymbol{\theta}}(\mathbf{z})$.

\textbf{Gradient of ${\mathbf{P}}_2$ and ${\mathbf{V}}_2$.} Holding the first-layer tokens fixed, the terms involving 
$\dif \mathbf{P}_2$ in
\eqref{eq:p3-full-differential} give
$$
\dif_{\mathbf{P}_2}\ell=\frac1n\sum_{i=0}^{n-1}(e^\top a_i)h_i^\top(\dif \mathbf{P}_2)q.
$$ 
Since $\langle A,B\rangle_F=\operatorname{tr}(A^\top B)$ and $h_i^\top(\dif \mathbf{P}_2)q=\langle h_iq^\top,\dif \mathbf{P}_2\rangle_F$, we obtain $\nabla_{\mathbf{P}_2}\ell({\boldsymbol{\theta}};\mathbf{z}) =\frac1n\sum_{i=0}^{n-1}(e^\top a_i)h_iq^\top$. 
Each $h_i$ is supported in $\Iset$ and $q$ is supported in $W$.
Thus $h_iq^\top$ has nonzero entries only in rows $\Iset$ and columns $W$, and
$\supp(\nabla_{\mathbf{P}_2}\ell)\subseteq(\Iset,W)$.
Entrywise, for a row coordinate $r$ and column coordinate $s$, $(\nabla_{\mathbf{P}_2}\ell)_{rs} =\frac1n\sum_i(e^\top a_i)(h_i)_r q_s=0$ if $r\notin\Iset\text{ or }s\notin W$. 

Likewise, the terms involving $\dif{\mathbf{V}}_2$ in
\eqref{eq:p3-full-differential} give
$$
 \dif_{{\mathbf{V}}_2}\ell
 =\frac1n\sum_{i=0}^{n-1}s_i e^\top\Pi(\dif{\mathbf{V}}_2)h_i
 =\left\langle
     \frac1n\sum_{i=0}^{n-1}s_i\Pi^\top e\,h_i^\top,\dif{\mathbf{V}}_2
   \right\rangle_F.
$$
Hence $\nabla_{{\mathbf{V}}_2}\ell({\boldsymbol{\theta}};\mathbf{z})
 =\frac1n\sum_{i=0}^{n-1}s_i\Pi^\top e\,h_i^\top$. 
The vector $\Pi^\top e$ is supported in $W_{\mathrm{out}}$, while
$h_i$ is supported in $\Iset$. Therefore $\supp(\nabla_{{\mathbf{V}}_2}\ell)\subseteq(W_{\mathrm{out}},\Iset)$. 
Every row outside $W_{\mathrm{out}}$, including the first
row of the $W$ block, has zero gradient; within $W_{\mathrm{out}}$,
the columns in $\Psi$ and $W$ also have zero gradient.

To propagate the loss gradient through the first layer, collect the
$\dif h_i$ and $\dif q$ terms in \eqref{eq:p3-full-differential}.
For each transition token, 
$$\dif_{h_i}\ell
 =\frac1n [s_i e^\top\widetilde {\mathbf{V}}_2\,\dif h_i
             +(e^\top a_i)(\dif h_i)^\top \mathbf{P}_2q ]
 =:\lambda_i^\top \dif h_i,$$ 
Both $\widetilde {\mathbf{V}}_2^\top e$ and $\mathbf{P}_2q$ are supported in $\Iset$, so
$\supp(\lambda_i)\subseteq\Iset$. 
For the parameter token, its residual contribution is
$e^\top\Pi\,\dif q$, and its query contribution is
$n^{-1}\sum_i(e^\top a_i)h_i^\top \mathbf{P}_2\,\dif q$. Thus
\begin{equation*} 
\dif_q\ell=[\Pi^\top e+
        \frac1n\sum_{i=0}^{n-1}(e^\top a_i)\mathbf{P}_2^\top h_i]^\top \dif q=:\lambda_q^\top \dif q.
\end{equation*}
There is no additional key/value contribution from $h_N=q$, because
its summand has zero differential by \eqref{eq:p3-double-zero}.
Since $\mathbf{P}_2$ is supported in $(\Iset,W)$, its transpose maps $h_i$ into
$W$. Also $\Pi^\top e$ is supported in $W_{\mathrm{out}}\subseteq W$.
Consequently, $\supp(\lambda_q)\subseteq W$.
Each candidate token appears in the second layer only as a key/value
token in its own summand. That summand has zero differential, so
\begin{equation}\label{eq:p3-candidate-adjoint}
 \nabla_{\widehat c_{i,a}}\ell=0.
\end{equation}
Here $q$ and the candidate tokens are treated as separate intermediate
variables; their shared dependence on ${\mathbf{V}}_1p$ is included in the next step.

\textbf{Gradient of ${\mathbf{V}}_1$.}
When varying ${\mathbf{V}}_1$ and keeping $\mathbf{P}_1$ fixed, the attention weights and
$\bar c_i$ are fixed. Differentiating \eqref{eq:p3-first-layer} yields 
 $\dif h_i=(\dif {\mathbf{V}}_1)\bar c_i$, $\dif \widehat c_{i,a}=(\dif {\mathbf{V}}_1)p$, $\dif q=(\dif {\mathbf{V}}_1)p$. By the chain rule, including all three token types,
\begin{align*}
 \dif_{{\mathbf{V}}_1}\ell &=\sum_{i=0}^{n-1}\lambda_i^\top(\dif {\mathbf{V}}_1)\bar c_i
   +\sum_{i,a}(\nabla_{\widehat c_{i,a}}\ell)^\top(\dif {\mathbf{V}}_1)p
   +\lambda_q^\top(\dif {\mathbf{V}}_1)p\\
   &=\sum_{i=0}^{n-1}\lambda_i^\top(\dif {\mathbf{V}}_1)\bar c_i
   +\lambda_q^\top(\dif {\mathbf{V}}_1)p,
\end{align*}
where the candidate terms vanish by \eqref{eq:p3-candidate-adjoint}.
It follows that 
 $\nabla_{{\mathbf{V}}_1}\ell({\boldsymbol{\theta}};\mathbf{z})
 =\sum_{i=0}^{n-1}\lambda_i\bar c_i^\top+\lambda_qp^\top$. 
For the first sum, $\lambda_i$ is supported in $\Iset$ and $\bar c_i$
in $\Psi$, so its support is contained in $(\Iset,\Psi)$.
For the second term, both $\lambda_q$ and $p$ are supported in $W$,
so its support is contained in $(W,W)$. Hence
\begin{equation*} 
 \supp(\nabla_{{\mathbf{V}}_1}\ell)
 \subseteq(\Iset,\Psi)\cup(W,W).
\end{equation*}
Equivalently, the only possibly nonzero gradient blocks of ${\mathbf{V}}_1$ are
$(\Phi,\Psi)$, $(G,\Psi)$, $(R,\Psi)$, $(U,\Psi)$, $(W,W)$. 
Every entry outside these five blocks therefore has zero sample gradient.

\textbf{Gradient of $\mathbf{P}_1$ (including the softmax derivative).}
Only transition queries have nonconstant first-layer attention weights.
For a candidate query or the parameter query, the unique allowed-key
weight is $e^u/e^u=1$, whose derivative with respect to every entry of
$\mathbf{P}_1$ is zero.

For a transition query, differentiating \eqref{eq:p3-attention} gives
\begin{align*}
&\dif \pi_{i,a} =\pi_{i,a} (\dif u_{i,a} -\sum_{b\in\mathcal A}\pi_{i,b}\,\dif u_{i,b} ),\\
&\dif \bar c_i =\sum_{a\in\mathcal A}c_{i,a}\,\dif \pi_{i,a}=\sum_{a\in\mathcal A}\pi_{i,a}c_{i,a}\,\dif u_{i,a} -\bar c_i\sum_{b\in\mathcal A}\pi_{i,b}\,\dif u_{i,b}=\sum_{a\in\mathcal A} \pi_{i,a}(c_{i,a}-\bar c_i)\,\dif u_{i,a}.
\end{align*}
When only $\mathbf{P}_1$ varies, $\dif h_i={\mathbf{V}}_1\,\dif\bar c_i$, while $\dif q=0$ and
$\dif\widehat c_{i,a}=0$. Therefore
\begin{align*}
 \dif_{\mathbf{P}_1}\ell
 =\sum_{i=0}^{n-1}\lambda_i^\top {\mathbf{V}}_1\,\dif\bar c_i=\sum_{i=0}^{n-1}\sum_{a\in\mathcal A}
       \pi_{i,a}\lambda_i^\top {\mathbf{V}}_1(c_{i,a}-\bar c_i)\,\dif u_{i,a}.
\end{align*}
Define the scalar $\kappa_{i,a}:=\pi_{i,a}\lambda_i^\top {\mathbf{V}}_1(c_{i,a}-\bar c_i)$.
Since the input tokens are fixed, $\dif u_{i,a}=c_{i,a}^\top(\dif\mathbf{P}_1)t_i =\langle c_{i,a}t_i^\top,\dif\mathbf{P}_1\rangle_F$. Consequently, $ \nabla_{\mathbf{P}_1}\ell({\boldsymbol{\theta}};\mathbf{z})
 =\sum_{i=0}^{n-1}\sum_{a\in\mathcal A}
\kappa_{i,a}c_{i,a}t_i^\top$. 
Each $c_{i,a}$ is supported in $\Psi$ and each input transition token
$t_i$ is supported in $\Tset=(\Phi,R,U)$. Hence
\begin{equation*}
\supp(\nabla_{\mathbf{P}_1}\ell)\subseteq(\Psi,\Tset).
\end{equation*}
Explicitly, only $\mathbf{P}_{1,\Psi\Phi}$, $\mathbf{P}_{1,\Psi R}$, and
$\mathbf{P}_{1,\Psi U}$ can have nonzero gradients. 
 
\textbf{Population gradients and zero coordinates.}
The calculations above hold for every fixed context $z$. Taking
expectations gives the full-matrix population gradient formulas,
evaluated at any ${\boldsymbol{\theta}}\in\Ssixteen$:
\begin{align*}
\nabla_{\mathbf{P}_1}\mathcal{L}({\boldsymbol{\theta}}) &=\E_\mathbf{z} \left[\sum_{i,a}\kappa_{i,a}c_{i,a}t_i^\top\right], \quad\nabla_{{\mathbf{V}}_1}\mathcal L({\boldsymbol{\theta}}) =\E_\mathbf{z} \left[\sum_i\lambda_i\bar c_i^\top+\lambda_qp^\top\right],\\
 \nabla_{\mathbf{P}_2}\mathcal L({\boldsymbol{\theta}})
&=\frac1n\E_\mathbf{z} \left[\sum_i(e^\top a_i)h_iq^\top\right], \quad    
 \nabla_{{\mathbf{V}}_2}\mathcal L({\boldsymbol{\theta}})
  =\frac1n\E_\mathbf{z}\!\left[\sum_i s_i\Pi^\top e\,h_i^\top\right].
\end{align*}
Their supports are therefore contained in
\begin{equation}\label{eq:p3-all-supports}
 \begin{aligned}
  \supp(\nabla_{\mathbf{P}_1}\mathcal L)&\subseteq(\Psi,\Tset), 
  \quad\supp(\nabla_{{\mathbf{V}}_1}\mathcal L) \subseteq(\Iset,\Psi)\cup(W,W),\\
  \supp(\nabla_{\mathbf{P}_2}\mathcal L)&\subseteq(\Iset,W), \quad
  \supp(\nabla_{{\mathbf{V}}_2}\mathcal L) \subseteq(W_{\mathrm{out}},\Iset).
 \end{aligned}
\end{equation}
These are exactly the $16$ retained blocks. Thus
\begin{equation}\label{eq:p3-normal-gradient-zero}
\nabla_{{\boldsymbol{\theta}}_{\mathrm{out}}}\mathcal L({\boldsymbol{\theta}})=0
 \quad\text{whenever }{\boldsymbol{\theta}}_{\mathrm{out}}=0.
\end{equation}

The boundedness condition in Assumption \ref{assume} (i) justifies differentiation under the expectation: on each compact
parameter set, the fixed-mask softmax and the two attention layers
have uniformly bounded derivatives of every fixed order on these
bounded inputs. The teacher target is bounded as well. Thus the sample
loss and its first two parameter derivatives admit an integrable bound
there, so $\mathcal L$ is $C^2$ and the displayed expectation formulas
are valid. 

It remains to show that \eqref{eq:p3-normal-gradient-zero} holds along
the entire training trajectory, rather than only at initialization.
Write ${\boldsymbol{\theta}}=(x,y)$, where $x$ collects the retained coordinates and
$y={\boldsymbol{\theta}}_{\mathrm{out}}$. Then the full gradient-flow system is
\begin{equation}\label{eq:p3-full-flow}
 \dot x=-\nabla_x\mathcal L(x,y),\qquad
 \dot y=-\nabla_y\mathcal L(x,y).
\end{equation}
By \eqref{eq:p3-normal-gradient-zero}, $\nabla_y\mathcal L(x,0)=0$
for every $x$, not merely for the initial value $x(0)$.
Consider the restricted system $\dot{\bar x}=-\nabla_x\mathcal L(\bar x,0)$, 
 $\bar x(0)=x(0)$, $\bar y(t)\equiv0$. 
It satisfies 
 $\dot{\bar y}(t)=0=-\nabla_y\mathcal L(\bar x(t),0)$, 
so $(\bar x(t),\bar y(t))$ solves the full system
\eqref{eq:p3-full-flow} with the same initial condition as $(x(t),y(t))$.
Since $\mathcal L$ is $C^2$, its gradient is locally Lipschitz, and
the initial-value problem has a unique local solution. The full solution
therefore coincides with the restricted solution. Repeating this
uniqueness argument along the trajectory gives $y(t)=0$
throughout its maximal interval of existence.

For completeness, this interval contains every $t\ge0$. Along a
gradient-flow solution, using the product Frobenius norm, 
 $\frac{d}{dt}\mathcal L({\boldsymbol{\theta}}(t))
 =\langle\nabla\mathcal L({\boldsymbol{\theta}}(t)),\dot{\boldsymbol{\theta}}(t)\rangle
 =-\|\nabla\mathcal L({\boldsymbol{\theta}}(t))\|^2
 =-\|\dot{\boldsymbol{\theta}}(t)\|^2$.
Integrating and using $\mathcal L\ge0$ yields $\int_0^t\|\dot{\boldsymbol{\theta}}(s)\|^2\,ds
 =\mathcal L({\boldsymbol{\theta}}(0))-\mathcal L({\boldsymbol{\theta}}(t))
 \le\mathcal L({\boldsymbol{\theta}}(0))$.
By the triangle inequality and Cauchy--Schwarz,
\begin{align*}
 \|{\boldsymbol{\theta}}(t)-{\boldsymbol{\theta}}(0)\|\le\int_0^t\|\dot{\boldsymbol{\theta}}(s)\|\,ds\le\sqrt t\left(\int_0^t\|\dot{\boldsymbol{\theta}}(s)\|^2\,ds\right)^{1/2}
 \le\sqrt{t\,\mathcal L({\boldsymbol{\theta}}(0))}.
\end{align*}
If the maximal existence time were finite, this bound would keep the
trajectory inside a compact subset of the full parameter space up to
that time. The locally Lipschitz vector field then permits continuation,
contradicting maximality. Thus the solution is global, and
${\boldsymbol{\theta}}_{\mathrm{out}}(t)=0$ for all $t\ge0$.

For any batch
$\mathbf{z}_1,\ldots,\mathbf{z}_B$, at every ${\boldsymbol{\theta}}\in\Ssixteen$, $\nabla_{{\boldsymbol{\theta}}_{\mathrm{out}}}\widehat{\mathcal L}_B({\boldsymbol{\theta}})
 =\frac1B\sum_{b=1}^B
       \nabla_{{\boldsymbol{\theta}}_{\mathrm{out}}}\ell({\boldsymbol{\theta}};\mathbf{z}_b)=0$.
Hence an update ${\boldsymbol{\theta}}^{k+1}={\boldsymbol{\theta}}^k-\eta_k\nabla\widehat{\mathcal L}_B
({\boldsymbol{\theta}}^k)$ gives
${\boldsymbol{\theta}}_{\mathrm{out}}^{k+1}
 ={\boldsymbol{\theta}}_{\mathrm{out}}^k-\eta_k\,0=0$ whenever  ${\boldsymbol{\theta}}_{\mathrm{out}}^k=0$. 
Induction proves invariance at every discrete iteration, without
projecting the gradients or explicitly freezing these coordinates.
\end{proof}
\subsection{Support parameter blocks}
Denote the six blocks by 
\begin{align*}
\mathbf{A}=(\mathbf{P}_1)_{\Psi,U},\quad &\mathbf{L}=({\mathbf{V}}_1)_{G,\Psi}, \quad\mathbf{B}_-=(\mathbf{P}_2)_{\Phi,W},\\
\mathbf{B}_+=(\mathbf{P}_2)_{G,W},\quad &\mathbf{B}_r=(\mathbf{P}_2)_{R,W},\quad \mathbf{C}=({\mathbf{V}}_2)_{W_{\rm out},\Phi}.
\end{align*}
Let $\mathcal S_6=\{\mathbf{A},\mathbf{L},\mathbf{B}_-,\mathbf{B}_+,\mathbf{B}_r,\mathbf{C}\}$. For
$\mathcal S\subseteq\mathcal S_6$, let $\Theta(\mathcal S)$ be the
parameter space in which the blocks in $\mathcal S$ are unrestricted
and every other entry of the four matrices is zero. Set
$\Theta_{\mathrm{sup}}=\Theta(\mathcal S_6)$.
Define
\begin{align}
F_\tau(\mathbf{H})=\mathbf{w}+\frac{\alpha}{n}\sum_{i=1}^n\phi_i
 \bigl(r_i+\gamma(\mu_i^\tau)^\top \mathbf{w}-\phi_i^\top \mathbf{w}\bigr).
 \label{eq:target}
\end{align}
where $\mu_i(\mathbf{A})=\sum_{a=1}^m \frac{\exp((\phi_{i,a}^{+})^\top\mathbf{A}\mathbf{w})}
{\sum_{b=1}^m\exp((\phi_{i,b}^{+})^\top\mathbf{A}\mathbf{w})}\phi^+_{i,a}$, and $\mu_i^\tau=\mu_i(\tau^{-1}I_d)$.

The following Proposition shows that the six-block support realizes the target update $F_\tau(\mathbf{H})$ exactly, whereas none of its proper subsets does so on all original prompts.
\begin{proposition}[Sufficiency and inclusion-minimality of the six-block support]
\label{prop:six-block-minimal-support}
Let $\mathcal H$ denote the class of prompts of the form \eqref{eq:Hzero}, $d,n\ge1$, $m\ge2$, $\alpha>0$, $0<\gamma<1$, and
$0<\tau<\infty$. Then

(i) There exists ${\boldsymbol{\theta}}\in\Theta_{\mathrm{sup}}$, such that for all $\mathbf{H}\in\mathcal H$, $\TF_{\theta}(\mathbf{H})=F_\tau(\mathbf{H})$.

(ii) For all $\mathcal S\subsetneq\mathcal S_6$, and ${\boldsymbol{\theta}}\in\Theta(\mathcal S)$, there exists $\mathbf{H}\in\mathcal H$, such that $\TF_{\theta}(\mathbf{H})\ne F_\tau(\mathbf{H})$.
\end{proposition}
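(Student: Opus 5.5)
For part (i) I would specialize the construction of Theorem~\ref{Th:Qlearning1}. Take $\mathbf D_\Phi=0$, $\mathbf D_U=0$, $\mathbf D_R=0$, $\Omega=0$, $\rho=1$, $b=0$ (so $\mathbf Q=I_{d+1}$ and $\mathbf E_\rho=0$), and set every $\star$ block to zero. Then $\mathbf C_G=0$, $\mathbf B_U=0$, and the remaining blocks are exactly $\bar{\boldsymbol\theta}_\tau(\mathbf L,c)$ in \eqref{eq:manifold}, which lies in $\Theta_{\mathrm{sup}}$. Theorem~\ref{Th:Qlearning1}(i) then gives $\TF_\theta(\mathbf H)=F_\tau(\mathbf H)$ on every prompt, since $\mu_i^\tau=\bar\phi_i^{+,\tau}$. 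Nothing new is needed beyond checking that these specialized choices remain admissible.

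For part (ii) it suffices to treat the six maximal proper subsets $\mathcal S_6\setminus\{X\}$, since $\Theta(\mathcal S)\subseteq\Theta(\mathcal S')$ when $\mathcal S\subseteq\mathcal S'$. First compute the general output on $\Theta(\mathcal S_6)$ with all six blocks free, using the same token bookkeeping as Lemma~\ref{Lem:Q1}. Since $\mathbf V_{1,WW}=0$, the token $p$ is unchanged and $\mathbf V_1 p=0$. The first layer gives $t_i^{(1)}=t_i+[0;\mathbf L\mu_i(\mathbf A);0;\dots]$. The second layer's readout equals
\[
\mathbf w+\frac1n\sum_i \mathbf C\phi_i\bigl(\phi_i^\top \mathbf B_-\widetilde{\mathbf w}+\mu_i(\mathbf A)^\top\mathbf L^\top\mathbf B_+\widetilde{\mathbf w}+r_{i+1}\mathbf B_r\widetilde{\mathbf w}\bigr),
\]
because only the $\Phi$ column of $\mathbf V_2$ survives, and candidate and parameter tokens have zero $\Phi$ block. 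Comparing this to $F_\tau$:

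\begin{itemize}
\item If $\mathbf C$ is removed, the output is identically $\mathbf w$. A prompt with $\phi_0\ne0$ and $r_1\neq\phi_0^\top\mathbf w-\gamma\mathbf w^\top\mu_0^\tau$ (with $n=1$) makes $F_\tau\neq\mathbf w$.
\item If $\mathbf B_r$ is removed, the output does not depend on $r$, while $F_\tau$ is affine in $r_{i+1}$ with coefficient $\frac\alpha n\phi_i\neq0$. Two prompts differing only in $r_1$ separate them.
\item If $\mathbf B_-$ is removed, choose $m$ candidates all equal to a vector $\psi$, so that $\mu_i(\mathbf A)=\psi$ regardless of $\mathbf A$. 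With $n=1$, vary $\phi_0=s u$ along a fixed unit $u$. The model's scalar coefficient multiplying $\mathbf C\phi_0$ is independent of $s$, whereas $F_\tau$ has a coefficient containing $-s\,u^\top\mathbf w$. Hence the $s^2$ terms disagree whenever $u^\top\mathbf w\neq0$.
\item If $\mathbf B_+$ or $\mathbf L$ is removed, the $G$-channel dies, so the output has no dependence on the successor features. Then fix $\phi_0,r_1,\mathbf w$ and scale all candidates $\phi^+_{0,a}=\psi$ by $s$. $F_\tau$ changes by $\frac\alpha n\gamma s\,\psi^\top\mathbf w\,\phi_0\neq0$, but the model does not change.
\item If $\mathbf A$ is removed ($\mathbf A=0$), then $\mu_i=\frac1m\sum_a\phi^+_{i,a}$ is a uniform average, and the output is polynomial in the data.
\end{itemize}

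The main obstacle is this last case, $\mathbf A=0$. First, the earlier cases force the other blocks to satisfy the right identities: matching the coefficients in $r$, $\phi_i$, and an equal-candidate prompt gives $\mathbf C\phi\,e_0^\top\mathbf B_r^{\top}$-type relations, and ultimately $\mathbf C\phi\,(\mathbf L^\top\mathbf B_+\widetilde{\mathbf w})^\top\psi=\alpha\gamma\,\phi\,\psi^\top\mathbf w$ for all $\phi,\psi,\mathbf w$. Second, take $n=1$, $m=2$ with distinct candidates $\psi_1,\psi_2$ where $\mathbf w^\top\psi_1\ne\mathbf w^\top\psi_2$. The model yields the uniform average $\tfrac12(\psi_1+\psi_2)^\top\mathbf w$ in the $\gamma$-term, whereas $F_\tau$ yields the strictly Boltzmann-tilted average (by strict monotonicity of softmax at finite $\tau$). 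These differ, giving a mismatch. I would organize the argument as polynomial-identity matching in $(r,\phi,\psi)$, with care that each separating prompt lies in $\mathcal H$.
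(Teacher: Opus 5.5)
Your proposal is correct and follows the paper's proof in structure. For part (i) you specialize the Theorem~\ref{Th:Qlearning1} construction to $\mathbf Q=I_{d+1}$, $\mathbf D_\Phi=\mathbf D_R=\mathbf D_U=0$, $\Omega=0$; the paper instead evaluates the forward pass \eqref{eq:forward} for arbitrary six-block parameters, the same formula you derive for part (ii). Part (ii) is the same block-by-block case analysis, differing only cosmetically in the witnesses. For example, when $\mathbf A=0$ the paper skips your identity-matching step: the model increment is affine in $\mathbf w$, while the target increment along $\mathbf w=te$ has nonzero second derivative at $t=0$.

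One small fix: since $n$ and $m$ are fixed in the statement, you cannot simply choose $n=1$ and $m=2$. Instead set $\phi_i=0$ for $i\ge1$ and repeat a candidate feature, as the paper does.
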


\begin{proof}
(i) For arbitrary parameters in $\Theta_{\mathrm{sup}}$, the first-layer transition token is $$t_i^{(1)}=\begin{bmatrix}\phi_i;\mathbf{L}\mu_i(\mathbf{A});r_i;0_d;\mathbf{w};0_{d+1}\end{bmatrix}.$$
Indeed, the mask restricts transition $i$ to its own successor candidates,
whose scores are $\psi_{ia}^{\top}\mathbf{A}\mathbf{w}$. The candidates and final query
are unchanged, since each attends only to the final query $p$ and
${\mathbf{V}}_1p=0$. In the second layer, $\mathbf{P}_2p$ is supported on $\Phi,G,R$.
Consequently, the candidate and final-query scores against $p$ are zero;
only the transition tokens contribute. Their scores are 
 $\sigma_i=\phi_i^\top \mathbf{B}_-\tilde{\mathbf{w}}+ \mu_i(\mathbf{A})^\top \mathbf{L}^\top \mathbf{B}_+\tilde{\mathbf{w}}+r_i\mathbf{B}_r\tilde{\mathbf{w}}$, 
and their value vectors in the output coordinates are $\mathbf{C}\phi_i$.
Including the residual readout gives
\begin{equation}
 \TF_{\boldsymbol{\theta}}(\mathbf{H})-\mathbf{w}
 =\frac1n\sum_{i=1}^n \mathbf{C}\phi_i
 \left(\phi_i^\top \mathbf{B}_-\tilde{\mathbf{w}}+
       \mu_i(\mathbf{A})^\top \mathbf{L}^\top \mathbf{B}_+\tilde{\mathbf{w}}+r_i\mathbf{B}_r\tilde{\mathbf{w}}\right).
 \label{eq:forward}
\end{equation}
For any $c\ne0$ and any $\mathbf{L}\in\GL(d)$, choose 
 $\mathbf{A}=\tau^{-1}I_d$, $\mathbf{B}_-=-c\mathbf{J}$, $\mathbf{B}_r=ce_0^\top$, $\mathbf{B}_+=\gamma c\mathbf{L}^{-\top}\mathbf{J}$, $\mathbf{C}=\frac\alpha c I_d$. 
Then $\mathbf{J}\tilde{\mathbf{w}}=\mathbf{w}$, $e_0^\top \tilde{\mathbf{w}}=1$, and $\mathbf{L}^\top \mathbf{B}_+=\gamma c\mathbf{J}$.
Thus $\sigma_i=c\bigl(r_i+\gamma(\mu_i^\tau)^\top \mathbf{w}-\phi_i^\top \mathbf{w}\bigr)$,  $\mathbf{C}\phi_i=\frac\alpha c\phi_i$. 
Equation~\eqref{eq:forward} becomes~\eqref{eq:target}.

(ii) Fix a unit vector $e\in\R^d$. In every counterexample below, take
$\phi_1=e$ and $\phi_i=0$ for $i\ge2$. Accordingly only transition 1
contributes to either update, regardless of the fixed context length
$n$. These inputs can be realized by a trajectory that moves from an
initial state to a second state and subsequently selects a zero-feature
action there. The successor feature lists used below are either all
zero or $(e,0,\ldots,0)$, so such an action exists because $m\ge2$.
Set the later rewards to zero.  

If $\mathbf{C}=0$, \eqref{eq:forward} gives $\TF_{\boldsymbol{\theta}}(\mathbf{H})=\mathbf{w}$
for every input. Choose $\mathbf{w}=0$, $r_1=1$, and all first-transition
successor features zero. The target increment is $\alpha e/n\ne0$,
which cannot be reproduced.

If $\mathbf{B}_r=0$, the output in~\eqref{eq:forward} does not depend on any
reward. Fix $\mathbf{w}=0$ and the first-transition successor features at zero,
and compare otherwise identical inputs with $r_1=0$ and $r_1=1$.
Their target outputs differ by $\alpha e/n$, while the model outputs
coincide. No fixed choice of the other blocks can fit both inputs;
Set $\mathbf{w}=e$, $r_1=0$, and all first-transition successor features to zero.
Then $\mu_1(\mathbf{A})=0$ for every $\mathbf{A}$. 

If $\mathbf{B}_-=0$, the model increment is zero, but the target increment is $-\alpha e/n\ne0$.

If $\mathbf{L}=0$ or $\mathbf{B}_+=0$, then $\mathbf{L}^\top \mathbf{B}_+=0$, so the output is
independent of the successor features. Fix $\mathbf{w}=e$ and $r_1=0$,
and compare successor lists $(0,\ldots,0)$ and $(e,0,\ldots,0)$.
The target outputs differ by 
 $\frac{\alpha\gamma}{n}
 \frac{\exp(1/\tau)}{\exp(1/\tau)+m-1}\,e\ne0$. 
Again the model outputs coincide, so they cannot both equal the targets.

If $\mathbf{A}=0$, attention weights are uniform. Holding the features and
rewards fixed, $\mu_i(0)$ is independent of $\mathbf{w}$, and the increment
in~\eqref{eq:forward} is an affine function of $\mathbf{w}$.
Choose the first-transition successor list $(e,0,\ldots,0)$,
$r_1=0$, and $\mathbf{w}=te$, with $t$ varying in an interval about zero.
The target increment is $g(t)=\frac\alpha n e\bigl[\gamma t p(t)-t\bigr]$, $p(t)=\frac{\exp(t/\tau)}{\exp(t/\tau)+m-1}$. 
Since $p'(0)=(m-1)/(\tau m^2)$, 
 $g''(0)=\frac{2\alpha\gamma(m-1)}{n\tau m^2}\,e\ne0$. 
Thus the target increment is not affine in $t$, whereas the model
increment is affine. They cannot agree for all these inputs.

Every proper subset of $\mathcal S_6$ omits at least one of these
blocks. The corresponding case applies irrespective of the values of
the remaining blocks.
\end{proof}


\section{Experimental reproducibility and implementation details}\label{sec:Experimental}
We evaluate parameter recovery and cross-task generalization in a structured two-layer Q-learning Transformer trained by output-only supervision. All models use cosine learning-rate decay from the first Adam update.  Batch-size and context-length comparisons further reveal distinct effects on parameter recovery and functional accuracy under fixed task budgets. Frozen models transfer to 512 unseen tabular tasks, achieving returns close to their soft teachers. 
\label{app:reproducibility}
\begin{table}[H]
\caption{Notation. A task is one independently sampled MDP/feature realization;
a context is a valid trajectory window generated in that task. In the
training, one task supplies one context.}
\label{tab:notation}\centering\small
\begin{tabular}{lp{0.74\linewidth}}
\toprule
Symbol & Meaning and default \\
\midrule
$S,m,d$ & State count $S=9$, action count $m=4$, Q-feature dimension $d=36$.\\
$\mathbf{w},{\boldsymbol{\theta}}$ & Q parameter $\mathbf{w}\in\R^d$; distinct Transformer parameter vector ${\boldsymbol{\theta}}$.\\
$\tau$ & Teacher temperature.\\
$\alpha,\gamma,\epsilon,\eta$ & Q-update step $0.2$, discount $0.5$, behavior exploration $0.1$, Adam learning rate.\\
$B,U$ & Independent contexts per optimizer update; number of optimizer updates.\\
$n$ & Transitions within one context, default $20$; not minibatch size $B$.\\
$\Ntask,\Nobs$ & Tasks/contexts seen: $\Ntask=BU$; supplied transitions: $\Nobs=nBU$.\\
$D,N_{\rm tok}$ & Token width $D=5d+2=182$; tokens per prompt $N_{\rm tok}=n(m+1)+1$.\\
\bottomrule
\end{tabular}
\end{table}

\paragraph{Task budget and optimization.}
Each independently sampled task contributes one context. Let $B$ denote the number of contexts per optimizer update, $n$ the number of transitions within a context, and $U_B$ the number of optimizer updates.
At update $k=0,\ldots,U_B-1$, let $m_k=kB$ be the number of tasks already consumed. The learning rate is 
$\eta(m_k)=10^{-6}+\frac{10^{-3}-10^{-6}}{2}
\left[1+\cos (\pi\frac{m_k}{N_{\rm task}} )\right]$. 
All blocks follow this schedule from the first update at $10^{-3}$, without warm-up, restarts, or an additional phase. Its minimum is $10^{-6}$ at the budget boundary; the last actual update occurs at $m_k=N_{\rm task}-B$ and uses a slightly larger value. Adam uses $(\beta_1,\beta_2)=(0.9,0.999)$, $\epsilon_{\rm Adam}=10^{-8}$, no weight decay, and float32 parameters and moments.  

The study uses five paired seeds for each configuration. For experiments on valid parameter blocks, the cosine schedule terminates at \(20.48\times10^6\) tasks. Training continues at a learning rate of \(10^{-6}\) to \(25.60\times10^6\) tasks, then \(10^{-4}\), with the rate lowered to \(10^{-5}\) beyond \(37.76\times10^6\) tasks. The rate is finally switched to \(10^{-6}\), ending at \(46.40\times10^6\) tasks per seed. For experiments on structured parameter blocks, the cosine schedule terminates at \(20.48\times10^6\) tasks.
\subsection{Experiments on Valid Parameter Blocks}\label{ex:Valid}

\paragraph{Evaluating the accuracy of approximate Q updates.} We adopt three metrics, namely Soft error $\text{err}_{\text{soft}}$, Hard error $\text{err}_{\text{hard}}$, and Greedy agreement, to quantify how closely the Transformer proposed in this work approximates the Q-learning update rule. Specifically,  
\begin{equation} \text{err}_{\text{soft}}=\sqrt{\frac{\sum_{\mathbf{z}\in\mathcal Z}\|{\hat{u}(\mathbf{z})-u_\tau(\mathbf{z})}\|_2^2} {\sum_{\mathbf{z}\in\mathcal Z}\|{u_\tau(\mathbf{z})}\|_2^2}},\qquad \text{err}_{\text{hard}}=\sqrt{\frac{\sum_\mathbf{z}\|{\hat{u}(\mathbf{z})-u_{\max}(\mathbf{z})}\|_2^2}{\sum_\mathbf{z}\|{u_{\max}(\mathbf{z})}\|_2^2}}.\label{eq:nrmse} \end{equation}
where $u_\tau(\mathbf{z}) = \frac{\alpha}{n}\sum_{i=0}^{n-1}\phi_i[r_{i+1}+\gamma V_{\tau,i}-\mathbf{w}^\top\phi_i]$,  $u_{\mathrm{max}}(\mathbf{z}) = \frac{\alpha}{n}\sum_{i=0}^{n-1}\phi_i[r_{i+1}+\gamma\max_a q_{i,a}^+-\mathbf{w}^\top\phi_i]$, and $q_{i,a}^+ = \mathbf{w}^\top \phi_{i,a}$, $V_{\tau,i} = \sum_{a}\pi_i q_{i,a}^+$.
The two greedy metrics quantify action agreement at distinct granularities: \(\text{Greedy}_{\rm tr}\) reports the fraction of individual transitions for which the model’s attention-selected action matches the teacher’s greedy action, i.e., the action with the maximum Q-value; \(\text{Greedy}_{\rm ctx}\) is a stricter metric, defined as the fraction of contexts where such a match holds across all 20 transitions within the context. The Teacher bias is obtained by replacing the numerator of \(\text{err}_{\text{hard}}\) with \(\sum_\mathbf{z}\|u_\tau-u_{\max}\|^2\).   Teacher bias is deterministic on the common corpus and is not assigned an artificial seed confidence interval. 

Update errors and greedy agreement are reported as percentages. The identity/zero-update predictor yields a 100\% update error: the residual $\mathbf{w}$ cannot mask an unlearned update. $\pm$ denotes a 95\% Student‑t interval across trained seeds. 

Table \ref{tab:dense} (dense features) demonstrates close reproduction of the soft-Q update by the model, yielding 0.01256\% NRMSE and 99.990\% transition-level greedy agreement; by contrast, the one-hot feature setting (Table \ref{tab:one-hot}) exhibits noticeably weaker performance, attaining 0.07950\% NRMSE and 99.989\% transition-level greedy agreement.

\begin{table}[H]
\centering
\caption{Dense held-out contexts: update errors and greedy agreement}
\label{tab:dense}
\small
\begin{tabular}{rrrrr}
\toprule
Tasks ($10^6$) & Soft error (\%) & Hard error (\%) & Greedy$_{\rm tr}$ (\%) & Greedy$_{\rm ctx}$ (\%) \\
\midrule
0.32 & $9.0157 \pm 0.6508$ & $9.0465 \pm 0.6716$ & $95.314 \pm 0.102$ & $56.587 \pm 0.776$ \\
5.12 & $7.1287 \pm 0.2225$ & $7.1792 \pm 0.2118$ & $96.708 \pm 0.157$ & $67.205 \pm 0.895$ \\
10.24 & $4.4694 \pm 0.6952$ & $4.5328 \pm 0.6750$ & $98.246 \pm 0.087$ & $80.769 \pm 1.045$ \\
20.48 & $0.0657 \pm 0.0002$ & $0.6655 \pm 0.0019$ & $99.920 \pm 0.005$ & $98.840 \pm 0.082$ \\
\midrule
25.60 & $0.0625 \pm 0.0002$ & $0.6633 \pm 0.0020$ & $99.927 \pm 0.003$ & $98.936 \pm 0.035$ \\
\midrule
37.76 & $0.7393 \pm 0.0557$ & $0.9834 \pm 0.0444$ & $99.654 \pm 0.012$ & $96.003 \pm 0.197$ \\
\midrule
45.76 & $0.0846 \pm 0.0107$ & $0.6468 \pm 0.0044$ & $99.971 \pm 0.002$ & $99.636 \pm 0.046$ \\
\midrule
46.40 & $0.0126 \pm 0.0005$ & $0.6394 \pm 0.0008$ & $99.990 \pm 0.004$ & $99.861 \pm 0.045$ \\
\bottomrule
\end{tabular}
\end{table}

\begin{table}[H]
\centering
\caption{One-hot held-out contexts: update errors and greedy agreement }
\label{tab:one-hot}
\small
\begin{tabular}{rrrrr}
\toprule
Tasks ($10^6$) & Soft error (\%) & Hard error (\%) & Greedy$_{\rm tr}$ (\%) & Greedy$_{\rm ctx}$ (\%) \\
\midrule
0.32 & $55.7678 \pm 4.5493$ & $57.4641 \pm 4.8244$ & $95.752 \pm 0.151$ & $66.636 \pm 0.899$ \\
5.12 & $44.0258 \pm 1.9607$ & $45.6216 \pm 1.9971$ & $97.076 \pm 0.085$ & $75.928 \pm 0.833$ \\
10.24 & $27.8956 \pm 4.6804$ & $29.2407 \pm 4.7820$ & $98.406 \pm 0.067$ & $86.035 \pm 0.483$ \\
20.48 & $0.4427 \pm 0.0066$ & $5.1372 \pm 0.0071$ & $99.985 \pm 0.002$ & $99.834 \pm 0.023$ \\
\midrule
25.60 & $0.4181 \pm 0.0073$ & $5.1137 \pm 0.0097$ & $99.987 \pm 0.003$ & $99.875 \pm 0.022$ \\
\midrule
37.76 & $4.5953 \pm 0.3353$ & $6.7774 \pm 0.2200$ & $99.681 \pm 0.012$ & $97.070 \pm 0.077$ \\
\midrule
45.76 & $0.5257 \pm 0.0639$ & $4.8381 \pm 0.0171$ & $99.978 \pm 0.004$ & $99.783 \pm 0.020$ \\
\midrule
46.40 & $0.0795 \pm 0.0037$ & $4.8091 \pm 0.0029$ & $99.989 \pm 0.004$ & $99.890 \pm 0.039$ \\
\bottomrule
\end{tabular}
\end{table}

\paragraph{Emergence of Parameters.}
To distinguish scalar calibration from nonscalar distortion, let $\beta_{att}=\operatorname{tr}(\mathbf{A}_{{\theta}})/d$. Frobenius orthogonality of $\mathbf{A}_{{\theta}}-\beta_{att} I$ and $I$ gives $$E_A^2=\underbrace{(\tau\beta_{att}-1)^2}_{E_{A,\mathrm{scale}}^2}
+\underbrace{\frac{\tau^2}{d}\|\mathbf{A}_{{\theta}}-\beta_{att} I\|_F^2}_{E_{A,\mathrm{shape}}^2}.$$
When $\beta_{att}>0$, $1/\beta_{att}$ summarizes the effective temperature of the scalar projection; it is not an exact temperature when nonscalar components remain. A visible diagonal in a heatmap is therefore insufficient evidence of inverse-temperature recovery. As can be seen in Table \ref{tab:attention}, both residual components decrease substantially toward the end of training, demonstrating the emergence of an action-selection mechanism. Figures \ref{fig:sixteen_blocks} presents heatmaps of the key matrices under $\tau=0.2$, revealing that the training algorithm successfully induces the constructed parameters to emerge.

\begin{table}[H]
\centering
\caption{Attention residual and scale across training. }\label{tab:attention}
\small
\begin{tabular}{rrrr}
\toprule
Tasks ($10^6$) & $E_A$ & $\beta_{\rm att}$ & $E_{A,\rm shape}$ \\
\midrule
0.32 & $0.729464 \pm 0.000924$ & $1.353730 \pm 0.004596$ & $0.017515 \pm 0.000474$ \\
5.12 & $0.435038 \pm 0.001273$ & $2.828350 \pm 0.006350$ & $0.024805 \pm 0.000613$ \\
10.24 & $0.303830 \pm 0.000436$ & $3.483203 \pm 0.002233$ & $0.016895 \pm 0.000379$ \\
20.48 & $0.081884 \pm 0.000192$ & $4.590583 \pm 0.000960$ & $0.000274 \pm 0.000009$ \\
\midrule
25.60 & $0.077982 \pm 0.000189$ & $4.610089 \pm 0.000944$ & $0.000148 \pm 0.000007$ \\
\midrule
37.76 & $0.023377 \pm 0.000188$ & $4.885535 \pm 0.001009$ & $0.004731 \pm 0.000092$ \\
\midrule
45.76 & $0.009809 \pm 0.000093$ & $4.951000 \pm 0.000462$ & $0.000428 \pm 0.000016$ \\
\midrule
46.40 & $0.009781 \pm 0.000091$ & $4.951105 \pm 0.000457$ & $0.000185 \pm 0.000008$ \\
\bottomrule
\end{tabular}
\end{table}

\begin{figure}[t]
\centering\includegraphics[width=1\linewidth]{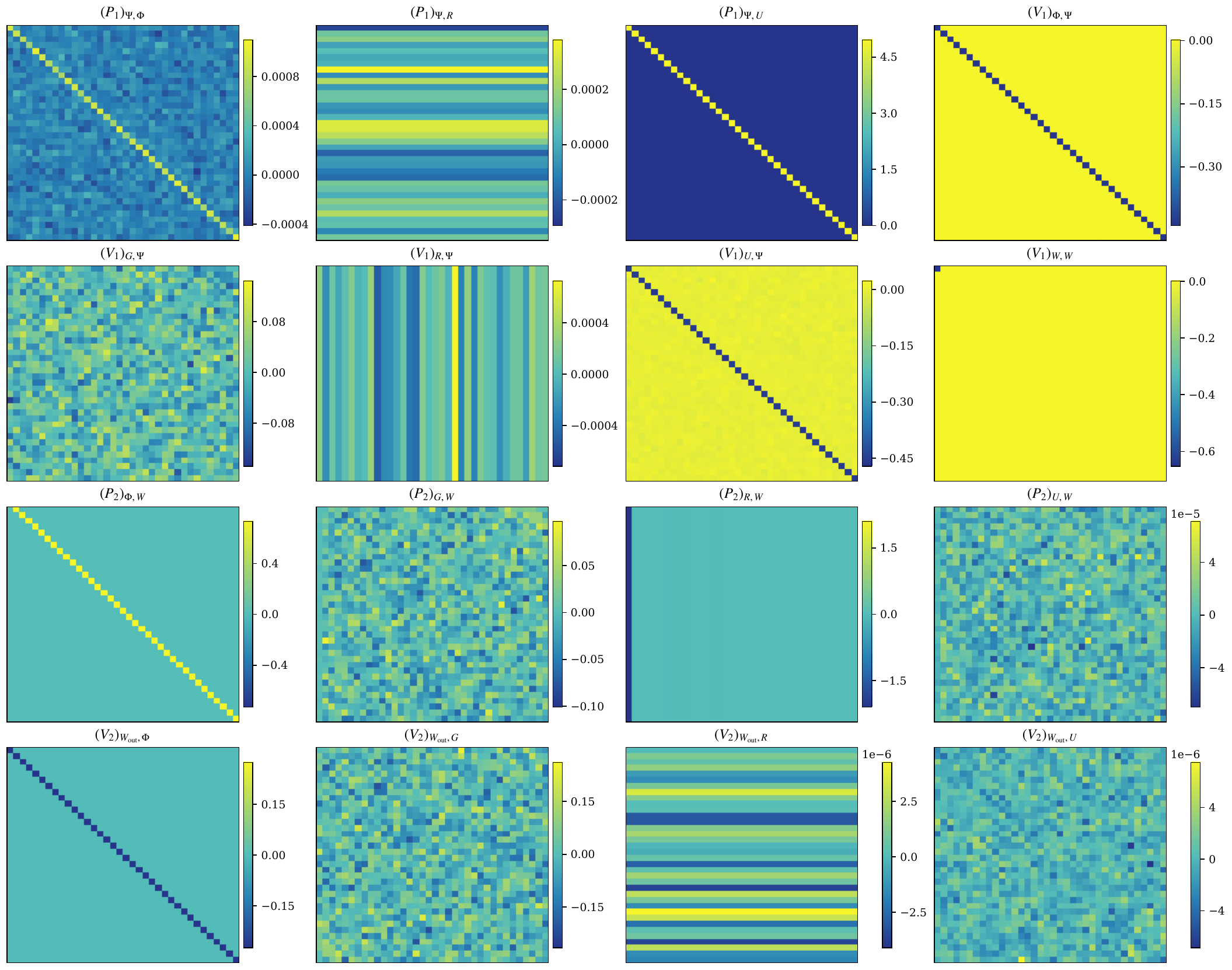}
\caption{Terminal detail of all 16 initialized blocks for seed 0. Each block has its own color range, shared across seeds for that block. This view complements the full matrices; it does not replace structural zeros by cropped or normalized values in the full-matrix figures.}\label{fig:sixteen_blocks}
\end{figure}

\paragraph{Generalization to unseen tabular tasks} 
We evaluate frozen Transformers on a common bank of 512 independent unseen MDPs. Deployment features are one-hot, $\phi(s,a)=e_{4s+a}$, whereas training uses dense features. These experiments assess dense-to-tabular transfer, not direct tabular training.
Each learner recursively updates its own Q parameter $\mathbf{w}$ and generates its own $\epsilon$-greedy actions and successor states. The Transformer parameters ${\boldsymbol{\theta}}$ remain fixed, with no teacher parameters, actions, or states injected into its rollout. The primary comparison uses $\mathbf{w}_0=0$, $n_{\rm test}=20$, and 20,000 environment interactions per learner.

Every temperature comparison includes the Transformer, hard and soft window-Q teachers, Oracle, and Random. The window teachers, or their hard-max counterpart, maintain a fixed $\mathbf{w}$ for each trajectory segment. Standard online Q-learning instead updates after every transition; we report both a constant step size of $0.2$ and a visit-count step size of $1/N_{sa}$. Oracle uses the true transition and reward model to compute an optimal policy and is therefore a privileged upper reference. Random uses a uniform action policy rather than a greedy policy induced by a random Q table.

We evaluate the exact discounted return
$J(\pi)=P_0^\top(I-\gamma P_\pi)^{-1}r_\pi$, 
$r(s,a)=\sum_{s'}P(s'\mid s,a)R(a,s')$,
where $P_\pi$ and $r_\pi$ are the policy-induced transition matrix and expected reward vector. For learned methods and Q-learning baselines, $\pi$ is the current greedy policy; Oracle and Random use their specified policies. Thus, $J$ is a policy-evaluation metric. 

\begin{table}[H]
\centering
\caption{Exact greedy-policy return after 20,000 interactions on 512 unseen MDPs. }\label{tab:returns}
\small
\setlength{\tabcolsep}{5pt}
\begin{tabular}{lrr}
\toprule
Method & Mean $J(\pi)$ & 95\% interval\\
\midrule
Transformer & 0.479225 & $[0.461365, 0.497690]$ \\
Soft window-Q & 0.480136 & $[0.462090, 0.498943]$ \\
Hard window-Q & 0.465381 & $[0.447874, 0.483684]$ \\
Online Q, constant & 0.449844 & $[0.426708, 0.473394]$ \\
Online Q, visit count & 0.522897 & $[0.502016, 0.544255]$ \\
Oracle & 0.529490 & $[0.508685, 0.550915]$ \\
Random & -0.002155 & $[-0.019109, 0.015196]$ \\
\bottomrule
\end{tabular}
\end{table}
Figure \ref{fig:tasksixteen_blocks} deploys frozen dense-feature-trained Transformers on unseen tabular MDPs and measures exact policy returns under a common interaction budget. 
Comparisons include hard and soft window Q-learning, a model-informed
Oracle, and uniform Random.
The Transformer achieves returns close to those of the soft window teacher and a higher sample-mean return than the hard window-Q baseline using the same fixed step size.

\begin{figure}[H]
\centering\includegraphics[width=1\linewidth]{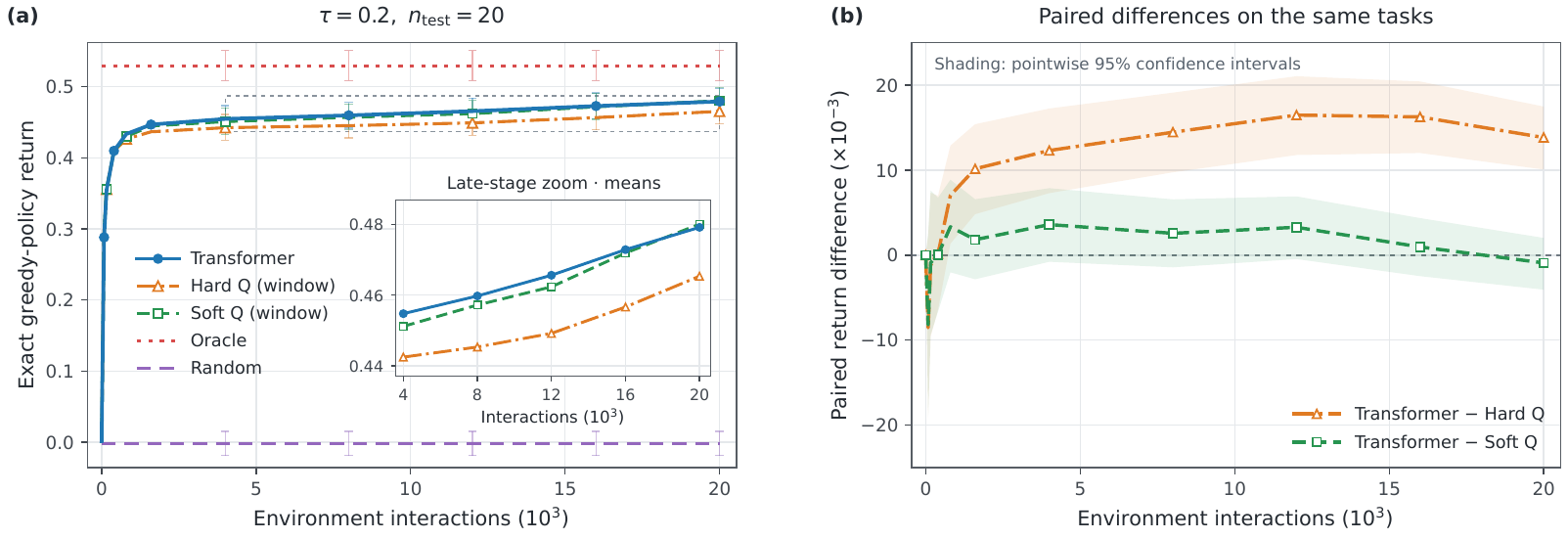}
\caption{Frozen dense-feature-trained Transformers on unseen tabular MDPs and measures exact policy returns under a common interaction budget.}\label{fig:tasksixteen_blocks}
\end{figure}

\subsection{Experiments on Structured Parameter Blocks}\label{app:ApQ}
\paragraph{Evaluating the accuracy of approximate Q updates.} 

Learning error across different \textbf{temperatures} \(\tau\).  Table~\ref{tab:cos-functional} gives an analysis of the impacts of $\tau$. The $\tau=0.1$ model has a soft-update error of $0.07575\%$ despite its appreciable parameter residual; the $\tau=1$ model closely matches its soft teacher but has a large hard-update error, reflecting the finite-temperature teacher discrepancy. Here, Greedy agreement quantifies the fraction of contexts where the argmax action from the attention module matches the optimal maximizing next action of the Q-learning target.

\begin{table}[H]
\centering\small
\setlength{\tabcolsep}{3pt}
\caption{Update accuracy on 8,192 common dense-feature contexts. }
\label{tab:cos-functional}
\begin{tabular}{rrrrr}
\toprule
$\tau$ & Soft error (\%) & Hard error (\%) & Teacher bias (\%) & Greedy (\%) \\
\midrule
1 & $0.00283\pm0.00107$ & $10.03718\pm0.00020$ & 10.0370 & 99.9985 \\
0.8 & $0.00261\pm0.00071$ & $7.114048\pm0.000049$ & 7.1140 & 99.9974 \\
0.5 & $0.00276\pm0.00078$ & $3.21972\pm0.00013$ & 3.2198 & 99.9962 \\
0.2 & $0.00393\pm0.00019$ & $0.63950\pm0.00014$ & 0.6385 & 99.9990 \\
0.1 & $0.07575\pm0.00025$ & $0.25509\pm0.00040$ & 0.1960 & 99.9707 \\
\bottomrule
\end{tabular}
\end{table}

Table \ref{tab:functional} illustrates how \textbf{different batch sizes} $B$ affect the approximation error. The soft error decreases first and then rises with increasing batch size, minimized at $B=64$. Greedy agreement stays near 100\%. The fitted effective temperature $\widehat\tau_{\mathrm{eff}}=1/\beta_{att}$, with $\beta_{att}=\operatorname{tr}(\mathbf{A}_{\boldsymbol{\theta}})/d>0$ acting as a scalar projection diagnostic, remains around 0.2.
\begin{table}[H]
\centering\footnotesize
\caption{Final checkpoints on 8,192 new shared contexts. }
\label{tab:functional}
{\small
\begin{tabular}{rrrrr}
\toprule
$B$ & Soft error (\%) & Hard error (\%) & Greedy ($\%$) & $\widehat\tau_{\rm eff}$ \\
\midrule
1 & $0.0202\pm0.0008$ & $0.6348\pm0.0007$ & 99.994 & 0.200113 \\
2 & $0.0135\pm0.0013$ & $0.6347\pm0.0004$ & 99.995 & 0.200128 \\
4 & $0.0110\pm0.0014$ & $0.6349\pm0.0004$ & 99.995 & 0.200161 \\
8 & $0.0077\pm0.0009$ & $0.6351\pm0.0002$ & 99.999 & 0.200223 \\
16 & $0.0052\pm0.0005$ & $0.6352\pm0.0003$ & 99.999 & 0.200299 \\
32 & $0.0041\pm0.0003$ & $0.6355\pm0.0001$ & 100.000 & 0.200373 \\
64 & $0.0036\pm0.0001$ & $0.6357\pm0.0001$ & 100.000 & 0.200459 \\
128 & $0.0045\pm0.0001$ & $0.6367\pm0.0001$ & 100.000 & 0.200784 \\
256 & $0.0123\pm0.0001$ & $0.6424\pm0.0003$ & 99.998 & 0.202525 \\
\bottomrule
\end{tabular}}

\end{table}

To compare the effects of different optimizations on experimental results, we also evaluated the following methods. 1. Constant Adam: All six blocks retain learning rate $10^{-3}$, with moments $(0.9,0.999)$ and numerical constant $10^{-8}$; 2. Uniform cosine decay: separately labeled optimizer change. After the common 100,000-step prefix, all block learning rates decrease from $10^{-3}$ to $10^{-5}$. For an update indexed by $j\in\{0,\ldots,219999\}$ after the fork, the rate for block $k$ is $\eta_k(j)=10^{-5}+\frac{\eta_{k,0}-10^{-5}}{2}[1+\cos(\frac{\pi j}{219999})]$; 3. Low-rate extension: a larger, adaptively chosen budget. After observing the completed uniform-cosine runs and the aggressive schedule's training/validation overscaling, we continue all five uniform-cosine endpoints to 20.48 million tasks. Every block uses  $\eta(j)=10^{-6}+\frac{10^{-5}-10^{-6}}{2} [1+\cos (\frac{\pi j}{319999} ) ]$, $j=0,\ldots,319999$. Table \ref{tab:functional1} reports  three metrics under $\tau=0.2$ with above learning rate schedules. Each dense-feature experiment uses $B=32$ and $n=20$. 
\begin{table}[H]
\centering\footnotesize
\setlength{\tabcolsep}{4pt}
\caption{Results of Soft error, Hard error, and Greedy agreement with three learning rate schedules.}
\label{tab:functional1}
\begin{tabular}{lrrrr}
\toprule
Condition & $N_{task}$ & Soft error (\%) & Hard error (\%) & Greedy (\%)\\
\midrule
Constant Adam &  10,240,000& $3.242\pm0.732$ & $3.332\pm0.717$ & $98.100\pm0.142$ \\
Uniform cosine & 10,240,000& $0.04028\pm0.00419$ & $0.64309\pm0.00076$ & $99.980\pm0.006$ \\
Low-rate extension & 20,480,000& $0.00415\pm0.00023$ & $0.63428\pm0.00012$ & $100.000\pm0.000$ \\
\bottomrule
\end{tabular}
\end{table}

\paragraph{Emergence of Parameters.} 
Figure \ref{em:emergence} presents the final learned matrices $\mathbf{P}_1$, $\mathbf{V}_1$, $\mathbf{P}_2$, and $\mathbf{V}_2$ obtained after training with $\tau=0.2$. Remarkably, the learned matrices recover the parameter structure predicted in Theorem \ref{Th:Qlearning1}. In particular, the diagonal entries of $\mathbf{A}_\theta$ in $\mathbf{P}_1$ approach $1/\tau$, indicating that the Transformer has learned the internal action-selection computation required by the soft Q-learning update. This provides direct evidence of parameter emergence.

\begin{figure}[H]
  \centering
  \begin{subfigure}{0.24\textwidth}
    \centering
    \includegraphics[width=\linewidth]{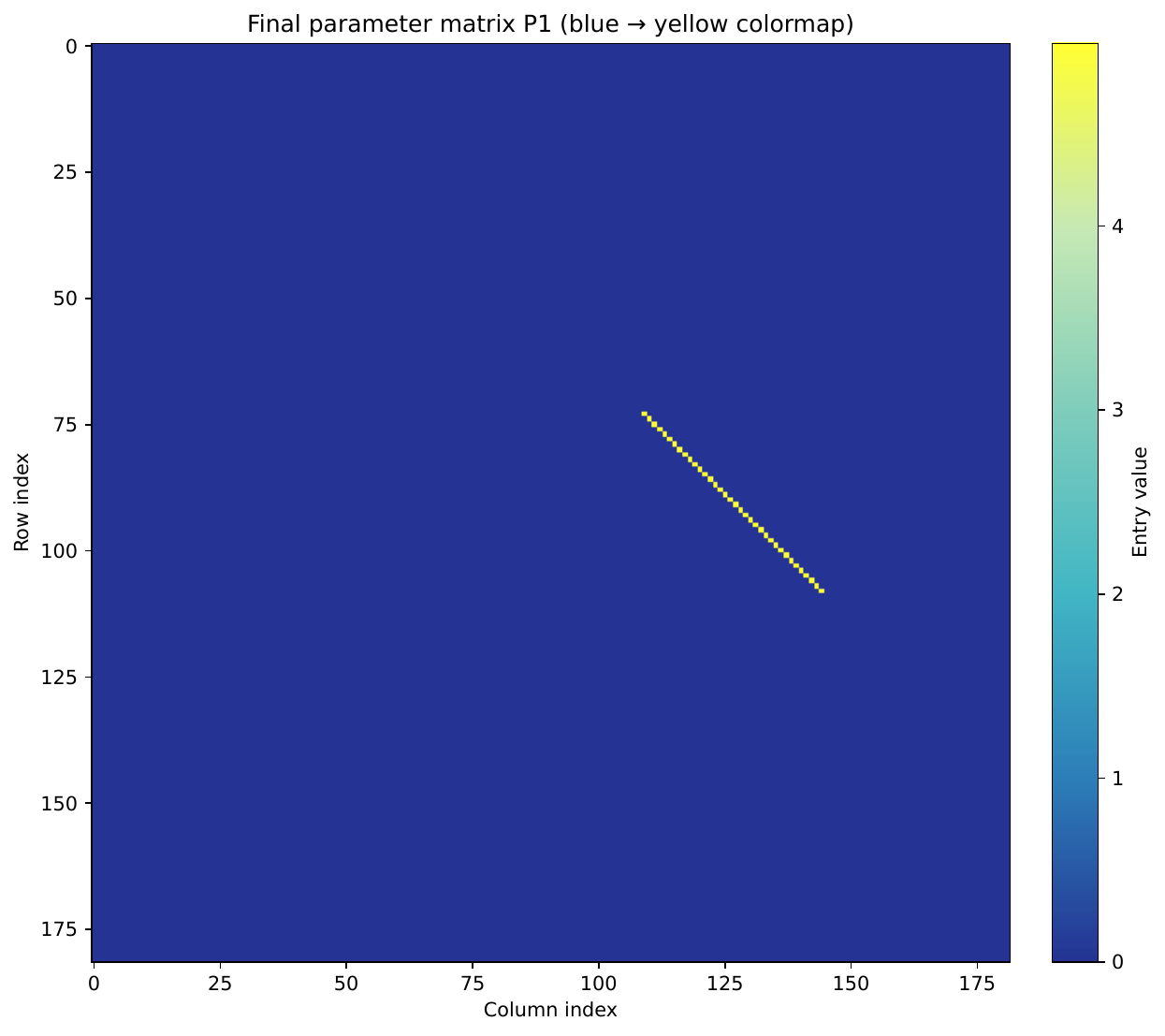}
    \caption{Learned $\mathbf{P}_1$}
  \end{subfigure}
  \hfill
  \begin{subfigure}{0.24\textwidth}
    \centering
    \includegraphics[width=\linewidth]{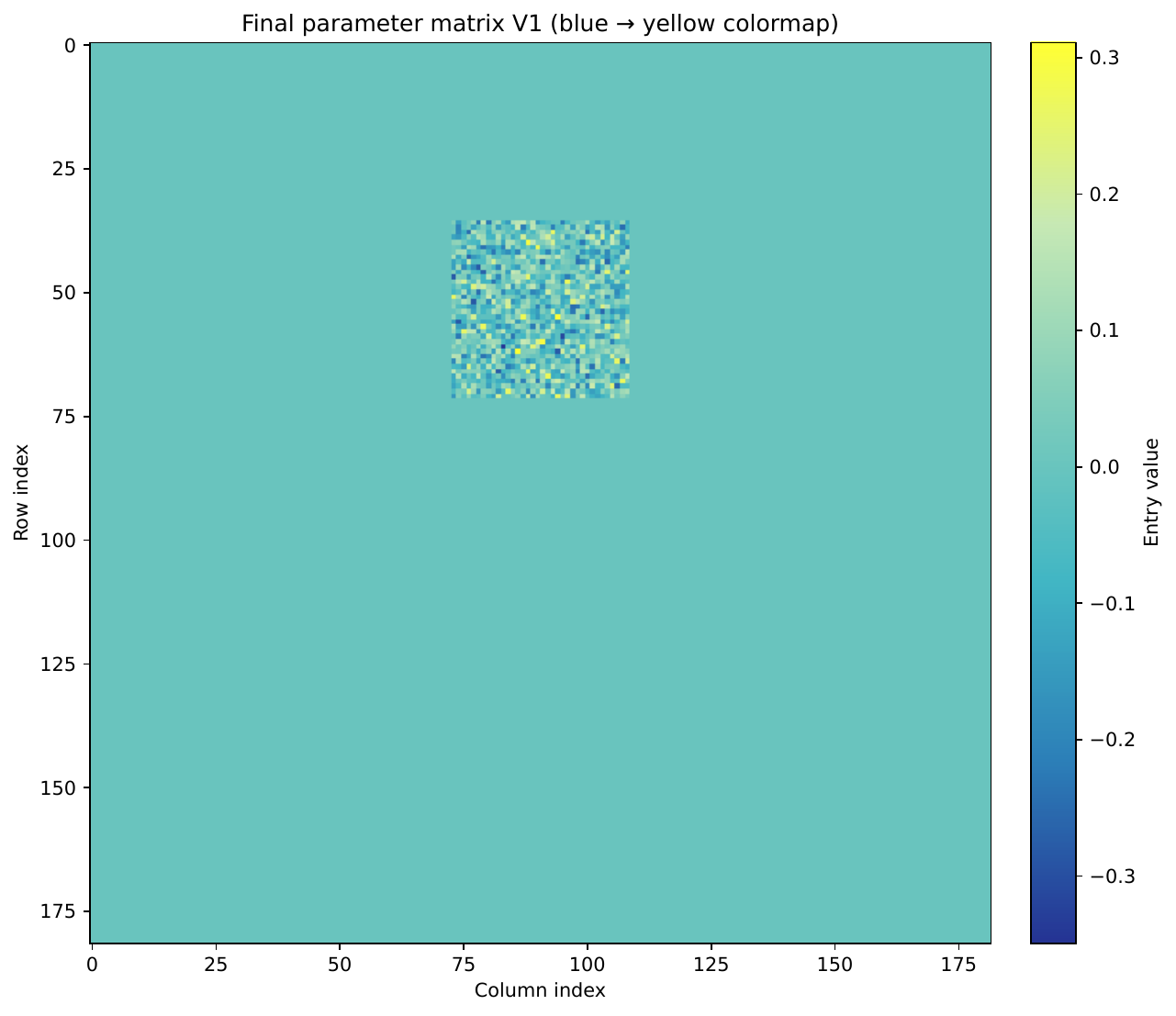}
    \caption{Learned $\mathbf{V}_1$}
  \end{subfigure}
  \hfill
  \begin{subfigure}{0.24\textwidth}
    \centering
    \includegraphics[width=\linewidth]{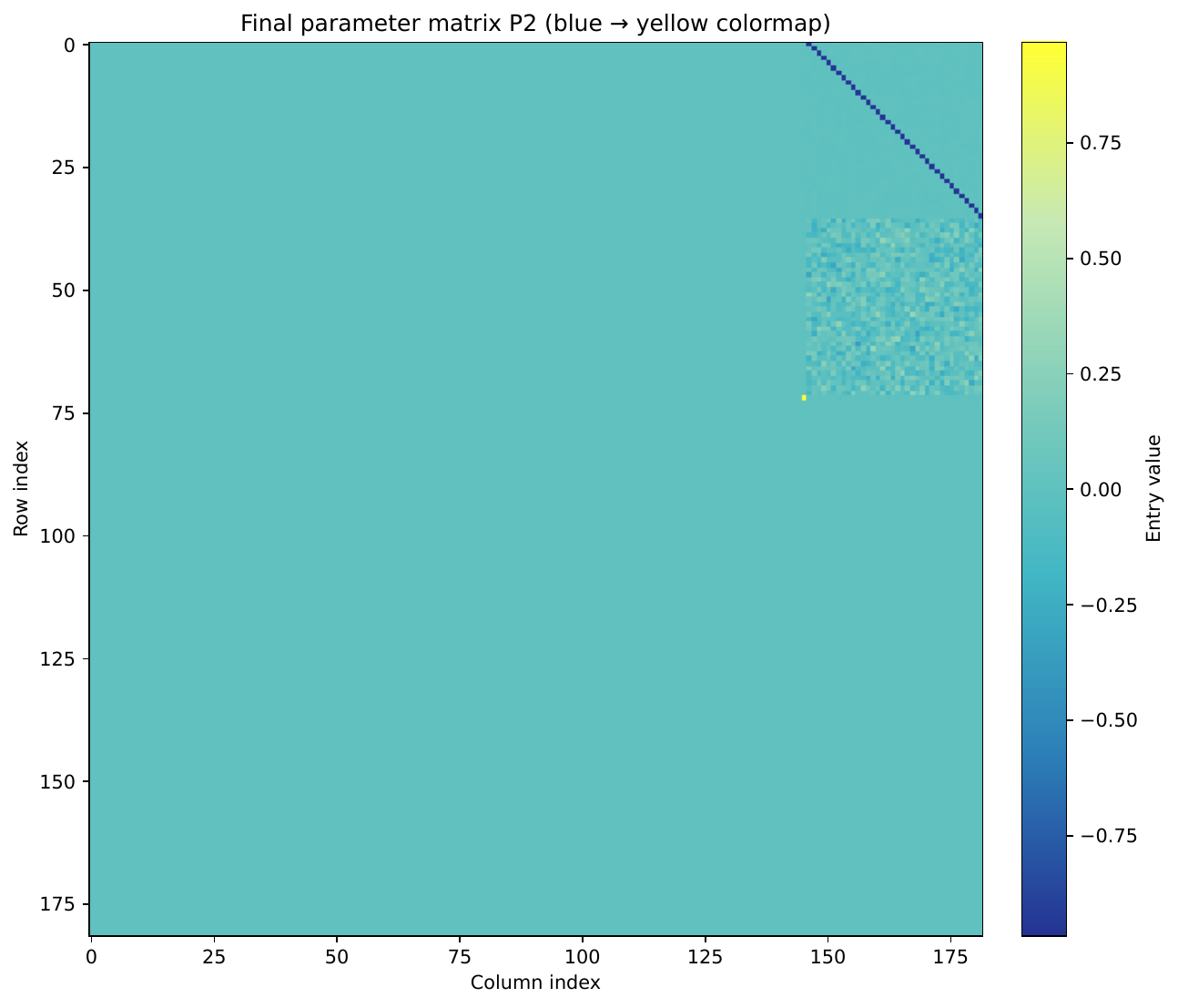}
    \caption{Learned $\mathbf{P}_2$}
  \end{subfigure}
  \hfill
  \begin{subfigure}{0.24\textwidth}
    \centering
    \includegraphics[width=\linewidth]{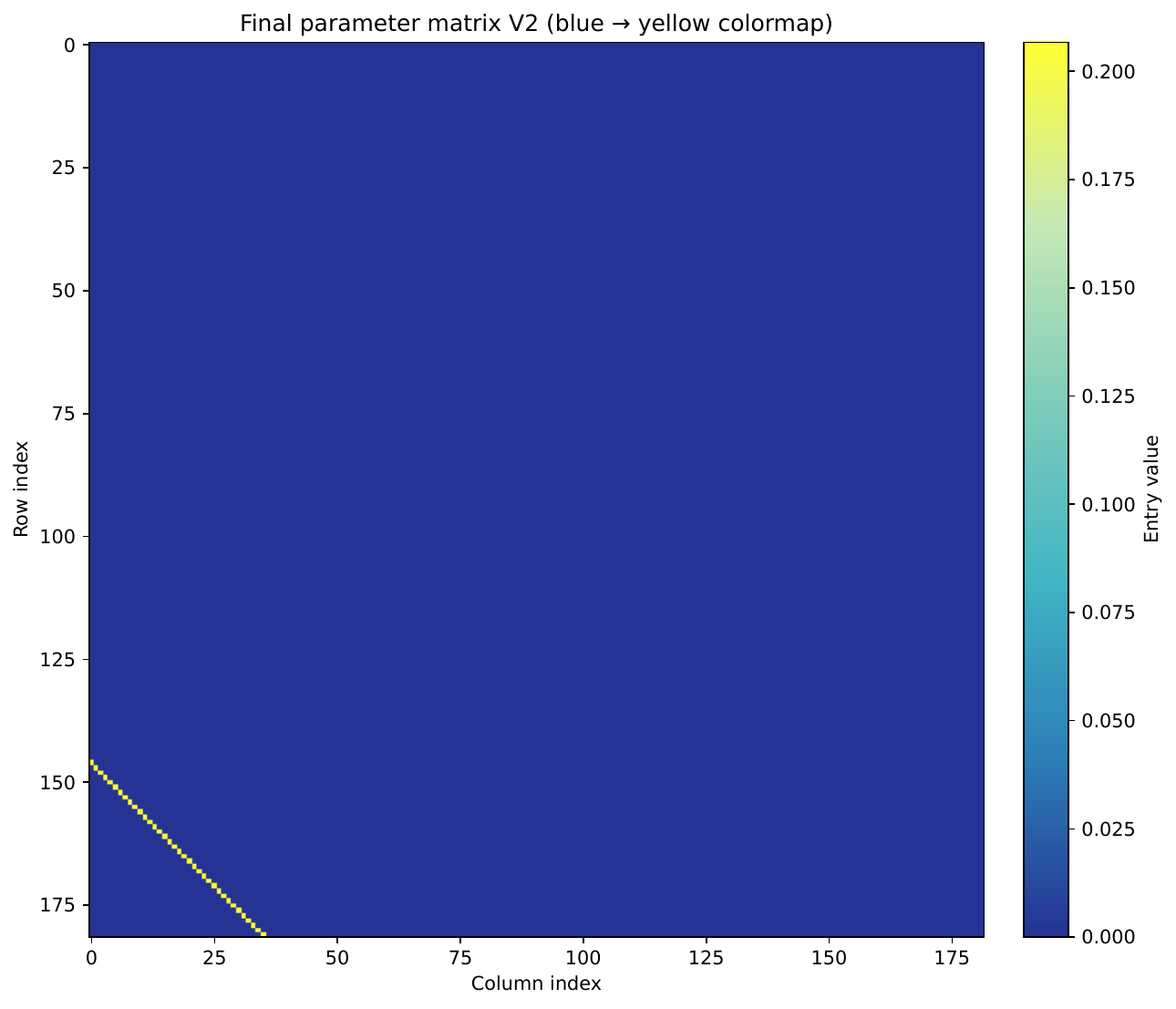}
    \caption{Learned $\mathbf{V}_2$}
  \end{subfigure}
  \caption{Final learned $\mathbf{P}_1$, $\mathbf{V}_1$, $\mathbf{P}_2$ and $\mathbf{V}_2$ matrices after training at $\tau=0.2$, $B=32$, and $n=20$. the non-zero structure matches the theoretical constructions $(\mathbf{P}_1, \mathbf{V}_1, \mathbf{P}_2, \mathbf{V}_2)$ in \eqref{eq:layer1} and \eqref{eq:layer2}.}  
  \label{em:emergence}
\end{figure}

We define some evaluation metrics to demonstrate whether the final parameters emerge.
Write $\mathbf{B}_-=[b_-,K_-]$ and $\mathbf{B}_+=[b_+,K_+]$. The analytical construction defines a sufficient family satisfying
 $\mathbf{A}_{\theta} =\tau^{-1}I$, $K_- =-cI$, $\mathbf{B}_r =c[1,0]$, 
 $\mathbf{L}^\top K_+ =\gamma cI$, $c\mathbf{C} =\alpha I$, $b_- =0$, $\mathbf{L}^\top b_+=0$, 
where $c\ne0$ may be positive or negative. For each trained model, we estimate $\widehat c=((\mathbf{B}_r)_0-\operatorname{tr}K_-)/(d+1)$ separately for each
model, allowing either sign. The product $\mathbf{L}^\top K_+$ removes the arbitrary invertible representation of the
intermediate greedy feature. Likewise $\mathbf{L}^\top b_+$, rather than $b_+$ alone,
is the appropriate product entering the output relation. Low residuals indicate
agreement with these sufficient relations.
The complete diagnostics used in every table are
\begin{align}
 E_{\mathbf{A}_{\theta}}&=\frac{\|\tau \mathbf{A}_{\theta} -I\|_F}{\sqrt d},
 &E_-&=\frac{\|K_-+\widehat cI\|_F}{|\widehat c|\sqrt d},\\
 E_r&=\frac{\|\mathbf{B}_r-\widehat c[1,0]\|_F}{|\widehat c|},
 &E_+&=\frac{\|\mathbf{L}^\top K_+-\gamma\widehat cI\|_F}{\gamma|\widehat c|\sqrt d},\\
 E_{C}&=\frac{\|\widehat c \mathbf{C}-\alpha I\|_F}{\alpha\sqrt d},
 &E_{-,0}&=\frac{\|b_-\|_2}{|\widehat c|\sqrt d},\\
 E_{+,0}&=\frac{\| \mathbf{L}^\top b_+\|_2}{\gamma|\widehat c|\sqrt d},
 &E_{\rm TD}^2&=E_-^2+E_r^2+E_+^2+E_C^2+E_{-,0}^2+E_{+,0}^2.
\end{align}
and $E_{\rm all}=\sqrt{E_{\mathbf{A}_{\theta}}^2+E_{\rm TD}^2}$, $\widehat\tau_{\rm eff}=1/\beta_{att}$ with
$\beta_{att}=\operatorname{tr}(\mathbf{A}_{\boldsymbol{\theta}})/d>0$ is a scalar projection diagnostic, not an
exact temperature when $\mathbf{A}_{\boldsymbol{\theta}}$ has nonscalar components. The scalar estimate is allowed to be negative.
The full completed models have nondegenerate $\widehat c$; a normalization near
zero is not interpreted as a zero residual. For arbitrary active blocks,
$E_{\rm all}$ is not an established Euclidean distance to all minimizers.
Figures \ref{FinalP1} and \ref{FinalP2} present heatmaps of the key matrices and their relations under $\tau=0.2$, respectively, revealing that the training algorithm successfully induces the constructed parameters to emerge.

\clearpage
\begin{figure}[H]
\centering
\includegraphics[width=.48\linewidth]{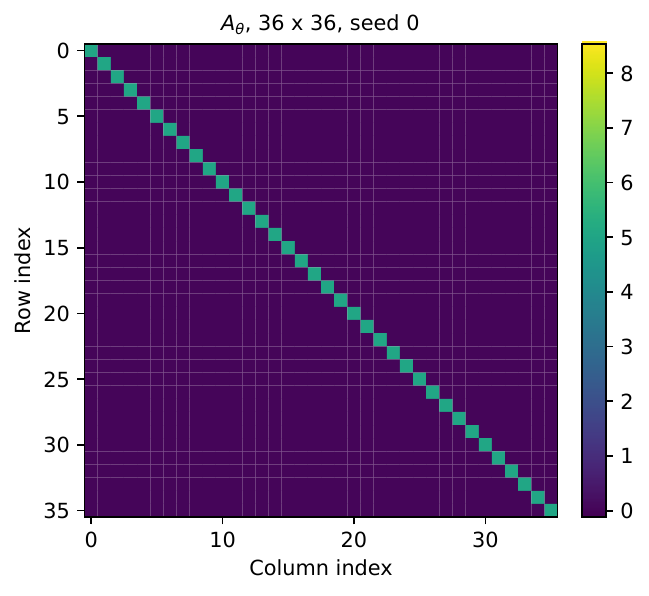}\hfill
\includegraphics[width=.48\linewidth]{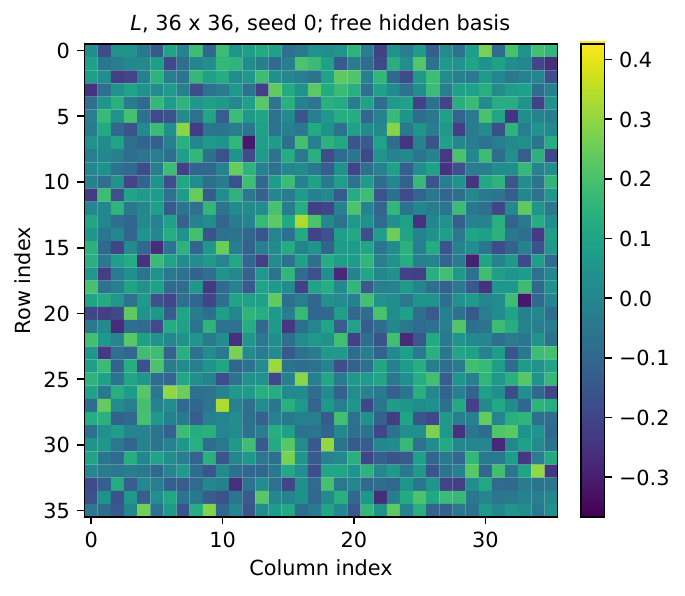}\par\medskip
\includegraphics[width=.48\linewidth]{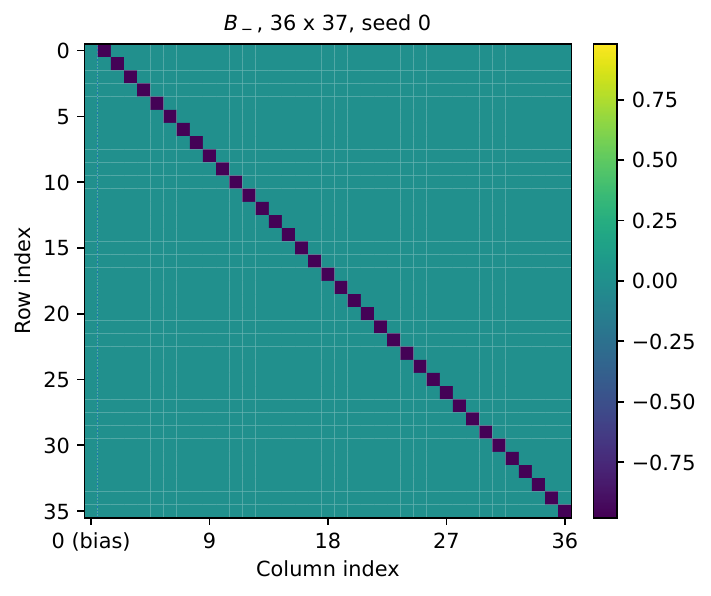}\hfill
\includegraphics[width=.48\linewidth]{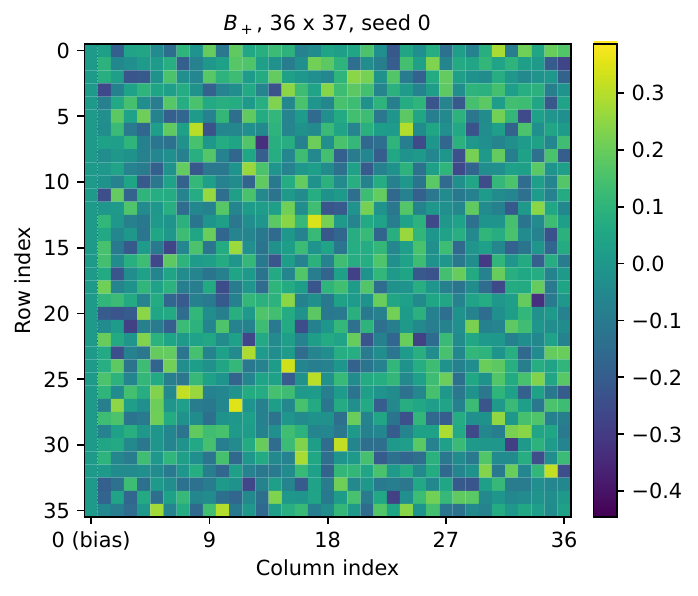}\par\medskip
\includegraphics[width=.48\linewidth]{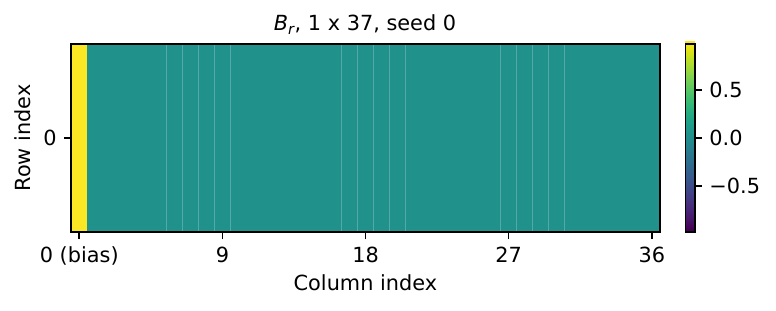}\hfill
\includegraphics[width=.48\linewidth]{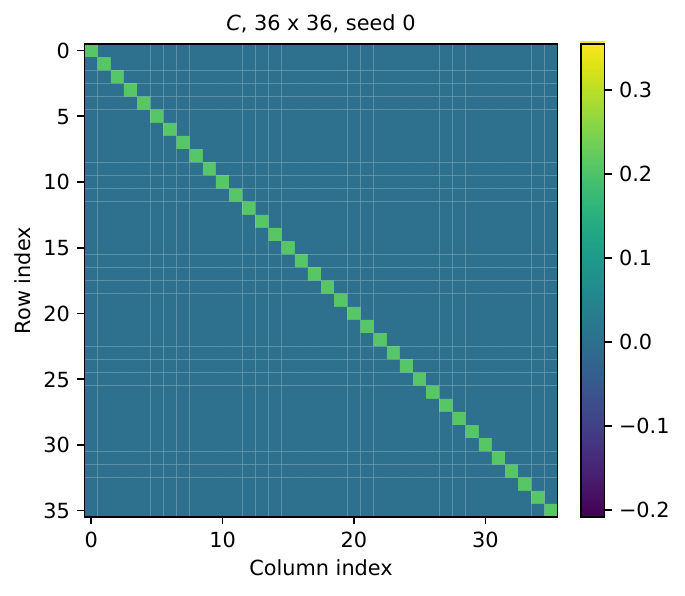}\par\medskip
\caption{All six actual trained blocks, seed 0. $\mathbf{A}_{\boldsymbol{\theta}},\mathbf{L},\mathbf{C}$ are $36\times36$; $\mathbf{B}_-,\mathbf{B}_+$ are $36\times37$; $\mathbf{B}_r$ is $1\times37$. The fitted scalar is $\widehat c=0.967893$. Dense $\mathbf{L}$ and $\mathbf{B}_+$ are compatible with a nontrivial invertible gauge and are not replaced by identity-shaped pictures.}
\label{FinalP1}
\end{figure}
\clearpage
\begin{figure}[H]
\centering
\includegraphics[width=.48\linewidth]{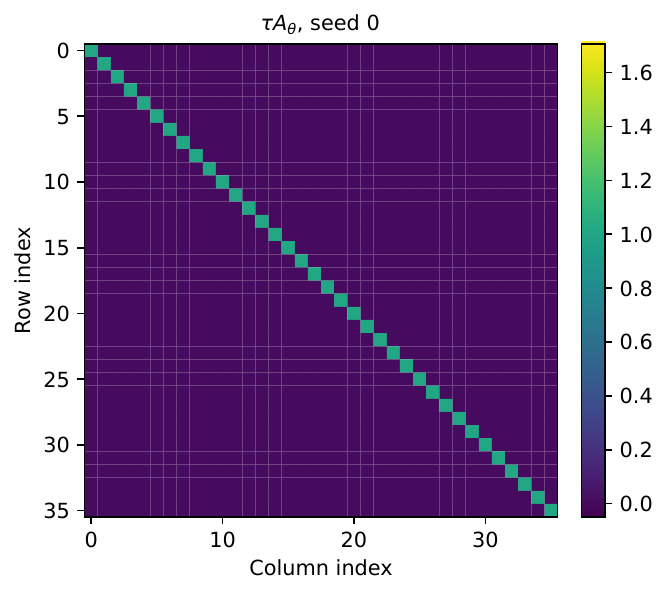}\hfill
\includegraphics[width=.48\linewidth]{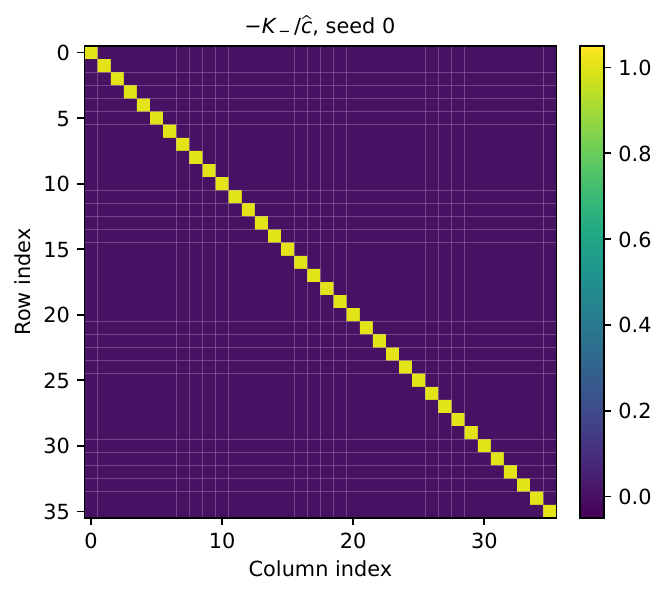}\par\medskip
\includegraphics[width=.48\linewidth]{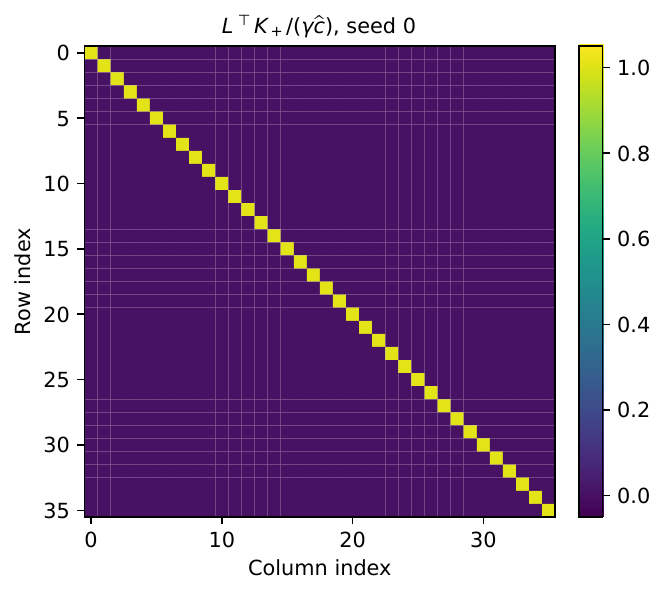}\hfill
\includegraphics[width=.48\linewidth]{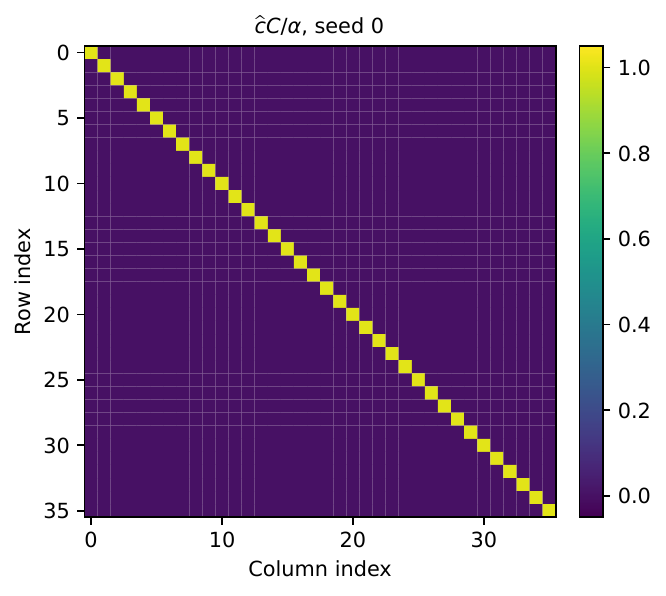}\par\medskip
\includegraphics[width=.48\linewidth]{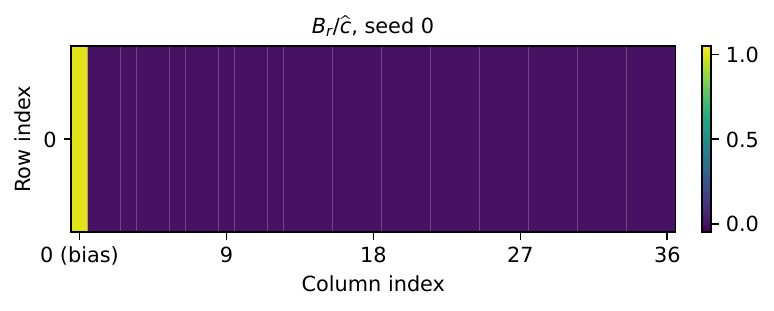}\hfill
\includegraphics[width=.48\linewidth]{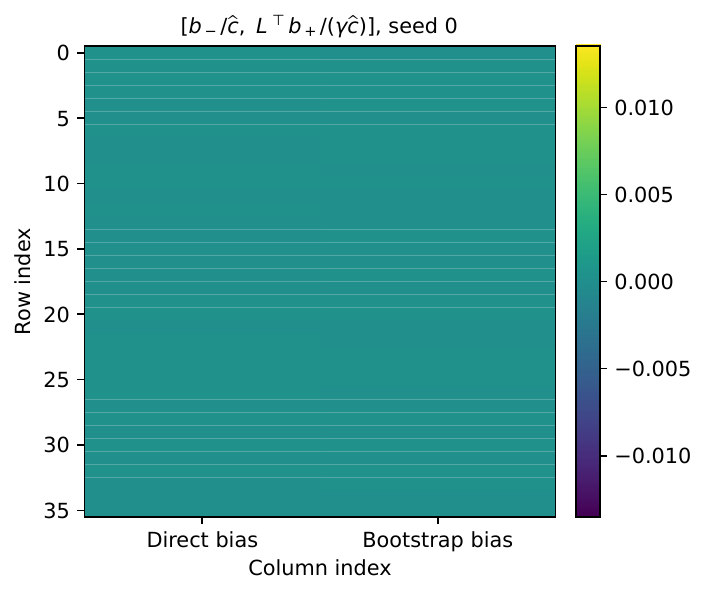}\par\medskip
\caption{Normalized relations for the same model. The first four targets are $I$, the reward-row target is $[1,0,\ldots,0]$, and the two bias targets are zero. These are display-only diagnostics, not raw weights. Common limits are used for each named relation across seeds and methods, including the failed control.}
\label{FinalP2}
\end{figure}
For the relationship between the emergence of parameters and cumulative \textbf{task exposure}.  Figure~\ref{fig:cos-task-curves} tracks recovery along one training trajectory under growing cumulative task exposure. Reduced final TD residual shows attention-scale calibration dominates aggregate residual. Since task exposure and optimizer updates rise together, the curves cannot isolate task diversity’s fixed-compute effect.

\begin{figure}[H]
\centering
\includegraphics[width=.48\linewidth]{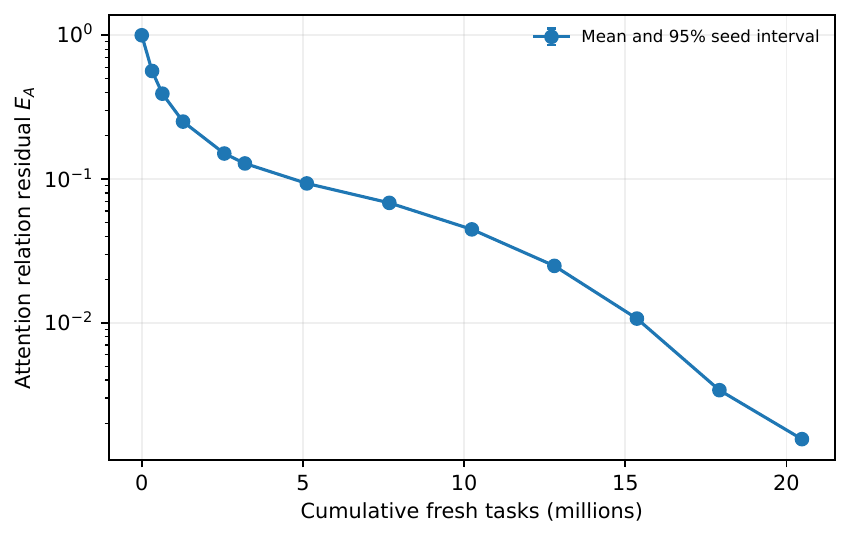}\hfill
\includegraphics[width=.48\linewidth]{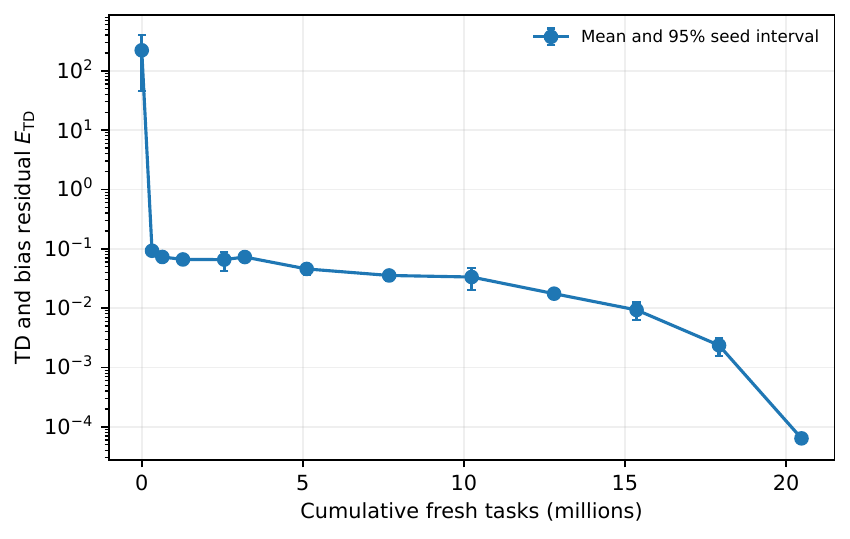}
\caption{Parameter recovery versus cumulative fresh tasks at $B=32$, $n=20$, and $\tau=0.2$. Left: attention residual $E_A$. Right: the aggregate of the six remaining TD and bias residuals, $E_{\rm TD}$. }\label{fig:cos-task-curves}
\end{figure}

Table~\ref{tab:cos-tau} shows close agreement with the analytical relations at \textbf{temperature} $\tau=1,0.8,0.5$, and $0.2$, while recovery remains incomplete at $\tau=0.1$. At $\tau=0.2$, the learned scalar projection is $\beta_{att}=4.992236$ against a target of $5$. At $\tau=0.1$, it is $8.174396$ against a target of $10$, and $E_{\rm all}=0.1826$. Figure~\ref{fig:cos-temperature} makes this scale discrepancy explicit rather than inferring recovery from matrix appearance alone. No temperature-specific tuning is performed; the tested values do not establish a critical temperature or an impossibility result for lower-temperature recovery.

\begin{table}[H]
\centering\small
\setlength{\tabcolsep}{3pt}
\caption{Final parameter recovery at 20.48 million tasks, $B=32$, and $n=20$. Entries are five-seed means; $E_{\rm all}$ includes a pointwise 95\% interval half-width. Residuals are dimensionless. The scalar projection $\beta_{att}$ is compared with $1/\tau$ without removing nonscalar distortion.}
\label{tab:cos-tau}
\begin{tabular}{rrrrrr}
\toprule
$\tau$ & $1/\tau$ & $\beta_{att}$ & $E_A$ & $E_{\rm TD}$ & $E_{\rm all}$ \\
\midrule
1 & 1 & 1.000000 & $3.808\times10^{-5}$ & $5.410\times10^{-5}$ & $(6.624\pm2.102)\times10^{-5}$ \\
0.8 & 1.25 & 1.250000 & $3.150\times10^{-5}$ & $4.741\times10^{-5}$ & $(5.715\pm1.049)\times10^{-5}$ \\
0.5 & 2 & 2.000002 & $2.424\times10^{-5}$ & $4.882\times10^{-5}$ & $(5.466\pm1.323)\times10^{-5}$ \\
0.2 & 5 & 4.992236 & $1.553\times10^{-3}$ & $6.384\times10^{-5}$ & $(1.554\pm0.017)\times10^{-3}$ \\
0.1 & 10 & 8.174396 & $0.1826$ & $6.590\times10^{-4}$ & $0.18256\pm0.00020$ \\
\bottomrule
\end{tabular}
\end{table}

\begin{figure}[H]
\centering
\includegraphics[width=.485\linewidth]{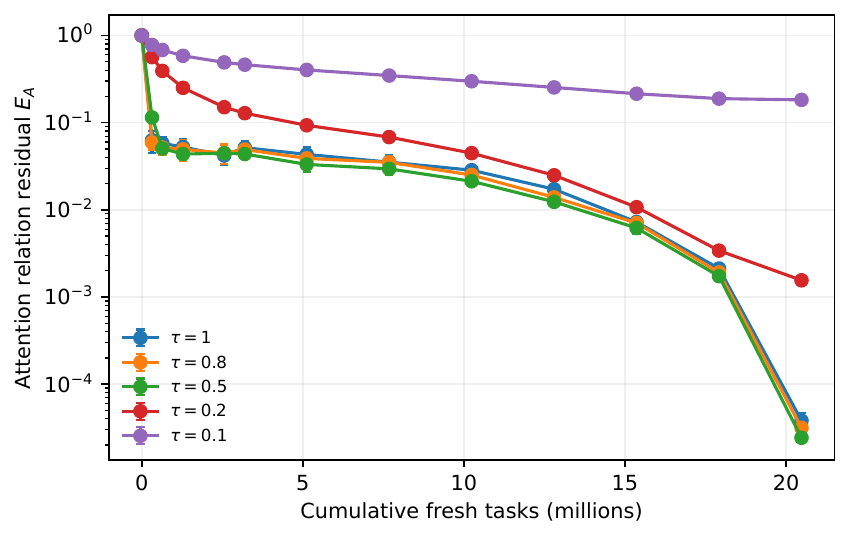}\hfill
\includegraphics[width=.485\linewidth]{figures/temperature_scale1.pdf}
\caption{Temperature-dependent recovery at $B=32$ and $n=20$. Left: attention residual versus cumulative tasks, with five paired seeds per temperature. Right: the learned scalar projection $\beta_{att}$ and its target $1/\tau$. The target is shown only for evaluation; no matrix target is supplied during training.}\label{fig:cos-temperature}
\end{figure}

For the relationship between the emergence of parameters and \textbf{batch size}. We report scale calibration and matrix shape in Figure~\ref{fig:cos-batch}. The lowest endpoint mean of $E_{\rm all}$ occurs at $B=128$, with value $7.204\times10^{-4}$. By contrast, mean soft-update error continues to decrease up to $B=64$ in Table \ref{tab:functional}. Thus, the best observed parameter recovery and the lowest observed functional error do not coincide in this sweep.
At fixed $N_{\rm task}$, reducing $B$ increases the number of Adam updates, whereas increasing $B$ averages more tasks in each update. Adam's moment memories are defined in update units, so their span in task units also changes. The observed ranking is consequently specific to the shared schedule, numerical precision, architecture, and task budget; it does not establish a universally optimal batch size.

\begin{figure}[H]
\centering
\includegraphics[width=.485\linewidth]{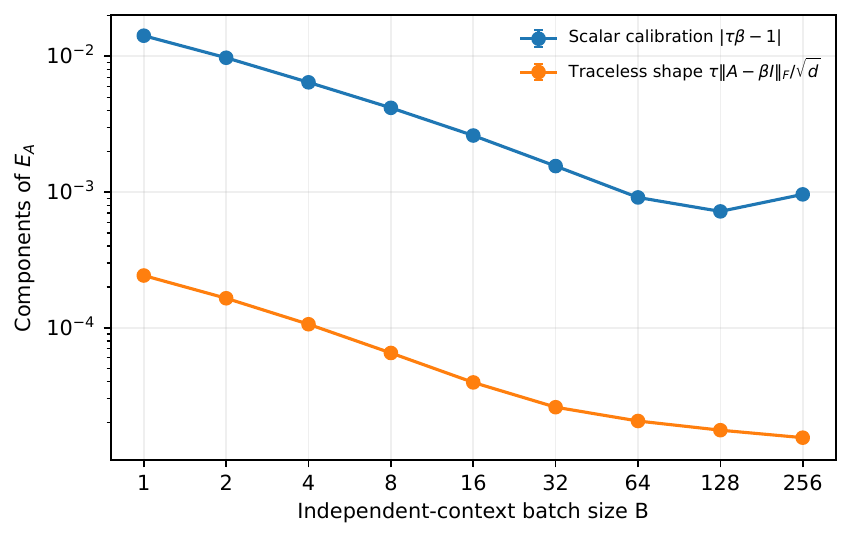}\hfill
\includegraphics[width=.485\linewidth]{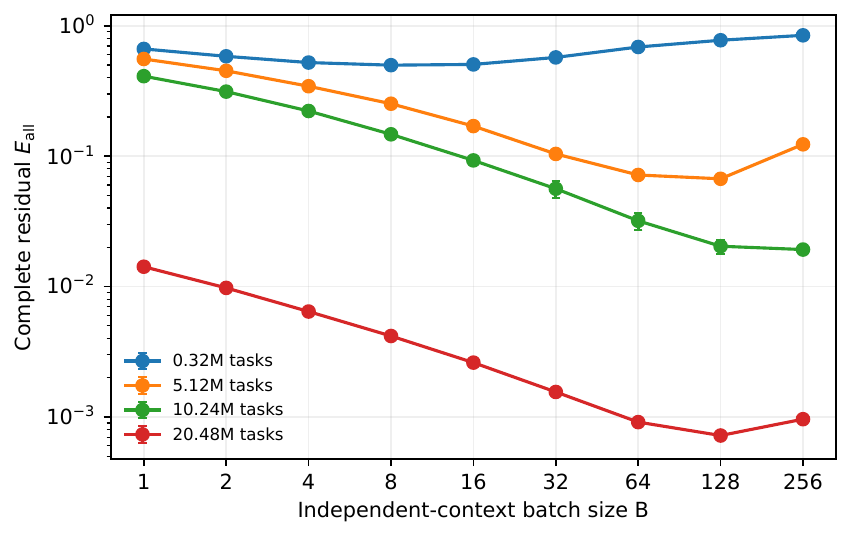}
\caption{Batch-size effects under a fixed task budget. Left: attention scale and traceless-shape residuals, whose squares sum to $E_A^2$ for each model. Right: aggregate residuals at four checkpoints of the same training runs. Intermediate curves are not independently retrained short-budget experiments.}\label{fig:cos-batch}
\end{figure}

For the relationship between the emergence of parameters and \textbf{Training context length}.
For each training length, only the first $n_{\rm train}$ transitions of the common 80-step trajectory enter the loss. The teacher increment is recomputed from that prefix with normalization $\alpha/n_{\rm train}$. The $n_{\rm train}=20$ default uses the same paired generator and is shared across the three sweeps.

\begin{table}[H]
\centering\small
\setlength{\tabcolsep}{3pt}
\caption{Training-length comparison at $\tau=0.2$ and $B=32$. Every model receives 20.48 million tasks and 640,000 updates; supplied transitions differ. Residuals are five-seed means, with the 95\% interval half-width for $E_{\rm all}$ in the final column. The default is shared across sweeps.}
\label{tab:cos-length}
\begin{tabular}{rrrrrr}
\toprule
$n_{\rm train}$ & Transitions (M) & $E_A$ & $E_{\rm TD}$ & $E_{\rm all}$ & \shortstack{95\% interval\\half-width} \\
\midrule
1 & 20.48 & $0.0128$ & $2.033\times10^{-4}$ & $0.0128$ & $1.969\times10^{-4}$ \\
5 & 102.4 & $2.155\times10^{-3}$ & $8.059\times10^{-5}$ & $2.156\times10^{-3}$ & $8.661\times10^{-5}$ \\
10 & 204.8 & $1.699\times10^{-3}$ & $6.440\times10^{-5}$ & $1.700\times10^{-3}$ & $5.381\times10^{-5}$ \\
20 & 409.6 & $1.553\times10^{-3}$ & $6.384\times10^{-5}$ & $1.554\times10^{-3}$ & $1.728\times10^{-5}$ \\
40 & 819.2 & $1.739\times10^{-3}$ & $6.393\times10^{-5}$ & $1.740\times10^{-3}$ & $5.290\times10^{-5}$ \\
80 & 1638.4 & $2.019\times10^{-3}$ & $7.050\times10^{-5}$ & $2.020\times10^{-3}$ & $3.846\times10^{-5}$ \\
\bottomrule
\end{tabular}
\end{table}

\begin{table}[H]
\centering\small
\setlength{\tabcolsep}{3pt}
\caption{Length-specific models on matched-length and common-$n_{\rm test}=20$ dense tests, and matched-length one-hot tests. Entries are update NRMSE percentages with pointwise 95\% seed-interval half-widths. The tabular column measures feature transfer without one-hot training.}
\label{tab:cos-length-fit}
\begin{tabular}{rrrr}
\toprule
$n_{\rm train}$ & Dense matched (\%) & Dense common 20 (\%) & Tabular matched (\%) \\
\midrule
1 & $0.01893\pm0.00032$ & $0.01573\pm0.00025$ & $0.08467\pm0.00095$ \\
5 & $0.00561\pm0.00039$ & $0.00525\pm0.00038$ & $0.01558\pm0.00077$ \\
10 & $0.00431\pm0.00043$ & $0.00417\pm0.00042$ & $0.01284\pm0.00049$ \\
20 & $0.00393\pm0.00019$ & $0.00393\pm0.00019$ & $0.01222\pm0.00036$ \\
40 & $0.00367\pm0.00020$ & $0.00384\pm0.00021$ & $0.01413\pm0.00060$ \\
80 & $0.00380\pm0.00012$ & $0.00411\pm0.00012$ & $0.01679\pm0.00044$ \\
\bottomrule
\end{tabular}
\end{table}

Among the tested lengths, $n_{\rm train}=20$ yields the lowest sample mean of $E_{\rm all}$, $1.554\times10^{-3}$ (Table~\ref{tab:cos-length}). This is a fixed-task, fixed-update comparison, not a fixed-transition or fixed-compute comparison: $n=80$ supplies 80 times as many transitions per task as $n=1$. Context length also changes the distribution and variance of the window-averaged target.
Table~\ref{tab:cos-length-fit} therefore reports both matched-length tests and tests at the common length $n_{\rm test}=20$. The common-length evaluation holds the test window fixed when comparing models trained with different lengths. The one-hot results measure dense-to-tabular feature transfer, not performance after one-hot training.
\begin{figure}[H]
\centering
\includegraphics[width=.485\linewidth]{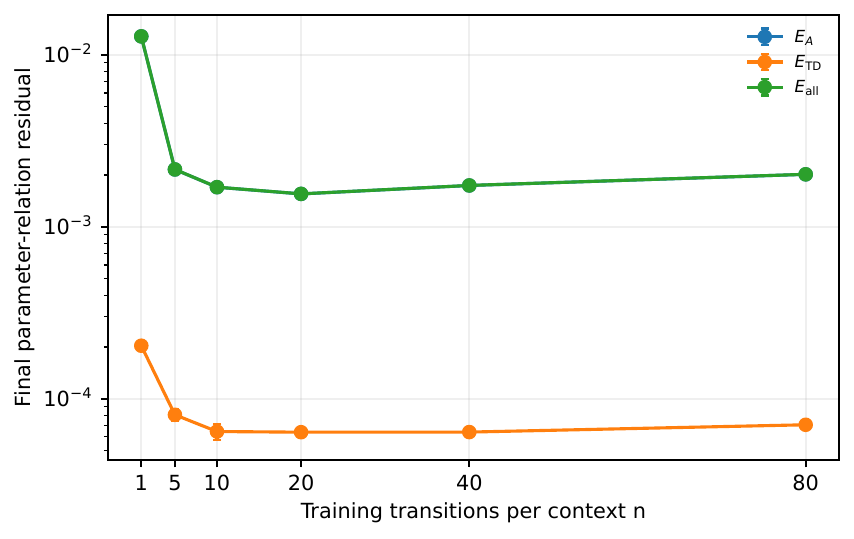}\hfill
\includegraphics[width=.485\linewidth]{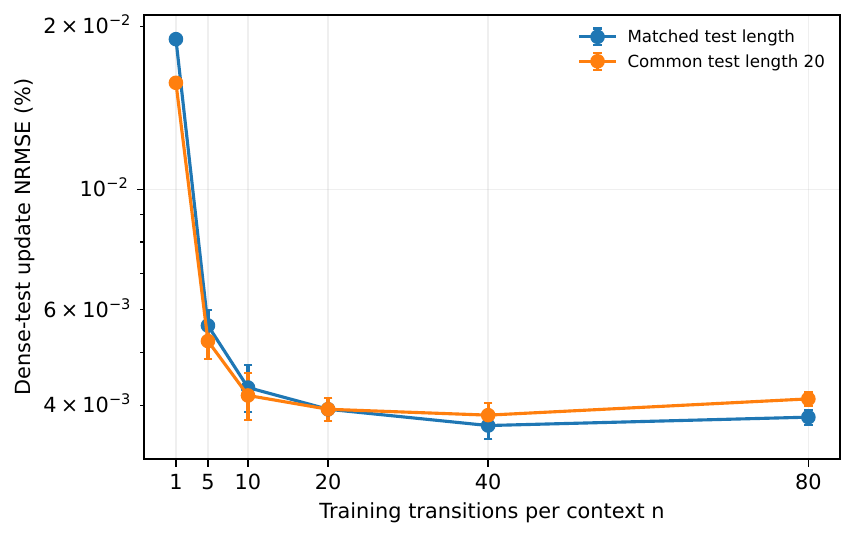}
\caption{Training context length at $\tau=0.2$ and $B=32$, with 20.48 million tasks per model. Left: final parameter residuals. Right: update errors on matched-length and common-length dense-feature tests. }\label{fig:cos-length}
\end{figure}

\paragraph{Generalization to unseen tabular tasks.}
At $\tau=0.2$, the Transformer achieves mean return $0.4880$, compared with $0.4861$ for its soft teacher and $0.5374$ for Oracle. The paired Transformer-minus-soft difference is $0.00188$. This comparison does not resolve a return advantage and does not establish equivalence or noninferiority. The visit-count online Q-learning baseline reaches $0.5318$, exceeding the Transformer means in the primary comparison. The evidence therefore supports useful self-fed transfer (Figure \ref{fig:cos-online}). 
Deployment intervals use 5,000 percentile-bootstrap replicates with independent resampling of the training-seed and test-task axes, preserving paired methods and complete trajectories. 
\begin{figure}[H]
\centering\includegraphics[width=.6\linewidth]{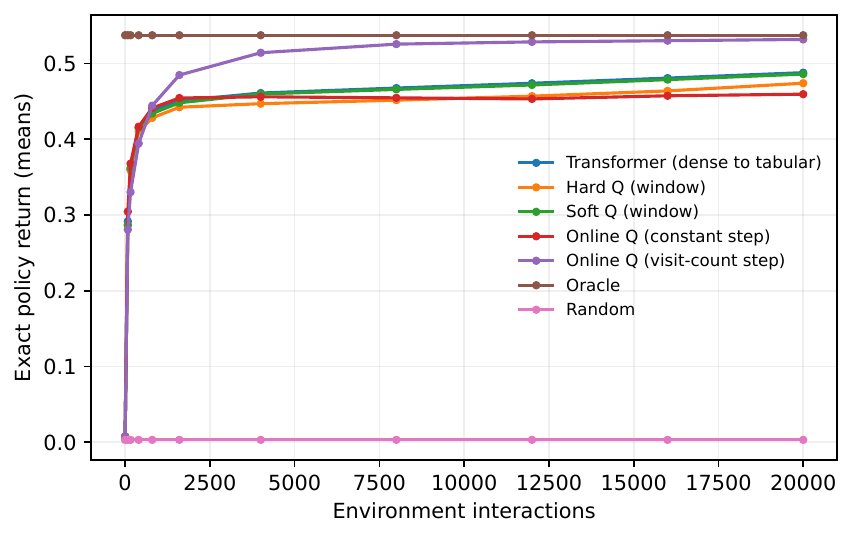}
\caption{Online Q-learning is included in the comparisons; the variant with decaying visit-count learning rate yields the best performance.}\label{fig:cos-online}
\end{figure}

Table \ref{tab:cos-control} and Figure \ref{fig:cos-control-all} complete the five-temperature comparison; $\tau=0.2$ appears in the main text. All curves use the same task bank, interaction axis, and evaluation protocol, and include Transformer, hard window-Q, soft window-Q, Oracle, and Random.

\begin{table}[H]
\centering\small
\setlength{\tabcolsep}{3pt}
\caption{Final returns on 512 unseen tabular tasks: $n_{\rm test}=20$, zero Q initialization.}
\label{tab:cos-control}
\begin{tabular}{rrrrrr}
\toprule
$\tau$ & Transformer [95\% CI] & \shortstack{Hard Q\\(window)} & \shortstack{Soft Q\\(window)} & Oracle & Random \\
\midrule
1 & 0.5036 [0.4835, 0.5237] & 0.4740 & 0.5036 & 0.5374 & 0.0033 \\
0.8 & 0.5033 [0.4829, 0.5227] & 0.4740 & 0.5025 & 0.5374 & 0.0033 \\
0.5 & 0.5003 [0.4804, 0.5198] & 0.4740 & 0.4996 & 0.5374 & 0.0033 \\
0.2 & 0.4880 [0.4686, 0.5074] & 0.4740 & 0.4861 & 0.5374 & 0.0033 \\
0.1 & 0.4833 [0.4644, 0.5022] & 0.4740 & 0.4795 & 0.5374 & 0.0033 \\
\bottomrule
\end{tabular}
\end{table}

\begin{figure}[htbp]
\centering
\includegraphics[width=.485\linewidth]{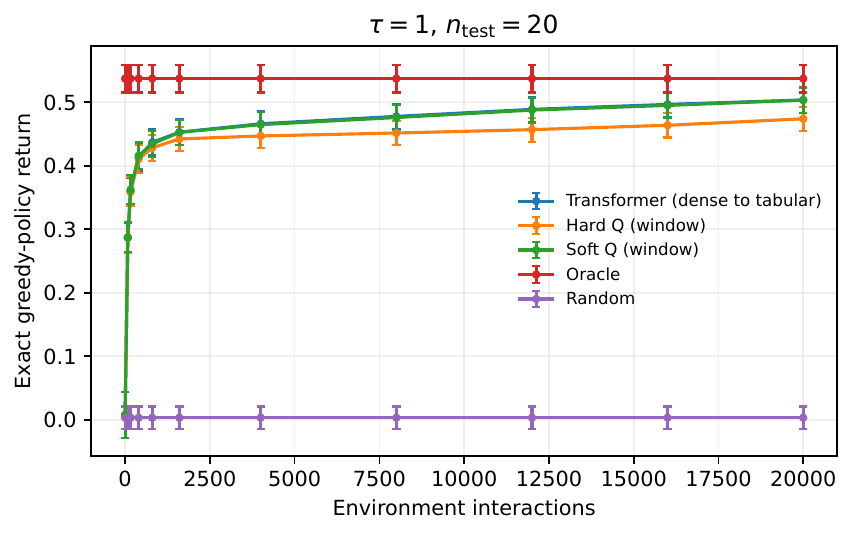}\hfill
\includegraphics[width=.485\linewidth]{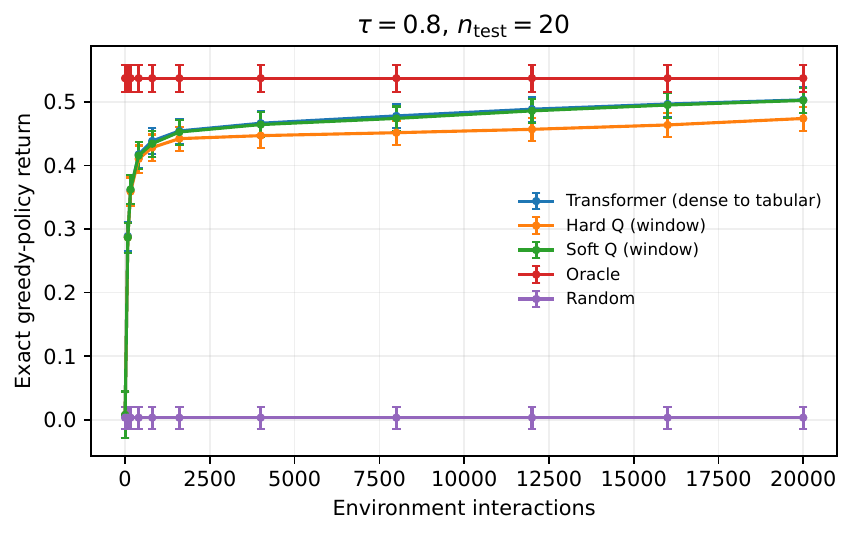}

\vspace{0.5em}
\includegraphics[width=.485\linewidth]{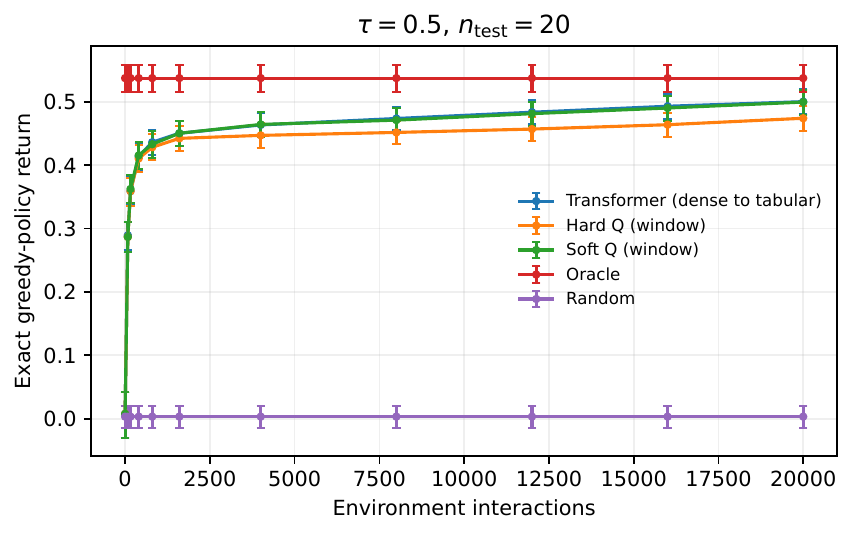}\hfill
\includegraphics[width=.485\linewidth]{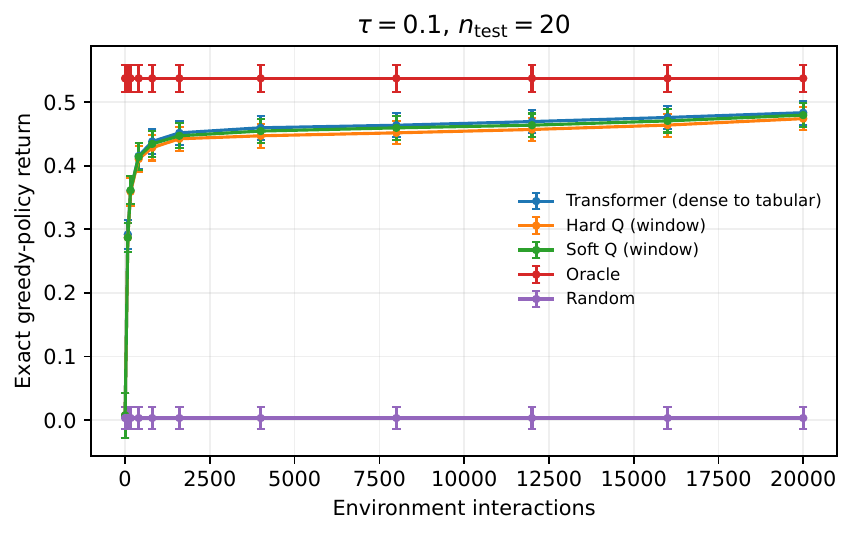}

\caption{Tabular transfer at $\tau=1$, $\tau=0.8$, $\tau=0.5$, and $\tau=0.1$, with $n_{\rm test}=20$, zero Q initialization, and 20,000 interactions. Error bars denote pointwise 95\% intervals. Transformer weights are frozen throughout. All experiments use the same 512 tasks and 20,000-interaction budget.}\label{fig:cos-control-all}
\end{figure}

For training and deployment context lengths. We distinguish three evaluations. The frozen-length test varies $n_{\rm test}$ for models trained with $n_{\rm train}=20$. The matched-length test deploys each length-specific model at its training length. The common-length test deploys all length-specific models at $n_{\rm test}=20$.
Every learner receives 20,000 interactions, so a window-based learner performs $20,000/n_{\rm test}$ updates. Varying the deployment length therefore changes both the window-averaging coefficient and the update frequency.

In the frozen-length test, $n_{\rm test}=5$ yields the highest observed Transformer mean return, $0.5194$, whereas $n_{\rm test}=20$ yields $0.4880$. The window teachers exhibit a similar dependence on deployment length. With common $n_{\rm test}=20$, the length-specific models instead have mean returns between $0.4868$ and $0.4884$. Thus, training-context effects should be distinguished from deployment update frequency; longer windows do not uniformly improve finite-budget control.
\begin{figure}[H]
\centering
\includegraphics[width=.485\linewidth]{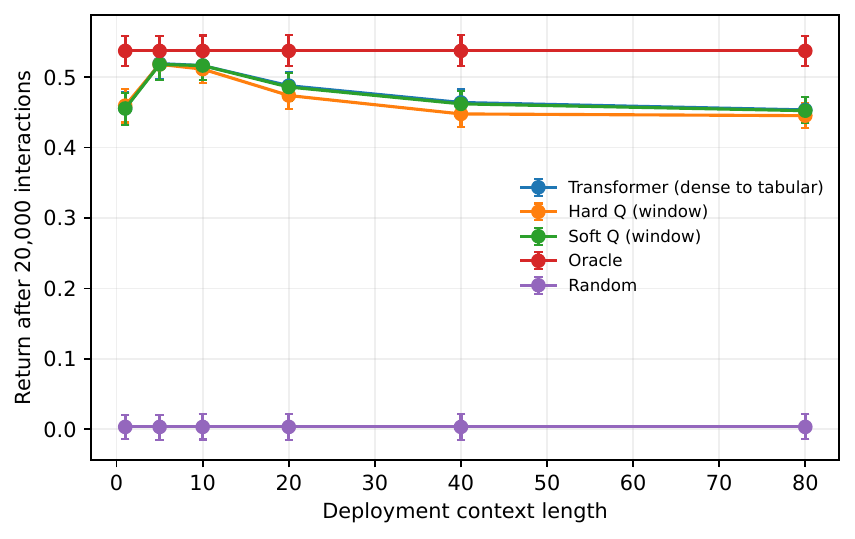}\hfill
\includegraphics[width=.485\linewidth]{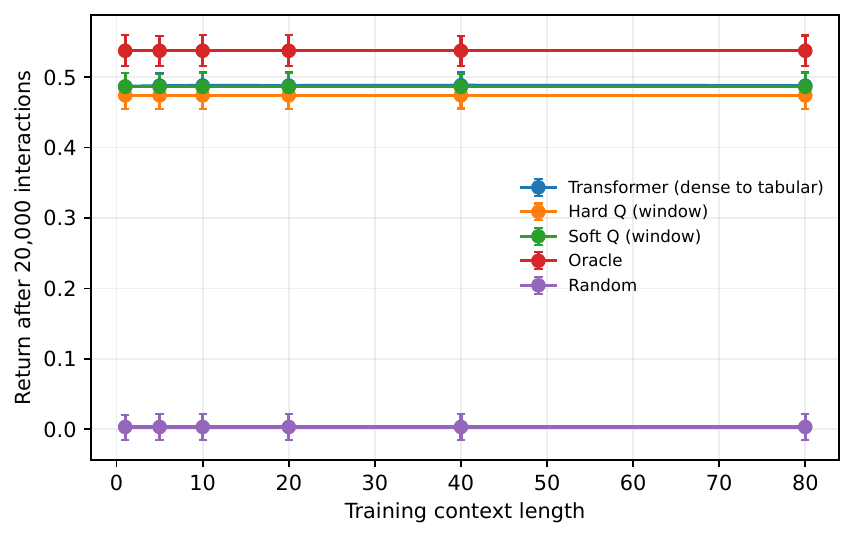}
\caption{Separating training-length and deployment-length effects at $\tau=0.2$. Left: models trained at $n_{\rm train}=20$ and deployed with different windows. Right: models trained at different lengths and all deployed at $n_{\rm test}=20$. Both evaluations use the same task bank and 20,000-interaction budget.}\label{fig:cos-control-length}
\end{figure}

\section{Theoretical analysis}\label{app:Theoretical}
Fix integers $d\ge1$, $m\ge2$, $n\ge1$ and real constants $\alpha>0$, $0<\gamma<1$,  $\tau>0$. The meanings of these fixed quantities are given below. The temperature $\tau$ is fixed during optimization, let
\begin{equation*}
\mathbf{z}=\bigl(\mathbf{w},\{\phi_i,r_{i+1},\psi_{i,a}:0\le i<n,\ a\in[m]\}\bigr)\sim\mathcal D_n.
\end{equation*}
Here $\mathbf{w}\in\R^d$ is an input vector, held fixed within a context. The vector $\phi_i\in\R^d$ is the current state--action feature, $r_{i+1}\in\R$ is the observed reward, and $\psi_{i,a}\in\R^d$ is the feature of candidate action $a$ at the successor state. These are allowed to be statistically dependent.   The symbol $\mathcal D_n$ denotes their joint probability law, let
\begin{align}
\pi_{i,a}^{\tau}(\mathbf{z})&=\frac{\exp(\mathbf{w}^\top\psi_{i,a}/\tau)}{\sum_{b=1}^{m}\exp(\mathbf{w}^\top\psi_{i,b}/\tau)},
&\bar\psi_i^\tau(\mathbf{z})&=\sum_{a=1}^m\pi_{i,a}^\tau(\mathbf{z})\psi_{i,a},\label{eq:teacher-attention}\\
\delta_i^\tau(\mathbf{z})&=r_{i+1}-\mathbf{w}^\top\phi_i+\gamma \mathbf{w}^\top\bar\psi_i^\tau(\mathbf{z}),
&F_{\tau,n}(\mathbf{z})&=\frac\alpha n\sum_{i=0}^{n-1}\phi_i\delta_i^\tau(\mathbf{z}).
\label{eq:teacher}
\end{align}
The probabilities $\pi_{i,a}^\tau$ average successor features; they are not the behavior policy that generated the trajectory. The scalar $\delta_i^\tau$ is the specified temporal-difference target minus the current score. The vector $F_{\tau,n}\in\R^d$ is the update increment to imitate. Thus the teacher's full prediction is $\mathbf{w}+F_{\tau,n}$.  

In this appendix, we explicitly restrict optimization to the six-block support of Proposition \ref{prop:six-block-minimal-support}, with every other block fixed at zero. The six-block support
${\boldsymbol{\theta}}_{\mathrm{sup}}
=
(\mathbf{A}_{\boldsymbol{\theta}},\mathbf{L}, \mathbf{B}_-, \mathbf{B}_+, \mathbf{B}_r,\mathbf{C})
\in\Theta_{\mathrm{sup}}$ can be broken down into following blocks
\begin{equation}
{\boldsymbol{\theta}}=(A,L,b_-,K_-,b_+,K_+,\beta,q,C)\in\Theta,
\qquad \dim\Theta=5d^2+3d+1.
\label{eq:parameters}
\end{equation}
where $A=\mathbf{A}_\theta$, $L=\mathbf{L}$, $\mathbf{B}_{-} = [b_{-}, K_{-}]$, $\mathbf{B}_{+} = [b_{+}, K_{+}]$, $\mathbf{B}_r = [\beta, q^\top]$.
The trainable matrix $L$ is not assumed invertible; invertibility is imposed only on the exact-realization manifold.
\begin{align}
&p_{i,a}(A;\mathbf{z})=\frac{\exp(\psi_{i,a}^\top A\mathbf{w})}{\sum_{b=1}^m\exp(\psi_{i,b}^\top A\mathbf{w})}, \quad \bar\psi_i(A;\mathbf{z})=\sum_a p_{i,a}(A;\mathbf{z})\psi_{i,a},\label{eq:student-attention}\\
&s_i({\boldsymbol{\theta}};\mathbf{z})=\phi_i^\top(b_-+K_-\mathbf{w})+(L\bar\psi_i(A;\mathbf{z}))^\top(b_++K_+\mathbf{w})+r_{i+1}(\beta+q^\top \mathbf{w}),\label{eq:score}\\
&F_{{\boldsymbol{\theta}},n}(\mathbf{z})=\frac1n\sum_{i=0}^{n-1}C\phi_i s_i({\boldsymbol{\theta}};\mathbf{z}).
\label{eq:loss}
\end{align}
Both full outputs add the same $\mathbf{w}$, so their difference is the increment error
$e_{\boldsymbol{\theta}}(\mathbf{z}):=F_{{\boldsymbol{\theta}},n}(\mathbf{z})-F_{\tau,n}(\mathbf{z})$. For the fixed context length \(n\), we write the loss as $L_n(\theta)=\mathcal L(\theta)$, i.e.  
\begin{align}\label{eq:lossf}
L_n({\boldsymbol{\theta}})=\frac12\E_{\mathbf{z}\sim\mathcal D_n}\|F_{{\boldsymbol{\theta}},n}(\mathbf{z})-F_{\tau,n}(\mathbf{z})\|_2^2.
\end{align}
We prove Theorem~\ref{thm:main} in three parts. First, we characterize the local geometry and normal curvature of the optimal parameter manifold. Second, we establish a one-step contraction of gradient descent toward this manifold. Finally, we use these results to prove convergence to a single optimal parameter and derive the matching worst-case iteration complexity over the admissible initialization class.
\subsection{Geometry and Curvature of the Optimal Parameter Manifold}
A perturbation $\mathbf{U}\in\Theta$ has blocks denoted by $\Delta A,\ldots,\Delta C$ in the same order as \eqref{eq:parameters}.
\begin{align*}
\|\mathbf{U}\|_\Theta^2=&\|\Delta A\|_F^2+\|\Delta L\|_F^2+\|\Delta b_-\|_2^2
+\|\Delta K_-\|_F^2+\|\Delta b_+\|_2^2\nonumber\\
&+\|\Delta K_+\|_F^2+|\Delta\beta|^2+\|\Delta q\|_2^2+\|\Delta C\|_F^2. 
\end{align*}
For a vector, $\|x\|_2^2=\sum_jx_j^2$; for a matrix, $\|M\|_F^2=\operatorname{tr}(M^\top M)$. The product inner product is the corresponding sum of vector dot products and matrix Frobenius inner products. We abbreviate $\|\mathbf{U}\|_\Theta$ to $\|\mathbf{U}\|$ when the space is unambiguous.  Norms of operators are induced by their domain and codomain norms, not entrywise norms unless specified.

For a fixed context $\mathbf{z}$, let
$J_{{\boldsymbol{\theta}},n}(\mathbf{z}):=D_{\boldsymbol{\theta}} F_{{\boldsymbol{\theta}},n}(\mathbf{z}):\Theta\to\mathbb R^d$
denote the Jacobian of the Transformer output with respect to its parameters. Thus, for a parameter perturbation $\mathbf{U}\in\Theta$,
$J_{{\boldsymbol{\theta}},n}(\mathbf{z})[\mathbf{U}]=\left.\frac{d}{dt}F_{{\boldsymbol{\theta}}+t\mathbf{U},n}(\mathbf{z})\right|_{t=0}$.
Equivalently, $F_{{\boldsymbol{\theta}}+t\mathbf{U},n}(\mathbf{z})=F_{{\boldsymbol{\theta}},n}(\mathbf{z})+tJ_{{\boldsymbol{\theta}},n}(\mathbf{z})[\mathbf{U}]+o(t)$.
Hence $J_{{\boldsymbol{\theta}},n}(\mathbf{z})[\mathbf{U}]$ measures the first-order change of the Transformer output, on the particular context $\mathbf{z}$, when the parameters are perturbed in direction $\mathbf{U}$.

Since our training objective is the population loss \eqref{eq:lossf}, we now introduce the population Jacobian. We therefore consider the Hilbert space
$$\mathcal H_n
=
L^2(\mathcal D_n;\mathbb R^d),
\quad
\langle f,h\rangle_{\mathcal H_n}=
\mathbb E\langle f(\mathbf{z}),h(\mathbf{z})\rangle_2,
$$
with norm
$|f|_{\mathcal H_n}^2=\mathbb E|f(\mathbf{z})|_2^2$,
and define the population Jacobian
$\mathcal J_{{\boldsymbol{\theta}},n}:\Theta\to\mathcal H_n$, $(\mathcal J_{{\boldsymbol{\theta}},n}\mathbf{U})(\mathbf{z})=J_{{\boldsymbol{\theta}},n}(\mathbf{z})[\mathbf{U}]$. Thus
$|\mathcal J_{{\boldsymbol{\theta}},n}\mathbf{U}|_{\mathcal H_n}^2=\mathbb E|J_{{\boldsymbol{\theta}},n}(\mathbf{z})[\mathbf{U}]|_2^2$
measures the average squared first-order output change caused by the parameter perturbation $\mathbf{U}$ over the entire context distribution.

Let $\mathcal{J}_{{\boldsymbol{\theta}},n}^{*}$ denote the adjoint of the population Jacobian, defined by
$\langle\mathcal{J}_{{\boldsymbol{\theta}},n}\mathbf{U},f\rangle_{\mathcal H_n}=\langle
\mathbf{U},\mathcal{J}_{{\boldsymbol{\theta}},n}^{*}f
\rangle_{\Theta}$.
Equivalently,
$\mathcal J_{{\boldsymbol{\theta}},n}^{*}f=\mathbb E\bigl[J_{{\boldsymbol{\theta}},n}(\mathbf{z})^{*}f(\mathbf{z})\bigr]$.

At an exact realization $\bar{\boldsymbol{\theta}}\in\mathcal M_\tau$, define the population Gauss-Newton operator
$G(\bar{\boldsymbol{\theta}}):=\mathcal {J}_{\bar{\boldsymbol{\theta}},n}^{*}\mathcal{J}_{\bar{\boldsymbol{\theta}},n}$. For every parameter direction $\mathbf{U}$,
$$\langle \mathbf{U},G(\bar{\boldsymbol{\theta}})\mathbf{U}\rangle=|\mathcal J_{\bar{\boldsymbol{\theta}},n}\mathbf{U}|_{\mathcal H_n}^2=\mathbb E|J_{\bar{\boldsymbol{\theta}},n}(\mathbf{z})[\mathbf{U}]|_2^2.
$$
Therefore $G(\bar{\boldsymbol{\theta}})$ quantifies how strongly the Transformer output responds, to first order, to perturbations of its parameters around an exact realization.
The minimum normal curvature over the chosen target patch $\mathcal{U}_\tau \subset\M_\tau$ is defined by
$\kappa:=\inf_{\substack{
\bar{\boldsymbol{\theta}}\in\mathcal U_\tau,\U\in N_{\bar{\boldsymbol{\theta}}}\mathcal M_\tau,|\mathbf{U}|=1}}|\mathcal J_{\bar{\boldsymbol{\theta}},n}\mathbf{U}|_{\mathcal H_n}^{2}$.

The quantity $\kappa$ has a direct optimization interpretation. A large $\kappa$ means that every unit perturbation away from the exact-realization manifold produces a substantial change in the Transformer output, so the population loss has appreciable curvature in every identifiable parameter direction. A small $\kappa$ means that at least one identifiable direction is nearly flat and is therefore harder to learn by training. 

Throughout the remainder of Appendix \ref{app:Theoretical}, \(n\) and \(\tau\) are fixed. We use the abbreviations \(J_\theta=J_{\theta,n}\), \(G=G_n\), \(\mathcal H=\mathcal H_n\), and \(\mathsf H=\mathsf H_n(\tau)\).
Below we mainly analyze the geometric and curvature information of the optimal parameter manifold, as detailed in Lemmas \ref{lem:Structure}, \ref{lem:uniform-normal-curvature} and \ref{lem:local-tubular-geometry} below.

\paragraph{Structure of the optimal parameter manifold.} The structure of the optimal parameter manifold establishes that every parameter on the manifold exactly realizes the target update and achieves zero loss, that the collection of such optimal parameters forms a smooth embedded manifold, and that the Hessian $\nabla^2L_n(\bar{\boldsymbol{\theta}})$ is degenerate exactly along the tangent directions of this manifold, which means motion within $\mathcal M_\tau$ preserves zero loss, whereas every nonzero normal direction has strictly positive Hessian curvature. 

\begin{lemma}[Structure of the optimal parameter manifold]\label{lem:Structure}
For every $\bar{\boldsymbol{\theta}}=\bar{\boldsymbol{\theta}}_\tau(L,c)\in\mathcal M_\tau$, $F_{\bar{\boldsymbol{\theta}},n}(\mathbf{z})=F_{\tau,n}(\mathbf{z})$ for every admissible input $\mathbf{z}$, and 

(i) $L_n(\bar{\boldsymbol{\theta}})=0
=\min_{{\boldsymbol{\theta}}\in\Theta}L_n({\boldsymbol{\theta}})$, and $\mathcal M_\tau
\subseteq
\arg\min_{{\boldsymbol{\theta}}\in\Theta}L_n({\boldsymbol{\theta}})$. 

(ii) $\mathcal M_\tau$ is a closed smooth embedded submanifold of $\Theta$, $\dim(\mathcal M_\tau)=d^2+1$.

(iii) Under Assumption \ref{assume}, $\ker G_n(\bar{\boldsymbol{\theta}})=T_{\bar{\boldsymbol{\theta}}}\mathcal M_\tau$ and $\nabla^2L_n(\bar{\boldsymbol{\theta}})=G_n(\bar{\boldsymbol{\theta}})$, where $\ker T$ denotes the set of directions that the operator $T$ maps to zero, i.e., the directions along which $T$ has no effect.  
\end{lemma}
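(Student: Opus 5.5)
Exact realization and part (i) are immediate. Proposition~\ref{prop:six-block-minimal-support}(i) already shows that $F_{\bar{\boldsymbol{\theta}},n}(\mathbf{z})=F_{\tau,n}(\mathbf{z})$ for every $\bar{\boldsymbol{\theta}}_\tau(L,c)$. Explicitly, with $A=\tau^{-1}I_d$ we get $\bar\psi_i(A;\mathbf{z})=\bar\psi_i^\tau$, and then $s_i=c\,\delta_i^\tau$ and $C\phi_i s_i=\alpha\phi_i\delta_i^\tau$. Hence $L_n(\bar{\boldsymbol{\theta}})=0$, and since $L_n\ge0$, every point of $\mathcal M_\tau$ is a global minimizer.

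For (ii), consider the map $\Gamma:\GL(d)\times(\R\setminus\{0\})\to\Theta$, $(L,c)\mapsto\bar{\boldsymbol{\theta}}_\tau(L,c)$. In the coordinates \eqref{eq:parameters} it sends $(L,c)$ to $A=\tau^{-1}I$, $L$, $b_-=0$, $K_-=-cI$, $b_+=0$, $K_+=c\gamma L^{-\top}$, $\beta=c$, $q=0$, $C=(\alpha/c)I$.
\begin{itemize}
\item $\Gamma$ is smooth.
\item $\Gamma$ is injective, because $L$ and $c=\beta$ can be read off directly.
\item Its inverse is the restriction of the continuous projection $\boldsymbol{\theta}\mapsto(L,\beta)$. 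So $\Gamma$ is a homeomorphism onto its image, with a left inverse whose differential gives injectivity of $D\Gamma$.
\end{itemize}
Thus $\Gamma$ is an injective immersion that is a topological embedding, and $\mathcal M_\tau$ is an embedded submanifold of dimension $d^2+1$. For closedness, take a sequence $\Gamma(L_k,c_k)\to\boldsymbol{\theta}$. Then $c_k\to\beta$, and $\alpha/c_k$ converges, so $\beta\neq0$. Also $L_k\to L$ and $c_k\gamma L_k^{-\top}\to K_+$. The relation $L_k^\top K_+^{(k)}=c_k\gamma I$ passes to the limit as $L^\top K_+=\beta\gamma I$ with $\beta\neq0$, which forces $L\in\GL(d)$. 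The limit therefore lies in $\mathcal M_\tau$.

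For (iii), since the residual $e_{\bar{\boldsymbol{\theta}}}\equiv0$, the Hessian is $\nabla^2L_n=\E[J^*J]+\E[\langle e,D^2F\rangle]=G_n$. Differentiating under the expectation is justified by the boundedness in Assumption~\ref{assume}(i). The identity $T\mathcal M_\tau\subseteq\ker G$ holds because $F$ is constant along $\mathcal M_\tau$. The main work is the reverse inclusion. The plan is to compute $J_{\bar{\boldsymbol{\theta}}}(\mathbf{z})[\mathbf{U}]$ explicitly and show it equals $\alpha\,\Obs(\mathbf{z})[\zeta(\mathbf{U})]$ for a linear map $\zeta:\Theta\to\Coef$ (up to a harmless rescaling of the components). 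At $\bar{\boldsymbol{\theta}}$, the derivative of the softmax average in $A$ is $\Sigma_i^\tau\,\Delta A\,\mathbf{w}$. The directional derivative of $s_i$ is then
\[
\phi_i^\top(\Delta b_-+\Delta K_-\mathbf{w})+(\bar\psi_i^\tau)^\top\bigl(L^\top\Delta b_++(L^\top\Delta K_++\Delta L^\top K_+)\mathbf{w}\bigr)+r_{i+1}(\Delta\beta+\Delta q^\top\mathbf{w})+c\gamma\,\mathbf{w}^\top\Sigma_i^\tau\Delta A\,\mathbf{w},
\]
and the $\Delta C$ term contributes $\Delta C\,\phi_i\,c\,\delta_i^\tau$. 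Comparing with \eqref{eq:observable}, the $\Delta\beta\, r_{i+1}$ term must be absorbed using $r_{i+1}=\delta_i^\tau+\phi_i^\top\mathbf{w}-\gamma\mathbf{w}^\top\bar\psi_i^\tau$. This shifts $\Delta\beta$ into $E_C$ (as a multiple of $I$), $e_-$ and $e_+$. With this, one chooses
\[
E_C\propto c\Delta C+(\Delta\beta/c)\ldots,\quad e_-=\Delta b_-,\quad E_-=\Delta K_-+\Delta\beta I,\quad e_q=\Delta q,\quad e_+=L^\top\Delta b_+,\quad E_+=L^\top\Delta K_++\Delta L^\top K_+-\gamma\Delta\beta I,\quad E_A=c\gamma\Delta A,
\]
where the exact bookkeeping of the rescaling is fixed once the computation is written out.

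If $G\mathbf{U}=0$, then $\Obs(\mathbf{z})[\zeta(\mathbf{U})]=0$ almost surely, and Assumption~\ref{assume}(ii) gives $\zeta(\mathbf{U})=0$. It remains to solve $\zeta(\mathbf{U})=0$. This forces $\Delta A=0$, $\Delta b_\pm=0$, $\Delta q=0$, $\Delta K_-=-\Delta\beta I$, $c\Delta C=-(\alpha/c)\Delta\beta I$ and $L^\top\Delta K_+=-\Delta L^\top K_++\gamma\Delta\beta I$. These relations leave exactly the $d^2+1$ free parameters $(\Delta L,\Delta\beta)$, and they coincide with $D\Gamma(L,c)[\Delta L,\Delta\beta]$: using $\tfrac{d}{dt}L^{-\top}=-L^{-\top}\Delta L^\top L^{-\top}$, one checks that $\Delta K_+=\gamma\Delta c\,L^{-\top}-c\gamma L^{-\top}\Delta L^\top L^{-\top}$. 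Hence $\ker G\subseteq T\mathcal M_\tau$.

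I expect the main obstacle to be this bookkeeping: mapping the Jacobian exactly onto the observable coordinates $\zeta$, in particular handling the reward coefficient $\Delta\beta$ through the $\delta_i^\tau$ substitution, and tracking the normalizations of $c$ and $\alpha$. Any slip there would break the match with the dimension count $d^2+1$.
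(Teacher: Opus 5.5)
Your proposal is correct and follows essentially the paper's proof. Part (i) comes from exact realization. Part (ii) uses the parameterization $(L,c)\mapsto\bar{\boldsymbol{\theta}}_\tau(L,c)$ with the projection $(L,\beta)$ as continuous inverse. Part (iii) rests on the same factorization $J_{\bar{\boldsymbol{\theta}}}(\mathbf{z})[\mathbf{U}]=\frac{\alpha}{c}\mathcal B_{\tau,n}(\mathbf{z})[Q_{L,c}\mathbf{U}]$. Absorbing $\Delta\beta\,r_{i+1}$ through $\delta_i^\tau$ gives exactly $E_C=\frac{c^2}{\alpha}\Delta C+\Delta\beta I_d$, $E_-=\Delta K_-+\Delta\beta I_d$ and $E_+=L^\top\Delta K_++\Delta L^\top K_+-\gamma\Delta\beta I_d$. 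So the shift lands in $E_\pm$, not $e_\pm$ as your prose says, though your displayed formulas and kernel equations already have it right.

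Three of your steps are tidier than the paper's: the left-inverse argument for the immersion, the limit identity $L^\top K_+=\beta\gamma I_d$ for closedness, and getting $T_{\bar{\boldsymbol{\theta}}}\mathcal M_\tau\subseteq\ker G_n(\bar{\boldsymbol{\theta}})$ directly from constancy of $F$ along $\mathcal M_\tau$. The paper instead computes the differential explicitly, uses a norm blow-up argument, and goes through $\ker Q_{L,c}$. The strategy is the same.
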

\begin{proof}
(i) Since the population loss is
 $L_n({\boldsymbol{\theta}})=\frac12\mathbb E\left[\|F_{{\boldsymbol{\theta}},n}(\mathbf{z})-F_{\tau,n}(\mathbf{z})\|_2^2\right]$, Theorem \ref{Th:Qlearning1} immediately gives $L_n(\bar{\boldsymbol{\theta}})=0$ and $L_n({\boldsymbol{\theta}})\ge0$ for every ${\boldsymbol{\theta}}\in\Theta$. Therefore
$L_n(\bar{\boldsymbol{\theta}})=0=\min_{{\boldsymbol{\theta}}\in\Theta}L_n({\boldsymbol{\theta}})$. Since $\bar{\boldsymbol{\theta}}\in\mathcal M_\tau$ was arbitrary, $\mathcal M_\tau\subseteq
\arg\min_{{\boldsymbol{\theta}}\in\Theta}L_n({\boldsymbol{\theta}})$. 

(ii) Define $\mathcal G:=GL(d)\times (\mathbb R\setminus\{0\})$. Since $GL(d)=\{L\in\mathbb R^{d\times d}:\det(L)\neq 0\}$, and the determinant map is continuous, $GL(d)$ is an open subset of
$\mathbb R^{d\times d}\simeq\mathbb R^{d^2}$.
Likewise, $\mathbb R\setminus\{0\}$ is open in $\mathbb R$. Therefore $\mathcal G\subset\mathbb R^{d^2+1}$ is an open smooth manifold of dimension $\dim(\mathcal G)=d^2+1$.

Define the parameterization $\Phi_\tau:\mathcal G\to\Theta$,
$\Phi_\tau(L,c):=\bar{{\boldsymbol{\theta}}}_{\tau}(L,c)$. Then, by definition, $\mathcal M_\tau=\Phi_\tau(\mathcal G)$.
Matrix inversion $L\mapsto L^{-1}$ is smooth on $GL(d)$, while the scalar map $c\mapsto c^{-1}$ is smooth on $\mathbb R\setminus\{0\}$. Consequently, $(L,c)\mapsto c\gamma L^{-1}$, $(L,c) \mapsto \frac{\alpha}{c}I_d$ are smooth, and hence $\Phi_\tau$ is a smooth map.

In order to prove $\mathcal M_\tau = \Phi_\tau(\mathcal G)$ is an embedded submanifold of $\Theta$, it suffices to verify that $\Phi_\tau$ is an injective immersion which is a homeomorphism onto its image.
Since $\Phi_\tau(L_1,c_1)
=
\Phi_\tau(L_2,c_2) \rightarrow 
(L_1,c_1)=(L_2,c_2)$, $\Phi_\tau$ is injective. For a perturbation $(Z,\xi)\in
\mathbb R^{d\times d}\times\mathbb R
\simeq
T_{(L,c)}\mathcal G$, where $T_{(L,c)}\mathcal G$ is the tangent space of $\mathcal G$ at the point $(L,c)$, the differential of $\Phi_\tau$ is
\begin{align*}
D\Phi_\tau(L,c)[Z,\xi]=\Big(
0,\,Z,\,0,\,-\xi I_d,\,0,\gamma\xi L^{-\top}-c\gamma L^{-\top}Z^\top L^{-\top},\,\xi,\,0,\,-\alpha\xi c^{-2}I_d\Big).
\end{align*}
This follows from differentiating $L^{-\top}L^\top=I_d$ in the direction $Z$, which gives $D(L^{-\top})[Z]=-L^{-\top}Z^\top L^{-\top}$. 
Suppose $D\Phi_\tau(L,c)[Z,\xi]=0$, we have $(Z,\xi)=(0,0)$, which means $D\Phi_\tau(L,c)$ is injective at every
$(L,c)\in\mathcal G$, i.e. $\Phi_\tau$ is an immersion.
For any ${\boldsymbol{\theta}}
=
(A,L,b_-,K_-,b_+,K_+,\beta,q,C)
\in\mathcal M_\tau$, 
The inverse parameterization on $\mathcal M_\tau$ is simply
$\Phi_\tau^{-1}({\boldsymbol{\theta}})
=(L,\,\beta)$. Since coordinate projections are continuous,
$\Phi_\tau^{-1}$ is continuous on its image. Therefore, $\mathcal M_\tau = \Phi_\tau(\mathcal G)$ is an embedded submanifold of $\Theta$, and $\dim(\mathcal M_\tau)=d^2+1$ by the fact that an embedding preserves dimension.

It remains to show that $\mathcal M_\tau$ is closed as a subset of $\Theta$. Consider an arbitrary convergent sequence ${\boldsymbol{\theta}}_j=\bar{\boldsymbol{\theta}}_\tau(L_j,c_j)
\in\mathcal M_\tau$, ${\boldsymbol{\theta}}_j\to{\boldsymbol{\theta}}_\star$ in $\Theta$. 
We show that ${\boldsymbol{\theta}}_\star\in\mathcal M_\tau$.
Since convergence in $\Theta$ implies convergence of every parameter block,  
$L_j \rightarrow L_\star$. 
Since the seventh block satisfies
$\beta_j=c_j$, $c_j\rightarrow c_\star:=\beta_\star$. We need to verify that $c_\star\neq0$ and $L_\star\in GL(d)$. 

Suppose, for contradiction, that $c=0$, then $c_j\to0$. However, the ninth parameter block is $C_j=\frac{\alpha}{c_j}I_d$. 
Since $\alpha>0$, $\|C_j\|_F =\frac{\alpha}{|c_j|}\|I_d\|_F
=\frac{\alpha\sqrt d}{|c_j|}\rightarrow\infty$. 
This contradicts the assumed convergence of ${\boldsymbol{\theta}}_j$ in the finite-dimensional parameter space $\Theta$, thus $c_\star\neq0$. 

Similarly, suppose that $L_\star$ is singular. Since $L_j\in GL(d)$, $L_j\to L_\star$, with $L_\star$ singular, the smallest singular value satisfies
$\sigma_{\min}(L_j)\to0$.  
Consequently,
$\|L_j^{-1}\|_{\mathrm{op}}
=
\frac{1}{\sigma_{\min}(L_j)}
\rightarrow\infty$. 
On the other hand, $c_j\to c_\star\neq0$, so for sufficiently large $j$, $|c_j|\ge \frac{|c_\star|}{2}>0$. 
The sixth parameter block is $K_{+,j}=c_j\gamma L_j^{-\top}$. 
Therefore $\|K_{+,j}\|_{\mathrm{op}}
=\gamma|c_j|\,\|L_j^{-\top}\|_{\mathrm{op}}=\frac{\gamma|c_j|}{\sigma_{\min}(L_j)}\longrightarrow\infty$, contradicting convergence of the sixth block of ${\boldsymbol{\theta}}_j$, thus $L_\star\in GL(d)$. Since $L_\star\in GL(d)$ and $c_\star\neq0$, the pair $(L_\star,c_\star)$ belongs to
$\mathcal G$. Moreover, continuity of matrix inversion on $GL(d)$
gives $L_j^{-1}\to L_\star^{-1}$,
and continuity of $c_\star\mapsto1/c_\star$ gives $\frac1{c_j}\to\frac1c_\star$. 

Therefore every block of ${\boldsymbol{\theta}}_j$ converges to the corresponding block of
$\bar{\boldsymbol{\theta}}_\tau(L,c)$,
so that
${\boldsymbol{\theta}}_\star
=
\bar{\boldsymbol{\theta}}_\tau(L_\star,c_\star)
\in\mathcal M_\tau$.
Hence every convergent sequence in $\mathcal M_\tau$ whose limit
exists in $\Theta$ has its limit again in $\mathcal M_\tau$. Since
$\Theta$ is finite dimensional, this proves that $\mathcal M_\tau$
is closed.

(iii) We prove $\ker G(\bar{\boldsymbol{\theta}})=T_{\bar{\boldsymbol{\theta}}}\mathcal M_\tau$  and $\nabla^2L_n(\bar{\boldsymbol{\theta}})=G(\bar{\boldsymbol{\theta}}):=J_{\bar{\boldsymbol{\theta}}}^*J_{\bar{\boldsymbol{\theta}}}$ in the following four steps.

\textbf{Step 1}: Tangent space of the optimal parameter manifold $\M_\tau$.

Consider a smooth curve
$t\mapsto (L(t),c(t))$
in $GL(d)\times(\mathbb R\setminus\{0\})$ 
such that $L(0)=L$, $c(0)=c$. 
Write $Z:=\dot L(0)$, $\xi:=\dot c(0)$. Differentiating $L(t)^{-1}L(t)=I_d$ at $t=0$ gives
\[
\frac{d}{dt}L(t)^{-\top}\Big|_{t=0}
=
-L^{-\top}Z^\top L^{-\top}.
\]
Hence differentiating $\bar{\boldsymbol{\theta}}_\tau(L(t),c(t))$  gives
\begin{align*}
&\Delta A =0,\quad
\Delta L= Z,\quad
\Delta b_-=0,\quad\Delta K_-=-\xi I_d,
\\
&\Delta b_+=0,\quad
\Delta q=0,\quad
\Delta\beta=\xi,\quad
\Delta C=-\alpha\xi c^{-2}I_d,\\
&\Delta K_+=\frac{d}{dt}\left[\gamma c(t)L(t)^{-\top}\right]_{t=0}=\gamma\xi L^{-\top}-c\gamma L^{-\top}Z^\top L^{-\top}.
\end{align*}
Therefore
\begin{align}
T_{\bar{\boldsymbol{\theta}}}\mathcal M_\tau
=
\Big\{
\big(
0, Z,0,-\xi I_d,0,
\gamma\xi L^{-\top}-c\gamma L^{-\top}Z^\top L^{-\top},
\xi,0,-\alpha\xi c^{-2}I_d
\big):
Z\in\mathbb R^{d\times d},\
\xi\in\mathbb R
\Big\}.
\label{eq:tangent-gauge-space}
\end{align}
These are precisely the infinitesimal gauge directions.

\textbf{Step 2}: Sample Jacobian and collect the observable parameter combinations.

Recall $p_{i,a}(A)$ and $\bar\psi_i(A)$ defined in \eqref{eq:student-attention}. Let
$\Sigma_i^\tau=\sum_a\pi_{i,a}^\tau
(\psi_{i,a}-\bar\psi_i^\tau)(\psi_{i,a}-\bar\psi_i^\tau)^\top$.
The softmax derivative gives $Dp_{i,a}(\tau^{-1}I_d)[\Delta A]
=
\pi_{i,a}^\tau
(\psi_{i,a}-\bar\psi_i^\tau)^\top
\Delta A\,\mathbf{w}$. Consequently,
\begin{align}
D\bar\psi_i(\tau^{-1}I_d)[\Delta A]
=
\sum_a
\psi_{i,a}
Dp_{i,a}(\tau^{-1}I_d)[\Delta A]=
\Sigma_i^\tau\Delta A\,\mathbf{w}.
\label{eq:d-psi-bar}
\end{align}
Recall $s_i({\boldsymbol{\theta}};\mathbf{z})=\phi_i^\top(b_-+K_-\mathbf{w})+b_+^\top L\bar\psi_i(A)+\mathbf{w}^\top K^\top_+L\bar\psi_i(A)+r_{i+1}(\beta+q^\top \mathbf{w})$ defined in \eqref{eq:score}.
At $\bar\theta=\bar\theta_\tau(L,c)$, we have $b_+=0$ and $K_+=c\gamma L^{-\top}$. For an arbitrary perturbation $\mathbf{U}$, write $\xi:=\Delta\beta$.
Using the product rule and
\eqref{eq:d-psi-bar}, we obtain
\begin{align}
\dot s_i[\mathbf{U}]
=&
\phi_i^\top
(\Delta b_-+\Delta K_-\mathbf{w})+
(\bar\psi_i^\tau)^\top
\left[
L^\top\Delta b_+
+
\left(
 L^\top\Delta K_+ 
+
c\gamma\Delta L^\top L^{-\top}
\right)\mathbf{w}
\right]
\nonumber\\
&+
r_{i+1}
\left(
\xi+\Delta q^\top \mathbf{w}
\right)
+
\gamma c\,
\mathbf{w}^\top\Sigma_i^\tau\Delta A\,\mathbf{w}.
\label{eq:score-differential}
\end{align}
Since $F_{{\boldsymbol{\theta}},n}(\mathbf{z})=\frac1n\sum_iC\phi_i s_i({\boldsymbol{\theta}};\mathbf{z})$ and 
$C=\frac{\alpha}{c}I_d$, 
$s_i(\bar{\boldsymbol{\theta}};\mathbf{z})=c\delta_i^\tau$, 
the sample Jacobian satisfies
\begin{equation}
J_{\bar{\boldsymbol{\theta}}}(\mathbf{z})[\mathbf{U}]
=
\frac1n
\sum_i
\left[
c\Delta C\,\phi_i\delta_i^\tau
+
\frac{\alpha}{c}
\phi_i\dot s_i[\mathbf{U}]
\right].
\label{eq:jacobian-before-factorization}
\end{equation}
Then we collect the observable parameter combinations. 
Define the linear map $Q_{L,c}:\Theta\to\mathcal V$,
$Q_{L,c}\mathbf{U}
=
(E_C,e_q,e_-,E_-,e_+,E_+,E_A)$,
by
\begin{align}
E_C =\frac{c^2}{\alpha}\Delta C+\xi I_d,\quad e_q =\Delta q, \quad
e_- =\Delta b_-, \quad E_- =\Delta K_-+\xi I_d,\label{eq:Ee}\\ 
e_+ =  L^\top\Delta b_+,\quad 
E_A =
\gamma c\,\Delta A, \quad E_+ =
c\gamma\Delta L^\top L^{-\top}
+L^\top\Delta K_+ -\gamma\xi I_d.\label{eq:Q-plus}
\end{align}
Subtracting $\xi\delta_i^\tau$ from
\eqref{eq:score-differential}, where 
$\delta_i^\tau
=
r_{i+1}
-\mathbf{w}^\top\phi_i
+
\gamma \mathbf{w}^\top\bar\psi_i^\tau$,
gives
\begin{align*}
\dot s_i[\mathbf{U}]-\xi\delta_i^\tau
=
\phi_i^\top(e_-+E_-\mathbf{w})
+
(\bar\psi_i^\tau)^\top
(e_++E_+\mathbf{w})
+
r_{i+1}e_q^\top \mathbf{w}
+
\mathbf{w}^\top\Sigma_i^\tau E_A\mathbf{w}.
\end{align*}
Likewise, $c\Delta C=\frac{\alpha}{c}
(E_C-\xi I_d)$. Substitution into
\eqref{eq:jacobian-before-factorization} shows that the terms proportional to
$\xi\delta_i^\tau$ cancel exactly. Thus
\begin{equation}
J_{\bar{\boldsymbol{\theta}}}(\mathbf{z})[\mathbf{U}]
=
\frac{\alpha}{c}
\mathcal B_{\tau,n}(\mathbf{z})
[Q_{L,c}\mathbf{U}].
\label{eq:exact-J-factorization}
\end{equation}

\textbf{Step 3}: The kernels of $Q_{L,c}$ and $G_n(\bar{\boldsymbol{\theta}})$ are exactly the tangent space.

Suppose $Q_{L,c}\mathbf{U}=0$.  Then
$E_A=e_q=e_-=e_+=0$. 
Since $\gamma>0$, $c\neq0$, and $L$ is invertible, this implies $\Delta A=0$, $\Delta q=0$, $\Delta b_-=0$, 
$\Delta b_+=0$. 
Moreover, $E_C=0$ implies $\Delta C=-\alpha\xi c^{-2}I_d$, while $E_-=0$ 
gives $\Delta K_-=-\xi I_d$. 
Finally, $E_+=0$  gives $ L^\top\Delta K_+=\gamma\xi I_d-
c\gamma\Delta L^\top L^{-\top}$.
Then $\Delta K_+=\gamma\xi L^{-\top}-{c}{\gamma}L^{-\top}\Delta L^{\top} L^{-\top}$.
Set $Z:=\Delta L$.
Together with the preceding identities, this is precisely the tangent
form in \eqref{eq:tangent-gauge-space}. Hence
$\ker Q_{L,c}\subset T_{\bar{\boldsymbol{\theta}}}\mathcal M_\tau$. 

Conversely, take any
$\mathbf{U}\in T_{\bar{\boldsymbol{\theta}}}\mathcal M_\tau$. 
Then for some $Z$ and $\xi$,  
direct substitution of \eqref{eq:tangent-gauge-space} into
\eqref{eq:Ee}-\eqref{eq:Q-plus}
gives $Q_{L,c}\mathbf{U}=0$. 
Therefore $T_{\bar{\boldsymbol{\theta}}}\mathcal M_\tau \subset\ker Q_{L,c}$, which means
\begin{equation}
\ker Q_{L,c}
=
T_{\bar{\boldsymbol{\theta}}}\mathcal M_\tau.
\label{eq:ker-Q-tangent}
\end{equation}
Since $\mathcal H
=
L^2(\mathcal D_n;\mathbb R^d)$
is the population output space, and 
$G(\bar{\boldsymbol{\theta}})
=
J_{\bar{\boldsymbol{\theta}}}^*J_{\bar{\boldsymbol{\theta}}}$, then, for every perturbation $\mathbf{U}$, by \eqref{eq:exact-J-factorization}, we have
\begin{align}
\langle \mathbf{U},G(\bar{\boldsymbol{\theta}})\mathbf{U}\rangle =\|J_{\bar{\boldsymbol{\theta}}}\mathbf{U}\|_{\mathcal H}^2 =\frac{\alpha^2}{c^2}\,\mathbb E\left[\|\mathcal B_{\tau,n}(\mathbf{z})[Q_{L,c}\mathbf{U}]\|_2^2\right] =\frac{\alpha^2}{c^2}
\left\langle Q_{L,c}\mathbf{U},\,H\,Q_{L,c}\mathbf{U}
\right\rangle_{\mathcal V},
\label{eq:G-factorization}
\end{align}
where $H=\mathbb E[\mathcal B_{\tau,n}(\mathbf{z})^*
\mathcal B_{\tau,n}(\mathbf{z})]$.
By Assumption~\ref{assume} (ii), $H\succ0$.  Therefore $\left\langle
Q_{L,c}\mathbf{U}, H Q_{L,c}\mathbf{U}
\right\rangle=0$ if and only if $Q_{L,c}\mathbf{U}=0$. Since $G(\bar{\boldsymbol{\theta}})=J_{\bar{\boldsymbol{\theta}}}^*J_{\bar{\boldsymbol{\theta}}}$ is positive semi-definite,
$$
\langle \mathbf{U},G(\bar{\boldsymbol{\theta}})\mathbf{U}\rangle=0
\quad\Longleftrightarrow\quad
G(\bar{\boldsymbol{\theta}})\mathbf{U}=0.
$$
Combining this with
\eqref{eq:G-factorization} and
\eqref{eq:ker-Q-tangent}, we obtain
\begin{equation}
\ker G(\bar{\boldsymbol{\theta}})
=
\ker Q_{L,c}
=
T_{\bar{\boldsymbol{\theta}}}\mathcal M_\tau.
\label{eq:kernel-G}
\end{equation}

\textbf{Step 4}: Prove $\nabla^2L_n(\bar{\boldsymbol{\theta}})=G(\bar{\boldsymbol{\theta}})$.

Recall that
$L_n({\boldsymbol{\theta}})
=
\frac12
\mathbb E
\|e_{\boldsymbol{\theta}}(\mathbf{z})\|_2^2$, where $e_{\boldsymbol{\theta}}(\mathbf{z}):=F_{{\boldsymbol{\theta}},n}(\mathbf{z})-F_{\tau,n}(\mathbf{z})$. 
Under Assumption~\ref{assume} (i), the data are bounded. On a sufficiently small
bounded parameter neighborhood, softmax smoothness gives common
integrable bounds on the predictor and its first two derivatives.
Hence differentiation under the expectation is valid.
For arbitrary directions $\mathbf{U},\mathbf{V}$,
\begin{align}
D^2L_n({\boldsymbol{\theta}})[\mathbf{U},\mathbf{V}]= \mathbb E\left[\langle J_{\boldsymbol{\theta}}(\mathbf{z})[\mathbf{U}],J_{\boldsymbol{\theta}}(\mathbf{z})[\mathbf{V}]\rangle\right] +\mathbb E
\left[\left\langle e_{\boldsymbol{\theta}}(\mathbf{z}),
D^2F_{{\boldsymbol{\theta}},n}(\mathbf{z})[\mathbf{U},\mathbf{V}]
\right\rangle\right].
\label{eq:hessian-general}
\end{align}
By part~(i), at every correct parameter
$\bar{\boldsymbol{\theta}}\in\mathcal M_\tau$ 
we have the samplewise identity $e_{\bar{\boldsymbol{\theta}}}(\mathbf{z})=0$. Therefore the second term in
\eqref{eq:hessian-general} vanishes identically, and
\begin{align*}
D^2L_n(\bar{\boldsymbol{\theta}})[\mathbf{U},\mathbf{V}] =
\mathbb E\left[\langle
J_{\bar{\boldsymbol{\theta}}}(\mathbf{z})[\mathbf{U}],
J_{\bar{\boldsymbol{\theta}}}(\mathbf{z})[\mathbf{V}]\rangle
\right] =\langle \mathbf{U},J_{\bar{\boldsymbol{\theta}}}^*J_{\bar{\boldsymbol{\theta}}}\mathbf{V}\rangle.
\end{align*}
Hence, as operators,
\begin{equation}
\nabla^2L_n(\bar{\boldsymbol{\theta}})
=J_{\bar{\boldsymbol{\theta}}}^*J_{\bar{\boldsymbol{\theta}}}
=G(\bar{\boldsymbol{\theta}}).
\label{eq:hessian-G}
\end{equation}
Combining \eqref{eq:hessian-G} with
\eqref{eq:kernel-G} yields
\begin{equation*}
\ker\nabla^2L_n(\bar{\boldsymbol{\theta}})=T_{\bar{\boldsymbol{\theta}}}\mathcal M_\tau.
\end{equation*}
Equivalently, if
$\mathbf{U}\notin T_{\bar{\boldsymbol{\theta}}}\mathcal M_\tau$, 
then $\langle \mathbf{U},\nabla^2L_n(\bar{\boldsymbol{\theta}})\mathbf{U}\rangle
>0$.  In particular, for every nonzero normal direction $\mathbf{U}\in N_{\bar{\boldsymbol{\theta}}}\mathcal M_\tau\setminus\{0\}$, 
we have $\langle \mathbf{U},\nabla^2L_n(\bar{\boldsymbol{\theta}})\mathbf{U}\rangle>0$.
\end{proof}

\paragraph{Uniform normal curvature of the optimal parameter manifold}
By Lemma \ref{lem:Structure}, we have $\nabla^2L_n(\bar{\boldsymbol{\theta}})=G(\bar{\boldsymbol{\theta}})$. It follows that the curvature $\kappa$ defined in \eqref{eq:kappa} characterizes the weakest second-order growth of the loss when moving away from the optimal manifold.

Lemma \ref{lem:uniform-normal-curvature} shows that the loss has uniformly positive curvature in all normal, non-gauge directions around the optimal parameter manifold, and that this curvature is quantitatively comparable to the data-richness constant: $\kappa \asymp \chi:=\lambda_{\min}(\mathsf H_n(\tau))$, where $\mathsf H:=\mathbb E\left[
\mathcal B_{\tau,n}(\mathbf{z})^*\mathcal B_{\tau,n}(\mathbf{z})\right]$. Thus, stronger data identifiability leads to stronger local curvature and faster contraction toward the optimal manifold.

Recall that $\overline{\boldsymbol{\Omega}}
\Subset GL(d)\times(\mathbb R\setminus\{0\})$, $\mathcal{U}_\tau:=\left\{\bar{\boldsymbol{\theta}}_\tau(L,c):(L,c)\in\boldsymbol{\Omega}\right\}\subset\mathcal M_\tau$.

\begin{lemma}[Uniform normal curvature]
\label{lem:uniform-normal-curvature}
Under Assumptions~\ref{assume}, 
there exists constants 
$0<c_\Omega\le C_\Omega<\infty$ 
such that, uniformly over $\bar{\boldsymbol{\theta}}=\bar{\boldsymbol{\theta}}_\tau(L,c)\in \mathcal{U}_\tau$,
$c_\Omega\chi\le\lambda_{\min}(\nabla^2L_n(\bar{\boldsymbol{\theta}})|_{N_{\bar{\boldsymbol{\theta}}}\mathcal M_\tau}
)\le C_\Omega\chi$. 
Equivalently, 
$0<c_\Omega\chi\le\kappa\le C_\Omega\chi$.
\end{lemma}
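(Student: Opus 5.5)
The plan is to use the factorization \eqref{eq:G-factorization} from Lemma~\ref{lem:Structure}: for every $\mathbf U$,
\[
\langle \mathbf U,\nabla^2L_n(\bar{\boldsymbol\theta})\mathbf U\rangle=\frac{\alpha^2}{c^2}\langle Q_{L,c}\mathbf U,\mathsf H\,Q_{L,c}\mathbf U\rangle_{\mathcal V}.
\]
This gives a sandwich
\[
\frac{\alpha^2}{c^2}\chi\,\|Q_{L,c}\mathbf U\|^2\le\langle \mathbf U,\nabla^2L_n\mathbf U\rangle\le\frac{\alpha^2}{c^2}\lambda_{\max}(\mathsf H)\|Q_{L,c}\mathbf U\|^2 .
\]
The lower bound on $\kappa$ therefore reduces to a uniform lower bound on $\|Q_{L,c}\mathbf U\|$ over unit normal $\mathbf U$. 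For the upper bound I do not use $\lambda_{\max}(\mathsf H)$, since it need not be comparable to $\chi$. Instead I choose a specific normal direction $\mathbf U$ whose image $Q_{L,c}\mathbf U$ is an eigenvector of $\mathsf H$ for $\chi$, scaled appropriately.

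\textbf{Lower bound.} By \eqref{eq:ker-Q-tangent}, $\ker Q_{L,c}=T_{\bar{\boldsymbol\theta}}\mathcal M_\tau$, so $Q_{L,c}$ restricted to $N_{\bar{\boldsymbol\theta}}\mathcal M_\tau$ is injective. Counting dimensions, $\dim N=5d^2+3d+1-(d^2+1)=4d^2+3d=\dim\mathcal V$, so this restriction is a linear isomorphism. Set
\[
\sigma(L,c):=\min_{\mathbf U\in N,\|\mathbf U\|=1}\|Q_{L,c}\mathbf U\|>0 .
\]
I then show $\sigma$ is continuous in $(L,c)$ on $\mathcal G$. The entries of $Q_{L,c}$ are polynomial in $L$, $L^{-\top}$, $c$ and $c^{-1}$. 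The normal space is the orthogonal complement of the image of $D\Phi_\tau(L,c)$, which has constant rank $d^2+1$, so the orthogonal projector onto $N$ varies continuously. Hence the smallest singular value of $Q_{L,c}P_N$ on $N$ is continuous. Compactness of $\overline{\boldsymbol\Omega}$ gives $\sigma_*:=\min_{\overline{\boldsymbol\Omega}}\sigma>0$ and $c_*:=\max_{\overline{\boldsymbol\Omega}}|c|<\infty$. We can then take $c_\Omega=\alpha^2\sigma_*^2/c_*^2$.

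\textbf{Upper bound.} Let $v\in\mathcal V$ be a unit eigenvector with $\mathsf H v=\chi v$. Put $\mathbf U:=(Q_{L,c}|_N)^{-1}v/\|(Q_{L,c}|_N)^{-1}v\|$, which is a unit normal direction. Then
\[
\langle\mathbf U,\nabla^2L_n\mathbf U\rangle=\frac{\alpha^2}{c^2}\chi\,\|(Q_{L,c}|_N)^{-1}v\|^{-2}\le\frac{\alpha^2}{c^2}\chi\,\|Q_{L,c}\|_{\mathrm{op}}^2 .
\]
Both $\|Q_{L,c}\|_{\mathrm{op}}$ and $1/|c|$ are continuous on $\mathcal G$, hence bounded on $\overline{\boldsymbol\Omega}$. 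This gives
\[
C_\Omega=\alpha^2\max_{\overline{\boldsymbol\Omega}}\|Q_{L,c}\|^2_{\mathrm{op}}/c^2 .
\]
The same argument shows $\lambda_{\min}$ of the restricted Hessian is at most $C_\Omega\chi$ at each point. Taking the infimum over $\mathcal U_\tau$ then yields $c_\Omega\chi\le\kappa\le C_\Omega\chi$. Here I use that the infimum over the open patch lies between the pointwise bounds, which hold on the closure.

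\textbf{Main obstacle.} The delicate step is the continuity of $\sigma(L,c)$, i.e. the claim that the normal bundle and the restricted inverse of $Q_{L,c}$ vary continuously. I would handle it by writing $P_N=I-D\Phi(D\Phi^*D\Phi)^{-1}D\Phi^*$. This is smooth because $D\Phi_\tau$ is injective, with $D\Phi^*D\Phi$ invertible, as shown in Lemma~\ref{lem:Structure}(ii). Then $\sigma(L,c)^2$ is the smallest eigenvalue of $P_NQ^*QP_N$ on $\operatorname{ran}P_N$, a space of constant dimension. Eigenvalues depend continuously on the matrix, so $\sigma$ is continuous.

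Note that the constants depend only on $\boldsymbol\Omega$, $\alpha$ and $\gamma$, and not on the data. This matters for later uses, such as the $\tau^{-5}$ rate, where $\chi$ carries all of the $\tau$ dependence. One caveat is that $Q_{L,c}$ involves $\gamma c\,\Delta A$, so the bounds depend on $\gamma$ but not on $\tau$.
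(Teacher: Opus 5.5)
Your proof is correct, and most of it follows the paper. That includes the factorization $\langle\mathbf U,\nabla^2L_n(\bar{\boldsymbol\theta})\mathbf U\rangle=\frac{\alpha^2}{c^2}\langle Q_{L,c}\mathbf U,\mathsf H\,Q_{L,c}\mathbf U\rangle$, the isomorphism $Q_{L,c}|_N:N_{\bar{\boldsymbol\theta}}\mathcal M_\tau\to\mathcal V$ obtained from $\ker Q_{L,c}=T_{\bar{\boldsymbol\theta}}\mathcal M_\tau$ and the dimension count, and the upper bound through the preimage of a bottom eigenvector of $\mathsf H$ using $\|(Q_{L,c}|_N)^{-1}v\|\ge 1/\|Q_{L,c}\|_{\mathrm{op}}$.

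You diverge only at the step you flag as the main obstacle: the uniform lower bound on $\|Q_{L,c}\mathbf U\|$ over unit normals. You get it from continuity of the smallest singular value. That requires showing the normal projector $P_N$ varies continuously, and handling eigenvalue continuity for a restriction to a moving subspace. This works: $\sigma^2$ is the $(d^2+2)$-th smallest eigenvalue of the full matrix $P_NQ_{L,c}^*Q_{L,c}P_N$, whose $d^2+1$ smallest eigenvalues are the zeros coming from the tangent space.

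The paper avoids this with an explicit right inverse $R_{L,c}:\mathcal V\to\Theta$. It sets $\Delta L=0$, $\Delta\beta=0$, $\Delta A=(\gamma c)^{-1}E_A$, $\Delta C=\alpha c^{-2}E_C$, $\Delta b_-=e_-$, $\Delta K_-=E_-$, $\Delta q=e_q$, $\Delta b_+=L^{-\top}e_+$ and $\Delta K_+=L^{-\top}E_+$. Since $Q_{L,c}P_N=Q_{L,c}$, it follows that $P_NR_{L,c}=(Q_{L,c}|_N)^{-1}$, and hence $\|Q_{L,c}\mathbf U\|\ge\|\mathbf U\|/\|R_{L,c}\|_{\mathrm{op}}$ on the normal space. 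Because $R_{L,c}$ involves only $c^{-1}$ and $L^{-\top}$, its norm is trivially bounded on the compact set $\overline{\boldsymbol\Omega}$, and no continuity of the normal bundle is needed.

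The paper's route is more elementary and gives explicit constants. Yours is more abstract but needs no guessed right inverse, so it carries over to any smooth gauge family with $\ker Q=T\mathcal M$. Your constant $\sigma_*$ is also at least as large as the paper's $(\sup_{\overline{\boldsymbol\Omega}}\|R_{L,c}\|_{\mathrm{op}})^{-1}$.
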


\begin{proof}
Fix $\bar{\boldsymbol{\theta}}=\bar{\boldsymbol{\theta}}_\tau(L,c)\in \mathcal{U}_\tau$. From \eqref{eq:exact-J-factorization},  $J_{\bar{\boldsymbol{\theta}}}(\mathbf{z})[\mathbf{U}]=\frac{\alpha}{c}\mathcal B_{\tau,n}(\mathbf{z})
[Q_{L,c}\mathbf{U}]$, where $Q_{L,c}: T_{\bar{\boldsymbol{\theta}}}\Theta\rightarrow\mathcal V$ is the linear map collecting the effective observable coefficients.
Moreover, $\ker Q_{L,c}=T_{\bar{\boldsymbol{\theta}}}\mathcal M_\tau$.

We first show that the restriction
$Q_{L,c}|_{N_{\bar{\boldsymbol{\theta}}}\mathcal M_\tau}: N_{\bar{\boldsymbol{\theta}}}\mathcal M_\tau\rightarrow\mathcal V$ is an isomorphism (a bijective linear map that preserves linear operations).
Since $N_{\bar{\boldsymbol{\theta}}}\mathcal M_\tau\cap\ker Q_{L,c}=\{0\}$, the restriction is injective.
Furthermore, $\dim N_{\bar{\boldsymbol{\theta}}}\mathcal M_\tau=
(5d^2+3d+1)-(d^2+1)=4d^2+3d$. The dimension of observable coefficient space is $\dim\mathcal V=4d^2+3d$, thus the restriction is bijective. For completeness, one may construct an explicit right inverse $R_{L,c}:\mathcal V\to T_{\bar{\boldsymbol{\theta}}}\Theta$. Specifically, for $\zeta=(E_C,e_q,e_-,E_-,e_+,E_+,E_A)\in\mathcal V$,
set
\begin{align*}
\Delta A&=(\gamma c)^{-1}E_A,&\Delta L&=0,&\Delta\beta&=0,\\
\Delta C&=\alpha c^{-2}E_C,&\Delta b_-&=e_-,& \Delta K_-&=E_-,\\
\Delta q&=e_q,&\Delta b_+&=L^{-\top}e_+,&\Delta K_+&= L^{-\top}E_+.
\end{align*}
Then $Q_{L,c}R_{L,c}=I_{\mathcal V}$. Let
$P_N$ denote the orthogonal projection onto
$N_{\bar{\boldsymbol{\theta}}}\mathcal M_\tau$.  Since \(Q_{L,c}\) vanishes on
$T_{\bar{\boldsymbol{\theta}}}\mathcal M_\tau$, $Q_{L,c}P_N
=Q_{L,c}$. Therefore $P_NR_{L,c}$ is the inverse of $Q_{L,c}|_{N_{\bar{\boldsymbol{\theta}}}\mathcal M_\tau}$.

Since $\overline{\boldsymbol{\Omega}}$ is compact and $L,\ L^{-1},\ c,\ c^{-1}$ are uniformly bounded on $\overline{\boldsymbol{\Omega}}$, the operator norms of
$Q_{L,c}$ and $R_{L,c}$ are uniformly bounded. Hence there exist
constants $0<t_-\le t_+<\infty$ such that, for every $\bar{\boldsymbol{\theta}}\in \mathcal{U}_\tau$ 
and every $\mathbf{U}\in N_{\bar{\boldsymbol{\theta}}}\mathcal M_\tau$, 
\begin{equation}
t_-\|\mathbf{U}\|
\le
\|Q_{L,c}\mathbf{U}\|_{\mathcal V}
\le
t_+\|\mathbf{U}\|.
\label{eq:A2-norm-equiv}
\end{equation}
where one may take
$t_-=(\sup_{(L,c)\in\overline{\boldsymbol{\Omega}}}
\|R_{L,c}\|_{\mathrm{op}})^{-1}$, $t_+=\sup_{(L,c)\in\overline{\boldsymbol{\Omega}}}\|Q_{L,c}\|_{\mathrm{op}}$.

We now transfer Assumption \ref{assume} from coefficient space to parameter space. 
Since $\mathsf H\succeq \chi I_{\mathcal V}$,
we can obtain $\langle Q_{L,c}\mathbf{U},\,\mathsf H Q_{L,c}\mathbf{U} \rangle\ge\chi\|Q_{L,c}\mathbf{U}\|_{\mathcal V}^2$ by \eqref{eq:G-factorization}. 
Thus, for every unit normal vector $\mathbf{U}\in N_{\bar{\boldsymbol{\theta}}}\mathcal M_\tau$, $\|\mathbf{U}\|=1$, yields 
$\langle\mathbf{U},G(\bar{\boldsymbol{\theta}})\mathbf{U}\rangle\ge\frac{\alpha^2}{c^2}\chi t_-^2$ from \eqref{eq:A2-norm-equiv}. Let $c_+:=\max_{(L,c)\in \overline{\boldsymbol{\Omega}}}|c|$, then $\frac{1}{c^2}\ge\frac{1}{c_+^2}$,
and hence $\langle \mathbf{U},G (\bar{\boldsymbol{\theta}})\mathbf{U}\rangle\ge\frac{\alpha^2t_-^2}{c_+^2}\chi$. Therefore, by taking $c_\Omega=\frac{\alpha^2t_-^2}{c_+^2}$, we have
\begin{equation}\label{eq:A2-lower}
\lambda_{\min}(G(\bar{\boldsymbol{\theta}}) |_{N_{\bar{\boldsymbol{\theta}}}\mathcal M_\tau} )\ge c_\Omega\chi.
\end{equation} 
Let $\zeta_*\in\mathcal V$, $\|\zeta_*\|_{\mathcal V}=1$, 
be a unit eigenvector of $\mathsf H$ corresponding to its smallest eigenvalue $\mathsf H\zeta_*=\chi\zeta_*$. 
Since $Q_{L,c} |_{N_{\bar{\boldsymbol{\theta}}}\mathcal M_\tau}$ 
is invertible, define 
$\mathbf{U}_* = ( Q_{L,c} |_{N_{\bar{\boldsymbol{\theta}}}\mathcal M_\tau} )^{-1}
\zeta_*$. Then $Q_{L,c}\mathbf{U}_* = \zeta_*$. 
By the upper estimate in \eqref{eq:A2-norm-equiv}, $1=\|Q_{L,c}\mathbf{U}_*\|_{\mathcal V}\le t_+\|\mathbf{U}_*\|$, so $\|\mathbf{U}_*\|\ge\frac1{t_+}$. 
Using \eqref{eq:G-factorization},
\begin{align*}
\langle\widehat{\mathbf{U}}_*,G(\bar{\boldsymbol{\theta}})\widehat{\mathbf{U}}_*
\rangle=\frac{\alpha^2}{c^2}\frac{\langle\zeta_*,\mathsf H\zeta_*\rangle}{\|\mathbf{U}_*\|^2}=\frac{\alpha^2}{c^2}\frac{\chi}{\|\mathbf{U}_*\|^2}\le\frac{\alpha^2t_+^2}{c^2}\chi, \quad \widehat{\mathbf{U}}_*=\frac{\mathbf{U}_*}{\|\mathbf{U}_*\|}
\end{align*}

Let $ c_-:=
\min_{(L,c)\in\overline{\boldsymbol{\Omega}}}|c|>0$.  Then
$\frac1{c^2}\le\frac1{c_-^2}$, and therefore, by taking $C_\Omega=\frac{\alpha^2t_+^2}{c_-^2}$,
\begin{equation}
\lambda_{\min}(G(\bar{\boldsymbol{\theta}})
|_{N_{\bar{\boldsymbol{\theta}}}\mathcal M_\tau}) \le C_\Omega\chi.
\label{eq:A2-upper}
\end{equation}
Finally, by Lemma~\ref{lem:Structure}, $\nabla^2L_n(\bar{\boldsymbol{\theta}}) = G(\bar{\boldsymbol{\theta}})$ for every $\bar{\boldsymbol{\theta}}\in\mathcal M_\tau$. 
Combining \eqref{eq:A2-lower} and \eqref{eq:A2-upper} gives
\[
c_\Omega\chi\le\lambda_{\min}(
\nabla^2L_n(\bar{\boldsymbol{\theta}})|_{N_{\bar{\boldsymbol{\theta}}}\mathcal M_\tau})
\le C_\Omega\chi.
\]

Taking the infimum over $\bar{\boldsymbol{\theta}}\in \mathcal{U}_\tau$ 
and over unit normal directions yields $c_\Omega\chi \le \kappa \le C_\Omega\chi$. 
Since $\chi>0$ by Assumption~\ref{assume} and
 $c_\Omega>0$,  we conclude that $\kappa>0$. 
\end{proof}

\paragraph{Local tubular geometry and unique projection.} Lemma~\ref{lem:local-tubular-geometry} provides a precise local notion of distance to the optimal parameter manifold. 

For any parameter $\boldsymbol{\theta}$ sufficiently close to $\mathcal M_\tau$, define, 
$$d(\boldsymbol{\theta})
:=
\operatorname{dist}(\boldsymbol{\theta},\mathcal M_\tau)
=
\inf_{\boldsymbol{\vartheta}\in\mathcal M_\tau}
\|\boldsymbol{\theta}-\boldsymbol{\vartheta}\|.$$
Lemma~\ref{lem:local-tubular-geometry} guarantees that this distance is attained at a unique nearest point $\bar{\boldsymbol{\theta}}=\Pi(\boldsymbol{\theta})$. Moreover, the displacement $\Delta=\boldsymbol{\theta}-\Pi(\boldsymbol{\theta})$ is normal to the manifold, i.e. $d(\boldsymbol{\theta}) = \|\boldsymbol{\theta}-\Pi(\boldsymbol{\theta})\|$.

\begin{lemma}[Local tubular geometry and unique projection]
\label{lem:local-tubular-geometry}
There exists a constant $r_{\rm geo}>0$ such that the following statements hold. 

(i) For every $\bar{\boldsymbol{\theta}}\in\overline{\mathcal U}_\tau$ and $\Delta\in N_{\bar{\boldsymbol{\theta}}}\mathcal M_\tau$ with $\|\Delta\|<r_{\rm geo}$, the point $\boldsymbol{\theta}=\bar{\boldsymbol{\theta}}+\Delta$ has $\bar{\boldsymbol{\theta}}$ as its unique nearest point on the full manifold $\mathcal M_\tau$. 

(ii) Define the normal tube $\mathcal T_{r_{\rm geo}}(\mathcal U_\tau) = \left\{ \bar{\boldsymbol{\theta}}+\Delta: \bar{\boldsymbol{\theta}}\in\mathcal U_\tau,\ \Delta\in N_{\bar{\boldsymbol{\theta}}}\mathcal M_\tau,\ \|\Delta\|<r_{\rm geo} \right\}$. For every $\boldsymbol{\theta}\in \mathcal T_{r_{\rm geo}}(\mathcal U_\tau)$, the nearest-point projection $\Pi(\boldsymbol{\theta})$ onto $\mathcal M_\tau$ is uniquely defined and satisfies $\boldsymbol{\theta}-\Pi(\boldsymbol{\theta}) \in N_{\Pi(\boldsymbol{\theta})}\mathcal M_\tau$ and $d(\boldsymbol{\theta}) = \|\boldsymbol{\theta}-\Pi(\boldsymbol{\theta})\|$. Moreover, $\mathcal T_{r_{\rm geo}}(\mathcal U_\tau)$ is an open neighborhood of $\mathcal U_\tau$ in the ambient parameter space.
\end{lemma}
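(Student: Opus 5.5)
The plan is to prove the lemma as a uniform tubular neighborhood theorem for the compact patch $\overline{\mathcal U}_\tau$ inside the closed embedded submanifold $\mathcal M_\tau$ of Lemma~\ref{lem:Structure}(ii). Consider the normal exponential map $E:N\mathcal M_\tau\to\Theta$, $E(\bar{\boldsymbol{\theta}},\Delta)=\bar{\boldsymbol{\theta}}+\Delta$, on the normal bundle. Its differential at $(\bar{\boldsymbol{\theta}},0)$ is the identity $T_{\bar{\boldsymbol{\theta}}}\mathcal M_\tau\oplus N_{\bar{\boldsymbol{\theta}}}\mathcal M_\tau\to\Theta$. I will use the explicit chart $\Phi_\tau(L,c)$ from the proof of Lemma~\ref{lem:Structure}. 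In that chart the tangent space \eqref{eq:tangent-gauge-space} varies smoothly in $(L,c)$, so the orthogonal projection $P_N(L,c)$ is smooth. The map $(L,c,\Delta')\mapsto \Phi_\tau(L,c)+P_N(L,c)\Delta'$ therefore gives smooth coordinates on the normal bundle. By the inverse function theorem, $E$ is a local diffeomorphism near each zero-section point. Compactness of $\overline{\boldsymbol{\Omega}}$, together with uniform bounds on $L$, $L^{-1}$, $c$ and $c^{-1}$ there, lets me take one radius $r_1$ that works for all $\bar{\boldsymbol{\theta}}\in\overline{\mathcal U}_\tau$.

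The key step, and the main obstacle, is part (i). Here the uniqueness of the nearest point must hold on the \emph{full} manifold $\mathcal M_\tau$, not just near the patch, and $\mathcal M_\tau$ is noncompact. I would split competitors $\boldsymbol{\vartheta}\in\mathcal M_\tau$ into near and far ones. For near competitors, $\|\boldsymbol{\vartheta}-\bar{\boldsymbol{\theta}}\|<\rho$, I use a uniform second-order bound. Writing $\boldsymbol{\vartheta}=\Phi_\tau(L',c')$, Taylor expansion with bounded second derivatives of $\Phi_\tau$ on a compact enlargement $\overline{\boldsymbol{\Omega}}_\rho$ gives $|\langle \boldsymbol{\vartheta}-\bar{\boldsymbol{\theta}},\Delta\rangle|\le K\|\boldsymbol{\vartheta}-\bar{\boldsymbol{\theta}}\|^2\|\Delta\|$, since $\Delta\perp T_{\bar{\boldsymbol{\theta}}}\mathcal M_\tau$. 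Then
\[
\|\bar{\boldsymbol{\theta}}+\Delta-\boldsymbol{\vartheta}\|^2\ge\|\Delta\|^2+\|\boldsymbol{\vartheta}-\bar{\boldsymbol{\theta}}\|^2(1-2K\|\Delta\|),
\]
which is strictly larger than $\|\Delta\|^2$ whenever $\boldsymbol{\vartheta}\neq\bar{\boldsymbol{\theta}}$ and $\|\Delta\|<1/(2K)$. To make this uniform I need $\Phi_\tau^{-1}$ to be uniformly Lipschitz-controlled near the patch: $\|\boldsymbol{\vartheta}-\bar{\boldsymbol{\theta}}\|<\rho$ should force $(L',c')$ into $\overline{\boldsymbol{\Omega}}_\rho$. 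This follows from $\Phi_\tau^{-1}(\boldsymbol{\theta})=(L,\beta)$ being a coordinate projection, which is $1$-Lipschitz.

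For far competitors, $\|\boldsymbol{\vartheta}-\bar{\boldsymbol{\theta}}\|\ge\rho$, the triangle inequality gives $\|\boldsymbol{\theta}-\boldsymbol{\vartheta}\|\ge\rho-\|\Delta\|>\|\Delta\|$ once $\|\Delta\|<\rho/2$. Setting $r_{\rm geo}=\min\{r_1,\rho/2,1/(2K)\}$ proves (i). Closedness of $\mathcal M_\tau$ from Lemma~\ref{lem:Structure}(ii) ensures that the infimum defining $d(\boldsymbol{\theta})$ is attained, so the comparison above identifies the minimizer.

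Part (ii) then follows. For $\boldsymbol{\theta}=\bar{\boldsymbol{\theta}}+\Delta$ in the tube, (i) gives $\Pi(\boldsymbol{\theta})=\bar{\boldsymbol{\theta}}$ uniquely, with $\boldsymbol{\theta}-\Pi(\boldsymbol{\theta})=\Delta\in N_{\Pi(\boldsymbol{\theta})}\mathcal M_\tau$ and $d(\boldsymbol{\theta})=\|\Delta\|$. For openness, $E$ restricted to $\{(\bar{\boldsymbol{\theta}},\Delta):\bar{\boldsymbol{\theta}}\in\mathcal U_\tau,\|\Delta\|<r_{\rm geo}\}$ is injective by (i) and a local diffeomorphism by the first paragraph. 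It is therefore an open map from an open subset of the normal bundle; note $\mathcal U_\tau$ is open in $\mathcal M_\tau$ because $\boldsymbol{\Omega}$ is open. Hence its image $\mathcal T_{r_{\rm geo}}(\mathcal U_\tau)$ is open and contains $\mathcal U_\tau$ (take $\Delta=0$).
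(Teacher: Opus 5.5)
Your proof is correct, but for part (i) it takes a different route from the paper.

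\textbf{The paper's route.} The paper invokes the tubular neighborhood theorem to get unique normal representations over a neighborhood $\mathcal V$ of $\overline{\mathcal U}_\tau$. It then shrinks the radius below half of $\delta=\operatorname{dist}(\overline{\mathcal U}_\tau,\mathcal M_\tau\setminus\mathcal V)$, so that any nearest point (which exists by closedness) stays in $\mathcal V$. Finally, first-order optimality makes the displacement to that nearest point normal, and uniqueness of the normal representation forces the nearest point to equal $\bar{\boldsymbol{\theta}}$.

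\textbf{Your route.} You prove (i) by a direct quantitative reach estimate. Near competitors are handled by the second-order Taylor remainder of $\Phi_\tau$, with the $1$-Lipschitz coordinate projection $\boldsymbol{\vartheta}\mapsto(L',\beta')$ converting chart distance into ambient distance. Far competitors are handled by the triangle inequality. This shows every $\boldsymbol{\vartheta}\neq\bar{\boldsymbol{\theta}}$ in $\mathcal M_\tau$ is strictly farther from $\bar{\boldsymbol{\theta}}+\Delta$. Consequently you need neither existence of a minimizer (your appeal to closedness is superfluous) nor first-order optimality, and the inverse function theorem is needed only for the openness claim in (ii).

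\textbf{What each buys.} Your argument gives an explicit radius in terms of $\sup\|D^2\Phi_\tau\|$ on a compact enlargement of $\overline{\boldsymbol{\Omega}}$ and the distance from $\overline{\boldsymbol{\Omega}}$ to the complement of $\mathrm{GL}(d)\times(\mathbb R\setminus\{0\})$. It handles the noncompactness of $\mathcal M_\tau$ transparently, which is exactly what the paper's $\delta>0$ step hides. It also shows that the radius for (i) does not depend on $\tau$, since $\tau$ enters $\Phi_\tau$ only through the constant $A$-block. The paper's argument is shorter and rests on a textbook theorem.

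\textbf{Two presentation points.} First, $(L,c,\Delta')\mapsto\Phi_\tau(L,c)+P_N(L,c)\Delta'$ is not a chart when $\Delta'$ ranges over all of $\Theta$, because the tangential part of $\Delta'$ is killed. Restrict $\Delta'$ to a fixed normal space $N_{\bar{\boldsymbol{\theta}}_0}\mathcal M_\tau$ to get a local trivialization of the normal bundle. Second, state the uniform radius $r_1$ precisely: $DE$ is invertible at every $(\bar{\boldsymbol{\theta}},\Delta)$ with $\bar{\boldsymbol{\theta}}\in\overline{\mathcal U}_\tau$ and $\|\Delta\|<r_1$. This is obtained from a finite subcover of the inverse-function-theorem neighborhoods, and it is exactly what your openness argument in (ii) uses.
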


\begin{proof}
By Lemma~\ref{lem:Structure}, $\mathcal M_\tau$ is a smooth closed
embedded submanifold of the finite-dimensional Euclidean parameter
space, and $\overline{\mathcal U}_\tau$ is compact. By the tubular
neighborhood theorem~\cite[Theorem~6.24]{lee2013smooth}, every point
sufficiently close to $\mathcal M_\tau$ can be uniquely represented as
$\boldsymbol{\theta}=\bar{\boldsymbol{\theta}}+\Delta$, where
$\bar{\boldsymbol{\theta}}\in\mathcal M_\tau$ and
$\Delta\in N_{\bar{\boldsymbol{\theta}}}\mathcal M_\tau$ is a
sufficiently small normal displacement.

For each
$\bar{\boldsymbol{\theta}}\in\overline{\mathcal U}_\tau$, there exist
a neighborhood
$\mathcal V_{\bar{\boldsymbol{\theta}}}\subset\mathcal M_\tau$
and a radius $r_{\bar{\boldsymbol{\theta}}}>0$ such that the above
normal representation remains unique for all normal perturbations of
size smaller than $r_{\bar{\boldsymbol{\theta}}}$ from points in
$\mathcal V_{\bar{\boldsymbol{\theta}}}$. Since
$\overline{\mathcal U}_\tau$ is compact, finitely many such
neighborhoods, say $\mathcal V_1,\ldots,\mathcal V_L$, cover
$\overline{\mathcal U}_\tau$. Define
$\mathcal V=\bigcup_{j=1}^L\mathcal V_j$ and
$r_0=\min_{1\le j\le L}r_j$. Since the minimum is taken over finitely
many positive radii, $r_0>0$. Thus every normal perturbation of norm
smaller than $r_0$ from a point in $\mathcal V$ lies in the region
where the normal representation is unique.

(i) We choose the radius small enough so that the nearest point cannot
leave $\mathcal V$. Since
$\overline{\mathcal U}_\tau\subset\mathcal V$,
$\overline{\mathcal U}_\tau$ is compact, and
$\mathcal M_\tau\setminus\mathcal V$ is closed, we have $\delta:=\operatorname{dist}
(\overline{\mathcal U}_\tau,\mathcal M_\tau\setminus\mathcal V)>0$.
Choose $r_{\rm geo}>0$ such that
$r_{\rm geo}<\min\{r_0,\delta/2\}$.
Now take
$\bar{\boldsymbol{\theta}}\in\overline{\mathcal U}_\tau$ and
$\Delta\in N_{\bar{\boldsymbol{\theta}}}\mathcal M_\tau$ with
$\|\Delta\|<r_{\rm geo}$, and set
$\boldsymbol{\theta}=\bar{\boldsymbol{\theta}}+\Delta$.
Since $\mathcal M_\tau$ is closed, there exists at least one nearest
point $\boldsymbol{\theta}^\star\in\mathcal M_\tau$ to
$\boldsymbol{\theta}$. Because
$\bar{\boldsymbol{\theta}}\in\mathcal M_\tau$,
$$
\|\boldsymbol{\theta}-\boldsymbol{\theta}^\star\|
\le
\|\boldsymbol{\theta}-\bar{\boldsymbol{\theta}}\|
=
\|\Delta\|
<
r_{\rm geo}.
$$
The triangle inequality then gives
$\|\boldsymbol{\theta}^\star-\bar{\boldsymbol{\theta}}\|
<2r_{\rm geo}<\delta$. Hence
$\boldsymbol{\theta}^\star\in\mathcal V$.

Since $\boldsymbol{\theta}^\star$ minimizes the Euclidean distance
from $\boldsymbol{\theta}$ to $\mathcal M_\tau$, the first-order
optimality condition implies
$\boldsymbol{\theta}-\boldsymbol{\theta}^\star
\in
N_{\boldsymbol{\theta}^\star}\mathcal M_\tau$.
Therefore $\boldsymbol{\theta}$ has two normal representations:
$\boldsymbol{\theta}=\bar{\boldsymbol{\theta}}+\Delta$ and
$\boldsymbol{\theta}
=\boldsymbol{\theta}^\star+
(\boldsymbol{\theta}-\boldsymbol{\theta}^\star)$.
Both base points lie in $\mathcal V$, and both normal displacements
have norm smaller than $r_0$. By uniqueness of the tubular
representation, we must have
$\boldsymbol{\theta}^\star=\bar{\boldsymbol{\theta}}$.
Thus $\bar{\boldsymbol{\theta}}$ is the unique nearest point to
$\boldsymbol{\theta}$ on the full manifold $\mathcal M_\tau$.
Consequently,
$$
\Pi(\boldsymbol{\theta})
=
\bar{\boldsymbol{\theta}},
\qquad
\boldsymbol{\theta}-\Pi(\boldsymbol{\theta})
\in
N_{\Pi(\boldsymbol{\theta})}\mathcal M_\tau,
\qquad
d(\boldsymbol{\theta})
=
\|\boldsymbol{\theta}-\Pi(\boldsymbol{\theta})\|.
$$

(ii) Since $\mathcal U_\tau$ is relatively open in
$\mathcal M_\tau$, the normal perturbations of norm smaller than
$r_{\rm geo}$ based at points of $\mathcal U_\tau$ form an open set
in the tubular coordinates. The tubular-neighborhood map sends this
set to an open subset of the ambient parameter space containing
$\mathcal U_\tau$. Hence the corresponding normal tube is an open
neighborhood of $\mathcal U_\tau$.
\end{proof} 

\subsection{A Direct Parameter-Distance Estimate for One GD Step}
\label{app:one-step}
The identity $d(\boldsymbol{\theta})=\|\Delta\|$ is essential for the convergence analysis, where $\Delta=\boldsymbol{\theta}-\Pi(\boldsymbol{\theta})$ is the normal displacement from the current parameter to its nearest point on the optimal manifold. The operator $\mathcal J_{\Pi(\boldsymbol{\theta}),n}[\Delta]$ describes the first-order change of the model output when the parameter moves from the nearest optimal point $\Pi(\boldsymbol{\theta})$ in the direction $\Delta$. Since $\Delta$ is normal to the manifold, the normal-curvature bound in Lemma~\ref{lem:uniform-normal-curvature} can be applied directly, giving
$$
\left\|
\mathcal J_{\Pi(\boldsymbol{\theta}),n}[\Delta]
\right\|_{L^2(D_n)}^2
\ge
\kappa\|\Delta\|^2
=
\kappa d(\boldsymbol{\theta})^2.
$$
Lemma \ref{lem:one-step-contraction} shows that one gradient-descent step moves the current parameter closer to the optimal manifold. We first prove that the gradient has a sufficiently large component in the direction that reduces the normal displacement from the manifold. We then control the overall gradient norm to ensure that the update does not overshoot. Combining these two estimates yields a one-step contraction of the distance to $\mathcal M_\tau$.

\begin{lemma}[One-step contraction toward the optimal manifold]
\label{lem:one-step-contraction}
Under Assumption~\ref{assume}, 

(i) There exist constants $M_J>0$, $L_J>0$, and $0<\rho_0<r_{\rm geo}$ such that, for all
$\boldsymbol{\theta},\boldsymbol{\theta}'
\in
\mathcal T_{\rho_0}(\mathcal U_\tau)$,
$\|\mathcal J_{\boldsymbol{\theta},n}\|_{\rm op}\le M_J$, and $\|\mathcal J_{\boldsymbol{\theta},n}
-\mathcal J_{\boldsymbol{\theta}',n}\|_{\rm op}\le L_J\|\boldsymbol{\theta}-\boldsymbol{\theta}'\|$. 

(ii) Let $r:=\min\left\{\rho_0,\frac{\sqrt{\kappa}}{4L_J}\right\}$, $\mathcal T:=\mathcal T_r(\mathcal U_\tau)$. For any $\boldsymbol{\theta}\in\mathcal T$ and every step size $0<\eta\le 1/(2M_J^2)$,
$d\left(\boldsymbol{\theta}-\eta\nabla L_n(\boldsymbol{\theta})\right)^2\le\left(1-\frac{\eta\kappa}{4}\right)d(\boldsymbol{\theta})^2
$, and $\eta\left\|\nabla L_n(\boldsymbol{\theta})\right\|\le\frac{4M_J}{\sqrt{\kappa}}\left[d(\boldsymbol{\theta})-d\left(\boldsymbol{\theta}-\eta\nabla L_n(\boldsymbol{\theta})\right)\right]$, where $1-\eta\kappa/4\in[7/8,1)$.
\end{lemma}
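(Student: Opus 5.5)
For part (i), I will work on a compact set. Choose $\rho_0<r_{\rm geo}$ small enough that the closed tube $\overline{\mathcal T_{\rho_0}(\Ub)}$ is compact; this holds because $\Ub$ is compact and the normal displacements are bounded. The map $\boldsymbol{\theta}\mapsto J_{\boldsymbol{\theta}}(\mathbf{z})$ is built from the masked softmax $p_{i,a}(A;\mathbf{z})$ and the polynomial expressions \eqref{eq:score}--\eqref{eq:loss}. It is therefore $C^2$ in $\boldsymbol{\theta}$. By Assumption~\ref{assume}(i), its first and second derivatives are bounded uniformly over $\mathbf{z}$ and over this compact set. The softmax derivatives are bounded by $B_\phi^2B_w$-type constants, because the scores are linear in $A$ with bounded data. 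Taking $M_J$ as the uniform sup of $\|J_{\boldsymbol{\theta}}(\mathbf{z})\|_{\rm op}$ and $L_J$ as the sup of $\|D_{\boldsymbol{\theta}} J_{\boldsymbol{\theta}}(\mathbf{z})\|$ then gives both bounds. The Lipschitz bound follows from the mean value theorem along segments, once $\rho_0$ is also small enough that the tube is contained in a convex compact set on which these sups are taken. Squaring and taking expectations transfers the samplewise bounds to $\mathcal H_n$.

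For part (ii), fix $\boldsymbol{\theta}\in\mathcal T$. Write $\bar{\boldsymbol{\theta}}=\Pi(\boldsymbol{\theta})$ and $\Delta=\boldsymbol{\theta}-\bar{\boldsymbol{\theta}}$, which is normal with $\|\Delta\|=d(\boldsymbol{\theta})<r$ by Lemma~\ref{lem:local-tubular-geometry}. Since $e_{\bar{\boldsymbol{\theta}}}=0$, I will write $e_{\boldsymbol{\theta}}=\int_0^1\mathcal J_{\bar{\boldsymbol{\theta}}+s\Delta}\Delta\,ds=\mathcal J_{\bar{\boldsymbol{\theta}}}\Delta+R$, where $\|R\|\le \tfrac{L_J}{2}\|\Delta\|^2$. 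The gradient is $g:=\nabla L_n(\boldsymbol{\theta})=\mathcal J_{\boldsymbol{\theta}}^*e_{\boldsymbol{\theta}}$.

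The key estimates are the following.

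\textbf{Descent in the normal direction.} Using $\|\mathcal J_{\boldsymbol{\theta}}-\mathcal J_{\bar{\boldsymbol{\theta}}}\|\le L_J\|\Delta\|$, I have
\[
\langle g,\Delta\rangle=\langle e_{\boldsymbol{\theta}},\mathcal J_{\boldsymbol{\theta}}\Delta\rangle\ge \|\mathcal J_{\bar{\boldsymbol{\theta}}}\Delta\|^2-\tfrac32 L_J\|\Delta\|\,\|\mathcal J_{\bar{\boldsymbol{\theta}}}\Delta\|\,\|\Delta\|-O(L_J^2\|\Delta\|^4).
\]
The normal-curvature bound of Lemma~\ref{lem:uniform-normal-curvature} gives $\|\mathcal J_{\bar{\boldsymbol{\theta}}}\Delta\|\ge\sqrt\kappa\|\Delta\|$. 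Combined with $L_J\|\Delta\|\le\sqrt\kappa/4$, this yields $\langle g,\Delta\rangle\ge c_1\|\mathcal J_{\bar{\boldsymbol{\theta}}}\Delta\|^2$ for an absolute $c_1$ near $1/2$.

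\textbf{Gradient size.} I have $\|g\|\le M_J\|e_{\boldsymbol{\theta}}\|$ and $\|e_{\boldsymbol{\theta}}\|\le \tfrac54\|\mathcal J_{\bar{\boldsymbol{\theta}}}\Delta\|$.

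\textbf{Contraction.} Since $\bar{\boldsymbol{\theta}}\in\mathcal M_\tau$,
\[
d(\boldsymbol{\theta}-\eta g)^2\le\|\Delta-\eta g\|^2=\|\Delta\|^2-2\eta\langle g,\Delta\rangle+\eta^2\|g\|^2 .
\]
With $\eta M_J^2\le1/2$, the quadratic term is absorbed by half the linear term. This leaves $\|\Delta\|^2-c_2\eta\|\mathcal J_{\bar{\boldsymbol{\theta}}}\Delta\|^2\le(1-\eta\kappa/4)\|\Delta\|^2$, and checking the constants should give $c_2\ge 1/4$. The range $1-\eta\kappa/4\ge7/8$ follows because $\kappa\le\|\mathcal J\|^2\le M_J^2$ and $\eta\le 1/(2M_J^2)$.

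For the second inequality, note first that $d(\boldsymbol{\theta}')\le\sqrt{1-\eta\kappa/4}\,d(\boldsymbol{\theta})\le(1-\eta\kappa/8)d(\boldsymbol{\theta})$, so $d(\boldsymbol{\theta})-d(\boldsymbol{\theta}')\ge\tfrac{\eta\kappa}{8}d(\boldsymbol{\theta})$. This crude bound is not enough on its own, so I will sharpen it. I will keep the bound $d(\boldsymbol{\theta})^2-d(\boldsymbol{\theta}')^2\ge c\,\eta\|\mathcal J_{\bar{\boldsymbol{\theta}}}\Delta\|^2$ and divide by $d(\boldsymbol{\theta})+d(\boldsymbol{\theta}')\le2d(\boldsymbol{\theta})$. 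This gives $d-d'\ge \tfrac{c\eta}{2}\|\mathcal J_{\bar{\boldsymbol{\theta}}}\Delta\|^2/\|\Delta\|$. Next, $\eta\|g\|\le\tfrac54\eta M_J\|\mathcal J_{\bar{\boldsymbol{\theta}}}\Delta\|$, and $\|\mathcal J_{\bar{\boldsymbol{\theta}}}\Delta\|\ge\sqrt\kappa\|\Delta\|$ implies $\|\mathcal J_{\bar{\boldsymbol{\theta}}}\Delta\|\le\|\mathcal J_{\bar{\boldsymbol{\theta}}}\Delta\|^2/(\sqrt\kappa\|\Delta\|)$. Together these give $\eta\|g\|\le C\frac{M_J}{\sqrt\kappa}(d-d')$.

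The main obstacle is the bookkeeping of constants needed to land exactly on $1/4$ and $4M_J/\sqrt\kappa$. The choice $r\le\sqrt\kappa/(4L_J)$ is what makes the remainder terms, all relative errors of size $L_J\|\Delta\|/\sqrt\kappa\le1/4$, small enough for this. A secondary point is the requirement $\rho_0<r_{\rm geo}$, which guarantees that the projection $\Pi$ is well defined throughout the tube.
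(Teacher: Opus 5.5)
Your plan follows the paper's architecture. For (i) you use uniform softmax-derivative bounds on a bounded region. For (ii) you compare with $\bar{\boldsymbol{\theta}}=\Pi(\boldsymbol{\theta})$, represent the error as an integral of Jacobians along the normal segment, invoke the normal-curvature bound, and divide $d^2-d_+^2$ by $2d$, where $d=d(\boldsymbol{\theta})$ and $d_+=d(\boldsymbol{\theta}-\eta g)$.

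Part (i) is fine. The paper works on the convex set $\{(A,x):\|x\|<M_0+1\}$ with $A$ unrestricted, which makes $M_J,L_J$ independent of $\tau$. Your remark that the softmax derivatives are bounded independently of $A$ gives the same.

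The gap is in the explicit constants of (ii), and it comes from where you linearize. You expand at $\bar{\boldsymbol{\theta}}$ and then replace $\mathcal J_{\boldsymbol{\theta}}\Delta$ by $\mathcal J_{\bar{\boldsymbol{\theta}}}\Delta+E$, so two remainders enter: $\|R\|\le\tfrac{L_J}{2}d^2$ and $\|E\|\le L_Jd^2$. Set $a=\|\mathcal J_{\bar{\boldsymbol{\theta}}}\Delta\|$ and use $L_Jd^2\le a/4$. The sharpest your estimates give is then
\[
\langle g,\Delta\rangle\ge\tfrac{19}{32}a^2,\qquad \|g\|\le\tfrac98 M_Ja,
\]
so $\|g\|^2/\langle g,\Delta\rangle$ is only bounded by $\tfrac{81}{38}M_J^2>2M_J^2$.

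\begin{itemize}
\item The quadratic term is \emph{not} absorbed by half the linear term. You get $d^2-d_+^2\ge\tfrac{71}{128}\eta a^2$, or $\tfrac{13}{32}\eta a^2$ with your $\tfrac54$.
\item This still yields the factor $1-\eta\kappa/4$, but only because the true constant is $\tfrac{19}{32}$. With $c_1=\tfrac12$ and your $\tfrac54$ you would get $\tfrac{7}{32}<\tfrac14$.
\item For the step-length bound, your division argument gives at best $\eta\|g\|\le\tfrac{288}{71}\tfrac{M_J}{\sqrt\kappa}(d-d_+)$. With the constants you wrote it gives $\tfrac{80}{13}\tfrac{M_J}{\sqrt\kappa}(d-d_+)$.
\end{itemize}
Both exceed the claimed $4M_J/\sqrt\kappa$, so $r\le\sqrt\kappa/(4L_J)$ does not close the bookkeeping along your route. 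That constant matters downstream: it defines $\mathcal{A}_\tau$ and the rate in Theorem~\ref{thm:main}.

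The fix, which is what the paper does, is to linearize at $\boldsymbol{\theta}$.

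\begin{enumerate}
\item First transfer the curvature: $\|\mathcal J_{\boldsymbol{\theta}}\Delta\|\ge\sqrt\kappa\,d-L_Jd^2\ge\tfrac34\sqrt\kappa\,d$.
\item Write $e_{\boldsymbol{\theta}}=\mathcal J_{\boldsymbol{\theta}}\Delta+R'$ with $R'=\int_0^1(\mathcal J_{\bar{\boldsymbol{\theta}}+t\Delta}-\mathcal J_{\boldsymbol{\theta}})\Delta\,dt$. Now only one remainder appears, with $\|R'\|\le\tfrac{L_J}{2}d^2\le\tfrac16\|\mathcal J_{\boldsymbol{\theta}}\Delta\|$.
\item This gives $\langle g,\Delta\rangle\ge\tfrac56\|\mathcal J_{\boldsymbol{\theta}}\Delta\|^2\ge\tfrac{15}{32}\kappa d^2$ and $\|g\|^2\le\tfrac{49}{30}M_J^2\langle g,\Delta\rangle$.
\item With $\eta\le1/(2M_J^2)$ the absorption now genuinely holds, so $\eta\langle g,\Delta\rangle\le d^2-d_+^2\le 2d(d-d_+)$.
\item Combined with $\|g\|/\langle g,\Delta\rangle\le\tfrac{28}{15}M_J/(\sqrt\kappa\,d)$, this yields $\eta\|g\|\le\tfrac{56}{15}\tfrac{M_J}{\sqrt\kappa}(d-d_+)\le\tfrac{4M_J}{\sqrt\kappa}(d-d_+)$.
\end{enumerate}
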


\begin{proof}
(i) 
Write the parameter as $\boldsymbol{\theta}=(A,x)$, where $x$ collects
all parameter blocks other than $A$. Since
$\overline{\mathcal U}_\tau$ is compact, there exists $M_0<\infty$
such that $\|x\|\le M_0$ on $\overline{\mathcal U}_\tau$.
Choose $0<\rho_0<\min\{r_{\mathrm{geo}},1\}$ sufficiently small so that
$\mathcal T_{\rho_0}(\mathcal U_\tau)$ and the line segment joining
each point in this tube to its projection onto $\mathcal M_\tau$
are contained in
\begin{equation*}
\mathcal C
=
\left\{
(A,x):
A\in\mathbb R^{d\times d},\
\|x\|<M_0+1
\right\}.
\end{equation*}

Although $A$ is unrestricted in $\mathcal C$, it enters the model
through the softmax logits. The derivatives of the softmax are
uniformly bounded, the features and weights are bounded by
Assumption~\ref{assume}, and all remaining parameter blocks are
bounded on $\mathcal C$. Hence there exist constants $M_J>0$ and
$L_J>0$ such that, for all
$\boldsymbol{\theta},\boldsymbol{\theta}'
\in\mathcal T_{\rho_0}(\mathcal U_\tau)$,
\begin{equation}
\|\mathcal J_{\boldsymbol{\theta},n}\|_{\rm op}
\le M_J,
\qquad
\|\mathcal J_{\boldsymbol{\theta},n}
-
\mathcal J_{\boldsymbol{\theta}',n}\|_{\rm op}
\le
L_J
\|\boldsymbol{\theta}-\boldsymbol{\theta}'\|.
\label{eq:local-jacobian-bounds}
\end{equation}
Since $\kappa$ is the minimum curvature over unit normal directions
on the selected optimal patch, the first bound in
\eqref{eq:local-jacobian-bounds} also gives
$\kappa\le M_J^2$. For a fixed gauge domain \(\Omega\) and temperature-independent data bounds, the constants \(M_J\) and \(L_J\) can be chosen independently of \(\tau\). Indeed, the non-\(A\) coordinates of the realization patch do not depend on \(\tau\), while \(A\) enters the model only through the softmax logits, whose derivatives with respect to those logits are uniformly bounded.

(ii) 
Fix $\boldsymbol{\theta}\in\mathcal T_r(\mathcal U_\tau)$ and let
$\bar{\boldsymbol{\theta}}=\Pi(\boldsymbol{\theta})$ and
$\Delta=\boldsymbol{\theta}-\bar{\boldsymbol{\theta}}$.
By Lemma~\ref{lem:local-tubular-geometry},
$\Delta\in N_{\bar{\boldsymbol{\theta}}}\mathcal M_\tau$ and
$\|\Delta\|=d(\boldsymbol{\theta})$.
If $d(\boldsymbol{\theta})=0$, then
$\boldsymbol{\theta}\in\mathcal M_\tau$ and
$\nabla L_n(\boldsymbol{\theta})=0$, so the conclusions are immediate.
Hence assume $d(\boldsymbol{\theta})>0$.
Define
$\boldsymbol{\theta}^{+}
=
\boldsymbol{\theta}
-
\eta\nabla L_n(\boldsymbol{\theta})$
and $d_{+}:=d(\boldsymbol{\theta}^{+})$.
Since $\bar{\boldsymbol{\theta}}\in\mathcal M_\tau$ is a valid
comparison point in the definition of $d_{+}$,
\begin{align}
d_{+}^2\le\left\|\boldsymbol{\theta}-\eta\nabla L_n(\boldsymbol{\theta})-\bar{\boldsymbol{\theta}}\right\|^2=
d(\boldsymbol{\theta})^2-2\eta
\left\langle\nabla L_n(\boldsymbol{\theta}),\Delta
\right\rangle+\eta^2\|\nabla L_n(\boldsymbol{\theta})\|^2.
\label{eq:distance-expansion}
\end{align}
Thus, it is enough to show that the directional component
$\langle\nabla L_n(\boldsymbol{\theta}),\Delta\rangle$ is uniformly
positive and that the total gradient norm is controlled by this same
quantity. We establish these two bounds below.

For $\left\langle
\nabla L_n(\boldsymbol{\theta}),\Delta
\right\rangle$. Since $\bar{\boldsymbol{\theta}}\in\mathcal M_\tau$,
$F_{\bar{\boldsymbol{\theta}},n}=F_{\tau,n}$, and the fundamental
theorem of calculus gives
\begin{equation}
F_{\boldsymbol{\theta},n}-F_{\tau,n}
=
\int_0^1
\mathcal J_{\bar{\boldsymbol{\theta}}+t\Delta,n}[\Delta]
\,dt.
\label{eq:prediction-error-integral}
\end{equation}
Moreover, for every parameter direction $\mathbf U$,
\begin{equation}
\left\langle
\nabla L_n(\boldsymbol{\theta}),\mathbf U
\right\rangle
=
\left\langle
F_{\boldsymbol{\theta},n}-F_{\tau,n},
\mathcal J_{\boldsymbol{\theta},n}[\mathbf U]
\right\rangle_{L^2(\mathcal D_n)}.
\label{eq:gradient-identity}
\end{equation}

Because $\Delta$ is normal to $\mathcal M_\tau$,
Lemma~\ref{lem:uniform-normal-curvature} gives
$\left\|
\mathcal J_{\bar{\boldsymbol{\theta}},n}[\Delta]
\right\|_{L^2(\mathcal D_n)}
\ge
\sqrt{\kappa}\,d(\boldsymbol{\theta})$.
Using the Jacobian Lipschitz bound in
\eqref{eq:local-jacobian-bounds},
\begin{align*}
\left\|
\mathcal J_{\boldsymbol{\theta},n}[\Delta]
\right\|_{L^2(\mathcal D_n)}\ge\left\|
\mathcal J_{\bar{\boldsymbol{\theta}},n}[\Delta]\right\|_{L^2(\mathcal D_n)}-L_J
\|\boldsymbol{\theta}-\bar{\boldsymbol{\theta}}\|
\|\Delta\|\ge\sqrt{\kappa}\,d(\boldsymbol{\theta})-L_J d(\boldsymbol{\theta})^2.
\end{align*}
Since $d(\boldsymbol{\theta})<r$ and
$r\le\sqrt{\kappa}/(4L_J)$,
\begin{equation}
\left\|
\mathcal J_{\boldsymbol{\theta},n}[\Delta]
\right\|_{L^2(\mathcal D_n)}
\ge
\frac34\sqrt{\kappa}\,
d(\boldsymbol{\theta}).
\label{eq:current-normal-response}
\end{equation}
Using \eqref{eq:prediction-error-integral}, write
\begin{equation}
F_{\boldsymbol{\theta},n}-F_{\tau,n}=\mathcal J_{\boldsymbol{\theta},n}[\Delta]+R,
\label{eq:error-decomposition}
\end{equation}
where $R:=\int_0^1\left(\mathcal J_{\bar{\boldsymbol{\theta}}+t\Delta,n}-\mathcal J_{\boldsymbol{\theta},n}\right)[\Delta]\,dt$. By the Jacobian Lipschitz bound,
\begin{align}
\|R\|_{L^2(\mathcal D_n)}
\le\int_0^1 L_J(1-t)\|\Delta\|^2\,dt=
\frac{L_J}{2}d(\boldsymbol{\theta})^2
\le\frac18\sqrt{\kappa}\,d(\boldsymbol{\theta}).
\end{align}
Combining this with \eqref{eq:current-normal-response} gives $\|R\|_{L^2(\mathcal D_n)}\le
\frac16\|\mathcal J_{\boldsymbol{\theta},n}[\Delta]\|_{L^2(\mathcal D_n)}$.
Taking $\mathbf U=\Delta$ in
\eqref{eq:gradient-identity} and using
\eqref{eq:error-decomposition}, we have
\begin{align}
\left\langle
\nabla L_n(\boldsymbol{\theta}),\Delta
\right\rangle=\left\langle
\mathcal J_{\boldsymbol{\theta},n}[\Delta]+R,\mathcal J_{\boldsymbol{\theta},n}[\Delta]\right\rangle_{L^2(\mathcal D_n)}\ge
\frac56\|\mathcal J_{\boldsymbol{\theta},n}[\Delta]\|_{L^2(\mathcal D_n)}^2.
\label{eq:directional-gradient-pre}
\end{align}
Together with \eqref{eq:current-normal-response}, this yields
\begin{equation}
\left\langle
\nabla L_n(\boldsymbol{\theta}),\Delta
\right\rangle
\ge
\frac{15}{32}
\kappa d(\boldsymbol{\theta})^2.
\label{eq:directional-gradient-bound}
\end{equation}

For $\|\nabla L_n(\boldsymbol{\theta})\|$. By \eqref{eq:error-decomposition}, $\|F_{\boldsymbol{\theta},n}-F_{\tau,n}\|_{L^2(\mathcal D_n)}\le\frac76\|\mathcal J_{\boldsymbol{\theta},n}[\Delta]\|_{L^2(\mathcal D_n)}$. Using \eqref{eq:gradient-identity},
\eqref{eq:local-jacobian-bounds}, and
\eqref{eq:directional-gradient-pre}, we obtain
\begin{equation}
\|\nabla L_n(\boldsymbol{\theta})\|^2
\le
\frac{49}{30}M_J^2
\left\langle
\nabla L_n(\boldsymbol{\theta}),\Delta
\right\rangle.
\label{eq:gradient-norm-directional}
\end{equation}

We now return to the distance expansion
\eqref{eq:distance-expansion}. Using
\eqref{eq:gradient-norm-directional},
\begin{align}
d_{+}^2
&\le
d(\boldsymbol{\theta})^2
-
\eta
\left(
2-\frac{49}{30}\eta M_J^2
\right)
\left\langle
\nabla L_n(\boldsymbol{\theta}),\Delta
\right\rangle.
\label{eq:one-step-distance}
\end{align}
If $\eta\le1/(2M_J^2)$, then
$2-(49/30)\eta M_J^2\ge71/60>1$.
Therefore, by \eqref{eq:directional-gradient-bound},
\begin{align}
d_{+}^2\le d(\boldsymbol{\theta})^2
-\eta\left\langle\nabla L_n(\boldsymbol{\theta}),\Delta\right\rangle\le\left(1-\frac{15}{32}\eta\kappa\right)
d(\boldsymbol{\theta})^2\le\left(1-\frac{\eta\kappa}{4}\right)d(\boldsymbol{\theta})^2.
\label{eq:one-step-contraction}
\end{align}

Since $\kappa\le M_J^2$ and
$\eta\le1/(2M_J^2)$, the contraction factor satisfies
$1-\eta\kappa/4\in[7/8,1)$.

It remains to control the step length. From
\eqref{eq:one-step-distance} and the same step-size condition,
\begin{equation}
\eta
\left\langle
\nabla L_n(\boldsymbol{\theta}),\Delta
\right\rangle
\le
d(\boldsymbol{\theta})^2-d_{+}^2
\le
2d(\boldsymbol{\theta})
\left(
d(\boldsymbol{\theta})-d_{+}
\right),
\label{eq:distance-drop}
\end{equation}
where $d_{+}\le d(\boldsymbol{\theta})$ follows from \eqref{eq:one-step-contraction}.
Furthermore, by
\eqref{eq:directional-gradient-pre},
\eqref{eq:current-normal-response}, and the bound
$\|F_{\boldsymbol{\theta},n}-F_{\tau,n}\|_{L^2(\mathcal D_n)}
\le(7/6)\|\mathcal J_{\boldsymbol{\theta},n}[\Delta]\|_{L^2(\mathcal D_n)}$,
\begin{equation}
\frac{
\|\nabla L_n(\boldsymbol{\theta})\|
}{
\left\langle
\nabla L_n(\boldsymbol{\theta}),\Delta
\right\rangle
}
\le
\frac{28M_J}
{15\sqrt{\kappa}\,d(\boldsymbol{\theta})}.
\label{eq:gradient-direction-ratio}
\end{equation}
Combining \eqref{eq:distance-drop} and
\eqref{eq:gradient-direction-ratio} yields
\begin{align*}
\eta\|\nabla L_n(\boldsymbol{\theta})\|\le
\frac{56M_J}{15\sqrt{\kappa}}\left(d(\boldsymbol{\theta})-d_{+}\right)\le\frac{4M_J}{\sqrt{\kappa}}
\left[d(\boldsymbol{\theta})-d\left(
\boldsymbol{\theta}
-
\eta\nabla L_n(\boldsymbol{\theta})
\right)
\right].
\end{align*}
This proves both the one-step contraction and the step-length estimate.
\end{proof}

\subsection{The proof of Theorem~\ref{thm:main}}
Let
$q_\eta:=1-\eta\kappa/4$.
By Lemma~\ref{lem:one-step-contraction}, if
$0<\eta\le 1/(2M_J^2)$, then $q_\eta\in(0,1)$. 

\paragraph{Convergence of the full parameter sequence and its limit.}
Let $d_k:=d(\boldsymbol{\theta}_k)$. Assume $d_0>0$.
Define $\Lambda_0:=\frac{4M_J}{\sqrt{\kappa}}\,d_0$.
By the initialization condition,
$\Lambda_0<b_{\mathcal T}(\boldsymbol{\theta}_0)$, where
$b_{\mathcal T}(\boldsymbol{\theta})
:=\operatorname{dist}(\boldsymbol{\theta},\mathcal T^c)$.
Therefore the closed ball
$\overline B(\boldsymbol{\theta}_0,\Lambda_0)$ is contained in
$\mathcal T$.

We prove inductively that every iterate and every update segment remain
in this ball and that $d_k\le q_\eta^{k/2}d_0$.
Suppose $\boldsymbol{\theta}_k\in\mathcal T$. By
Lemma~\ref{lem:one-step-contraction},
\begin{equation}
d_{k+1}
\le
\sqrt{q_\eta}\,d_k,
\qquad
\|\boldsymbol{\theta}_{k+1}-\boldsymbol{\theta}_k\|
\le
\frac{4M_J}{\sqrt{\kappa}}
(d_k-d_{k+1}).
\label{eq:proof-one-step-estimates}
\end{equation}
The first inequality gives
$d_{k+1}\le q_\eta^{(k+1)/2}d_0$.

Now consider any point
$\boldsymbol{y}
=
\boldsymbol{\theta}_k
+s(\boldsymbol{\theta}_{k+1}-\boldsymbol{\theta}_k)$,
with $0\le s\le1$, on the new update segment. Using
\eqref{eq:proof-one-step-estimates} and the induction hypothesis,
\begin{align}
\|\boldsymbol{y}-\boldsymbol{\theta}_0\|
&\le
\sum_{j=0}^{k-1}
\|\boldsymbol{\theta}_{j+1}-\boldsymbol{\theta}_j\|
+
s\|\boldsymbol{\theta}_{k+1}-\boldsymbol{\theta}_k\|
\nonumber\\
&\le
\frac{4M_J}{\sqrt{\kappa}}
\sum_{j=0}^{k}(d_j-d_{j+1})
=
\frac{4M_J}{\sqrt{\kappa}}
(d_0-d_{k+1})
\le
\Lambda_0.
\label{eq:proof-segment}
\end{align}
Hence the entire update segment remains in
$\overline B(\boldsymbol{\theta}_0,\Lambda_0)\subset\mathcal T$.
Every iterate remains in
$\mathcal T$, and $d_k\le q_\eta^{k/2}d_0$ holds for all
$k\ge0$.

The decay $d_k\to0$ alone does not guarantee convergence of the full
parameter sequence, since the iterates could in principle drift along
the optimal manifold. The step-length estimate in
\eqref{eq:proof-one-step-estimates} rules out this possibility. For integers $\ell>k$, summing the step-length bound gives
\begin{align}
\sum_{j=k}^{\ell-1}
\|\boldsymbol{\theta}_{j+1}-\boldsymbol{\theta}_j\|
&\le
\frac{4M_J}{\sqrt{\kappa}}
(d_k-d_\ell)
\le
\frac{4M_J}{\sqrt{\kappa}}d_k
\le
\Lambda_0 q_\eta^{k/2}.
\label{eq:proof-tail}
\end{align}
The right-hand side tends to zero as $k\to\infty$. Hence
$(\boldsymbol{\theta}_k)$ is a Cauchy sequence. Since the parameter
space is finite-dimensional, there exists
$\boldsymbol{\theta}_\infty$ such that
$\boldsymbol{\theta}_k\to\boldsymbol{\theta}_\infty$.

Letting $\ell\to\infty$ in \eqref{eq:proof-tail} yields
\begin{equation}
\|\boldsymbol{\theta}_k-\boldsymbol{\theta}_\infty\|
\le
\Lambda_0 q_\eta^{k/2}.
\label{eq:proof-limit-rate}
\end{equation}
Together with $d_k\le q_\eta^{k/2}d_0$, this proves geometric
convergence both to the optimal manifold and to a single parameter
limit. The distance function
$d(\boldsymbol{\theta})
=
\operatorname{dist}(\boldsymbol{\theta},\mathcal M_\tau)$
is $1$-Lipschitz. Since
$\boldsymbol{\theta}_k\to\boldsymbol{\theta}_\infty$ and
$d_k\to0$, we obtain
$d(\boldsymbol{\theta}_\infty)=0$. Because
$\mathcal M_\tau$ is closed,
$\boldsymbol{\theta}_\infty\in\mathcal M_\tau$.

Moreover, all iterates lie in the closed ball
$\overline B(\boldsymbol{\theta}_0,\Lambda_0)$, so the limit also lies
in this ball and therefore in $\mathcal T$.
Since $\mathcal T=\mathcal T_r(\mathcal U_\tau)$, there exist
$\bar{\boldsymbol{\theta}}\in\mathcal U_\tau$ and
$\Delta\in N_{\bar{\boldsymbol{\theta}}}\mathcal M_\tau$,
with $\|\Delta\|<r$, such that
$\boldsymbol{\theta}_\infty=\bar{\boldsymbol{\theta}}+\Delta$.

By Lemma~\ref{lem:local-tubular-geometry},
$\bar{\boldsymbol{\theta}}$ is the unique nearest point of
$\boldsymbol{\theta}_\infty$ on $\mathcal M_\tau$. But
$\boldsymbol{\theta}_\infty\in\mathcal M_\tau$, so
$\boldsymbol{\theta}_\infty$ itself is a nearest point. Uniqueness
therefore implies $\Delta=0$ and
$\boldsymbol{\theta}_\infty=\bar{\boldsymbol{\theta}}\in\mathcal U_\tau$.
Hence there exist $(L_\infty,c_\infty)\in\boldsymbol{\Omega}$ such that
$\boldsymbol{\theta}_\infty=\bar{\boldsymbol{\theta}}_\tau(L_\infty,c_\infty)$.
By exact realization on $\mathcal M_\tau$,
$F_{\boldsymbol{\theta}_\infty,n}=F_{\tau,n}$, and therefore
$L_n(\boldsymbol{\theta}_\infty)=0
=
\min_{\boldsymbol{\theta}\in\Theta}L_n(\boldsymbol{\theta})$.

\paragraph{Iteration-complexity bound.}
By $d_k\le q_\eta^{k/2}d_0$,
$d(\boldsymbol{\theta}_K)\le q_\eta^{K/2}d_0$.
Since $q_\eta=1-\eta\kappa/4$ and $1-x\le e^{-x}$, $d(\boldsymbol{\theta}_K)\le\exp (
-\frac{\eta\kappa K}{8} )d_0$. Thus
$K\ge \frac{8}{\eta\kappa}\log(1/\varepsilon)$
is sufficient to guarantee
$d(\boldsymbol{\theta}_K)\le\varepsilon d_0$. Consequently, $K_{\rm GD}(\varepsilon;\boldsymbol{\theta}_0)
\le\left\lceil\frac{8}{\eta\kappa}\log\frac1\varepsilon\right\rceil$.

We prove the lower bound by considering a small perturbation along the
normal direction with the smallest local curvature. This direction is
the locally slowest contraction direction of gradient descent.

Choose an interior point
$\bar{\boldsymbol{\theta}}^\circ\in\mathcal U_\tau$ and define
\begin{equation}
\ell
:=
\lambda_{\min}
\left(
\nabla^2 L_n(\bar{\boldsymbol{\theta}}^\circ)
\big|_{N_{\bar{\boldsymbol{\theta}}^\circ}\mathcal M_\tau}
\right).
\label{eq:proof-local-ell}
\end{equation}
Thus, $\ell$ is the smallest Hessian curvature over the normal
directions at $\bar{\boldsymbol{\theta}}^\circ$.
By Lemma~\ref{lem:uniform-normal-curvature},
$\kappa\le\ell\le(C_\Omega/c_\Omega)\kappa$.
At an optimal parameter,
$\nabla^2 L_n(\bar{\boldsymbol{\theta}}^\circ)
=
\mathcal J_{\bar{\boldsymbol{\theta}}^\circ,n}^{*}
\mathcal J_{\bar{\boldsymbol{\theta}}^\circ,n}$.
Hence the local Jacobian bound gives $\ell\le M_J^2$.
Since the Hessian is symmetric and
$\operatorname{ker}(\nabla^2 L_n(\bar{\boldsymbol{\theta}}^\circ))
=
T_{\bar{\boldsymbol{\theta}}^\circ}\mathcal M_\tau$,
the normal space is invariant under the Hessian. Therefore, there
exists a unit vector
$\mathbf{U}^\circ\in N_{\bar{\boldsymbol{\theta}}^\circ}\mathcal M_\tau$
such that
\begin{equation}
\nabla^2 L_n(\bar{\boldsymbol{\theta}}^\circ)\mathbf{U}^\circ=\ell \mathbf{U}^\circ.
\label{eq:proof-normal-eigenvector}
\end{equation}
Thus, $\mathbf{U}^\circ$ is a normal direction with the smallest local
curvature.

For $t>0$, define
$\boldsymbol{\theta}_t
=
\bar{\boldsymbol{\theta}}^\circ+t\mathbf{U}^\circ$.
For sufficiently small $t$, Lemma~\ref{lem:local-tubular-geometry}
gives $d(\boldsymbol{\theta}_t)=t$.
We first verify that these perturbations are admissible initializations.
Let
$b^\circ:=b_{\mathcal T}(\bar{\boldsymbol{\theta}}^\circ)>0$.
Since the distance to $\mathcal T^c$ is $1$-Lipschitz,
$b_{\mathcal T}(\boldsymbol{\theta}_t)\ge b^\circ-t$.
Thus, if $t$ is sufficiently small so that
$t<r$, $t<b^\circ/2$, and
$t<b^\circ\sqrt{\kappa}/(8M_J)$, then
\begin{equation}
\frac{4M_J}{\sqrt{\kappa}}
d(\boldsymbol{\theta}_t)
<
b_{\mathcal T}(\boldsymbol{\theta}_t).
\label{eq:proof-admissible-init}
\end{equation}
Hence all sufficiently small nonzero
$\boldsymbol{\theta}_t$ are admissible initializations.

We next determine how fast gradient descent contracts this perturbation.
Define
$T_\eta(\boldsymbol{\theta})
=
\boldsymbol{\theta}
-
\eta\nabla L_n(\boldsymbol{\theta})$.
Since
$\nabla L_n(\bar{\boldsymbol{\theta}}^\circ)=0$,
the point $\bar{\boldsymbol{\theta}}^\circ$ is a fixed point of
$T_\eta$. Moreover,
$D T_\eta(\bar{\boldsymbol{\theta}}^\circ)
=
I-\eta\nabla^2 L_n(\bar{\boldsymbol{\theta}}^\circ)$.
Therefore, using
\eqref{eq:proof-normal-eigenvector}, for every fixed finite integer
$K$,
\begin{equation}
D(T_\eta^K)(\bar{\boldsymbol{\theta}}^\circ)\mathbf{U}^\circ
=
(1-\eta\ell)^K \mathbf{U}^\circ.
\label{eq:proof-K-derivative}
\end{equation}
Hence, to first order, the perturbation along $\mathbf{U}^\circ$ is reduced by
the factor $(1-\eta\ell)^K$ after $K$ gradient-descent steps.

We now make this local linearization precise. Fix $K$ and then let
$t$ tend to zero. Since $L_n$ is twice continuously differentiable
in the local regularity region, $T_\eta$ is continuously
differentiable there. For sufficiently small $t$, the first $K$
iterates remain in this region, and
\begin{equation}
T_\eta^K(\boldsymbol{\theta}_t)
=
\bar{\boldsymbol{\theta}}^\circ
+
t(1-\eta\ell)^K \mathbf{U}^\circ
+
R_K(t),
\qquad
\frac{\|R_K(t)\|}{t}
\longrightarrow
0
\quad
\text{as } t\downarrow0.
\label{eq:proof-linearization}
\end{equation}
Here $K$ is fixed before taking the limit $t\downarrow0$.
Since $\ell\le M_J^2$ and
$\eta\le1/(2M_J^2)$, we have
$0<\eta\ell\le1/2$, and therefore
$(1-\eta\ell)^K>0$.
For sufficiently small $t$,
Lemma~\ref{lem:local-tubular-geometry} gives
$d(\bar{\boldsymbol{\theta}}^\circ
+t(1-\eta\ell)^K \mathbf{U}^\circ)
=
t(1-\eta\ell)^K$.
Because the distance to $\mathcal M_\tau$ is $1$-Lipschitz,
\eqref{eq:proof-linearization} implies
\begin{equation}
\lim_{t\downarrow0}
\frac{
d(T_\eta^K(\boldsymbol{\theta}_t))
}{
d(\boldsymbol{\theta}_t)
}
=
(1-\eta\ell)^K.
\label{eq:proof-distance-linearization}
\end{equation}

Suppose that $K$ iterations guarantee
$d(T_\eta^K(\boldsymbol{\theta}_0))
\le
\varepsilon d(\boldsymbol{\theta}_0)$
for every admissible initialization.
Since all sufficiently small
$\boldsymbol{\theta}_t$ constructed above are admissible,
\eqref{eq:proof-distance-linearization} implies
$(1-\eta\ell)^K\le\varepsilon$.
Therefore,
$K\ge
\log(1/\varepsilon)/[-\log(1-\eta\ell)]$.
Since $0<\eta\ell\le1/2$ and
$-\log(1-x)\le2x$ for $0<x\le1/2$, we obtain
\begin{equation}
\sup_{\boldsymbol{\theta}^0\in\mathcal{A}_\tau} K_{\text{GD}}(\varepsilon;\theta^0)
\ge
\frac{1}{2\eta\ell}
\log\frac{1}{\varepsilon}
\ge
\frac{c_\Omega}{2C_\Omega}
\frac{1}{\eta\kappa}
\log\frac{1}{\varepsilon}.
\label{eq:proof-lower-complexity}
\end{equation}

\section{Verification for a Concrete Population Model}
\label{app:verification}

In this section, we provide simple sufficient conditions under which the richness condition in Assumption~\ref{assume} holds, showing that it is satisfied by a broad class of task distributions, including our experimental setting. We then quantify its dependence on the temperature parameter for the experimental distribution.

Let a task consist of an initial distribution $P_0$, transition
kernel $P$, feature table
$\Phi:\mathcal S\times\mathcal A\to\mathbb R^d$, reward table
$R(a,s')$, and an input weight $\mathbf{w}$. Given a task, a trajectory is
generated by a fixed behavior policy $\pi_b$, and
$\phi_i=\Phi(s_i,a_i)$,
$\psi_{i,a}=\Phi(s_{i+1},a)$, and
$r_{i+1}=R(a_i,s_{i+1})$.

\subsection{Qualitative verification of observable richness}\label{app:verification1}

\begin{theorem}[Continuous-population verification]
\label{thm:continuous-richness}
Under Assumption~\ref{assume}(i), assume that:

(i) the finite-dimensional random vector formed by $\mathbf{w}$, the entries of the feature $\Phi$, and the entries of the reward $R$ has a nonzero absolutely continuous component with respect to Lebesgue measure;

(ii) there exist distinct states $s_0,s_1$ and, when $n>1$, distinct actions $a_x,a_z$ such that, almost surely over tasks,
$P_0(s_0)>0$, 
$P(s_1\mid s_0,a_x)>0$, 
$P(s_1\mid s_1,a_z)>0$, 
and
$\pi_b(a_x\mid s_0)>0$, 
$\pi_b(a_z\mid s_1)>0$.
For $n=1$, only $P_0(s_0)>0$, $P(s_1\mid s_0,a_x)>0$, and $\pi_b(a_x\mid s_0)>0$ are required.

Then, for every fixed $n\ge1$ and $\tau>0$, Assumption~\ref{assume} (ii) holds.
\end{theorem}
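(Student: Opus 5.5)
The plan is to prove the equivalent form of Assumption~\ref{assume}(ii): if $\zeta=(E_C,e_q,e_-,E_-,e_+,E_+,E_A)$ satisfies $\Obs(\mathbf z)[\zeta]=0$ $\mathcal D_n$-a.s., then $\zeta=0$. The first step is to localize. By hypothesis (ii), with positive probability (uniformly over tasks) the trajectory is $s_0\xrightarrow{a_x}s_1\xrightarrow{a_z}s_1\to\cdots$; for $n>1$ we may continue taking $a_z$ at $s_1$, since $P(s_1\mid s_1,a_z)>0$ and $\pi_b(a_z\mid s_1)>0$. On this event, $\phi_0=\Phi(s_0,a_x)$ and $\phi_i=\Phi(s_1,a_z)$ for $i\ge1$. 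Every successor list is $\psi_{i,a}=\Phi(s_1,a)$, and the rewards are $R(a_x,s_1)$ and $R(a_z,s_1)$. Hence $\Obs(\mathbf z)[\zeta]$ becomes an explicit function $g_\zeta(\mathbf w,\Phi,R)$ of finitely many entries, and it is real-analytic, since softmax is analytic. Because the law of $(\mathbf w,\Phi,R)$ has a nonzero absolutely continuous component, $g_\zeta=0$ holds on a set of positive Lebesgue measure. An analytic function vanishing on such a set vanishes identically, so $g_\zeta\equiv0$ on all of $\R^d\times\R^{|\mathcal S|m d}\times\R^{m|\mathcal S|}$. If the behavior policy depends on $\mathbf w$ (for instance $\epsilon$-greedy), this only affects the conditioning event, not the formula on it, so the conclusion is unchanged.

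The second step extracts the coefficients from the identity $g_\zeta\equiv0$ by choosing free variables. Write $x=\Phi(s_0,a_x)$ and $z=\Phi(s_1,a_z)$; the other successor features $\Phi(s_1,a)$, $a\ne a_z$, are free as well, and for $n=1$ only $x$ appears as $\phi_0$. Set $y:=\sum_{i}\phi_i[\cdots]$. The key structural fact is that $\phi_0=x$ and the successor features are distinct free variables, and that $x$ does not appear in $\bar\psi^\tau$ or $\Sigma^\tau$ (here $s_0\ne s_1$). We then peel off terms in order.

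(a) Differentiate twice in $x$ at fixed other variables. The $x$-dependence is $E_C x\,\delta_0+x[x^\top(e_-+E_-\mathbf w)+\dots]$, where $\delta_0=r_1-\mathbf w^\top x+\gamma\mathbf w^\top\bar\psi$. Comparing quadratic parts in $x$ gives $-E_C x\,\mathbf w^\top x+x\,x^\top(e_-+E_-\mathbf w)=0$ for all $x,\mathbf w$. Polarizing shows that $E_C$ is a multiple of $I$ tied to the other terms.

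(b) Use the linear dependence on the free reward $R(a_x,s_1)$. This yields $E_C x+x\,e_q^\top\mathbf w=0$ for all $x,\mathbf w$, hence $E_C=0$ and $e_q=0$. Substituting back into (a) then gives $e_-=0$ and $E_-=0$.

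(c) With these removed, the remaining part of the transition-0 contribution is $x\,[\bar\psi^\top(e_++E_+\mathbf w)+\mathbf w^\top\Sigma^\tau E_A\mathbf w]$. Isolate it by scaling $x$ relative to $z$; for $n=1$ it is the only term. The bracket must therefore vanish identically as a function of the free successor features and $\mathbf w$.

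(d) Take all successor features equal to $v$. Then $\Sigma^\tau=0$ and $\bar\psi=v$, so $v^\top(e_++E_+\mathbf w)=0$ for all $v,\mathbf w$, which gives $e_+=0$ and $E_+=0$.

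(e) Finally take two distinct candidates $u,0$ and the rest zero, with $m\ge2$. Then $\Sigma^\tau=p(1-p)\,(u-\cdot)(u-\cdot)^\top$-type rank-one matrices appear, namely a positive multiple of $u u^\top$, with $p\in(0,1)$. This yields $\mathbf w^\top u\,u^\top E_A\mathbf w=0$ for all $u,\mathbf w$. Polarizing in $u$ gives $\mathbf w\,\mathbf w^\top E_A^\top$ symmetrized to zero for all $\mathbf w$, i.e., $(E_A\mathbf w)\mathbf w^\top+\mathbf w(E_A\mathbf w)^\top=0$, so $E_A\mathbf w=0$ and $E_A=0$.

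The main obstacle is step (c): cleanly separating the transition-0 term from the $i\ge1$ terms, which share the same successor set at $s_1$. I will handle this by using that $x$ appears only as an outer factor $x$ times a bracket independent of $x$, apart from through $\delta_0$, which was already neutralized. After (a)–(b), the remaining identity is $x\,\beta+z\,\beta'\,(n-1)=0$, where $\beta,\beta'$ do not depend on $x$. Choosing $x$ linearly independent of $z$ (possible, since $d\ge1$ with the caveat for $d=1$, where I would instead differentiate in $x$) forces $\beta=0$. A second technical point is the analytic-continuation step: I will state it as the standard fact that a nonzero real-analytic function on a connected domain has a Lebesgue-null zero set.
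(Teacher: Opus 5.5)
Your proposal is correct and is essentially the paper's proof. It uses the same calibration path $s_0\xrightarrow{a_x}s_1\xrightarrow{a_z}s_1\cdots$, the same analytic continuation from a set of positive Lebesgue measure, and the same order of peeling: $E_C,e_q$ from the reward coefficient, then $e_-,E_-$, then $e_+,E_+$, then $E_A$ via the rank-one covariance $p(1-p)uu^\top$. The differences are only tactical. The paper removes the later transitions by setting $\Phi(s_1,a_z)=0$, whereas you separate them through the $x$-dependence. It uses a scaling limit and a two-hyperplane argument where you use identical candidates and polarization.

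One small correction: hypothesis (ii) gives almost-sure positivity of the path probability $\Pr(T_n=\omega_n\mid\Xi)$ given the task variables $\Xi$, not the uniform lower bound you claim. That still suffices. The identity $\mathbb E\bigl[\|g_\zeta\|_2^2\,\Pr(T_n=\omega_n\mid\Xi)\bigr]=0$, with an almost surely positive weight, forces $g_\zeta=0$ almost surely under the law of $(\mathbf w,\Phi,R)$, hence on a set of positive Lebesgue measure.
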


\begin{proof}
Fix $n\ge1$ and $\tau>0$. Recall that
$\mathsf H_{\tau,n}=\mathbb E_{\mathbf{z}}[\mathcal B_{\tau,n}(\mathbf z)^*
\mathcal B_{\tau,n}(\mathbf z)]$. Hence, for every coefficient vector $\zeta$,
$\langle \zeta,\mathsf H_{\tau,n}\zeta\rangle
=\mathbb E [\|\mathcal B_{\tau,n}(\mathbf z)[\zeta]\|^2 ]$.  Therefore, it is enough to prove that $\mathcal B_{\tau,n}(\mathbf z)[\zeta]=0$ almost surely implies $\zeta=0$. Let $T_n$ denote the random discrete trajectory and $\mathcal{E}_n=\{T_n=w_n\}$ denote the calibration family. Since
$\mathbb E|\mathcal B_{\tau,n}(\mathbf z)[\zeta]|^2
\ge
\mathbb E\left[|\mathcal B_{\tau,n}(\mathbf z)[\zeta]|^2\mathbf 1_{\mathcal{E}_n}\right]$, 
it suffices to show that every $\zeta\neq0$ is detected on ${\mathcal{E}_n}$, i.e.,
$\mathbb E\left[|\mathcal B_{\tau,n}(\mathbf z)[\zeta]|^2\mathbf 1_{\mathcal{E}_n}\right]>0$.

Consider the deterministic calibration path $\omega_n$ consisting of $s_0\xrightarrow{a_x}s_1$, followed, when $n>1$, by $n-1$ repetitions of $s_1\xrightarrow{a_z}s_1$. For $n=1$, only the first transition is retained. Let $X_n$ collect the continuous coordinates
$W=\mathbf w$, $Y=\Phi(s_0,a_x)$, $\Psi_a=\Phi(s_1,a)$, $a\in[m]$, together with $R_0=R(a_x,s_1)$ and, when $n>1$,
$R_1=R(a_z,s_1)$. Define the subprobability measure $\nu_n(E):=\mathbb P\left(X_n\in E,\,T_n=\omega_n\right)$.

The calibration path $\omega_n$ is not a null event. Let $\mu$
denote the task distribution, and define $p_{\omega_n}(\Xi)
:=
\Pr(T_n=\omega_n\mid\Xi)$, 
where $\Xi$ denotes the sampled task variables. By assumption (ii),
every factor in $p_{\omega_n}(\Xi)$ is strictly positive almost
surely. Hence $p_{\omega_n}(\Xi)>0$ for $\mu$-almost every task, and
$\Pr(T_n=\omega_n)
=
\mathbb{E}_{\mu}[p_{\omega_n}(\Xi)]
>0$.  Let $\mu_X$ denote the marginal distribution of $X_n$, and define $\ell_n(x):=\mathbb{E}_{\mu}
\bigl[p_{\omega_n}(\Xi)\mid X_n=x\bigr]$.
Since $p_{\omega_n}(\Xi)>0$ almost surely, we have
$\ell_n(x)>0$ for $\mu_X$-almost every $x$. By the tower property,
for every Borel set $E$, $\nu_n(E)
=\Pr(X_n\in E,\ T_n=\omega_n)=\int_E \ell_n(x)\,\mu_X(\dif x)$.
By assumption (i), the joint distribution of $(\mathbf{w},\Phi,R)$ has a
nonzero absolutely continuous component. Since $X_n$ is a
coordinate subvector of $(\mathbf{w},\Phi,R)$, its marginal distribution
$\mu_X$ also has a nonzero absolutely continuous component,
denoted by $\mu_X^{\mathrm{ac}}$. The measure 
$\widetilde{\nu}_n(\dif x)
:=
\ell_n(x)\,\mu_X^{\mathrm{ac}}(\dif x)$
is absolutely continuous with respect to Lebesgue measure.
Moreover, it is nonzero because
$\mu_X^{\mathrm{ac}}\ne0$ and $\ell_n>0$
$\mu_X^{\mathrm{ac}}$-almost everywhere. Since
$\mu_X^{\mathrm{ac}}\le\mu_X$, we also have
$\widetilde{\nu}_n\le\nu_n$. Therefore, $\nu_n$ has a nonzero
absolutely continuous component.

Suppose that
$\mathcal B_{\tau,n}(\mathbf z)[\zeta]=0$ almost surely under the joint
task-trajectory distribution. Restricting to the event $T_n=\omega_n$ gives $\mathcal B_{\tau,n}\bigl(\mathbf z_{\omega_n}(x)\bigr)[\zeta]
=0$ for $\nu_n$-almost every $x$. 
Each component of the left-hand side is real analytic in the
continuous coordinates $x$, since it is formed from finite sums,
products, exponentials, and divisions by strictly positive softmax
denominators. Because $\nu_n$ has a nonzero absolutely continuous
component, each component vanishes on a set of positive Lebesgue
measure. A nonzero real-analytic scalar function on $\mathbb R^q$
has a Lebesgue-null zero set. Hence every component must vanish
identically. Therefore, $\mathcal B_{\tau,n}(\mathbf z)[\zeta]=0$ implies
$$\mathcal B_{\tau,n}(\mathbf z_{\omega_n}(x))[\zeta]=0, \text{ for all } x.$$
We now use this identity to identify the
coefficient blocks in $\zeta$.

Let $a_x=1$, set $Y=y$, $\Psi_1=x$, and
$\Psi_a=0$ for every $a\neq1$. Since $a_z\neq a_x$ when $n>1$,
all later current-state features on the calibration path vanish.
Writing $r=R_0$, 
$n\mathcal B_{\tau,n}[\zeta]=0$ reduces to
\begin{align*}
0=E_Cy\bigl(r-\mathbf w^\top y+\gamma \mathbf w^\top\bar\psi\bigr)+y\Bigl[
y^\top(e_-+E_-\mathbf w)+\bar\psi^\top(e_++E_+\mathbf w)+r\,e_q^\top \mathbf w+\mathbf w^\top\Sigma E_A\mathbf w\Bigr],
\end{align*}
for every $x,y,\mathbf w\in\mathbb R^d$ and $r\in\mathbb R$. 

The coefficient of $r$ is
$[E_C+(e_q^\top\mathbf w)I_d]y$. Setting $\mathbf w=0$ and varying
$y$ gives $E_C=0$. The remaining identity
$(e_q^\top\mathbf w)y=0$ for every $\mathbf w,y$ then gives
$e_q=0$. After substituting $E_C=e_q=0$, the scalar multiplying $y$ must vanish for every $y\neq0$. Its
dependence on $y$ is affine, so its linear coefficient satisfies
$e_-+E_-\mathbf w=0$ for every $\mathbf w$. Setting $\mathbf w=0$
gives $e_-=0$, and varying $\mathbf w$ then yields $E_-=0$.
The remaining identity is
\begin{equation}
\bar\psi^\top(e_++E_+\mathbf w)
+
\mathbf w^\top\Sigma E_A\mathbf w
=
0.
\label{eq:E-successor-identity}
\end{equation}

For the candidate feature set $(x,0,\ldots,0)$, define $p=\frac{\exp(\mathbf w^\top x/\tau)}
{\exp(\mathbf w^\top x/\tau)+m-1}$. Then $\bar\psi=px$ and $\Sigma=p(1-p)xx^\top$. Since $m\ge2$, we have $0<p<1$.
Dividing \eqref{eq:E-successor-identity} by $p$ gives
\begin{equation}
x^\top(e_++E_+\mathbf w)
+
(1-p)(\mathbf w^\top x)(x^\top E_A\mathbf w)
=
0.
\label{eq:E-successor-reduced}
\end{equation}

Set $x=tv$ with $t\neq0$ and divide
\eqref{eq:E-successor-reduced} by $t$. The second term is $O(t)$ as
$t\to0$, while the first term becomes
$v^\top(e_++E_+\mathbf w)$. Letting $t\to0$ gives
$v^\top(e_++E_+\mathbf w)=0$ for every $v,\mathbf w$. Hence
$e_++E_+\mathbf w=0$ for every $\mathbf w$, and therefore
$e_+=0$ and $E_+=0$.

Equation~\eqref{eq:E-successor-reduced} now reduces to
\begin{equation}
(\mathbf w^\top x)(x^\top E_A\mathbf w)
=
0
\qquad
\text{for every }x,\mathbf w.
\label{eq:E-EA}
\end{equation}
Suppose, for contradiction, that $E_A\mathbf w\neq0$ for some
$\mathbf w$. Then $\mathbf w\neq0$, and the two proper hyperplanes
$\{x:\mathbf w^\top x=0\}$ and
$\{x:x^\top E_A\mathbf w=0\}$ cannot cover $\mathbb R^d$. Hence
there exists $x$ for which both factors in \eqref{eq:E-EA} are
nonzero, a contradiction. Therefore $E_A\mathbf w=0$ for every
$\mathbf w$, and thus $E_A=0$.
We have shown that all seven coefficient blocks vanish: $E_C=e_q=e_-=E_-=e_+=E_+=E_A=0$. Hence $\zeta=0$. 
\end{proof}

\begin{corollary}[Verification under the experimental generator]
\label{cor:experimental-richness}
Let $N_s\ge2$ and $m\ge2$. Suppose that $P_0$ and all transition
rows are mutually independent
$\operatorname{Dirichlet}(1,\ldots,1)$ random vectors, independent
of $(\mathbf w,\Phi,R)$, whose scalar entries are mutually independent
$\operatorname{Unif}(-1,1)$ random variables. Let the behavior policy
be $\epsilon_b$-greedy with uniform exploration for some
$0<\epsilon_b\le1$.
Then the hypotheses of
Theorem~\ref{thm:continuous-richness} are satisfied. Consequently, for
every fixed $n\ge1$ and $\tau>0$, Assumption~\ref{assume} holds.
\end{corollary}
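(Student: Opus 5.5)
The plan is to check the hypotheses of Theorem~\ref{thm:continuous-richness} one at a time and then invoke it. Part (ii) of Assumption~\ref{assume} will then follow, and part (i) will be checked directly. The independence of $\mathcal D_n$ from $\boldsymbol{\theta}$ is automatic: the task, trajectory, and input weight are generated without reference to the Transformer parameters.

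\textbf{Boundedness.} Every entry of $\mathbf w$, $\Phi$, and $R$ lies in $[-1,1]$. This gives the almost-sure bounds
\[
\|\mathbf w\|_2\le\sqrt d,\qquad \|\Phi(s,a)\|_2\le\sqrt d,\qquad |R(a,s')|\le1 .
\]
Every $\phi_i$, $\psi_{i,a}$, and $r_{i+1}$ is an entry or row of these tables, so we may take $B_w=B_\phi=\sqrt d$ and $B_r=1$.

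\textbf{Hypothesis (i).} The vector formed by $\mathbf w$ and the entries of $\Phi$ and $R$ has independent $\operatorname{Unif}(-1,1)$ coordinates. Its joint law is therefore the product density $2^{-q}\mathbf 1_{[-1,1]^q}$, where $q=d+N_smd+mN_s$. This law is absolutely continuous with respect to Lebesgue measure, so it trivially has a nonzero absolutely continuous component.

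\textbf{Hypothesis (ii).} Since $N_s\ge2$ and $m\ge2$, we may pick any two distinct states $s_0,s_1$ and any two distinct actions $a_x,a_z$.
\begin{itemize}
\item A $\operatorname{Dirichlet}(1,\ldots,1)$ vector is uniform on the open simplex up to a null boundary set. Hence each of its coordinates is almost surely strictly positive, and so $P_0(s_0)>0$, $P(s_1\mid s_0,a_x)>0$, and $P(s_1\mid s_1,a_z)>0$ almost surely. These are finitely many conditions, and a finite union of null events is null.
\item Under $\epsilon_b$-greedy behavior with uniform exploration, every action receives probability at least $\epsilon_b/m>0$ at every state, whatever the greedy choice is. In particular $\pi_b(a_x\mid s_0)>0$ and $\pi_b(a_z\mid s_1)>0$ deterministically.
\end{itemize}
For $n=1$ only the subset of these conditions required by the theorem is needed.

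\textbf{Conclusion.} With both hypotheses verified, Theorem~\ref{thm:continuous-richness} gives Assumption~\ref{assume}(ii) for every $n\ge1$ and $\tau>0$. Combined with the boundedness check, Assumption~\ref{assume} holds in full.

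\textbf{Main subtlety.} The behavior policy depends on $\mathbf w$ and $\Phi$ through the greedy action, so the trajectory law is coupled with the continuous variables. Theorem~\ref{thm:continuous-richness} only requires the positivity conditions to hold almost surely, and the uniform lower bound $\epsilon_b/m$ removes any dependence on ties or on the greedy choice. I would state this explicitly. I would also note that the class $\Pi$ of behavior policies is covered, because the bound holds uniformly over all $\epsilon_b$-greedy policies.
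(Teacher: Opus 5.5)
Your proposal is correct and follows the paper's proof essentially step for step: absolute continuity of the product-uniform law of $(\mathbf w,\Phi,R)$ gives hypothesis (i), almost-sure positivity of $\operatorname{Dirichlet}(1,\ldots,1)$ coordinates together with the exploration floor $\pi_b(a\mid s)\ge\epsilon_b/m$ gives hypothesis (ii), and Theorem~\ref{thm:continuous-richness} then finishes. Your explicit checks of the boundedness part of Assumption~\ref{assume} (with $B_w=B_\phi=\sqrt d$, $B_r=1$) and of the $\boldsymbol{\theta}$-independence of $\mathcal D_n$ are a small improvement: the theorem is stated under Assumption~\ref{assume}(i), and the paper leaves that verification implicit.
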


\begin{proof}
We verify the two conditions of
Theorem~\ref{thm:continuous-richness}.
First, collect all scalar coordinates of $\mathbf w$, $\Phi$, and
$R$ into a finite-dimensional random vector
$X=(\mathbf w,\operatorname{vec}(\Phi),\operatorname{vec}(R))
\in\mathbb R^q$.
By assumption, the coordinates of $X$ are mutually independent
$\operatorname{Unif}(-1,1)$ random variables. Hence $X$ has density
$f_X(x)=2^{-q}\mathbf 1_{(-1,1)^q}(x)$
with respect to $q$-dimensional Lebesgue measure. In particular, the
law of $X$ is absolutely continuous, with a nonzero density on the
open set $(-1,1)^q$. Thus condition~(i) of
Theorem~\ref{thm:continuous-richness} holds.

We next verify condition~(ii). A
$\operatorname{Dirichlet}(1,\ldots,1)$ random vector is uniformly
distributed on the probability simplex and lies in its relative
interior almost surely. Therefore every coordinate is strictly positive
almost surely. In particular, for any two distinct states $s_0,s_1$, $P_0(s_0)>0$ a.s.
and, for every action $a$, $P(s_1\mid s_0,a)>0$, $P(s_1\mid s_1,a)>0$ a.s.  

It remains to check the behavior-policy probabilities. Because the
behavior policy is \(\epsilon_b\)-greedy with uniform exploration,
every action receives at least its exploration probability: 
$\pi_b(a\mid s)\ge\frac{\epsilon_b}{m}
>0$ for every $(s,a)$, independently of which action is greedy and of how ties are broken.

Hence, when $n>1$, choose any distinct states $s_0,s_1$ and any
distinct actions $a_x,a_z$. Then, almost surely, $P_0(s_0)>0$, $P(s_1\mid s_0,a_x)>0$, $P(s_1\mid s_1,a_z)>0$, 
and $\pi_b(a_x\mid s_0)>0$,
$\pi_b(a_z\mid s_1)>0$. 
Thus condition~(ii) holds. For $n=1$, only the first transition is
needed, and the same argument gives
$P_0(s_0)>0$, 
$P(s_1\mid s_0,a_x)>0$, 
$\pi_b(a_x\mid s_0)>0$ a.s. 
Therefore all hypotheses of
Theorem~\ref{thm:continuous-richness} are satisfied. Applying that
theorem yields Assumption~\ref{assume} for every fixed $n\ge1$ and
$\tau>0$.
\end{proof}

\subsection{Effect of Temperature on Convergence Rate}
In this section, we establish the lower bound on
$\kappa(\tau)$ and then construct a matching normal
direction for the upper bound. By the normal-curvature comparison
established in Lemma~\ref{lem:uniform-normal-curvature}, it is sufficient for the lower bound to control the smallest eigenvalue
$\chi_{\tau,n}:=\lambda_{\min}(\mathsf H_{\tau,n})$ of the
coefficient-space Gram matrix. Equivalently, we need to prove a
uniform lower bound of $\mathbb E_{\mathcal D_n}\left|
\mathcal B_{\tau,n}(z)[\zeta]
\right|_2^2$
for every coefficient direction $\zeta$.

Lemma~\ref{lem:calibration-reduction} further reduces this task to the
reference law $U_n$ on the calibration path. Hence it suffices to
establish the corresponding coercivity bound for
$\mathbb E_{U_n}|\mathcal B_{\tau,n}(z_{\omega_n})[\zeta]|_2^2$.
The only coefficient block whose response degenerates is the attention-score block in the first layer of Transformer. We therefore first isolate
the successor-value and attention contribution
$K_\tau
=\bar\Psi_\tau^\top(e_++E_+W)+W^\top\Sigma_\tau E_AW$
and derive a quantitative lower bound for
$\mathbb E_{U_n}K_\tau^2$, which shows in Lemma \ref{lem:attention-coercivity}. We then combine this estimate
with orthogonal projections controlling the remaining coefficient
blocks $(E_C,e_q,e_-,E_-)$, which yields
the lower bound of $\kappa(\tau)$.

\begin{lemma}[Calibration-path reduction]
\label{lem:calibration-reduction}
Under the experimental generator, let \(\omega_n\) be the calibration
path used in Theorem~\ref{thm:continuous-richness}, and let \(U_n\)
denote the product-uniform law of the continuous coordinates appearing
along this path. Then, for every coefficient direction \(\zeta\),
\[
\mathbb E_{\mathcal D_n}
\left\|
\mathcal B_{\tau,n}(\mathbf{z})[\zeta]
\right\|_2^2
\ge
p_n\,
\mathbb E_{U_n}
\left\|
\mathcal B_{\tau,n}(\mathbf{z}_{\omega_n})[\zeta]
\right\|_2^2,
\]
where
\[
p_n
=
\frac{p_{\min}^n}{N_s^2}
\frac{\Gamma(N_s)\Gamma(n)}
{\Gamma(N_s+n-1)}
>0,
\qquad
p_{\min}:=\frac{\epsilon_b}{m}.
\]
\end{lemma}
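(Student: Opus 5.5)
The plan is to restrict the expectation to the calibration event $\{T_n=\omega_n\}$ and then compare the conditional law of the continuous coordinates on that event with the uniform reference law $U_n$. Since $\|\mathcal B_{\tau,n}(\mathbf z)[\zeta]\|_2^2\ge0$,
\[
\mathbb E_{\mathcal D_n}\|\mathcal B_{\tau,n}(\mathbf z)[\zeta]\|_2^2\ge \mathbb E\bigl[\|\mathcal B_{\tau,n}(\mathbf z_{\omega_n}(X_n))[\zeta]\|_2^2\,\mathbf 1\{T_n=\omega_n\}\bigr].
\]
On this event the context is the deterministic function $\mathbf z_{\omega_n}(X_n)$ of the continuous coordinates $X_n=(W,Y,\Psi_1,\ldots,\Psi_m,R_0,R_1)$, as in the proof of Theorem~\ref{thm:continuous-richness}. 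By the tower property the right-hand side equals $\int \|\mathcal B_{\tau,n}(\mathbf z_{\omega_n}(x))[\zeta]\|_2^2\,\ell_n(x)\,\mu_X(\dif x)$, where $\ell_n(x)=\mathbb E[p_{\omega_n}(\Xi)\mid X_n=x]$. Under the experimental generator, $\mu_X$ is exactly the product-uniform law $U_n$: these coordinates are distinct independent $\operatorname{Unif}(-1,1)$ entries, because $s_0\ne s_1$ and $a_x\ne a_z$ make the feature and reward entries distinct. It therefore suffices to prove the pointwise bound $\ell_n(x)\ge p_n$.

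The lower bound on $\ell_n$ comes from factorizing the path probability. Given the task,
\[
p_{\omega_n}(\Xi)=P_0(s_0)\,\pi_b(a_x\mid s_0)\,P(s_1\mid s_0,a_x)\prod_{k=1}^{n-1}\pi_b(a_z\mid s_1)\,P(s_1\mid s_1,a_z).
\]
Each behavior factor is at least $p_{\min}=\epsilon_b/m$ deterministically, whatever the greedy action and tie-breaking. This gives $p_{\omega_n}\ge p_{\min}^n\,P_0(s_0)P(s_1\mid s_0,a_x)P(s_1\mid s_1,a_z)^{n-1}$. The remaining Dirichlet factors are independent of $(\mathbf w,\Phi,R)$, so conditioning on $X_n=x$ leaves their expectation unchanged. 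They are also mutually independent because they come from distinct rows. For a $\operatorname{Dirichlet}(1,\ldots,1)$ vector on $N_s$ states, one coordinate is $\operatorname{Beta}(1,N_s-1)$, so $\mathbb E P_0(s_0)=\mathbb E P(s_1\mid s_0,a_x)=1/N_s$ and
\[
\mathbb E\,P(s_1\mid s_1,a_z)^{n-1}=\frac{\Gamma(N_s)\Gamma(n)}{\Gamma(N_s+n-1)}.
\]
Multiplying these gives $\ell_n(x)\ge p_n$ for every $x$. For $n=1$ the product is empty and the Gamma ratio equals $1$, so the formula covers that case consistently.

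The one delicate point is the conditioning step, which needs $\ell_n(x)\ge p_n$ uniformly rather than merely a positive average. The behavior factors depend on $\mathbf w$ and $\Phi$ through the greedy action, so they are genuinely correlated with $X_n$. This causes no difficulty because they are replaced by the deterministic floor $p_{\min}$ before any expectation is taken. The transition factors are independent of $X_n$, so their expectation under conditioning is exactly the unconditional Beta moment computed above. With $\ell_n\ge p_n$ in hand, the integral is bounded below by $p_n\,\mathbb E_{U_n}\|\mathcal B_{\tau,n}(\mathbf z_{\omega_n})[\zeta]\|_2^2$, which is the claim.
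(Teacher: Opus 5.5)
Your proposal is correct and follows the same route as the paper. You restrict to the event $\{T_n=\omega_n\}$, floor each behavior-policy factor by $p_{\min}=\epsilon_b/m$, and use the independence of $Q_n=P_0(s_0)P(s_1\mid s_0,a_x)P(s_1\mid s_1,a_z)^{n-1}$ from $(\mathbf w,\Phi,R)$ together with the $\operatorname{Beta}(1,N_s-1)$ moment to obtain $p_n$. Your pointwise bound $\ell_n(x)\ge p_n$ is the conditional form of the paper's step $p_{\min}^n\mathbb E[f\,Q_n]=p_{\min}^n\,\mathbb E[f]\,\mathbb E[Q_n]$, and your remark that the relevant feature and reward entries are distinct, so their law is exactly $U_n$, states explicitly what the paper uses implicitly.
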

\begin{proof}
    We use the same calibration path as in Theorem~\ref{thm:continuous-richness}, $s_0\xrightarrow{a_x}s_1$, $s_1\xrightarrow{a_z}s_1$  for the remaining $n-1$ transitions, with the second part omitted when $n=1$. Let $U_n$ denote the product-uniform law of the continuous coordinates appearing along this path, and let $\mathbf{z}_{\omega_n}$ be the corresponding context.

Because the behavior policy selects every action with probability at least $p_{\min}$, the conditional probability of following the prescribed action sequence is at least $p_{\min}^n$. The remaining state-transition contribution is 
$$
Q_n = P_0(s_0) P(s_1\mid s_0,a_x) P(s_1\mid s_1,a_z)^{n-1}, 
$$
where the final factor is omitted for $n=1$. Hence the probability of the calibration path, conditional on the task, is bounded below by $p_{\min}^nQ_n$.  
Since the initial-state probability \(P_0(s_0)\) and the transition probability \(P(s_1\mid s_0,a_x)\) each have mean \(1/N_s\). The repeated self-transition probability \(P(s_1\mid s_1,a_z)\) has the \(\operatorname{Beta}(1,N_s-1)\) distribution, so
\[
\mathbb E Q_n =\mathbb E  P_0(s_0)\mathbb E  P(s_1\mid s_0,a_x) \mathbb E P(s_1\mid s_1,a_z)^{n-1}=
\frac{1}{N_s^2}
\frac{\Gamma(N_s)\Gamma(n)}
{\Gamma(N_s+n-1)}.
\] 

Under the experimental generator, the environment variables in \(Q_n\) are independent of the selected continuous coordinates $(\mathbf{w},\Phi,R)$. Therefore, for every nonnegative measurable function \(f\),
\begin{align*}
\mathbb{E}_{\mathcal{D}_n} f(\mathbf{Z}) &\ge \mathbb{E}_{\mathcal{D}_n}\left[f(\mathbf{Z})\mathbf{1}_{\{T_n=\omega_n\}}\right] = \mathbb{E}\left[f(\mathbf{z}_{\omega_n}(X_n))\mathbb{P}(T_n = \omega_n \mid \Xi)\right] \\
&\ge p_{\min}^n \mathbb{E}\left[f(\mathbf{z}_{\omega_n}(X_n))Q_n\right]=p_{\min}^n\mathbb{E}\left[f(\mathbf{z}_{\omega_n}(X_n))\right]\mathbb{E}[Q_n]\ge p_n \mathbb E_{U_n}f(\mathbf{z}_{\omega_n}).
\end{align*}
where $p_n
=
\frac{p_{\min}^n}{N_s^2}
\frac{\Gamma(N_s)\Gamma(n)}
{\Gamma(N_s+n-1)}
>0$.  Let $f(\mathbf{z}) = \|\mathcal{B}_{\tau,n}(\mathbf{z})[\zeta]\|^2$, we have the desired result.
\end{proof}

Let $\zeta=(E_C,e_q,e-,E_-,e_+,E_+,E_A)$ 
be the observable coefficient direction induced by $\mathbf{U}$. We decompose the attention coefficient $E_A$ as
$D:=E_A=D_0+aI_d$, 
$a:=\frac{\operatorname{tr}(D)}{d}$, $\operatorname{tr}(D_0)=0$. 
We first isolate the part of the response associated with the successor-feature and attention coefficients. Under the reference law $U_n$, let $W$ denote the weight vector and $\Psi_1,\ldots,\Psi_m$ the successor features. Their contribution is summarized by
$K_\tau
:=
\bar{\Psi}\tau^\top(e^+ + E_+W)
+
W^\top\Sigma_\tau D W$.

Lemma \ref{lem:attention-coercivity} shows that the temperature dependence of $K_\tau$ determines the weakest curvature contributed by the attention block and, in particular, reveals the $\tau^5$ scaling of the scalar attention direction.

\begin{lemma} 
\label{lem:attention-coercivity}
Assume $d\ge2$. Under the reference law $U_n$,  there exist constants $b_{\rm att}>0$ and $\tau_*>0$, independent
of $\tau$ and of the coefficient direction, such that for every
$0<\tau\le\tau_*$,
\begin{align}
\mathbb E_{U_n} K_\tau^2\ge b_{\rm att}
\left(\|e_+\|^2+\|E_+\|_F^2+\tau^3\|D_0\|_F^2+\tau^5 d a^2\right).
\label{eq:tau5-attention-coercivity}
\end{align}
\end{lemma}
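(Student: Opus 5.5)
We would prove \eqref{eq:tau5-attention-coercivity} by exploiting the product-uniform structure of $U_n$: $W$ and $\Psi_1,\ldots,\Psi_m$ are independent with i.i.d.\ $\operatorname{Unif}(-1,1)$ entries. We first write $K_\tau=K^{(1)}+K^{(2)}$ with $K^{(1)}=\bar\Psi_\tau^\top(e_++E_+W)$ and $K^{(2)}=W^\top\Sigma_\tau DW$. The quantity $\mathbb E K_\tau^2$ is a positive semidefinite quadratic form in $\zeta_A:=(e_+,E_+,D)$. The plan is to compute its leading-order behavior as $\tau\downarrow0$ separately on three pieces, and then control the cross terms.

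The first piece is the successor block. As $\tau\to0$, the softmax concentrates on the argmax action, so $\bar\Psi_\tau\to\Psi_{a^\star(W)}$ almost surely, because ties have probability zero. Then $\mathbb E (K^{(1)})^2\to \mathbb E[(\Psi_{a^\star}^\top(e_++E_+W))^2]$. This limit is a positive definite form in $(e_+,E_+)$: if it vanished, the polynomial identity argument of Theorem~\ref{thm:continuous-richness} would force $e_+=E_+=0$, since conditionally on $W$ the selected $\Psi_{a^\star}$ has a density on an open set. By compactness of the unit sphere and continuity in $\tau$, this yields a constant $b_1$ with $\mathbb E(K^{(1)})^2\ge b_1(\|e_+\|^2+\|E_+\|_F^2)$ for $\tau\le\tau_*$.

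The second piece is the attention block, and it carries the temperature scaling. The covariance $\Sigma_\tau$ is exponentially small except on the event that the top two values $W^\top\Psi_a$ lie within $O(\tau)$ of each other. This near-tie event has probability $\asymp\tau$, and on it $\Sigma_\tau\approx p(1-p)\,uu^\top$ with $u=\Psi_{a_1}-\Psi_{a_2}$ and $p(1-p)=\Theta(1)$. We therefore get $\mathbb E (K^{(2)})^2\asymp \tau\,\mathbb E[(W^\top u)^2(u^\top DW)^2\mid W^\top u\approx0]$. Conditioning on $W^\top u=O(\tau)$ kills the scalar part: $W^\top u\,u^\top(aI)W=a(W^\top u)^2=O(a\tau^2)$. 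This gives $\tau\cdot\tau^4a^2=\tau^5a^2$, with a $\tau^2$ factor from $(W^\top u)^2$ squared. For the traceless part, $u^\top D_0W$ need not vanish on the hyperplane $W\perp u$ when $d\ge2$, so it contributes $\tau\cdot\tau^2\|D_0\|^2=\tau^3\|D_0\|_F^2$. We make both asymptotics rigorous with a change of variables $W^\top u=\tau s$ in the conditional density near the tie hyperplane. The lower bound for $D_0$ uses a nondegeneracy statement: $\mathbb E[(u^\top D_0W)^2\mid W^\top u=0]\ge c\|D_0\|_F^2$ for traceless $D_0$. We prove this by noting that the map $D_0\mapsto u^\top D_0W$ vanishes on $\{W\perp u\}$ only if $D_0$ is a multiple of $I$, which is exactly where $d\ge2$ enters.

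The main obstacle is controlling the cross terms, since the pieces live at very different scales ($1$, $\tau^3$, $\tau^5$). We would handle this with an orthogonal-projection argument in $L^2(U_n)$ rather than with Cauchy--Schwarz. We first project $K^{(2)}_{aI}$ onto the span of the $(e_+,E_+)$ responses and of $K^{(2)}_{D_0}$. We must then show that its residual still has squared norm $\gtrsim\tau^5a^2$, and likewise that the residual of $K^{(2)}_{D_0}$ after projection onto the successor span is $\gtrsim\tau^3\|D_0\|^2$. For this we need the attention responses to be asymptotically orthogonal to all affine-in-$\Psi_{a^\star}$ functions at the relevant order. We would check this by Taylor-expanding $\bar\Psi_\tau$ around the argmax: $\bar\Psi_\tau-\Psi_{a^\star}$ is supported on the same $O(\tau)$ near-tie layer, with a sign structure that is odd in $s$, whereas the attention term is even. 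This parity is what we expect to make the leading cross terms vanish, but verifying it is the delicate part. Having established the three residual bounds, we combine them via the Schur complement of the $3\times3$ block Gram form to get \eqref{eq:tau5-attention-coercivity}.
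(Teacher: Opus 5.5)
Your diagonal scalings ($\tau^3\|D_0\|_F^2$ from one factor of the gap, $\tau^5a^2$ from two) and your treatment of the successor block are right. The gap is in the cross-term step you defer as delicate, and the parity mechanism you propose for it is misassigned.

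Write $K^{(2)}=K_{D_0}+K_{aI}$ with $K_{D_0}=W^\top\Sigma_\tau D_0W$ and $K_{aI}=aW^\top\Sigma_\tau W$. Consider a tie of actions $1,2$, with $u=\Psi_1-\Psi_2$, $W^\top u=\tau s$, and $\sigma$ the logistic function. There $K_{D_0}\approx\sigma(s)\sigma(-s)\,\tau s\,u^\top D_0W$ is \emph{odd} in $s$; only $K_{aI}\approx\sigma(s)\sigma(-s)\,a\tau^2s^2$ is even. Also $\Psi_{a^\star}=\frac12(\Psi_1+\Psi_2)+\frac12\operatorname{sign}(s)\,u$ has both even and odd parts. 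So the successor--attention cross terms do not vanish by parity.

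They do not need to. Since $\pi_a\pi_b\le e^{-|g_{ab}|/\tau}$ and $g_{ab}$ has a bounded density, $\mathbb E|K_{D_0}|=O(\tau^2\|D_0\|_F)$ and $\mathbb E|K_{aI}|=O(\tau^3|a|)$. Pointwise, $|K^{(1)}|\le C(\|e_+\|+\|E_+\|_F)$. Combined with your order-one lower bound on $\mathbb E(K^{(1)})^2$, the projections of $K_{D_0}$ and $K_{aI}$ onto the successor span have squared norms $O(\tau^4\|D_0\|_F^2)$ and $O(\tau^6a^2)$, which are negligible.

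This localization is what the paper uses:
\begin{itemize}
\item On a region where one action wins by a fixed margin, the attention response is exponentially small and $\bar\Psi_\tau^\top E_+W$ controls $E_+$ at order one.
\item On the tie region, of probability $O(\tau)$, $\bar\Psi_\tau^\top E_+W$ leaks only $O(\tau)\|E_+\|_F^2$.
\item The crude inequality $(y_1+y_2)^2\ge\frac12y_1^2-y_2^2$ on each region suffices.
\item $e_+$ is split off exactly by the sign-flip symmetry $(W,\Psi)\mapsto(-W,-\Psi)$.
\end{itemize}

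The cross term that actually decides the lemma is one you list as a requirement but never argue. The bound $\langle K_{D_0},K_{aI}\rangle_{L^2}=O(\tau^4\|D_0\|_F|a|)$ is exactly the geometric mean of $\|K_{D_0}\|^2\asymp\tau^3\|D_0\|_F^2$ and $\|K_{aI}\|^2\asymp\tau^5a^2$. Both terms live on the same tie layer. So neither Cauchy--Schwarz nor localization can show that the residual of $K_{aI}$ after projection onto the $K_{D_0}$-span is $\gtrsim\tau^5a^2$. You need the leading coefficient of this cross term to vanish, i.e.\ a nondegenerate limiting Gram form in $(D_0,\tau a)$. That is where parity genuinely enters, and your assertion that the attention term is even hides it by giving $K_{D_0}$ and $K_{aI}$ the same parity.

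The paper handles this by restricting to tie charts $\tau\le|g|\le2\tau$ and rescaling $g=\tau x$, $a_e=\tau a$. The response becomes $\tau\bigl(x f(\upsilon;D_0)+a_ex^2\bigr)$, where $f=u^\top D_0W$ on the tie hyperplane and $\upsilon$ are the remaining chart coordinates. It is weighted by the even kernel $k(x)=e^{2x}/(1+e^x)^4$. The limit form $G_0(D_0,a_e)$ is positive definite: evaluating at $\pm x$ forces $a_e=0$, and then $f\equiv0$ on a frame of orthogonal pairs forces $D_0=0$. Uniform convergence on the unit sphere then gives $\tau^3\bigl(\|D_0\|_F^2+d\tau^2a^2\bigr)$. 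Restricting to a region also spares you the global $L^2$ cross-term asymptotics your projection scheme would need.

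With these two ingredients your Schur-complement bookkeeping closes. Without them, the $\tau^5da^2$ term is not established.
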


\begin{proof}
(i) We first lower-bound $\mathbb E[(W^\top\Sigma_\tau D W)^2]$. The difficulty is that
$D_0$ may point in an arbitrary trace-free matrix direction. We first
construct a finite collection of measurements $u^\top D_0v$ that
controls the entire matrix $D_0$. Define
\begin{align*}
\mathcal F=\{(e_i,e_j):i\neq j\}\cup
\left\{\left(\frac{e_i-e_j}{\sqrt2},\frac{e_i+e_j}{\sqrt2}\right),\left(\frac{e_i+e_j}{\sqrt2},
\frac{e_i-e_j}{\sqrt2}
\right):i<j\right\}.
\end{align*}
The pairs $(e_i,e_j)$ directly measure the off-diagonal entries of
$D_0$, while the rotated pairs also measure differences between its
diagonal entries. Since $\operatorname{tr}(D_0)=0$, we have
\begin{align}
\sum_{(u,v)\in\mathcal{F}} \left(u^\top D_0 v\right)^2
&= \sum_{i\ne j} (D_0)_{ij}^2
+ \sum_{i<j}\frac12\bigl((D_0)_{ii}-(D_0)_{jj}\bigr)^2
+
\frac12\bigl((D_0)_{ij}-(D_0)_{ji}\bigr)^2\nonumber\\
&=\sum_{i\ne j} (D_0)_{ij}^2+ \frac{d}{2}\sum_i (D_0)_{ii}^2
+ \frac12\sum_{i<j} \big((D_0)_{ij}-(D_0)_{ji}\big)^2\nonumber\\
&\ge \sum_{i\ne j} (D_0)_{ij}^2 + \sum_{i} (D_0)_{ii}^2=
\|D_0\|_F^2.
\label{eq:tau5-frame}
\end{align}
Hence the finite family $\mathcal F$ detects every nonzero trace-free
direction.

For each $(u,v)\in\mathcal{F}$, we construct a local configuration in which the attention response contains the measurement $u^\top D_0v$. Choose fixed sufficiently small constants $s,t,b>0$ and set $W=sv$, $V:=\Psi_1-\Psi_2=tu$, and $M:=(\Psi_1+\Psi_2)/2=bv$, with $\Psi_a=-bv$ for $a\ge3$. Equivalently, $\Psi_1=bv+(t/2)u$ and $\Psi_2=bv-(t/2)u$. 

Since $u^\top v=0$, actions $1$ and $2$ have the same score, $W^\top\Psi_1=W^\top\Psi_2=sb$, whereas every action $a\ge3$ has score $-sb$. Thus actions $1$ and $2$ are tied, while all remaining actions are separated from them by a fixed positive margin. Let $g:=W^\top V=W^\top(\Psi_1-\Psi_2)$ denote the score gap between actions $1$ and $2$. At the center, $g=0$. In a sufficiently small neighborhood of each center, we use $g$, together with the remaining local coordinates $\upsilon$, as coordinates. The corresponding reference density can be written as $j_\ell(\upsilon,g)\,\dif\upsilon\,\dif g$, with $0<j_-\le j_\ell(\upsilon,g)\le j_+<\infty$. Since only finitely many charts are used, these constants can be chosen uniformly. 

We restrict each chart to the near-tie region $\tau\le |g|\le 2\tau$. In this region, actions $1$ and $2$ both receive nonnegligible softmax mass, whereas all other actions remain separated by a fixed score gap and contribute only exponentially small terms. Using the pairwise representation of $\Sigma_\tau$, we obtain 
\begin{align}\label{eq:tau5-leading-response}
W^\top\Sigma_\tau D W  = \pi_1^\tau\pi_2^\tau\,g\,V^\top D W + R_\tau,  
|R_\tau|  \le C e^{-g_0/\tau}\|D\|_F . 
\end{align} 
Since $D=D_0+aI_d$ and $V^\top W=g$, \begin{align} gV^\top D W = gV^\top D_0W+ag^2. \label{eq:tau5-D-decomposition} 
\end{align} 
The trace-free component therefore contains one factor of the small gap $g$, whereas the scalar component contains two. To make the dependence on $\tau$ explicit, set $g=\tau x$ and $a_e:=\tau a$. Then 
\begin{align}\label{eq:tau5-rescaling} 
gV^\top D_0W+ag^2 = \tau \left( xV^\top D_0W+a_ex^2 \right). 
\end{align}
Squaring contributes a factor $\tau^2$, while $\dif g=\tau\,\dif x$ contributes one additional factor $\tau$. Hence the near-tie contribution has an overall factor $\tau^3$. After removing this common factor, the corresponding quadratic forms converge to \begin{align}\label{eq:tau5-G0} 
G_0(D_0,a_e) &= \sum_{\ell} \int_{Z_\ell} j_\ell(\upsilon,0) \int_{1\le |x|\le 2} k(x) \left( x f_\ell(\upsilon;D_0)+a_ex^2 \right)^2 \,\dif x\,\dif\upsilon, \end{align} where $k(x):=e^{2x}/(1+e^x)^4>0$ and $f_\ell(\upsilon;D_0) :=V_\ell(\upsilon,0)^\top D_0W_\ell(\upsilon,0)$. The role of $G_0$ is to rule out cancellation between the trace-free part $D_0$ and the scalar part $aI_d$. Suppose $G_0(D_0,a_e)=0$. Since the integrand is nonnegative and the weights are strictly positive, we must have $x f_\ell(\upsilon;D_0)+a_ex^2=0$ throughout each chart. Evaluating this identity at $x$ and $-x$, with $1<x<2$, gives  
$x f_\ell+a_ex^2 = 0, -x f_\ell+a_ex^2 = 0$.  
Hence $a_e=0$ and $f_\ell=0$. At the center of the $\ell$-th chart, $V_\ell=tu_\ell$ and $W_\ell=sv_\ell$, and therefore $f_\ell(0;D_0)=st\,u_\ell^\top D_0v_\ell=0$. Since this holds for every $(u_\ell,v_\ell)\in\mathcal{F}$, \eqref{eq:tau5-frame} implies $D_0=0$. Thus $G_0$ is positive definite. Because the coefficient space is finite dimensional, compactness of its unit sphere gives a constant $\lambda_0>0$ such that 
\begin{align} 
G_0(D_0,a_e) &\ge \lambda_0 \left( \|D_0\|_F^2+d a_e^2 \right). \label{eq:tau5-G0-lower} 
\end{align} 
Moreover, after the rescaling $g=\tau x$ and $a=a_e/\tau$,
the contribution of the remaining actions to the rescaled
response is uniformly $O(\tau^{-2}e^{-g_0/\tau})=o(1)$ on
the coefficient unit sphere
$\operatorname{tr}(D_0)=0$,
$\|D_0\|_F^2+d a_e^2=1$.
The coordinate maps and densities also converge uniformly
on the fixed chart domains, chosen with compact closures.
Thus $G_\tau$ converges uniformly to $G_0$ on this unit sphere,
equivalently in operator norm for the associated Gram operators.
Hence, for all sufficiently small $\tau>0$,
\begin{align}
G_\tau(D_0,a_e) :=
\tau^{-3}\sum_{\ell}\int_{C_{\ell,\tau}}
\left[
W^\top\Sigma^\tau
\left(D_0+\frac{a_e}{\tau}I_d\right)W
\right]^2\,dP \ge
\frac{\lambda_0}{2}
\left(\|D_0\|_F^2+d a_e^2\right).
\label{eq:tau5-Gtau-lower}
\end{align}
Let $\mathcal{C}_{\ell,\tau}$ denote the near-tie region in the $\ell$-th chart and define $\mathcal{E}_{\rm tie}(\tau) :=\bigcup_\ell\mathcal{C}_{\ell,\tau}$. Since there are only $L:=|\mathcal{F}|$ charts, their overlap multiplicity is at most $L$. Therefore, for every nonnegative function $h$, \begin{align} 
\sum_\ell \int_{\mathcal{C}_{\ell,\tau}} h\,dP &\le L\, \mathbb{E} \left[ h\,\mathbf{1}_{\mathcal{E}_{\rm tie}(\tau)} \right]. \label{eq:tau5-chart-overlap} \end{align} Combining this finite-overlap bound with \eqref{eq:tau5-Gtau-lower}, and recalling that $a_e=\tau a$, gives \begin{align} \mathbb{E} \left[ \bigl(W^\top\Sigma_\tau D W\bigr)^2 \mathbf{1}_{\mathcal{E}_{\rm tie}(\tau)} \right] &\ge b_3\tau^3\|D_0\|_F^2 + b_5\tau^5 d a^2 - C e^{-g_0/\tau}\|D\|_F^2. \label{eq:tau5-D-lower} \end{align} Thus the near-tie regions provide order-$\tau^3$ control of the trace-free component $D_0$ and the weaker order-$\tau^5$ control of the scalar component $aI_d$.

(ii) We next control cancellation with \(E_+\). All expectations below are under the reference law $U_n$. Write $X_\tau:=\bar\Psi_\tau^{\mathsf T}E_+W$, 
$Y_\tau:=W^{\mathsf T}\Sigma_\tau DW$, 
$Z_\tau:=\bar\Psi_\tau^{\mathsf T}e_+$, 
where $D=D_0+aI_d$ and $\operatorname{tr}(D_0)=0$.
In particular, 
$K_\tau=Z_\tau+X_\tau+Y_\tau$, 
$\Frob{D}^{\,2}=\Frob{D_0}^{\,2}+da^2$. 
We choose all constants independently of $\tau$ and the coefficient
direction. The parameters $d$ and $m$ are fixed.

 Choose fixed, sufficiently small positive numbers $r_*,a_*,b_*$, and
consider the following $d^2$ configurations in the interior of the
product-uniform support:
\begin{align*}
&W=r_*e_j,\qquad \Psi_1=b_*e_j,
\qquad \Psi_a=-b_*e_j\quad(a\geq2),\qquad j=1,\ldots,d;\\
&W=r_*e_j,\qquad \Psi_1=a_*e_i+b_*e_j,
\qquad \Psi_a=-b_*e_j\quad(a\geq2),\qquad i\ne j.
\end{align*}
At each configuration, action~1 has a fixed positive winning gap.
Take sufficiently small, fixed open neighborhoods of these configurations
whose closures lie inside the support, and denote their union by
$\mathcal E_{\mathrm{sep}}$. On this union, every gap
$W^{\mathsf T}(\Psi_1-\Psi_a)$, $a\geq2$, is bounded below by a common
constant $g_{\mathrm{sep}}>0$.

For $X_0(E):=\Psi_1^{\mathsf T}EW$, the center measurements are
$b_*r_*E_{jj}$ and $a_*r_*E_{ij}+b_*r_*E_{jj}$.
They jointly determine every entry of $E$. Hence the quadratic form
$\E[X_0(E)^2\ind_{\mathcal E_{\mathrm{sep}}}]$ is positive definite:
if it vanished, positivity of the reference density and continuity
would force $X_0(E)$ to vanish on each neighborhood, including at every
center, and therefore $E=0$. The diagonal centers similarly show that
$\E[(\Psi_1^{\mathsf T}e)^2\ind_{\mathcal E_{\mathrm{sep}}}]$
is positive definite in $e$. Compactness of the respective unit spheres
therefore gives constants $\lambda_E,\lambda_e>0$ such that
\begin{align}
\E[X_0(E)^2\ind_{\mathcal E_{\mathrm{sep}}}]
 \geq\lambda_E\Frob{E}^{\,2}, \quad
\E[(\Psi_1^{\mathsf T}e)^2\ind_{\mathcal E_{\mathrm{sep}}}] \geq\lambda_e\norm{e}^{2}.
\label{eq:abs-separated-grams}
\end{align}
All features and $W$ are bounded. The uniform winning gap gives
\begin{equation}
\norm{\bar\Psi_\tau-\Psi_1}
+\norm{\Sigma_\tau}_{\mathrm{op}}
\leq C e^{-g_{\mathrm{sep}}/\tau}
\quad\text{on }\mathcal E_{\mathrm{sep}}.
\label{eq:abs-separated-approximation}
\end{equation}
For example, the total softmax mass outside action~1 is at most
$(m-1)e^{-g_{\mathrm{sep}}/\tau}$, which bounds both quantities in
\eqref{eq:abs-separated-approximation} up to fixed constants.
The reverse triangle inequality in the restricted $L^2$ norm, together
with \eqref{eq:abs-separated-grams}, implies that, for all sufficiently
small $\tau$,
\begin{align}
\E[X_\tau^2\ind_{\mathcal E_{\mathrm{sep}}}]
 \geq\frac{\lambda_E}{4}\Frob{E_+}^{\,2}, \quad
\E[Z_\tau^2\ind_{\mathcal E_{\mathrm{sep}}}] \geq\frac{\lambda_e}{4}\norm{e_+}^{2}.
\label{eq:abs-separated-finite-temperature}
\end{align}
These estimates hold simultaneously for all coefficient directions,
because the $L^2$ approximation errors are bounded by
$C e^{-g_{\mathrm{sep}}/\tau}\Frob{E_+}$ and
$C e^{-g_{\mathrm{sep}}/\tau}\norm{e_+}$, respectively.
Also, \eqref{eq:abs-separated-approximation} yields
$\E[Y_\tau^2\ind_{\mathcal E_{\mathrm{sep}}}]
\leq C_{\mathrm{sep}}e^{-2g_{\mathrm{sep}}/\tau}\Frob{D}^{\,2}$.
The elementary inequality $(u+v)^2\geq\tfrac12u^2-v^2$ for 
$u,v\in\mathbb R$ 
follows by completing the square. Applying it with $u=X_\tau$ and
$v=Y_\tau$, and weakening the exponential remainder if necessary, gives
\begin{equation}
\E[(X_\tau+Y_\tau)^2\ind_{\mathcal E_{\mathrm{sep}}}]
\geq b_E\Frob{E_+}^{\,2}
-C_{\mathrm{sep}}e^{-g_{\mathrm{sep}}/\tau}\Frob{D}^{\,2},
\qquad b_E:=\frac{\lambda_E}{8}>0.
\label{eq:abs-separated-bound}
\end{equation} 

Let $\mathcal E_{\mathrm{tie}}(\tau)$ be the union of the near-tie
charts from part~(i). Their bounded densities, fixed bounded coordinate
domains, and gap restriction $\tau\leq |g|\leq2\tau$ imply 
$\mathbb P(\mathcal E_{\mathrm{tie}}(\tau))\leq C_t\tau$. 
Indeed, each chart contributes at most its upper density bound times
its coordinate-domain volume times the gap-interval length $2\tau$;
summing over the finitely many charts gives this bound regardless of
overlap. Since $\norm{\bar\Psi_\tau},\norm{W}\leq\sqrt d$,
\begin{equation}
\E[X_\tau^2\ind_{\mathcal E_{\mathrm{tie}}(\tau)}]
\leq C_X\tau\Frob{E_+}^{\,2},\qquad C_X:=d^2C_t.
\label{eq:abs-tie-leakage}
\end{equation}
Rewrite (E.13), retaining its coefficient-independent constants, as
\begin{equation}
\E[Y_\tau^2\ind_{\mathcal E_{\mathrm{tie}}(\tau)}]
\geq b_3\tau^3\Frob{D_0}^{\,2}+b_5\tau^5da^2
-C_{\mathrm{tie}}e^{-g_0/\tau}\Frob{D}^{\,2}.
\label{eq:abs-assumed-tie-bound}
\end{equation}
Apply $(u+v)^2\geq\tfrac12u^2-v^2$ with $u=Y_\tau$ and $v=X_\tau$.
Equations \eqref{eq:abs-tie-leakage} and
\eqref{eq:abs-assumed-tie-bound} give
\begin{align}
\E[(X_\tau+Y_\tau)^2\ind_{\mathcal E_{\mathrm{tie}}(\tau)}]
\geq \frac{b_3}{2}\tau^3\Frob{D_0}^{\,2}
+\frac{b_5}{2}\tau^5da^2-C_X\tau\Frob{E_+}^{\,2} -\frac{C_{\mathrm{tie}}}{2}e^{-g_0/\tau}\Frob{D}^{\,2}.
\label{eq:abs-tie-total}
\end{align}
No disjointness of the two regions is needed: their indicators sum to
at most two. Consequently,
\begin{align*}
\E[(X_\tau+Y_\tau)^2]
\geq\frac12\bigl( 
\E[(X_\tau+Y_\tau)^2\ind_{\mathcal E_{\mathrm{sep}}}] +\E[(X_\tau+Y_\tau)^2\ind_{\mathcal E_{\mathrm{tie}}(\tau)}]
\bigr).
\end{align*}
Set $c=\min\{g_0,g_{\mathrm{sep}}\}>0$ and
$C_0=C_{\mathrm{sep}}/2+C_{\mathrm{tie}}/4$.
Combining the two regional lower bounds yields
\begin{align}
\E[(X_\tau+Y_\tau)^2]\geq 
\left(\frac{b_E}{2}-\frac{C_X\tau}{2}\right)\Frob{E_+}^{\,2}
+\frac{b_3}{4}\tau^3\Frob{D_0}^{\,2}
+\frac{b_5}{4}\tau^5da^2 -C_0e^{-c/\tau}\bigl(\Frob{D_0}^{\,2}+da^2\bigr).
\label{eq:abs-before-absorption}
\end{align}
Choose $\tau_*>0$ smaller than the thresholds used above and in
part~(i), so that for every $0<\tau\leq\tau_*$,
\begin{equation}
\tau\leq1,\qquad
C_X\tau\leq\frac{b_E}{2},\qquad
C_0e^{-c/\tau}\leq\frac{\min\{b_3,b_5\}}{8}\tau^5.
\label{eq:abs-threshold}
\end{equation}
Such a choice exists because $e^{-c/\tau}/\tau^5\to0$ as
$\tau\downarrow0$. Every quantity used to choose this threshold is
independent of $e_+$, $E_+$, and $D$.
Using $\tau^5\leq\tau^3$ and
\eqref{eq:abs-threshold} in \eqref{eq:abs-before-absorption} gives
\begin{equation}
\displaystyle
\E[(X_\tau+Y_\tau)^2]
\geq\frac{b_E}{4}\Frob{E_+}^{\,2}
+\frac{b_3}{8}\tau^3\Frob{D_0}^{\,2}
+\frac{b_5}{8}\tau^5da^2. 
\label{eq:abs-even-lower-bound}
\end{equation} 

The product-uniform law of $(W,\Psi_1,\ldots,\Psi_m)$ is invariant
under simultaneous sign reversal. This transformation preserves all
softmax logits and $\Sigma_\tau$, reverses $\bar\Psi_\tau$, and makes
$Z_\tau$ odd while $X_\tau+Y_\tau$ is even. Therefore $\E[Z_\tau(X_\tau+Y_\tau)]=0$,
$\E[K_\tau^2]=\E[Z_\tau^2]+\E[(X_\tau+Y_\tau)^2]$.
The second estimate in \eqref{eq:abs-separated-finite-temperature}
implies $\E[Z_\tau^2]\geq(\lambda_e/4)\norm{e_+}^2$.
Combining this with \eqref{eq:abs-even-lower-bound} proves
\begin{equation}
\E[K_\tau^2]\geq b_{\mathrm{att}}
\left(\norm{e_+}^2+\Frob{E_+}^{\,2}
+\tau^3\Frob{D_0}^{\,2}+\tau^5da^2\right),
\label{eq:abs-final}
\end{equation}
where one may take $b_{\mathrm{att}}=
\min\left\{\frac{\lambda_e}{4},\frac{b_E}{4},
\frac{b_3}{8},\frac{b_5}{8}\right\}>0$. 
 Both $b_{\mathrm{att}}$ and $\tau_*$ are independent
of the coefficient direction. They can also be chosen independently
of $n$, since all quantities in this argument, including the near-tie
construction in part~(i), depend only on the same product-uniform
marginal of $(W,\Psi_1,\ldots,\Psi_m)$ under every $U_n$.
\end{proof}

\begin{proof}[The proof of Corollary 
\ref{thm:sharp-temperature1}]
    (i) Since $\Obs(\mathbf{z})[\zeta]=0$, $\mathcal D_n\text{-a.s.}\Longrightarrow \zeta=0$ has already been established in Corollary 
\ref{cor:experimental-richness}; This step is quantitative only. Lemma \ref{lem:attention-coercivity} analyzed $e_+,E_+,E_A$, we now incorporate $E_C,e_q,e_-,E_-$.  

On the calibration path, let 
$Y:=\Phi(s_0,a_x)$, 
$R_0:=R(a_x,s_1)$, 
$Z_Y:=YY^\top-\frac13I_d$, and
$v_\tau:=W^\top\bar\Psi_\tau$. 
Conditional on all variables other than \(Y\) and \(R_0\), the
observable response can be written as
\begin{equation}
n\mathcal B_{\tau,n}(z_{\omega_n})[\zeta]
=
R_0A_r(Y)
+
A_o(Y)
+
A_2(Y)
+
A_{\rm const},
\label{eq:tau5-orthogonal-decomp}
\end{equation}
where $A_r(Y)=[E_C+(e_q^\top W)I_d]Y$,
$A_o(Y)=\gamma v_\tau E_CY+K_\tau Y$,
$A_2(Y)=-E_CZ_YW+Z_Y(e_-+E_-W)$,
and $A_{\rm const}=\frac13(-E_CW+e_-+E_-W)+A_{\rm later}$.
Here $A_{\rm later}$ is the sum of the contributions from transitions
$1,\ldots,n-1$, and is zero when $n=1$.

The calibration path has \(s_0\neq s_1\), and all later transitions use
features associated with \(s_1\). Hence, under the product reference
law, \(A_{\rm later}\) is independent of \(Y=\Phi(s_0,a_x)\) and of
\(R_0\). Thus \(A_{\rm const}\) is constant with respect to the
conditional variables \((Y,R_0)\).

The four terms in \eqref{eq:tau5-orthogonal-decomp} are mutually
orthogonal in $L^2(Y,R_0)$. Indeed, $\mathbb ER_0=0$ and $R_0$
independent on $Y$; $A_o$ is odd in $Y$, i.e., $A_o(-Y)=-A_o(Y)$; Since $Z_{-Y}=Z_Y$, $A_2$ is even in $Y$ and
centered because $\mathbb E Z_Y=0$; and $A_{\rm const}$ is
constant in $(Y,R_0)$. Odd and even functions are orthogonal under symmetric distributions. In particular, later transitions cannot
cancel any of the first three components.

The uniform moments satisfy $\mathbb E[YY^\top]=\mathbb E[WW^\top]=\frac13I_d$,
$\mathbb E R_0^2=\frac13$,
and $\mathbb E Z_Y^2=\nu I_d$,
$\nu:=\frac{5d-1}{45}>0$.
For the reward projection,
\begin{align}
\mathbb E\|R_0A_r(Y)\|^2=\frac19
\mathbb E\left\|E_C+(e_q^\top W)I_d
\right\|_F^2=\frac19\|E_C\|_F^2+\frac{d}{27}\|e_q\|^2,
\label{eq:tau5-reward-proj}
\end{align}
where the cross term vanishes because \(\mathbb EW=0\).

For \(0<\delta<1\), we use
$\|x+y\|^2\ge\delta\|y\|^2-\frac{\delta}{1-\delta}\|x\|^2$. Since $|v_\tau|\le d$,
\begin{equation}
\mathbb E\|A_o(Y)\|^2
\ge
\frac{\delta_od}{3}\,
\mathbb E K_\tau^2
-
\frac{\delta_o}{1-\delta_o}
\frac{\gamma^2d^2}{3}
\|E_C\|_F^2.
\label{eq:tau5-odd-proj}
\end{equation}
Moreover, $Z_Ye_-$ is orthogonal to the terms linear in the centered
variable $W$. Using
$\mathbb E\|Z_YE_-W\|^2=\frac{\nu}{3}\|E_-\|_F^2$, $\mathbb E\|E_CZ_YW\|^2
=\frac{\nu}{3}\|E_C\|_F^2$, we obtain
\begin{equation}
\mathbb E\|A_2(Y)\|^2\ge\nu\|e_-\|^2+\frac{\delta_2\nu}{3}\|E_-\|_F^2-\frac{\delta_2}{1-\delta_2}\frac{\nu}{3}\|E_C\|_F^2.
\label{eq:tau5-quadratic-proj}
\end{equation}
Choose $\delta_o=\min\left\{\frac12,
\frac{1}{48\gamma^2d^2}\right\}$, $\delta_2=\min\left\{\frac12,\frac{1}{48\nu}\right\}$. Since $\delta/(1-\delta)\le2\delta$ for $\delta\le1/2$,
each negative coefficient multiplying $\|E_C\|_F^2$ in
\eqref{eq:tau5-odd-proj} and
\eqref{eq:tau5-quadratic-proj} is at most \(1/72\). Hence the
coefficient \(1/9\) in \eqref{eq:tau5-reward-proj} retains at least $\frac1{12}$.
Combining the orthogonal decomposition,
\eqref{eq:tau5-attention-coercivity},
\eqref{eq:tau5-reward-proj},
\eqref{eq:tau5-odd-proj}, and
\eqref{eq:tau5-quadratic-proj}, there exists
\(b_{\rm ref}>0\), independent of \(n,\tau\), and \(\zeta\), such that
for all sufficiently small \(0<\tau\le1\),
\begin{align}
\mathbb E_{U_n}
\left\|
\mathcal B_{\tau,n}(\mathbf{z}_{\omega_n})[\zeta]
\right\|_2^2
\ge
\frac{b_{\rm ref}}{n^2}
\Bigl(
&
\|E_C\|_F^2+\|e_q\|^2
+\|e_-\|^2+\|E_-\|_F^2
\nonumber\\
&
+\|e_+\|^2+\|E_+\|_F^2
+\tau^3\|D_0\|_F^2
+\tau^5d\,a^2
\Bigr).
\label{eq:tau5-reference-lower}
\end{align}
Since $\|D\|_F^2=\|D_0\|_F^2+da^2$
and \(0<\tau\le1\), the right-hand side is bounded below by 
$\frac{b_{\rm ref}}{n^2}
\tau^5
\|\zeta\|_{\mathcal V}^2$.
Using Lemma \ref{lem:calibration-reduction},
\begin{equation}
\mathbb E_{\mathcal D_n}
\left\|
\mathcal B_{\tau,n}(\mathbf{z})[\zeta]
\right\|_2^2
\ge
\frac{p_nb_{\rm ref}}{n^2}
\tau^5
\|\zeta\|_{\mathcal V}^2.
\label{eq:tau5-population-lower}
\end{equation}
Therefore $\chi_{\tau,n}\ge\frac{p_nb_{\rm ref}}{n^2}\tau^5$. By Lemma~\ref{lem:uniform-normal-curvature}, 
$\kappa(\tau,n)
\ge
c_\Omega
\frac{p_nb_{\rm ref}}{n^2}
\tau^5$.
Since $\mathcal N$ is finite, 
$c_{\rm exp}:=c_\Omega b_{\rm ref}\min_{n\in\mathcal N}\frac{p_n}{n^2}>0$,
and hence $\kappa(\tau)
\ge c_{\rm exp}\tau^5$, 
$n\in\mathcal N$.

(ii) For the upper bound, it suffices to exhibit a single unit normal
direction whose response is \(O(\tau^5)\). Consider the pure scalar
attention perturbation 
$\mathbf{U}_{\rm sc}
=(\frac{I_d}{\sqrt d},0,\ldots,0)$.
Every tangent vector to \(\mathcal M_\tau\) has zero \(A\)-block, so
\(\mathbf{U}_{\rm sc}\in N_{\bar\theta}\mathcal M_\tau\) at every
\(\bar\theta\in\mathcal M_\tau\), and
\(\|\mathbf{U}_{\rm sc}\|=1\).

For a fixed successor state, define $g_{ab}
=
W^\top(\Psi_a-\Psi_b)$. 
This random variable has a uniformly bounded density. Conditional on
\(W\neq0\), let $M_W:=\max_j|W_j|$
and choose a coordinate attaining this maximum. Since the difference
of two independent \(\operatorname{Unif}(-1,1)\) variables has density
bounded by \(1/2\), convolution with the remaining independent
coordinates yields
$\|f_{g_{ab}\mid W}\|_\infty\le\frac{1}{2M_W}$. Furthermore, $\mathbb P(M_W\le t)=t^d$, $\mathbb E M_W^{-1}=\frac{d}{d-1}$, so, because $d\ge2$,
$\|f_{g_{ab}}\|_\infty\le L_d:=\frac{d}{2(d-1)}<\infty$. 

Set $V_\tau:=W^\top\Sigma_\tau W$.
The pairwise covariance identity gives
$V_\tau=\sum_{a<b}\pi_a^\tau\pi_b^\tau g_{ab}^2$. Moreover, 
$\pi_a^\tau\pi_b^\tau\le e^{-|g_{ab}|/\tau}$.
Indeed, if the larger of the two logits is \(q_a\), the softmax
denominator is at least \(e^{2q_a/\tau}\) after multiplying the two
probabilities.

Let \(P_m:=m(m-1)/2\). By Cauchy--Schwarz,
\begin{align}
\mathbb EV_\tau^2
\le P_m\sum_{a<b}\mathbb E\left[g_{ab}^4
e^{-2|g_{ab}|/\tau}\right]\le 2L_dP_m^2
\int_0^\infty x^4e^{-2x/\tau}\,\dif x=C_V\tau^5,
\label{eq:tau5-V-upper}
\end{align}
where \(C_V<\infty\) depends only on \(d\) and \(m\).

Direct differentiation in the direction $\mathbf{U}_{\rm sc}$ gives 
$J_{\bar\theta}(\mathbf{z})[\mathbf{U}_{\rm sc}]
=
\frac{\alpha\gamma}{n\sqrt d}
\sum_{i=0}^{n-1}
\phi_iV_{\tau,i}$.
Using convexity of the squared norm and
$\|\phi_i\|^2\le d$,
$\mathbb E\|J_{\bar\theta}(\mathbf{z})[\mathbf{U}_{\rm sc}]\|_2^2\le\frac{\alpha^2\gamma^2}{n}
\sum_{i=0}^{n-1}\mathbb EV_{\tau,i}^2$. 

The visited successor state may depend on the task, so we do not
condition on it. If \(V_\tau(s)\) denotes the same scalar covariance
computed using the feature table at a fixed state \(s\), then
\[
\mathbb EV_{\tau,i}^2
=
\sum_{s\in\mathcal S}
\mathbb E
\left[
\mathbf 1_{\{s_{i+1}=s\}}
V_\tau(s)^2
\right]
\le
\sum_{s\in\mathcal S}
\mathbb EV_\tau(s)^2
\le
N_sC_V\tau^5.
\]
Therefore 
$\mathbb E
\|J_{\bar\theta}(\mathbf{z})[\mathbf{U}_{\rm sc}]\|_2^2
\le
\alpha^2\gamma^2N_sC_V\tau^5$.
Since \(\kappa(\tau)\) is the minimum squared response over
unit normal directions, $\kappa(\tau,n)
\le C_{\rm exp}\tau^5$, 
$C_{\rm exp}:=\alpha^2\gamma^2N_sC_V$, 
uniformly over all $n\ge1$.

Finally, by the uniform local derivative bounds established in
Lemma~\ref{lem:one-step-contraction}, the same admissible step size
\(\eta\) may be used throughout the small-temperature regime. Applying
the parameter-distance complexity result of
Theorem~\ref{thm:main} and substituting
$\kappa(\tau)=\Theta(\tau^5)$ yields $\sup_{\boldsymbol{\theta}^0\in\mathcal{A}_\tau} K_{\text{GD}}(\varepsilon;\boldsymbol{\theta}^0,\tau,n)=\Theta_{\boldsymbol{\Omega},\mathcal N}(
\frac{1}{\eta\tau^5}
\log\frac{1}{\varepsilon})$.
\end{proof}

\end{document}